\crefname{section}{§}{§§}
\Crefname{section}{§}{§§}
\newcommand{\citeg}[1]{\citep[see, e.g.,][]{#1}}
\def\bx{\boldsymbol x}
\def\by{\boldsymbol y}
\def\bv{\boldsymbol \nu}
\def\bp{\boldsymbol p}
\def\beps{\boldsymbol \epsilon}
\def\blam{\boldsymbol \lambda}
\def\blambda{\boldsymbol \lambda}
\def\bdelta{\boldsymbol \delta}
\def\eps{\epsilon}
\def\cN{\mathcal{N}}
\def\cM{\mathcal{M}}
\def\cS{\mathcal{S}}
\def\cI{\mathcal{I}}
\def\prum{\mathcal{P}_{\textsc{rum}}}
\def\pmdm{\mathcal{P}_{\textsc{mdm}}}
\def\preg{\mathcal{P}_{\textsc{reg}}}
\def\pmnl{\mathcal{P}_{\textsc{mnl}}}
\def\pnl{\mathcal{P}_{\textsc{nl}}}
\def\R{\mathbb{R}}
\begin{document}
%%%%%%%%%%%%%%%%

% Outcomment only when entries are known. Otherwise leave as is and
%   default values will be used.
%\setcounter{page}{1}
%\VOLUME{00}%
%\NO{0}%
%\MONTH{Xxxxx}% (month or a similar seasonal id)
%\YEAR{0000}% e.g., 2005
%\FIRSTPAGE{000}%
%\LASTPAGE{000}%
%\SHORTYEAR{00}% shortened year (two-digit)
%\ISSUE{0000} %
%\LONGFIRSTPAGE{0001} %
%\DOI{10.1287/xxxx.0000.0000}%

% Author's names for the running heads
% Sample depending on the number of authors;
% \RUNAUTHOR{Jones}
% \RUNAUTHOR{Jones and Wilson}
% \RUNAUTHOR{Jones, Miller, and Wilson}
% \RUNAUTHOR{Jones et al.} % for four or more authors
% Enter authors following the given pattern:
%\RUNAUTHOR{}

% Title or shortened title suitable for running heads. Sample:
% \RUNTITLE{Bundling Information Goods of Decreasing Value}
% Enter the (shortened) title:
\RUNTITLE{A Nonparametric Approach with Marginals for Choice}

% Full title. Sample:
% \TITLE{Bundling Information Goods of Decreasing Value}
% Enter the full title:
\TITLE{A Nonparametric Approach with Marginals for Modeling Consumer Choice}

% Block of authors and their affiliations starts here:
% NOTE: Authors with same affiliation, if the order of authors allows,
%   should be entered in ONE field, separated by a comma.
%   \EMAIL field can be repeated if more than one author
\ARTICLEAUTHORS{%
\AUTHOR{Yanqiu Ruan}
\AFF{Singapore University of Technology and Design, Singapore, \EMAIL{yanqiu\_ruan@mymail.sutd.edu.sg}}
\AUTHOR{Xiaobo Li}
\AFF{National University of Singapore, Singapore, \EMAIL{iselix@nus.edu.sg}}
\AUTHOR{Karthyek Murthy}
\AFF{Singapore University of Technology and Design, Singapore, \EMAIL{karthyek\_murthy@sutd.edu.sg}}
\AUTHOR{Karthik Natarajan}
\AFF{Singapore University of Technology and Design, Singapore, \EMAIL{karthik\_natarajan@sutd.edu.sg}}

% \AUTHOR{Karthik Natarajan}
% \AFF{Singapore University of Technology and Design, Singapore, \EMAIL{karthik\_natarajan@sutd.edu.sg}, \URL{}}
% Enter all authors
} % end of the block

\ABSTRACT{Given data on the choices made by consumers for different offer sets, a key challenge is to develop parsimonious models that describe and predict consumer choice behavior while being amenable to prescriptive tasks such as pricing and assortment optimization. The marginal distribution model (MDM) is one such model, which requires only the specification of marginal distributions of the random utilities. 
%to explain choice data. 
This paper aims to establish necessary and sufficient conditions for given choice data to be consistent with the MDM hypothesis, inspired by the usefulness of 
%how instrumental 
similar characterizations 
%have been towards understanding the representational power 
for the random utility model (RUM). This endeavor leads to an exact characterization of the set of choice probabilities that the MDM can represent. Verifying the consistency of choice data with this characterization is equivalent to solving a polynomial-sized linear program. Since the analogous verification task for RUM is computationally intractable and neither of these models subsumes the other, MDM is helpful in striking a balance between tractability and representational power. The characterization is then used with robust optimization for making data-driven sales and revenue predictions for new unseen assortments. When the choice data lacks consistency with the MDM hypothesis, finding the best-fitting MDM choice probabilities reduces to solving a mixed integer convex program. 
%The results extend naturally to the case where the alternatives can be grouped based on the similarity of the marginal distributions of the utilities.
%Numerical experiments show that MDM provides better representational power and prediction accuracy than multinominal logit and significantly better computational performance than RUM.
\textcolor{black}{Numerical results using real world data and synthetic data demonstrate that MDM exhibits competitive representational power and prediction performance compared to RUM and parametric models while being significantly faster in computation than RUM. }
}

\KEYWORDS{discrete choice, nonparametric modeling, optimization, additive perturbed utility model }

\HISTORY{}

\maketitle
%%%%%%%%%%%%%%%%%%%%%%%%%%%%%%%%%%%%%%%%%%%%%%%%%%%%%%%%%%%%%%%%%%%%%%

% Samples of sectioning (and labeling) in MNSC
% NOTE: (1) \section and \subsection do NOT end with a period
%       (2) \subsubsection* and lower need end punctuation
%       (3) capitalization is as shown (title style).
%
%\section{Introduction.}\label{intro} %%1.
%\subsection{Duality and the Classical EOQ Problem.}\label{class-EOQ} %% 1.1.
%\subsection{Outline.}\label{outline1} %% 1.2.
%\subsubsection*{Cyclic Schedules for the General Deterministic SMDP.}
%  \label{cyclic-schedules} %% 1.2.1
%\section{Problem Description.}\label{problemdescription} %% 2.

% Text of your paper here

\section{Introduction}

Discrete choice models have been used extensively in economics \citep{allenby1995using}, marketing \citep{mcfadden1986choice}, healthcare \citep{de2018impact}, transportation \citep{akiva}, and operations management \citep{talluri2004revenue}. Such models describe the observable distribution of demand from the behavior of one or more consumers who choose their most preferred alternative from a discrete collection of alternatives. 

As choice models specify the conditional probability distribution over any offer set, they are inherently high dimensional. Given data on the choices made by consumers over a limited collection of offer sets (also referred as assortments), the specification of a choice model hypothesis is essential in linking data from the observed offer sets to predictions for new offer sets for which no data is available. The classical multinomial logit choice model (MNL) derived by \cite{luce1959individual} and \cite{plackettMNL} is among the simplest and most widely used choice model. It stipulates that the ratio of choice probabilities for any two alternatives $i$ and $j$ does not depend on any alternatives other than $i$ and $j.$ %This property of independence from irrelevant alternatives (popularly referred to as IIA) leads to a parsimonious model with $O(n)$ parameters, where $n$ is the total number of alternatives. 
A popular model at the expressive end of the spectrum is the random utility model (RUM)  which hypothesizes that the utilities of the alternatives are random variables and the consumers are utility maximizers. In settings with finite alternatives, MNL is subsumed by RUM and a generic RUM is describable by a distribution over the rankings (or preference lists) of the alternatives \citep{mas1995microeconomic}. Such a description of RUM over $n$ alternatives requires about  $n!$ parameters, and even the task of verifying whether given choice data is consistent with the RUM hypothesis is computationally intractable  (see \citealt{jagabathula2019limit}, \citealt{pmlr-v162-almanza22a}).

%Modeling the joint probability distribution of the random utilities gives rise to the computation of the expected consumer utility (welfare) and the choice probabilities.
%The choice probabilities obtained from RUM describe rational consumer choice behavior.
%Given the choice probabilities for a set of assortments, verifying if it is consistent with the rational behavior stipulated by RUM is however known to be NP-hard (see \citealt{jagabathula2019limit}). %\citet{jagabathula2019limit} show that going beyond the RUM family is often required to obtain good prediction accuracy of choice behavior. 

%The above description of RUM as a distribution over rankings is one of the various useful characterizations developed in 

There has been a recent surge of interest in developing choice models with good representational power using machine learning techniques. Examples of such models include those proposed by \cite{wang,sifringer2020enhancing} and \cite{aouad2022representing}, who utilize neural networks to fit expressive utilities within the context of MNL and RUM hypotheses. Additionally, the decision forest choice model \citep{chen2022decision,chen2019use} has been shown to be capable of approximating any choice data with increasing forest depth. The expressiveness of these models however comes at the cost of requiring significant amounts of data and computation to learn. Furthermore, it has been observed that the reliability of economic information obtained from deep neural network based models is compromised when the data size is small (see \citealt{wang}). Therefore, a natural question is to examine the representational power of other choice models and to identify choice model hypotheses that offer a balance between richer representational power and tractability while allowing for robust procedures for estimation and prediction from limited data.

\subsection{The choice model and the research questions}
\label{sec:intro-model-questions}
An alternative to RUM in offering a substantial generalization to MNL is the marginal distribution model (MDM) proposed by  \citet{natarajan2009persistency}. It subsumes MNL  (see \citealt{mishra2014theoretical}) and the well-known additive perturbed utility (APU) model treated in \citet{fudenberg2015stochastic}. Specifying an MDM choice model requires only the specification of the marginal distributions of the random utilities of the alternatives. Then the MDM choice probabilities are computed with the extremal distribution maximizing the expected consumer utility over all joint distributions with the given collection of marginals. A precise description of  MDM is provided in Section \ref{sec:literature review}. A key advantage of this model is that it allows choice probabilities to be readily computed from tractable convex optimization formulations.  Besides tractability, MDM has been shown to exhibit good empirical performance in various applications using real-world datasets (see \citealt{natarajan2009persistency,mishra2014theoretical,ahipacsaouglu2019distributionally,sun2020unified,zhenzhen,liu2022pricing}). 
More recently, price optimization has been shown to be computationally tractable with MDM (see \citealt{zhenzhen}) and a half approximation guarantee has been developed for profit-nested heuristic in assortment optimization (see \citealt{sun2020unified}). The formulation of MDM has also become useful in deriving prophet inequalities for Bayesian online selection problems (see \citealt{feldman2021online}) and solving smoothed optimal transport formulations (see \citealt{tacskesen2022semi}). 
%choice probabilities are efficiently computable using convex optimization for any given collection of marginal distributions. Besides choice probabilities, MDM  allows tractable formulations for treating combinatorial structures involving subset choices (or) bundling of products, network structures, and multiple purchases as well (see \citealt{natarajan2009persistency,ahipacsaouglu2016flexibility,ahipasaoglu2020assortment,liu2022product}).

Although a general specification of MDM does not impose restrictions on the marginal distributions of the random utilities, the estimation of MDM from data in practice typically requires first committing to an appropriate parametric family for the marginal distributions of the utilities (an exception in  \citealt{zhenzhen} which uses piece-wise linear marginal cumulative distribution functions). 
%{\color{blue}By restricting the marginal distributions to identical exponential distributions, MDM reduces to MNL. Generalizing from MDM and restricting the marginal distributions to generalized exponential distributions, the modeler is able to work with popular parametric classes such as generalized extreme value models (see \citealt{mishra2014theoretical}).}
 Upon fixing suitable parametric families for the marginal distributions, the respective parameters are estimated from data using a procedure like maximum-likelihood and the corresponding choice probability predictions are made using convex optimization (see, e.g., \citealt{mishra2014theoretical}). 
Fixing the ``right'' parametric families can however be a tricky exercise and is prone to suffering from underfitting and overfitting issues. Just as how parametric restrictions to RUM are deemed to be restrictive (see, e.g.,  \citealt{farias2013nonparametric}),  a workflow requiring prior commitment to fixed parametric families of distributions does not allow one to leverage the full modeling power offered by MDM to extract as much structural information as possible from the data.

%\textcolor{red}{(Yanqiu: should we modify the motivating question here to fit in the workflow we are using in the new version?) }
While one might seek to conduct data-driven estimation and prediction under the MDM hypothesis, the challenge remains that it is currently unclear how to do so without imposing parametric assumptions on the marginal distributions of utilities. {\color{black} Specifically, given data from different offer sets, is there a tractable method to verify whether the data can be represented by MDM? If the data cannot be represented by MDM, how close is it to being MDM-representable? Moreover, can we leverage this representability for prediction and prescription tasks?}
%\textcolor{red}{(Yanqiu: I am slightly confused by “can we identify the MDM-representable choice model that most closely approximates the data”.  The limit formulation specifies the closest fit to the choice data and there is a class of MDMs that can give this best fit. Then, are we identifying the MDM-representable choice model that gives the best fit? )} 
%Moreover, how effective is this approximation for prediction or prescription tasks?

% Similar to RUM, does working with the entirety of MDM lead to intractable formulations? To begin with, is verifying the consistency of given choice data with MDM computationally tractable?

% To address these questions and gain an understanding of the representational power of MDM, this paper seeks to answer the following fundamental question: \textit{What is the structure of the observable sales data that is necessary and sufficient for consistency with the MDM hypothesis?} 

More formally, suppose that $\mathcal{N} = \{1,\ldots,n\}$ is the universe of products, and we have choice data for a collection $\mathcal{S}$ of subsets of $\mathcal{N}.$ For each subset $S \in \mathcal{S},$ let $p_{i,S} \in [0,1]$ denote the fraction of customers who purchased product $i$ when the assortment $S$ was offered. {\color{black} Our objective is to determine whether, and to what extent, the choice data, denoted as $\bp_\cS = (p_{i,S} : i \in S, S \in \cS)$, deviates from an MDM instance.} Using a tractable characterization, we aim to utilize it to develop data-driven procedures that can leverage the full modeling power of the MDM hypothesis to make revenue and sales predictions, without restricting one to make parametric distributional assumptions.
%for predicting choice probabilities and revenues for any new assortment $A \subseteq \mathcal{N}, A \notin \mathcal{S}$ for which choice-data is not available. Can one develop end-to-end learning procedures that allow data to automatically select the marginal distributions of utilities based on the prediction task at hand, thereby avoiding the risk of misspecification in committing to inflexible marginal distribution families beforehand?

\subsection{Contributions}
\label{sec:intro-contributions}
An effort towards addressing these goals leads us to the following contributions in this paper. %described in Subsections \ref{sec:intro-MDMchar} - \ref{sec:intro-expt} below. 
\subsubsection*{An exact characterization of the choice probabilities represented by MDM and its tractability.}
\label{sec:intro-MDMchar}
We show that the choice data $\bp_\cS = (p_{i,S}: i \in S, S \in \cS)$ given for a collection of assortments $\mathcal{S}$ is representable by MDM if and only if there exists a utility function $U: \mathcal{S} \rightarrow \mathbb{R}$ representing the preferences expressed across assortments in the choice data; in particular, the utility $U$ should exhibit a strict preference for an assortment $T$ over another assortment $S$ containing a common product $i$ if $p_{i,S} < p_{i,T},$ and exhibit indifference between $S$ and $T$ if $p_{i,S} = p_{i,T} \neq 0$ (see  Theorem \ref{thm:feascon-mdm} for a precise statement). The existence of a utility function implies a rational preference relation (or a ranking) over assortments, and it allows us to make the following deductions regarding the tractability and representational power of MDM: 
\begin{itemize}[leftmargin=1.25em]
\item %\textit{Tractability of the MDM characterization.} 
The characterization in Theorem \ref{thm:feascon-mdm}, in terms of the existence of a ranking over assortments, lends itself to be verified with a linear program whose size is polynomial in the number of products and assortments. This is in contrast to RUM which requires the existence of a distribution over the $n!$ rankings possible for $n$ products. Thus, unlike RUM, verifying the consistency of given choice data with the MDM hypothesis can be accomplished in polynomial time.

\item %\textit{A study of the representational power of MDM.} 
The collection of MDM representable choice probabilities possesses a non-zero measure when considered relative to the collection of all possible choice probabilities, unlike models with a fixed number of parameters, such as MNL or nested logit, have zero measure (see Theorem \ref{thm:mdm_positive_measure}). Additionally, the characterization in Theorem 1 reveals that MDM and RUM do not subsume each other in terms of the choice probabilities they can represent (see Theorem \ref{thm:mdm-rum-relation}). 
\end{itemize}

\subsubsection*{A nonparametric data-driven approach to prediction and estimation.} 
\label{sec:intro-app}
As the sales data available in practice is often inadequate to entirely specify the probabilistic behavior of the utilities, we utilize the nonparametric MDM characterization in Theorem 1 together with robust optimization as the basis for making sales and revenue predictions for any new assortment with no prior sales data.  Specifically, we develop a data-driven approach which builds upon the exact MDM characterization to produce worst-case estimates of sales and revenues computed over all MDM instances that are consistent with the given choice data. %The existence of a utility function over assortments, as required by Theorem \ref{thm:feascon-mdm}, places informed restrictions on how large (or small) the choice probabilities for new assortments can be. 
%The proposed robust approach can be viewed as utilizing these restrictions to allow the data to automatically select the MDM choice model based on the prediction task at hand, thereby eliminating the need to commit to fixed parametric families of marginal distributions before estimation. 
This robust approach mitigates the risk of misspecification and enables end-to-end learning for MDM. The characterization can also be used to develop optimistic best-case estimates, which, along with the worst-case estimates, yield prediction intervals for sales and revenues over all MDM instances consistent with the given choice data. The procedure yields narrower intervals for sales and revenue predictions when data for more assortments become available, as is desirable for any data-driven method.

When the choice data is not fully consistent with the MDM hypothesis, we develop a ``limit of MDM'' formulation to quantify the degree of inconsistency. Inspired by the  ``limits of rationality'' measure proposed in \cite{jagabathula2019limit}, we define the limit of MDM as the smallest loss that can be obtained by fitting MDM to the given choice data. A model which attains the minimum loss can be interpreted as offering the best fit, within the MDM family, to any given choice data. This best fitting MDM can be subsequently used to produce robust revenue and sales predictions as described above. Utilizing the exact MDM characterization in Theorem \ref{thm:feascon-mdm}, we reduce the computation of the limit of MDM to the rank aggregation problem \citep{dwork2001rank} and show that it is NP-hard. %(Theorem \ref{thm:NP}).  
We develop a mixed integer convex program that is applicable generally for computing the limit (see Proposition \ref{prop:micp}). We also propose algorithms whose running time are polynomial in both the number of alternatives and the size of the assortment collection if the assortment collection possesses suitable structure (see Corollary \ref{cor:limit_structure}).

\subsubsection*{Numerical insights.}
\label{sec:intro-expt}
%{\color{red} revise numerical insights}{\color{blue}We present numerical results based on real data from JD.com \citep{shen2020jd}, which reveal MDM's utility as a computationally tractable alternative with good representational power when the assumptions underlying parametric models such as MNL and LC-MNL are violated. The results also show that MDM outperforms RUM and MNL in predicting revenue-based assortment rankings by an average of 12.63\% for RUM and 15.39\% for MNL. Likewise, we find that the out-of-sample revenue of the assortment prescribed by MDM incurs a regret that is smaller by a margin of 4.5 and 6.5 percentage points, respectively, than those assortments prescribed as optimal by its RUM and MNL counterparts. These results underscore the ability of our proposed data-driven methods to mitigate model misspecification risks.}
{\color{black} Numerical experiments conducted on real data from JD.com \citep{shen2020jd} and synthetic data reveal the following insights on the performance of the proposed nonparametric MDM model, in relation to that of RUM and common parametric models such as MNL, Markov chain choice model (MCCM), and Latent Class-MNL (LC-MNL): First, the numerical results reveal that MDM offers almost as good a fit as the expressive RUM class %in both real data and various synthetic data experiments, 
while being computationally much faster. In addition, MDM offers a significantly better fit than popular parametric models such as MNL, LC-MNL, and MCCM, with MCCM offering competitive performance in some instances. Next, when measuring the predictive abilities of the models by means of (i) ranking of unseen assortments in terms of the respective revenues and (ii) the revenue of the optimal assortment identified by each model from a given collection of unseen assortments, we find that MDM outperforms MNL and robust nonparametric approach to RUM, and it performs comparably well as MCCM and LC-MNL. Together, these observations underscore MDM's utility in offering a good balance between representation power and computational effort, while allowing for robust procedures for estimation and prediction which perform well in the absence of data about unseen assortments.}   

%We present numerical results based on real data from JD.com \citep{shen2020jd} and synthetic data. The results reveal MDM's utility as a computationally tractable alternative with good representational power 
%{\color{red} when the assumptions underlying parametric models such as MNL, Markov chain choice model (MCCM), and Latent Class-MNL (LC-MNL) are violated.} The results also show that MDM outperforms RUM and MNL in predicting revenue-based assortment rankings and incurs less regret {\color{red} with its prescriptions when compared to}
%in predicting the out-of-sample revenue of the assortment as compared to 
%assortments prescribed as optimal by the RUM and MNL counterparts and performs comparably well as MCCM and LC-MNL. These results underscore the effectiveness of our proposed data-driven methods in tractably mitigating the risks due to model misspecification.
{\color{black} Figure \ref{fig:flowchat} summarizes how the above applications of our exact characterization of the MDM model (Theorem \ref{thm:feascon-mdm}) could combine  to constitute a practically useful workflow in which optimization serves as the basis for computing choice probabilities and revenues for previously unobserved assortments.}
%main contribution of the paper and presents a workflow demonstrating how one can implement a revenue prediction task using our proposed method. While this workflow is the basis of this paper, the methods themselves can be utilized independently for other tasks.
{\color{black}
\begin{figure}[h!] 
\tikzstyle{startstop} = [rectangle, rounded corners, 
minimum width=3cm, 
minimum height=1cm,
text centered, 
draw=black, 
fill=white!30]

\tikzstyle{io} = [trapezium, 
trapezium stretches=true, % A later addition
trapezium left angle=70, 
trapezium right angle=110, 
minimum width=4.5cm, 
minimum height=1cm, 
text width=4.5cm, 
%text centered, 
draw=black, fill=white!30]

\tikzstyle{process} = [rectangle, 
minimum width=5cm, 
minimum height=1cm, 
%text centered, 
text width=5cm, 
draw=black, 
fill=white!30]

\tikzstyle{process2} = [rectangle, 
minimum width=4cm, 
minimum height=1cm, 
%text centered, 
text width=4cm, 
draw=black, 
fill=white!30]

\tikzstyle{decision} = [diamond, 
minimum width=3cm, 
minimum height=1cm, 
text width=3cm, 
text centered, 
draw=black, 
fill=white!30]

\tikzstyle{arrow} = [thick,->,>=stealth]

\tikzset{global scale/.style={
    scale=#1,
    every node/.append style={scale=#1}
  }
}

\begin{center}
\scalebox{0.95}{\begin{tikzpicture}[node distance=2cm]
\node (dec1)[decision]{Is $\bp_\cS$  represented by MDM? };
%\node (in1) [io, above of = dec1, yshift = 1.5cm] {Inputs: historical choice data $\bp_\cS$, new assortment $A,$price vector $\boldsymbol{r}_A$};
\node (pro1)[process, right of = dec1, xshift = 3.5cm, yshift = 1.2cm ] {Set MDM choice probabilities $\bx_\cS^* = \bp_\cS$}; 
\node (pro2) [process, right of = dec1, xshift = 3.5cm, yshift = -1.2 cm ]{Find the best fitting MDM choice probabilities $\bx_\cS^*$ to $\bp_\cS$};
\node (pro3) [process, right of = dec1, xshift = 9.5 cm ]{Nonparametric revenue prediction for new assortment $A$ with $\bx_\cS^*$};
%\draw [arrow] (in1) -- (dec1);
\draw (2.1,0) -- (2.5,0);
\draw (2.5,0) -- (2.5,-1);
\draw (2.5,0) -- (2.5,1);
\draw [arrow] (2.5,-1) |- node[anchor=east] {NO} (pro2);
\draw [arrow] (2.5,1) |- node[anchor=east] {YES} (pro1);
\draw (8.15,1) -- (8.5,1);
\draw (8.15,-1) -- (8.5,-1);
\draw (8.5,1) -- (8.5,0);
\draw (8.5,-1) -- (8.5,0);
\draw [arrow] (8.5,0) -- (pro3);
\node[text width=4cm] at (0, - 2.8) {Polynomial-sized LP};
\node[text width=4cm] at (0.8, - 3.2) {(Theorem  \ref{thm:feascon-mdm}) };
%\node[text width=4cm] at (0.8, - 3.7) {(Theorem  \ref{thm:feascon-gmdm}) };
\node[text width=4cm] at (6.2, - 2.2)  {Compact MICP};
\node[text width=4cm] at (6.3, - 2.6) {(Proposition  \ref{prop:micp}) };
%\node[text width=4cm] at (6.3, - 3.1) {(Proposition  \ref{prop:gmdm_micp}) };
\node [text width=4cm] at (12.2,  -1)  {Compact MILP};
\node[text width=4cm] at (12.3, - 1.4) {(Proposition \ref{prop:wc-rev-milp})};
%\node[text width=4cm] at (12.3, - 1.9) {(Proposition \ref{model:micp_gmdm_revenue_prediction})};
\end{tikzpicture}}
\end{center}
\caption{A summary of main contributions of the paper together with a workflow}
\label{fig:flowchat}
\end{figure}}The rest of this paper is organized as follows. We begin with a brief review of related literature and provide a precise description of MDM in Section \ref{sec:literature review}. We derive the exact characterization of MDM representable choice probabilities in Section \ref{sec:mdm-characterization} and discuss its implications for tractability and representational power. In Sections \ref{sec:prediction-mdm} and \ref{sec:lom}, we develop data-driven methods for prediction and estimation as applications of the characterization. 
%and present algorithms which are capable of utilizing structures in assortment collection, if any, in Section \ref{sec:enumeration-algos}.
We conclude with a discussion after presenting the results of numerical experiments in Section \ref{sec:experiment} and \ref{sec:exp_limitation}. %We discuss the case where the products can be grouped based on the similarity of the marginal distribution of their utilities in Section \ref{sec:gmdm}. 
Proofs, illustrative examples, results of additional experiments, and more information on the experiments are in the electronic companion (EC).
%are provided in the electronic companion (EC). 

\section{Related literature and a description of MDM}
\label{sec:literature review}
We begin with a concise overview of studies that aim to characterize and relate choice probabilities obtainable under prominent choice model hypotheses. Additionally, we discuss related results on their tractability and methods for estimation when equipped with sales data.

\subsection{On the %choice probability 
characterizations available for choice models}
\label{sec:lit-rev-char}
RUM is perhaps the most popular class of models in choice modeling. %Let $\cN =\{1,..,n\}$ be the set of products. Let $S \subseteq \cN$ be an assortment, i.e., a collection of products. Note that here we do not explicitly model the outside option; instead, we treat the outside option as one of the products in $\cN$. 
RUM assumes that the utility of each alternative $i$ in the collection of products $\mathcal{N} = \{1,\ldots,n\}$ takes the form $\Tilde{u}_i = \nu_i+\Tilde{\eps}_i$, where $\bv=(\nu_1,...,\nu_n)$ and $\Tilde{\beps}=(\Tilde{\eps}_1,...,\Tilde{\eps}_n)$ denote the deterministic and stochastic parts of the utilities, respectively. Assuming a joint distribution $\theta$ on the random part $\Tilde{\beps}$, the probability of choosing product $i$ in an assortment $S \subseteq \cN$ is given by $p_{i,S} = \mathbb{P}_{\Tilde{\beps} \sim \theta} (i = \arg\max_{j\in S} \{\nu_j + \tilde{\epsilon}_j\}),
$ where the probability of ties is assumed to be $0$. Here note that we do not explicitly model the outside option if available; instead, we treat the outside option as one of the products in $\cN$. %The choice  probability $p_{i,S}$ is also obtainable by differentiating the expected utility with respect to $v_i.$  
Modeling the joint probability distribution of the random utilities with specific parametric distribution families leads to parametric subclasses of RUM such as  MNL \citeg{McFa73},  generalized extreme value model \citeg{mcfadden1977modelling},   nested logit model \citeg{mcfadden1980econometric}, multinomial Probit model \citeg{thurstone1927law, daganzo2014multinomial}, mixed logit model \citeg{mcfadden2000mixed}, and the exponomial choice model \citeg{alptekinouglu2016exponomial}.  %{\color{blue}The exponomial choice model \citeg{alptekinouglu2016exponomial} is a parametric RUM that can be derived by subtracting independent identical distributed exponential error terms.}
%In particular, if $\Tilde{\beps}$ is taken to be independent and identically distributed Gumbel random variables, we obtain the well-known MNL model with the choice probability formula $p_{i,S} = {{e^{\nu_i}}/{ \sum_{j \in S} e^{\nu_j}} }.$ Although the tractability of the MNL model in model estimation makes it a very popular model, this model is restricted by theoretical properties, such as the independence of irrelevant alternatives (IIA). These concerns can be avoided by richer parametric choice models, such as the nested logit model \citeg{mcfadden1980econometric}, the generalized extreme value model \citeg{mcfadden1977modelling}, the multinomial Probit model \citeg{daganzo2014multinomial} and the mixed logit model \citeg{mcfadden2000mixed}. These parametric choice models in RUM vary in the specification of the joint distributions of the random utilities. 
%The rank-list model \citep{block1959random, farias2013nonparametric} is a versatile nonparametric choice model capable of representing any RUM. Notably, the Markov chain choice model introduced by \cite{blanchet2016markov} is a special case of the rank-list model \citep{berbeglia2016discrete}. 
Nonparametric choice models, such as the rank list model \citeg{block1959random,farias2013nonparametric}, have also proven to be useful in practice. 
The Markov chain choice model proposed by \cite{blanchet2016markov} is a special case of the rank-list model \citep{berbeglia2016discrete}. 

Beginning with \cite{Marschak1960} and \cite{block1959random}, considerable effort has been devoted in econometrics towards understanding the restrictions imposed on choice data by the RUM hypothesis. The class of RUM and the class of rank list models are shown to be equivalent in \cite{block1959random}. \cite{falmagne1978representation} has shown that a RUM can represent a system of choice probabilities over all possible assortments if and only if the Block-Marschak conditions are met, see also \cite{barbera1986falmagne}.  \cite{mcfadden1990stochastic} has shown that under certain conditions, the axiom of revealed stochastic preference provides necessary and sufficient conditions for the choice probabilities under different assortments that can be recreated by a RUM. \citet{mcfadden2000mixed} has demonstrated that any RUM can be approximated closely by a mixed logit model. \cite{mcfadden2006revealed} adds more conditions that relate to the findings in \cite{falmagne1978representation} and \cite{mcfadden1990stochastic}. However, verifying these conditions is   computationally intractable when there are a large number of products. %\textcolor{red}{\cite{jagabathula2019limit} has been the first to formally show the hardness of this problem and to characterize the complexity using the notion of choice depth.} 
\cite{jagabathula2019limit} has been the first to relate the hardness of the stochastically  rationalizable property stipulated by RUM to the notion of choice depth.
They provide examples of structured assortment collections for which verification of consistency with RUM is computationally tractable.

%The studies on verifying whether a RUM can represent the choice probabilities under a set of assortments have received significant attention in economics. \cite{falmagne1978representation} shows that a system of choice probabilities defined over all possible assortments can be represented by a RUM if and only if the Block-Marschak conditions are satisfied, see also \cite{barbera1986falmagne}. \cite{mcfadden1990stochastic} show that under certain conditions, the axiom of the revealed stochastic preference provides sufficient and necessary conditions for the choice probabilities under a set of different assortments that can be recreated by a RUM. \cite{mcfadden2006revealed} provides the additional conditions and shows how the conditions relate to the system proposed by \cite{falmagne1978representation} and \cite{mcfadden1990stochastic}. However, these conditions are not easy to check when the number of products is large. \cite{jagabathula2019limit} exhibits some collections of structured assortments such that verifying whether a system of choice probabilities under a collection of assortments can be represented by a RUM can be done efficiently.

Using a representative agent model (RAM) constitutes another popular optimization based approach to model choice. In RAM, a single agent makes a choice on behalf of the entire population. To make her choice, the agent takes into account the expected utility while preferring some degree of diversification. More precisely, given an assortment $S$, the representative agent solves %the optimization problem,
\begin{align} \label{e1}
    \max \left\{\bv^T\bx - C(\bx) \ \left| \ \bx \in \Delta_{n-1}, \ x_i=0 \ \forall i \notin S \right.\right\},
\end{align}
where $\Delta_{n-1} = \{ \bx \in \mathbb{R}^n_+  | \sum_{i \in \cN}x_i = 1 \}$ is the unit simplex and 
$C(\bx):\Delta_{n-1}\mapsto\mathbb{R}$ is a convex perturbation function that rewards diversification. %in (\ref{e1}). 
The optimal $x_i$ value provides the fraction of the population that chooses alternative $i$ in assortment $S$. \cite{hofbauer} has shown that all RUM can be expressed using a representative agent model under appropriate conditions on the perturbation functions $C(\bx)$ (see also \citealt{feng65technical}). 

The APU model in  \citet{fudenberg2015stochastic} can be obtained as a special case of RAM in \eqref{e1} by taking the additive and separable perturbation $C(\bx)=\sum_i c(x_i),$ where $c(x):[0,1]\mapsto\mathbb{R}$ is a strictly convex function. \cite{fudenberg2015stochastic}  demonstrates acyclicity and ordinal IIA property, which is a relaxation of Luce's IIA condition, as two alternative conditions that characterize the richness of APU representable choice probabilities. Equipped with these results, \cite{fudenberg2015stochastic} argues for APU as a considerably simpler and expressive model alternative which helps go beyond RUM while requiring only the specification of the univariate convex perturbation function $c(\cdot).$

%A characterization for APU in \cite{fudenberg2015stochastic}, in terms of seperable representations, can be obtained as a special case of our exact characterization for grouped-MDM in \eqref{thm:feascon-gmdm}. 

%and is a special case of the RAM. In this model, the perturbation is additive and separable with $C(\bx)=\sum_i c(x_i)$ where $c(x):[0,1]\mapsto\mathbb{R}$ is a strictly convex function:
% \begin{align}
% \label{apu}
%     \max_{\bx \in \Delta_{n-1}} \bv^T\bx - \sum_{i \in \cN} c(x_i).
% \end{align}
%\begin{align}
%\label{apu}
% \max \left\{ \bv^T\bx - \sum_{i \in \cN} c(x_i) \ \left| \ \bx \in \Delta_{n-1},  x_i=0 \ \forall i \notin S \right.\right\}.
%\end{align}

\subsection{The marginal distribution model and related literature}
\label{sec:lit-rev-mdm}
The marginal distribution model (MDM)  is a semiparametric choice model that yields choice probabilities from limited information on the joint distribution of the random utilities. As in RUM, the starting point of MDM is that the utility of each alternative $i$ in the collection of products $\mathcal{N} = \{1,\ldots,n\}$ takes the form $\Tilde{u}_i = \nu_i+\Tilde{\eps}_i$, where $\bv=(\nu_1,...,\nu_n)$ and $\Tilde{\beps}=(\Tilde{\eps}_1,...,\Tilde{\eps}_n)$ denote the deterministic and stochastic parts of the utilities. MDM only requires  the specification of the marginal distributions $F_1,\ldots,F_n$ of the random variables $\tilde{\epsilon}_1,\ldots,\tilde{\epsilon}_n,$ and does not impose any independence assumption among $\tilde{\epsilon}_1,\ldots,\tilde{\epsilon}_n$. To describe the model, let $\Theta$  denote the collection of joint distributions for $\Tilde{\beps}$ with the given marginal distributions $F_1,\ldots,F_n.$ For any  assortment $S \subseteq \mathcal{N},$ the MDM considers maximization of expected consumer utility over all distributions in $\Theta$:  
\begin{align}
    \sup_{\theta \in \Theta} \ \mathbb{E}_{\Tilde{\beps} \sim \theta} \left[ \max_{i\in S} \{\nu_i + \tilde{\epsilon}_i\} \right].\label{mdm}
\end{align}
The probability of choosing a product $i$ from the assortment $S,$   given by $p_{i,S} = \mathbb{P}_{\Tilde{\beps} \sim \theta^*} (i = \arg\max_{j\in S} \{\nu_j + \tilde{\epsilon}_j\}),$ is evaluated with the distribution $\theta^*$ which attains the maximum in (\ref{mdm}). A key advantage of this model is that the choice probabilities are readily computable via convex optimization, as described in Lemma \ref{optcon} below.
%(see \citealt{natarajan2009persistency,ahipacsaouglu2016flexibility,ahipasaoglu2020assortment,liu2022product}) 
%choice probabilities are efficiently computable using convex optimization for any given collection of marginal distributions. 
\begin{assumption}
Each random term $\tilde{\epsilon}_i,$ where $i\in \cN,$ is an absolutely continuous random variable with a strictly increasing marginal distribution $F_i(\cdot)$ on its support and $\mathbb{E}|\tilde{\epsilon}_i| < \infty$.\label{asp:general}
\end{assumption}
%{\color{red} pointer to assumption 1: we actually have pointed out that the choice probabilities of MDM are unique under Assumption in Lemma \ref{optcon} below.}
\begin{lemma}
\label{optcon}
\citep{natarajan2009persistency,mishra2014theoretical,chen2022distributionally} Under Assumption \ref{asp:general}, the choice probabilities for a distribution which attains the maximum in (\ref{mdm}) is unique and is given by the optimal solution of the following strictly concave maximization problem over the simplex:
\begin{align}
\max \left\{ \sum_{i \in S}\nu_ix_i + \sum_{i \in S} \int_{1-x_i}^1 F_i^{-1}(t)\,dt \ \  \left| \ \ \sum_{i \in S}x_i = 1, x_i \geq 0 \ \forall i \in S \right.\right\}, \label{cvmdm}
\end{align}
with the convention that $F_i^{-1}(0)= \lim_{t\downarrow 0} F_i^{-1}(t)$ and $F_i^{-1}(1)= \lim_{t\uparrow 1} F_i^{-1}(t)$.
%\label{lem:mdm}
\end{lemma}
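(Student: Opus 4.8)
The plan is to prove the lemma by sandwiching the value of \eqref{mdm} between matching upper and lower bounds, both equal to the optimum of the concave program \eqref{cvmdm}, and then to read off uniqueness of the choice probabilities from the strict concavity of \eqref{cvmdm}.

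First I would derive the upper bound. For any $\theta \in \Theta$, absolute continuity (Assumption \ref{asp:general}) guarantees that $\arg\max_{j\in S}\{\nu_j + \tilde\epsilon_j\}$ is almost surely unique, so I can decompose the objective by conditioning on the identity of the winner, each term being finite by the integrability in Assumption \ref{asp:general}. Writing $x_i = \P_\theta(i = \arg\max_{j\in S}\{\nu_j+\tilde\epsilon_j\})$, this gives $\E_\theta[\max_{i\in S}\{\nu_i + \tilde\epsilon_i\}] = \sum_{i\in S}\nu_i x_i + \sum_{i\in S}\E_\theta[\tilde\epsilon_i\, \mathbf{1}\{i \text{ wins}\}]$. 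The crucial ingredient is then a Hardy--Littlewood-type rearrangement inequality: among all events $A$ with $\P(A) = x_i$, the quantity $\E[\tilde\epsilon_i \mathbf{1}_A]$ is maximized by the upper-tail event $\{\tilde\epsilon_i \ge F_i^{-1}(1-x_i)\}$, so $\E_\theta[\tilde\epsilon_i \mathbf{1}\{i\text{ wins}\}] \le \int_{1-x_i}^1 F_i^{-1}(t)\,dt$. Since $(x_i)_{i\in S}$ lies in the simplex, summing shows that the value of \eqref{mdm} is at most the optimum of \eqref{cvmdm}.

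For the matching lower bound I would exploit the structure of \eqref{cvmdm}. Because $F_i$ is strictly increasing, $h_i(x_i) := \int_{1-x_i}^1 F_i^{-1}(t)\,dt$ has derivative $F_i^{-1}(1-x_i)$ which is strictly decreasing, so \eqref{cvmdm} is strictly concave over the compact simplex and admits a unique maximizer $x^\ast$. The KKT conditions furnish a multiplier $\lambda$ for the constraint $\sum_{i\in S}x_i = 1$ with $\nu_i + F_i^{-1}(1-x_i^\ast) = \lambda$ for every $i$ with $x_i^\ast > 0$ (and $\nu_i + F_i^{-1}(1) \le \lambda$ when $x_i^\ast = 0$). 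I would then build an explicit coupling $\theta^\ast$ driven by a single uniform $U$: partition $(0,1]$ into consecutive intervals of lengths $x_i^\ast$, declare $i$ the winner on its own interval, set $\tilde\epsilon_i$ to sweep its upper $x_i^\ast$-quantile there (so $\nu_i + \tilde\epsilon_i \ge \lambda$) and to sweep its lower $(1-x_i^\ast)$-quantile on the complement via any measure-preserving map (so $\nu_i + \tilde\epsilon_i \le \lambda$ there). This keeps each marginal equal to $F_i$, makes the declared winner the true $\arg\max$ up to a measure-zero set of ties, and attains equality in the rearrangement bound; hence the value of \eqref{mdm} equals the optimum of \eqref{cvmdm}.

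Finally, uniqueness of the choice probabilities follows because any optimal $\theta$ must turn both the rearrangement inequality and the simplex bound into equalities, which forces its induced vector $(x_i)_{i\in S}$ to be a maximizer of \eqref{cvmdm}; strict concavity makes this maximizer unique and equal to $x^\ast$. I expect the main obstacle to be the lower-bound construction: one has to verify simultaneously that the single-uniform coupling reproduces each prescribed marginal $F_i$, that the threshold alignment supplied by the KKT multiplier $\lambda$ renders the intended winner maximal almost surely, and that ties occur only on a null set so the induced choice probabilities are exactly $x^\ast$. The rearrangement inequality and the differentiability and strict-concavity facts are comparatively routine consequences of Assumption \ref{asp:general}.
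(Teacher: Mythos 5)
The paper does not prove this lemma---it is imported from \citet{natarajan2009persistency}, \citet{mishra2014theoretical}, and \citet{chen2022distributionally}---and your argument is correct and is essentially the standard proof from those references: the Hardy--Littlewood rearrangement bound conditioned on the winner gives the upper bound, a single-uniform quantile coupling aligned by the KKT multiplier $\lambda$ attains it, and strict concavity of the separable objective delivers uniqueness of the induced choice probabilities. The only slip is the claim that absolute continuity of the marginals makes the $\arg\max$ almost surely unique for \emph{every} $\theta\in\Theta$ (a degenerate joint, e.g.\ comonotone noise with equal $\nu$'s, can produce ties with positive probability); this is harmless because the upper bound only needs a measurable tie-breaking partition, and you separately verify that ties are null under the constructed optimizer $\theta^\ast$.
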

%\textcolor{red}{(An EC referee has asked for  intuition behind the equivalence in Lemma \ref{optcon} given how important it is for the rest of the paper)}

The reformulation of MDM in (\ref{cvmdm}) shows that it is a special case of the representative agent model (\ref{e1}), in which the perturbation function is strictly convex and separable of the form $C(\bx) = -\sum_{i} \int_{1-x_i}^1 F_i^{-1}(t)\,dt$. If the marginal distributions are identical, MDM reduces to the additive perturbed utility (APU) model described in Section \ref{sec:lit-rev-char}. Thus by subsuming the APU model, MDM  provides a probabilistic utility interpretation for APU. Given any assortment $S \subseteq \cN,$ a vector $(x_i^\ast: i \in S) \in \mathbb{R}^{\vert S \vert}$ maximizes \eqref{cvmdm} and  yields the MDM choice probabilities $p_{i,S} = x^\ast_{i}$ if and only if it satisfies the following optimality conditions for \eqref{cvmdm}: 
%The necessary and sufficient optimality conditions of MDM are given by:
\begin{align}
\label{optimality}
    \nu_i+F_i^{-1}(1-x_i^*)-\lambda+\lambda_i = 0,& \quad  \forall i \in {S}, \nonumber\\
    \lambda_i x_i^*=0, &\quad \forall i \in {S},\\
    \sum_{i \in S} x_i^* = 1,&\nonumber\\
    x_i^* \geq 0, \lambda_i \geq 0,& \quad \forall i \in {S}, \nonumber
\end{align}
where $\lambda$ and $\lambda_i$ are the Lagrange multipliers associated with the constraints defining the simplex. An additional assumption that $F_i^{-1}(1) = +\infty$ is often made \citeg{mishra2014theoretical} to guarantee strictly positive choice probabilities. However, in real datasets, one or more alternatives offered in an assortment might  never be chosen by consumers. In this paper, we allow for this possibility by permitting the right end point of the support, $F_i^{-1}(1),$ to be possibly finite or infinite.

%Besides choice probabilities, MDM  allows tractable convex formulations for treating combinatorial structures involving subset choices, bundling of products, network structures, and multiple purchases as well (see \citealt{natarajan2009persistency,ahipacsaouglu2016flexibility,ahipacsaouglu2019distributionally, ahipasaoglu2020assortment,liu2022product}). The formulation in \eqref{mdm} has also become useful in deriving prophet inequalities for Bayesian online selection problems \citeg{ocrs_zenklusen} and solving smoothed optimal transport formulations \citeg{tacskesen2022semi}.  MDM is also known to recreate certain parametric choice models such as the multinomial logit model and generalized extreme value model under appropriately selected marginal distributions (see \citealt{mishra2014theoretical}). 

%To the best of our knowledge however, there is no known characterization for MDM which brings out its richness in representing choice probabilities across different assortments.  
%of the patterns of restrictions placed by the MDM on choice probabilities across different assortments. 
%In this paper we develop an exact  characterization of the restrictions placed by the MDM hypothesis on the choice probabilities across any collection of assortments,  motivated by how such characterizations noted for RUM in Section \ref{sec:lit-rev-char} have been influential towards understanding and using RUM. 

\subsection{Related estimation approaches}
Maximum likelihood estimation is the widely used method for parameter estimation when working with parametric subclasses of RUM and MDM. %Making  specific distributional assumptions under the RUM and MDM modeling paradigms, 
One may refer \cite{manski1981alternative,manski1977estimation, cosslett1981maximum} and \cite{mishra2014theoretical} for illustrations on how one may use maximum likelihood to estimate parametric models including the MNL model, the multinomial Probit model, and the generalized extreme value model from choice data, after making specific distributional assumptions under the RUM and MDM paradigms.    \cite{farias2013nonparametric} offers a major departure by developing a nonparametric approach that makes revenue predictions under the RUM hypothesis without imposing restrictive parametric distributional assumptions. In particular, \cite{farias2013nonparametric} showcases the efficacy of using the worst-case expected revenue over the collection of RUM models that are consistent with the available sales data. \cite{sturt2024value} presents an account of when these robust RUM revenue predictions can serve as a suitable basis for assortment optimization. Given an opportunity to collect data by performing pricing experiments, \cite{zhenzhen} and \cite{liu2022pricing} use a nonparametric approach to identify the MDM objective in \eqref{cvmdm}, upto a constant shift, by performing sufficient pricing experiments. Our approach for revenue predictions, which is described in Section \ref{sec:intro-app}, follows the same philosophy as \cite{farias2013nonparametric}, but with the novelty of using the MDM characterization to make predictions that are consistent with the MDM hypothesis. {\color{black} Our work also relates to the nonparametric estimation literature in economics \citeg{berry2014identification,compiani2022market}. This body of research examines the identifiability and asymptotic properties of nonparametric estimation methods, such as the method of sieves. Through asymptotic analysis, they provide confidence intervals for predictions. Our approach in this paper differs by remaining within the MDM framework, which allows us to preserve its parsimony while maintaining the nonparametric flexibility of marginal distributions. \textcolor{black}{Unlike the fully nonparametric methods, remaining within the MDM framework might impose restrictions on the correlation structure of the utilities. We empirically illustrate this phenomenon and its implications in Section \ref{sec:exp_correlation}.} Furthermore, our prediction interval is based on the best- and worst-case scenarios within the MDM class, instead of being based on asymptotic analysis.} %\textcolor{black}{Another distinction is that MDM might impose restrictions on the correlation structure of utilities, unlike these nonparametric methods in the economic literature. As we will empirically show in Section \ref{sec:exp_correlation}, the estimation and prediction of MDM are less effective when the utilities of products are very positively correlated since its extremal distribution of \eqref{mdm} encourages negative correlation, whereas existing approaches allow arbitrary correlations.} %\textcolor{purple}{reviewer 2: In the comparison with the economics literature in Section 2.3, I would highlight that the MDM approach also imposes meaningful restrictions on the correlation in utilities, as shown by the results for the case with comonotonic copula in Table 21. The existing nonparametric methods in the literature do not impose similar restrictions.}

\section{An exact characterization for MDM and its implications}
\label{sec:mdm-characterization}
In this section, we first develop necessary and sufficient conditions for the choice probabilities given by a collection of assortments that are representable by MDM. We follow this up with a discussion of its implications for tractability and representational power.

\subsection{A tractable characterization for MDM}
We begin by recalling that $\cN = \{1,\ldots,n\}$ denotes the universe of the products (or alternatives) and $\cS$ denotes the collection of subsets of $\mathcal{N}$ for which choice data is available.  Each $S \in \cS$ is an assortment of the products presented to the consumers. For each assortment $S \in \cS$, let $p_{i,S}$ be the fraction of population who choose product $i \in S.$ For any assortment collection $\cS,$
let $\cI_\cS$ denote the collection of all product-assortment pairs $(i,S)$ with $i\in S$  and $S \in \cS.$
Then the observed choice probability collection $\bp_\cS = (p_{i,S}: i \in S, S \in \cS)$ is a non-negative vector in $\mathbb{R}^{\vert \cI_\cS \vert}$ and satisfies $\sum_{i \in S} p_{i,S} = 1$ for every $S \in \cS.$ We are interested in identifying necessary and sufficient conditions on the observable sales data  $\bp_\cS$ which make it consistent with the MDM hypothesis. A natural follow-up question is: Can these conditions be verified in polynomial time? The following theorem provides an affirmative answer to these questions.
\begin{theorem}[A tractable characterization for MDM]
\label{thm:feascon-mdm}
    Under Assumption \ref{asp:general}, a choice probability collection $\bp_\cS$ is representable by an MDM  if and only if there exists a function $\lambda: \mathcal{S} \rightarrow \mathbb{R}$
    %$\blam \in \mathbb{R}^m$ 
    such that for any two assortments $S,T \in \cS$ containing a common product $i \in \cN,$ 
    %for all $(i,S),(i,T)\in \cI$:
\begin{align}
\begin{aligned}
     &\lambda(S)  > \lambda(T) \quad \text{if} \quad p_{i,S} < p_{i,T},\\
     & \lambda(S)  = \lambda(T)  \quad \text{if}\quad  p_{i,S} = p_{i,T} \neq 0.   \label{eq:mdm-feascon}
\end{aligned}
\end{align}
As a result, checking whether the given choice data $\bp_\cS$ satisfies the MDM hypothesis can be accomplished by solving a  linear program %whose size is polynomial in the number of products and assortments. 
with $\mathcal{O}(\vert \cS \vert)$ continuous variables and $\mathcal{O}(n \vert \cS \vert) $ constraints.  
\end{theorem}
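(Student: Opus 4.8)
The plan is to prove the two implications separately and then read off the algorithmic claim from the structure of the conditions in \eqref{eq:mdm-feascon}. The bridge in both directions is Lemma \ref{optcon}: since the program \eqref{cvmdm} is strictly concave over the simplex, its maximizer is unique and is characterized exactly by the optimality conditions \eqref{optimality}. For each assortment $S \in \cS$ I would let $\lambda(S)$ denote the Lagrange multiplier attached to the simplex constraint $\sum_{i \in S} x_i = 1$ in \eqref{optimality}; since any optimal solution has at least one product with $x_i^\ast > 0$, the relation $\lambda(S) = \nu_i + F_i^{-1}(1-p_{i,S})$ for such a product shows $\lambda(S)$ is uniquely pinned down. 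This candidate function $\lambda:\cS \to \R$ is what will witness \eqref{eq:mdm-feascon}.

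For necessity, suppose $\bp_\cS$ is MDM-representable and fix a product $i$ common to $S,T \in \cS$. The key fact I would use is that $F_i^{-1}$ is strictly increasing (Assumption \ref{asp:general}). If both $p_{i,S},p_{i,T}>0$, complementary slackness forces $\lambda(S) = \nu_i + F_i^{-1}(1-p_{i,S})$ and $\lambda(T) = \nu_i + F_i^{-1}(1-p_{i,T})$, so $p_{i,S}<p_{i,T}$ gives $1-p_{i,S}>1-p_{i,T}$ and hence $\lambda(S)>\lambda(T)$, while $p_{i,S}=p_{i,T}$ gives $\lambda(S)=\lambda(T)$. The remaining case is $p_{i,S}=0<p_{i,T}$: here the stationarity condition for the unchosen product yields $\lambda(S) \ge \nu_i + F_i^{-1}(1) > \nu_i + F_i^{-1}(1-p_{i,T}) = \lambda(T)$, again matching \eqref{eq:mdm-feascon}. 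When $p_{i,S}=p_{i,T}=0$ no condition is imposed. This disposes of necessity using only monotonicity and the KKT relations.

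For sufficiency I would take a witness $\lambda$ satisfying \eqref{eq:mdm-feascon} and explicitly build an MDM instance, setting $\nu_i = 0$ for every $i$ and encoding everything into the marginals. For each product $i$, the points $\{(1-p_{i,S},\,\lambda(S)) : S \ni i,\ p_{i,S}>0\}$ are the values I want the quantile function to interpolate, i.e.\ $F_i^{-1}(1-p_{i,S}) = \lambda(S)$. Condition \eqref{eq:mdm-feascon} is exactly what makes this well posed: assortments with equal positive $p_{i,S}$ carry equal $\lambda(S)$ (so no abscissa receives two values), and strictly ordered probabilities carry strictly ordered multipliers in the correct direction, so the points lie on a strictly increasing graph. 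I would interpolate them by a strictly increasing, piecewise-linear quantile function on a bounded support; its inverse is then a strictly increasing, absolutely continuous $F_i$ with finite mean, satisfying Assumption \ref{asp:general}. The crucial step is the placement of the upper endpoint $F_i^{-1}(1)$: for every assortment $S$ with $p_{i,S}=0$ the stationarity condition requires $F_i^{-1}(1) \le \lambda(S)$, and this is feasible precisely because \eqref{eq:mdm-feascon} applied to $p_{i,S}=0<p_{i,T}$ forces $\lambda(S) > \lambda(T)$ for every chosen $T$, so $\min_{S:\,p_{i,S}=0}\lambda(S)$ strictly exceeds all interpolated ordinates and leaves room for a valid endpoint. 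With these marginals, I would verify that for each $S$ the vector $(p_{i,S})_{i\in S}$ together with $\lambda=\lambda(S)$ and $\lambda_i = \lambda(S) - F_i^{-1}(1) \ge 0$ for unchosen products satisfies \eqref{optimality}; strict concavity (Lemma \ref{optcon}) then certifies that these are the MDM choice probabilities. I expect this construction — producing a single marginal per product that is simultaneously consistent across all assortments while respecting the support-endpoint constraints coming from zero- and one-valued probabilities — to be the main obstacle.

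Finally, for the algorithmic claim, the conditions \eqref{eq:mdm-feascon} are a system of equalities and strict inequalities in the $\vert\cS\vert$ variables $\lambda(S)$. Rather than impose all $\mathcal{O}(n\vert\cS\vert^2)$ pairwise relations, for each product $i$ I would sort the assortments containing $i$ by $p_{i,S}$, merge those with equal values, and impose equalities within each value class and a single strict inequality between consecutive classes (and between the zero class and the smallest positive class); transitivity recovers all pairwise relations. This yields $\mathcal{O}(n\vert\cS\vert)$ constraints in total, since $\sum_i \vert\{S : i \in S\}\vert = \sum_S \vert S\vert \le n\vert\cS\vert$. The strict inequalities are handled in the standard way by introducing a scalar margin $\delta$, replacing $\lambda(S)>\lambda(T)$ with $\lambda(S)-\lambda(T)\ge\delta$, bounding $\delta\le 1$, and maximizing $\delta$; the system \eqref{eq:mdm-feascon} is satisfiable if and only if the optimal value is strictly positive, which is decidable by a linear program of the stated size.
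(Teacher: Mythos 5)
Your proposal is correct and follows essentially the same route as the paper: necessity via the KKT conditions of \eqref{cvmdm} together with strict monotonicity of $F_i^{-1}$, sufficiency via an explicit piecewise-linear construction of the marginals with $\nu_i=0$ that interpolates $F_i(\lambda(S))=1-p_{i,S}$ and places $F_i^{-1}(1)$ at the smallest $\lambda$-value among assortments where product $i$ is unchosen, and a margin-maximizing linear program for verification. Your sorting-and-merging argument for reducing the pairwise conditions to $\mathcal{O}(n\vert\cS\vert)$ constraints is in fact a more explicit justification of the stated constraint count than the paper itself provides.
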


A proof for Theorem \ref{thm:feascon-mdm} is given immediately following this discussion. Letting $U(S) = -\lambda(S)$ for $S \in \mathcal{S},$ one may understand the characterization in Theorem \ref{thm:feascon-mdm} as follows: The choice data $\bp_\cS = (p_{i,S}: i \in S, S \in \cS)$ given for a collection of assortments $\mathcal{S}$ is representable by MDM if and only if there exists a utility function $U: \mathcal{S} \rightarrow \mathbb{R}$ satisfying the following two conditions: (i) By assigning $U(S) < U(T),$ the utility $U$ should exhibit a strict preference for an assortment $T$ over another assortment $S$ containing a common product $i$ whenever $p_{i,S} < p_{i,T};$ and (ii) by assigning $U(S) = U(T),$ it should exhibit indifference between $S$ and $T$ whenever $p_{i,S} = p_{i,T} \neq 0.$ 

Since utility functions represent rational preference relationships (or rankings), one may equivalently understand the conditions in Theorem \ref{thm:feascon-mdm} as stipulating the existence of a preference relation over the assortment collection $\cS$ which is consistent with the partial preferences observed in the choice data. %{\color{green} (XL)In particular, for any product $i \in \mathcal{N},$ let $\mathcal{S}_i = \{ S \in \mathcal{S}: i \in S\}$ denote the subcollection of assortments containing product $i,$  and let $\succeq_i$ denote the preference relation $\succeq_i$ over $\mathcal{S}_i$ satisfying $S \succeq_i T$ if and only if $p_{i,S} \geq p_{i,T},$ for $S,T \in \mathcal{S}_i$ and $i \in \mathcal{N}.$ Then %the choice data $\bp_\cS = (p_{i,S}: i \in S, S \in \cS)$ given for a collection of assortments $\mathcal{S}$ is representable by MDM if and only if there the characterization in Theorem \ref{thm:feascon-mdm} can be understood also as requiring the existence of an overall preference relation $\succeq$ over $\mathcal{S}$ whose restrictions to any sub-collection $\mathcal{S}_i$ is included in the respective partial preference list $\succeq_i,$ for every $i \in \mathcal{N}.$(It seems OK to me that we can skip the details here. Just a suggestion)} 
This characterization for MDM, in terms of the existence of a consistent ranking over assortments, is in contrast to RUM which requires the existence of a probability distribution over the $n!$ rankings possible for $n$ products.  This is the key reason, why unlike RUM, verifying the consistency of given choice data with the MDM hypothesis can be done in polynomial time. 

\begin{proof}
%[A proof for Theorem \ref{thm:feascon-mdm}]
\textit{Necessity of \eqref{eq:mdm-feascon}}: Suppose $\bp_\cS$ is MDM-representable. Then there exist marginal distributions $\{F_i: i \in \cN\}$ and deterministic utilities $\{\nu_i: i \in \cN\}$ such that for any assortment $S \in \cS,$  the given choice probability vector $(p_{i,S}: i \in S)$ and the respective Lagrange multipliers $\lambda_S,\{\lambda_{i,S}: i \in S\}$ are obtainable by solving the optimality conditions $\eqref{optimality}.$   That is, there exist $\{\lambda_S, \lambda_{i,S}: i \in S, S \in \cS\}$ for some fixed choice of $\{F_i: i \in \cN\}$ and  $\{\nu_i: i \in \cN\}$ such that  
\begin{align}
     \nu_i + F^{-1}_i(1-p_{i,S}) - \lambda_S + \lambda_{i,S} &\ =\ 0 \quad \forall (i,S)\in \cI_\cS,\label{eq:opcon1} \\
  \lambda_{i,S} \, p_{i,S} &\ = \ 0 \quad \forall (i,S)\in \cI_\cS.\label{eq:opcon2}
\end{align}
%$$ \nu_i + +F^{-1}_i(1-p_{i,S}) - \lambda(S) + \lambda_{i,S} =0 \quad \forall (i,S)\in \cI. $$
For each product $i \in \mathcal{N}$ and any two assortments  $S,T\in \cS$ containing $i$ as a common product,
$$\lambda_S - \nu_i =  \lambda_{i,S}+F^{-1}_i(1-p_{i,S}) \quad \mbox{ and } \quad  \lambda_T - \nu_i =  \lambda_{i,T}+F^{-1}_i(1-p_{i,T}).$$
If $p_{i,S} < p_{i,T}$,  then $\lambda_{i,S} \geq 0$ and $\lambda_{i,T} = 0$ because of the complementary slackness condition \eqref{eq:opcon2}. Since $F^{-1}_i(1-p)$ is a strictly decreasing function over $p \in [0,1]$, by \eqref{eq:opcon1}, we obtain:
$$\lambda_S - \nu_i \   \geq \    F^{-1}_i(1-p_{i,S})\  > \   F^{-1}_i(1-p_{i,T})  \ =\  \lambda_T - \nu_i.$$
Adding $\nu_i$ on both sides, we obtain that the Lagrange multipliers should satisfy $\lambda_S > \lambda_T.$ If on the other hand $p_{i,S} = p_{i,T} \neq 0,$ we have $\lambda_{i,S} = \lambda_{i,T} =  0$ from the optimality conditions. Then $\lambda_S - \nu_i   =   F^{-1}_i(1-p_{i,S}) =  F^{-1}_i(1-p_{i,T})  = \lambda_T - \nu_i$.
% \begin{align*}
%     \lambda_S - \nu_i   =   F^{-1}_i(1-p_{i,S}) =  F^{-1}_i(1-p_{i,T})  = \lambda_T - \nu_i.
% \end{align*}
Again, adding $\nu_i$ on both sides, we obtain that the Lagrange multipliers should satisfy $\lambda_S = \lambda_T.$ Thus, setting $\lambda(S) = \lambda_S$ for all $S \in \cS,$ we see that there exists a function  $\lambda: \mathcal{S} \rightarrow \mathbb{R}$ satisfying \eqref{eq:mdm-feascon}.

\textit{Sufficiency of \eqref{eq:mdm-feascon}}:
Given $\bp_\cS$ and $\lambda: \cS \rightarrow \mathbb{R}$ such that \eqref{eq:mdm-feascon} holds  for all $(i,S),(i,T)\in \cI_\cS$, we next exhibit a construction of marginal distributions $(F_i: i \in \mathcal{N})$ and utilities $(\nu_i: i \in \mathcal{N})$ for MDM. This construction will be such that it yields the given $(p_{i,S}: i \in S)$ as the corresponding choice probabilities from the optimality conditions in \eqref{optimality}, for any assortment $S \in \cS.$ 

For any product $i \in \cN,$ let $\mathcal{S}_i = \{S \in \mathcal{S}: i \in S\}$ denote the subcollection of assortments $S \in \mathcal{S}$ which contain the product $i$ and let $m_i = \vert \mathcal{S}_i \vert.$ Let $l_i$ denote the number of assortments containing product $i$ for which $p_{i,S} > 0.$ Here $l_i=m_i$ when the choice probabilities $\{p_{i,S}: S \in \cS_i\}$ are all non-zero. Equipped with this notation, we construct the marginal distribution $F_i(\cdot)$ for any product $i \in \cN$ as follows:
\begin{enumerate}[label=(\alph*),leftmargin=*]
\item Consider any ordering $(S_1, S_2,\ldots,S_{l_i}, S_{l_i + 1}, \ldots, S_{m_i})$  over the assortments in $\mathcal{S}_i$ for which $\lambda(S_1) \leq \lambda(S_2) \leq \ldots \leq \lambda(S_{l_i}) < \lambda(S_{l_i + 1}) \leq \lambda(S_{l_i + 2}) \leq   \ldots \leq \lambda(S_{m_i}).$ With $l_i$ defined as the number of assortments in $\cS_i$ for which $p_{i,S} > 0,$ note that it is necessary to have $\lambda(S_{l_i}) < \lambda(S_{l_i + 1})$ whenever $l_i < m_i.$ This follows from the observations that $\lambda(\cdot)$ satisfies \eqref{eq:mdm-feascon} and $p_{i,S_{l_i}} > 0 = p_{i,S_{l_i + 1}}.$  Further, due to the conditions in \eqref{eq:mdm-feascon}, the choice probabilities $(p_{i,S}: S \in \cS_i)$ must necessarily satisfy the ordering 
$p_{i,S_1} \geq p_{i,S_2} \geq \ldots  \geq p_{i,S_{l_i}} > 0$ and $p_{i,S_{l_i+1}} = p_{i,S_{l_i+2}} = \ldots, p_{i,S_{m_i}} = 0.$ 
\item Construct the cumulative distribution function $F_i(\cdot)$ by first setting $F_i(\lambda(S_k)) = 1 - {p}_{i,S_k}$ for $k=1,\cdots, l_i.$ With this assignment, we complete the construction of the distribution $F_i$ in between these points by connecting them with line segments as follows: For any two consecutive assortments $S_k$ and $S_{k+1}$ in the ordering satisfying $\lambda(S_k) < \lambda(S_{k+1}),$ connect the respective points $(\lambda(S_k), 1-p_{i,S_k})$ and $(\lambda(S_{k+1}), 1-p_{i,S_{k+1}})$ with a line segment (see Figure \ref{fig:mdm-construction}). For $k \leq l_i,$ note that if the consecutive assortments $S_k$ and $S_{k+1}$ are such that $\lambda(S_k) = \lambda(S_{k+1}),$ then the corresponding points $(\lambda(S_k), 1-p_{i,S_k})$ and $(\lambda(S_{k+1}), 1-p_{i,S_{k+1}})$ coincide and there is no need to connect them. Further note that $p_{i,S_{k}} > p_{i,S_{k+1}}$ when  $\lambda(S_k) < \lambda(S_{k+1}),$ because of \eqref{eq:mdm-feascon}, and hence the cumulative distribution function $F_i$ is strictly increasing in the interval
$[\lambda(S_1), \lambda(S_{l_i})].$
%$[\lambda(S_k), \lambda(S_{k+1})]$ for which $\lambda(S_k) < \lambda(S_{k+1}),$ when $k \leq l_i.$

%With this construction of $F_i$ and the aforementioned assignments $\nu_i = 0,$ we obtain $p_{i,S} = 1 - F_i(\lambda_i(S))$ from the optimality conditions in \eqref{optimality}.

\item Lastly we construct the tails of the distribution $F_i$ as follows: For the right tail,  connect the points $(\lambda(S_{l_i}), 1-p_{i,S_{l_i}})$ and $(\lambda(S_{l_i+1}),\, 1)$ with a line segment if $l_i < m_i.$ We then have $F_i(x) = 1$ for every $x \geq \lambda(S_{l_i +1})$ and therefore $F_i^{-1}(1) = \lambda({S_{l_i + 1}}).$
% This ensures that the assortments $S_{l_i + 1}, \ldots, S_{m_i},$ for which the probability of $i$ getting picked is zero, indeed get assigned $p_{i,S_{l_k}} =  1 - F_i(\lambda(S_k) -\nu_i + \lambda_{i,S}) \leq  p_{i,S_{l_i+1}} =  1 - F_i(\lambda(S_i)) = 0$ for the assignment $\nu_i = 0$ for all $i.$ 
If $l_i = m_i,$ connect the points $(\lambda(S_{l_i}), 1-p_{i,S_{l_i}})$ and $(\lambda(S_{l_i}) + \delta, \, 1)$ by choosing any arbitrary $\delta > 0$ (see Figure \ref{fig:mdm-construction}). In this case, we will have $F_i(x) = 1$ for every $x \geq \lambda(S_{l_i}) + \delta.$
For the left tail, if ${p}_{i,S_1} = 1$, then we have $F_i(x) = 0$ for every $x \leq \lambda(S_1).$ Both the cumulative distribution functions drawn in Figure \ref{fig:mdm-construction} illustrate this case. On the other hand,  if ${p}_{i,S_1} < 1$, we use a line segment to connect $(\lambda(S_1),1-{p}_{i,S_1})$ and $(\lambda(S_1)-\delta, \,0)$ by choosing an arbitrary $\delta > 0$. In this case,  $F_i(x) = 0$ for every $x \leq \lambda(S_1) -\delta.$
\end{enumerate}

\begin{figure}[!htbp]
\centering
\begin{tikzpicture}[scale=0.8]
 \draw[-stealth] (0,0) -- (7,0)node[below]{{$x$}} ;
 \draw[-stealth] (0,0)node[left]{{$1-p_{i,S_{1}} = 0$}} -- (0,3.5) node[left]{{$1$}} --(0,4) node[left]{{$F_i(x)$}};
  \draw[] (0.05,0)node[below]{$\lambda(S_{1})$} -- (0.5,0.4) -- (1.3,0.7) --  (2.7,1.5)-- (3.5,2.3)-- (4.3,2.7)--(6,3.5);
  %\draw[dashed] (0,0.6)node[left]{{$1-{p}_{i,S_{k}}$}} -- (0.8,0.6) --  (0.8,0)node[below]{$\lambda({S_{k}})$};
  \draw[dashed] (0,0.7)node[left]{{$1-{p}_{i,S_{k}}$}} -- (1.3,0.7) --  (1.3,0)node[below]{$\lambda({S_{k}})$};
  \draw[dashed] (0,1.5)node[left]{{$1 - {p}_{i,S_{k+1}}$}} -- (2.7,1.5) --  (2.7,0)node[below]{$\lambda(S_{k+1})$};
  -- (4,2);
 \draw[dashed] (0,2.7)node[left]{{$1-{p}_{i,S_{l_i}}$}} -- (4.3,2.7) --  (4.3,0)node[below]{$\lambda({S_{l_i}})$};
  \draw[dashed] (0,3.5) -- (6,3.5) --  (6,0)node[below]{$\lambda({S_{l_i +1}})$};
  \fill[black] (0.05,0) circle (1pt);
  \fill[black] (0.5,0.4) circle (1pt);
  \fill[black] (1.3,0.7) circle (1pt);
    \fill[black] (2.7,1.5) circle (1pt);
      \fill[black] (3.5,2.3) circle (1pt);
        \fill[black] (4.3,2.7) circle (1pt);
          \fill[black] (6,3.5) circle (1pt);
\end{tikzpicture}
\begin{tikzpicture}[scale=0.8]
 \draw[-stealth] (0,0) -- (7.2,0)node[below]{{$x$}} ;
 \draw[-stealth] (0,0)node[left]{{$1-p_{i,S_{1}} = 0$}} -- (0,3.5) node[left]{{$1$}} --(0,4) node[left]{{$F_i(x)$}};
  \draw[] (0.05,0)node[below]{$\lambda(S_{1})$} -- (0.5,0.4) -- (1.3,0.7) --  (2.7,1.5)-- (3.5,2.3)-- (4.3,2.7)--(6,3.5);
  %\draw[dashed] (0,0.6)node[left]{{$1-{p}_{i,S_{k}}$}} -- (0.8,0.6) --  (0.8,0)node[below]{$\lambda({S_{k}})$};
  \draw[dashed] (0,0.7)node[left]{{$1-{p}_{i,S_{k}}$}} -- (1.3,0.7) --  (1.3,0)node[below]{$\lambda({S_{k}})$};
  \draw[dashed] (0,1.5)node[left]{{$1 - {p}_{i,S_{k+1}}$}} -- (2.7,1.5) --  (2.7,0)node[below]{$\lambda(S_{k+1})$};
  -- (4,2);
 \draw[dashed] (0,2.7)node[left]{{$1-{p}_{i,S_{l_i}}$}} -- (4.3,2.7) --  (4.3,0)node[below]{$\lambda({S_{l_i}})$};
  \draw[dashed] (0,3.5) -- (6,3.5) --  (6,0)node[below]{$\lambda({S_{l_i}})+\delta$};
  \fill[black] (0.05,0) circle (1pt);
  \fill[black] (0.5,0.4) circle (1pt);
  \fill[black] (1.3,0.7) circle (1pt);
    \fill[black] (2.7,1.5) circle (1pt);
      \fill[black] (3.5,2.3) circle (1pt);
        \fill[black] (4.3,2.7) circle (1pt);
          \fill[black] (6,3.5) circle (1pt);
\end{tikzpicture}
\begin{center}
    (a) \hspace{220pt} (b)
\end{center}
% \begin{tikzpicture}[scale=0.8]
%  \draw[-stealth] (0,0) -- (6.5,0)node[below]{{$x$}} ;
%  \draw[-stealth] (0,0)node[left]{{$1-{p}_{i,S_{1}} = 0$}} -- (0,2.5) node[left]{{$1$}} --(0,4) node[left]{{$F_i(x)$}};
%   \draw[] (0.05,0)node[below]{$\lambda({S_{1}})$} -- (0.5,0.4) -- (0.8,0.6) --  (2,1.1)-- (2.5,1.5)-- (3.3,1.7)--(4.5,2.5);
%   \draw[dashed] (0,0.6)node[left]{{$1-{p}_{i,S_{k}}$}} -- (0.8,0.6) --  (0.8,0)node[below]{$\lambda(S_{k})$};
%   \draw[dashed] (0,1.1)node[left]{{$1-{p}_{i,S_{k+1}}$}} -- (2,1.1) --  (2,0)node[below]{$\lambda({S_{k+1}})$};
%   -- (4,2);
%  \draw[dashed] (0,1.7)node[left]{{$1-{p}_{i,S_{l_i}}$}} -- (3.3,1.7) --  (3.3,0)node[below]{$\lambda(S_{l_i})$};
%   \draw[dashed] (0,2.5) -- (4.5,2.5) --  (4.5,0)node[below]{$\lambda(S_{l_i})$+$\delta$};
%   \fill[black] (0.05,0) circle (1pt);
%   \fill[black] (0.5,0.4) circle (1pt);
%   \fill[black] (0.8,0.6) circle (1pt);
%     \fill[black] (2,1.1) circle (1pt);
%       \fill[black] (2.5,1.5) circle (1pt);
%         \fill[black] (3.3,1.7) circle (1pt);
%           \fill[black] (4.5,2.5) circle (1pt);
% \end{tikzpicture}
\caption{An illustration of the construction of the marginal distribution $F_i$ when: (a) there is an assortment $S$ for which $p_{i,S} = 0$ (the case where $l_i < m_i$) and (b) $p_{i,S} > 0$ for all assortments with product $i$ (the case where $l_i = m_i$).} 
\label{fig:mdm-construction}
\end{figure}
\vspace{-0.1cm}
The above construction gives marginal distribution functions $(F_i: i \in \cN)$ which are absolutely continuous and strictly increasing within its support. We next show that the constructed marginal distributions yield the given choice probabilities $(p_{i,S}: i \in S),$ for any assortment $S \in \cS,$ when they are used in the  optimality conditions \eqref{optimality} together with the assignment $\nu_i=0,$ for $i \in \cN.$  
 In other words, given $\bp_\cS$,  we next verify that 
\begin{align*}
     F^{-1}_i(1-p_{i,S}) - \lambda(S) + \lambda_{i,S} =0, \quad 
     \lambda_{i,S} \  p_{i,S} =0, \quad \text{ and } \quad 
     \lambda_{i,S} \geq 0, \quad  \forall (i,S)\in \cI_\cS.
\end{align*}
For any $(i,S)\in \cI_\cS$ with $p_{i,S}>0$, we have from the construction of $F_i$ that $F_i(\lambda(S)) = 1 - {p}_{i,S}$. Then for such $p_{i,S}$, we see that the optimality condition $F^{-1}_i(1-p_{i,S}) - \lambda(S) + \lambda_{i,S} =0$ readily holds since the optimality conditions also stipulate that $\lambda_{i,S}=0$ when $p_{i,S} > 0.$ 

\noindent 
For any $(i,S) \in \cI_\cS$ such that $p_{i,S} = 0,$ we have from Steps (a) and (c) of the above construction that $\lambda(S) \geq \lambda(S_{l_i + 1}) = F_i^{-1}(1) = F_i^{-1}(1-p_{i,S}).$ Then if we take $\lambda_{i,S} = \lambda(S) - \lambda(S_{l_i +1}),$ we again readily have  $F^{-1}_i(1-p_{i,S}) - \lambda(S) + \lambda_{i,S} =0.$  This completes the verification that for any choice data $\bp_\cS$ satisfying \eqref{eq:mdm-feascon}, there exists marginal distributions $\{F_i: i \in \cN\}$ and deterministic utilities $\{ \nu_i : i \in \cN\}$ which yield $\bp_\cS$ as the MDM choice probabilities.

Lastly, checking whether the conditions in \eqref{eq:mdm-feascon} are satisfied for given choice data $\bp_\cS$ is equivalent to testing if there exists an assignment for variables $(\lambda_S: S \in \mathcal{S})$ and $\epsilon > 0$ such that,
\begin{align*}
  &\lambda_S  \geq \lambda_T  + \epsilon \quad \text{if} \quad p_{i,S} < p_{i,T},\\
     & \lambda_S  = \lambda_T  \quad \text{if}\quad p_{i,S} = p_{i,T} \neq 0,
\end{align*}
for all $(i,S),(i,T)\in \cI_\cS.$
This is possible in polynomial time by solving a linear program where the above conditions are formulated as constraints and maximizing $\epsilon$ and then checking if the optimal value is strictly positive. This linear program involves $|\cS|$ variables for $(\lambda_S: S \in \mathcal{S})$ and one variable for $\epsilon,$ and at most $n|\cS|$ constraints. 
\end{proof}

\subsection{On the representational power of MDM}
\label{sec:mdm-rep-power}
For any assortment collection $\cS,$ let $\pmdm(\cS)$ denote the collection of choice probabilities for the assortments in $\cS$ which are representable by any MDM choice model. We use the notation $\lambda(S)$ and $\lambda_S$ interchangeably here onwards. Due to the characterization in Theorem \ref{thm:feascon-mdm}, we have the following succinct description for MDM:  $\pmdm(\cS) = \text{Proj}_{\bx} (\Pi_\cS) :=  \left\{ \bx: (\bx,\boldsymbol{\lambda}) \in 
\Pi_\cS \right\},$ where $\Pi_\cS$ is
defined as 
\begin{align}
    \Pi _\cS = \Big\{ (\bx,\boldsymbol{\lambda}) \in \mathbb{R}^{|\cI_\cS|} \times  \mathbb{R}^{ |\cS|}: & \, x_{i,S} \geq 0, \forall (i,S) \in \cI_\cS,\ \sum_{i \in S} x_{i,S} = 1,  \forall S \in \cS,    \label{eq:lifted-set} \\ 
    &\lambda_S > \lambda_T \text{ if } x_{i,S} < x_{i,T}, \  \lambda_S = \lambda_T \text{ if } x_{i,S} = x_{i,T} \neq 0, \forall (i,S), (i,T) \in \cI_\cS 
    \Big\}, \nonumber
\end{align}
for any assortment collection $\cS.$ One may understand the set $\Pi_\cS$ as the collection of MDM choice probabilities augmented with the disutilities $\lambda(\cdot)$ over the assortments. In Theorems \ref{thm:mdm_positive_measure} and \ref{thm:mdm-rum-relation} below, we seek to use the characterization in Theorem \ref{thm:feascon-mdm} to understand the representation power of MDM. 
%below, we provide a discussion of the measure with the observation deduced from the characterization of Theorem \ref{thm:feascon-mdm}. 

\begin{theorem}\label{thm:mdm_positive_measure}
For any assortment collection $\cS,$ the collection of choice probabilities represented by MDM has a positive measure. Specifically, $\mu\big(\pmdm(\cS)\big) > 0,$ where $\mu$ is the Lebesgue measure on the product of probability simplices denoted by $\prod_{S \in \cS}\Delta_S$ where $\Delta_S = \{(x_{i,S}: i \in S): x_{i,S} \geq 0, \forall i \in S, \sum_{i \in S} x_{i,S} = 1 \}.$ \textcolor{black}{
%On the other hand, the choice probabilities represented by the MNL, nested logit, and LC-MNL possess zero Lebesgue measure with respect to $\mu$ due to the restrictions imposed by the Independence of Irrelevant Alternatives (IIA) property: overall for MNL, within the nests for nested logit, and in each class for LC-MNL.
On the other hand, the choice probabilities represented by MNL and nested logit possess zero Lebesgue measure with respect to $\mu$ due to the restrictions imposed by the Independence of Irrelevant Alternatives (IIA) property overall for MNL and within the nests for nested logit.
}
\end{theorem}

Theorem \ref{thm:mdm_positive_measure} brings out the contrast with the representation power of parametric choice models such as MNL and nested logit. 
In Lemma \ref{Regularity} below, we observe that the choice probabilities modeled by MDM are \textit{regular} in the sense that the probability
of choosing a specific product $i \in S$ cannot increase if $S$ is enlarged. \textcolor{black}{Proofs of Theorem 2 and Lemma 2 are given in Sections \ref{pf:thm_mdm_measure} and \ref{pf:Regularity}.}
\begin{lemma}
Suppose that the choice probability collection $\bp_\cS \in \pmdm(\cS).$ Then for any two assortments $S,T\in \cS,$ we have 
\begin{itemize}
    \item[a)]  $\bp_\cS$ satisfies the regularity property, that is, $p_{i,S} \geq p_{i,T}$ for all $i \in S$ and $S \subset T;$  and
    \item[b)]  $p_{i,S} \leq p_{i,T}$ if  there exists $j$ such that $p_{j,S} < p_{j,T}$ and $i,j \in S \cap T.$ %are common products in  $S$ and $T.$ 
\end{itemize}
\label{Regularity}
\end{lemma}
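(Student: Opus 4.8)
The plan is to derive both properties purely from the order characterization in Theorem~\ref{thm:feascon-mdm}. Since $\bp_\cS \in \pmdm(\cS)$, there exists a function $\lambda:\cS\to\R$ such that, for any product common to two assortments, a strictly smaller choice probability forces a strictly larger $\lambda$-value, and equal \emph{nonzero} probabilities force equal $\lambda$-values. The entire argument runs through the contrapositive of the first implication: if $\lambda(S)\le\lambda(T)$ then $p_{i,S}\ge p_{i,T}$ for every common product $i$, and symmetrically, $\lambda(S)>\lambda(T)$ rules out $p_{i,S}>p_{i,T}$ for every common product $i$.

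I would dispatch part (b) first, since it requires no normalization argument. The hypothesis $p_{j,S}<p_{j,T}$ for the common product $j$ immediately yields $\lambda(S)>\lambda(T)$. If some other common product $i$ violated the claim, i.e.\ $p_{i,S}>p_{i,T}$, then reading the characterization for the ordered pair $(T,S)$ would force $\lambda(T)>\lambda(S)$, contradicting $\lambda(S)>\lambda(T)$. Hence $p_{i,S}\le p_{i,T}$ for every $i\in S\cap T$.

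For part (a) the core step is to show that $S\subseteq T$ forces $\lambda(S)\le\lambda(T)$; regularity then follows instantly from the contrapositive noted above, since every $i\in S$ is a common product of $S$ and $T$. To establish $\lambda(S)\le\lambda(T)$ I would argue by contradiction: assume $\lambda(S)>\lambda(T)$. As in part (b), this rules out $p_{j,S}>p_{j,T}$, so $p_{j,S}\le p_{j,T}$ for every $j\in S$. Summing over $j\in S$ and using $\sum_{j\in S}p_{j,S}=1$ together with $\sum_{j\in S}p_{j,T}\le\sum_{j\in T}p_{j,T}=1$ (the extra products in $T\setminus S$ contribute only nonnegative mass) squeezes all these inequalities to equalities, giving $p_{j,S}=p_{j,T}$ for all $j\in S$.

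The main obstacle, and the only place any care is needed, is the zero-probability case in this last step, because the equality clause of the characterization fires only for nonzero probabilities. The resolution is that $\sum_{j\in S}p_{j,S}=1$ guarantees at least one $j\in S$ with $p_{j,S}>0$; for that $j$ we have $p_{j,S}=p_{j,T}\ne 0$, so the equality clause forces $\lambda(S)=\lambda(T)$, contradicting $\lambda(S)>\lambda(T)$. This closes the argument, yielding $\lambda(S)\le\lambda(T)$ and hence $p_{i,S}\ge p_{i,T}$ for all $i\in S$. (An alternative route for part (a) would instead invoke the optimality conditions \eqref{optimality} directly, noting that $p_{i,S}=(1-F_i(\lambda_S-\nu_i))^+$ is nonincreasing in $\lambda_S$ while the normalization $\sum_{i\in S}p_{i,S}=1$ pins down $\lambda_S$; enlarging $S$ to $T$ only adds nonnegative terms, which pushes the balancing multiplier upward. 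I prefer the characterization-based argument above, as it is self-contained given Theorem~\ref{thm:feascon-mdm}.)
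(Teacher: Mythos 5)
Your proposal is correct and follows essentially the same route as the paper: both parts are derived purely from the order characterization of Theorem~\ref{thm:feascon-mdm} together with the normalization $\sum_{i\in S}p_{i,S}=1$, with part (b) argued identically and part (a) by contradiction. The only cosmetic difference is where the contradiction in (a) lands --- the paper concludes $\sum_{j\in S\cap T}p_{j,S\cap T}<1$ directly from the one strict inequality, whereas you squeeze all inequalities to equalities and then invoke the nonzero-equality clause to force $\lambda(S)=\lambda(T)$; both are valid.
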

\noindent 
An interpretation of Lemma \ref{Regularity} b) is that if assortment $T$ is weaker than assortment $S$, in the sense that a common product $j$ has a strictly smaller market share in $S$ than in $T$, then the market share of any other common product $i$ in $S$ must also be no greater than that in $T$. As demonstrated in Theorem \ref{thm:feascon-mdm}, this ranking is reflected in the ordering of the Lagrange multipliers in the utility maximization problem of the representative customer. %RUM does not necessarily satisfy Lemma \ref{Regularity} b) as we will show in Theorem \ref{thm:mdm-rum-relation} a) that RUM is not subsumed by MDM. On the other hand, we will show MDM can violate the Block-Marshak conditions \citep{block1959random} that are necessary for RUM. 

Utilizing the MDM characterization in Theorem \ref{thm:feascon-mdm},  Theorem \ref{thm:mdm-rum-relation} below shows that MDM and RUM do not subsume each other generally. It also reveals that RUM and MDM have equivalent representational power as the class of regular choice models when the assortments collection $\cS$ has a special structure, like nested or laminar collections that are frequently encountered in inventory and revenue management applications. % (see, eg., \cite{jagabathula2019limit} and  references therein).
%{\color{blue} Studying the characterization of the collection of MDM choice probabilities with these special structures is essential for the structural results of solving the revenue prediction and the limit of MDM as we shall see in Section \ref{sec:algorithms-wc-rev} and \ref{sec:algorithms-limit}, in the view of the fact that algorithms and structural results have been developed for other class of revenue prediction and model estimation problems \citeg{jagabathula2019limit,sturt2021value}. } 
To state Theorem \ref{thm:mdm-rum-relation}, we require the following definitions. 

An assortment collection $\cS = \{S_1,S_2,\ldots,S_m\}$ is said to be \textit{nested} if $S_1\subset S_2\subset \ldots \subset S_m$ for some indexing of the assortments. In other words, the smaller sets are always contained in the larger sets. An assortment collection $\cS$ is said to be \textit{laminar} if for any two distinct sets $S,T\in \cS$, either $S\subset T$, or $T\subset S$, or $S\cap T = \emptyset$. Equivalently, any two sets are either disjoint or related by containment. For any assortment collection $\cS,$ let $\prum(\cS)$ and $\preg(\cS)$ denote the collection of choice probabilities over the assortments in $\cS$ which are representable by RUM and the class of regular choice models, respectively. {\color{black} Then $\prum(\cS)$ is given by, 
\begin{align*}
\prum(\cS) := \Big\{ \bx :\exists P(\sigma) \textrm{ s.t. }  x_{i,S} = \sum_{\sigma \in \Sigma_n}  P(\sigma) \mathbb{I}[\sigma, i, S], \ \forall (i,S) \in \cI_\cS, \  \sum_{\sigma \in \Sigma_n} P(\sigma)=1,\ P(\sigma) \geq 0, \  \forall \sigma \in \Sigma_n  \Big\},
\end{align*}
where $\Sigma_n$ denote the set of all permutations of $n$ alternatives and each element $\sigma\in \Sigma_n$ denotes a ranking of $n$ alternatives with $\mathbb{I}[\sigma, i, S]$ being the indicator variable that takes a value of 1 if and only if product $i$ is the most preferred product in assortment $S$ under $\sigma$. 
Similarly, $\preg(\cS)$ is given by, 
\begin{align*}
   \preg(\cS): =  \Big\{ \bx : & \exists \  \by \textrm{ s.t. } x_{i,S} = y_{i,S}, \forall (i,S) \in \cI_\cS, \  y_{i,S} \geq 0, \ \forall i\in S \subseteq \cN, \sum_{i\in S}y_{i,S} =1, \forall S \subseteq \cN, \\
& y_{i,S} \leq y_{i,T}, \; \forall S,T \subseteq \cN, \,  \forall  i \in T \subset S  \Big\}.
\end{align*}} 
For any assortment collection $\cS,$ it is well-known that 
$\prum(\cS) \subseteq \preg(\cS)$ \citeg{berbeglia_ec} and we have from Lemma \ref{Regularity} that  $\pmdm(\cS) \subseteq \preg(\cS).$ From the first condition in \eqref{eq:mdm-feascon}, we observe that the collection $\pmdm(\cS)$ is not necessarily a closed set. We use $\textnormal{closure}\big(\pmdm(\cS)\big)$ to denote the closure of $\pmdm(\cS).$ 

%{\color{blue}Observing the first condition in \eqref{eq:mdm-feascon}, the set $\pmdm(\cS)$ is an open set. Next, we will study the characterization of the closure of $\pmdm(\cS)$ to capture the boundary points if necessary.} 
%\vspace{-0.1cm}
\begin{theorem}[Relationship between MDM and RUM]
Suppose that $\mathcal{S}$ is a collection of assortments formed over $n$ products. Then the following hold:
\begin{itemize}
  %  \item[a)]  There exist choice probabilities over $\cS$ that can be represented by both RUM and MDM. 
    \item[a)] When $n = 2,$ the choice probabilities represented by RUM and MDM coincide; when $n = 3,$ the collection of choice probabilities represented by MDM is subsumed by that of RUM; and when  $n \geq 4,$  there exist choice probabilities over $\cS$ that can be represented by both RUM and MDM and neither models subsume the other: specifically,  there exist assortment collections $\cS$ such that  $\pmdm(\cS) \not\subset \prum(\cS)$ and $\prum(\cS) \not\subset \pmdm(\cS).$     
%    $\pmdm(\cS) =  \prum(\cS)$ for all $\cS$ and when $n = 3,$ there exists $\cS$ for which $\pmdm(\cS) \subset \prum(\cS).$
    \item[b)] If the assortment collection $\cS$ is either nested or laminar, then the corresponding choice probabilities over $\cS$ represented by MDM, RUM, and the class of regular models enjoy the following equivalence regardless of $n:$ $\prum(\cS) = \textnormal{closure}\big(\pmdm(\cS)\big) = \preg(\cS).$
\end{itemize}
 \label{thm:mdm-rum-relation}
\end{theorem}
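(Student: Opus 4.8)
The plan is to treat the two parts separately, relying throughout on the assortment-ranking characterization of Theorem~\ref{thm:feascon-mdm} and the regularity/cross-monotonicity properties in Lemma~\ref{Regularity}. For part~(a), I would first dispose of the small cases. When $n=2$ the only assortment carrying information is $\{1,2\}$, on which every probability vector $(p,1-p)$ is trivially representable both by a two-ranking RUM and, via Theorem~\ref{thm:feascon-mdm}, by an MDM (any $\lambda$ works), so the two classes coincide. For $n=3$ the key reduction is that any MDM instance is specified by marginals $(F_i)$ and utilities $(\nu_i)$ and hence induces choice probabilities on the \emph{entire} power set of $\{1,2,3\}$; since RUM-representability over a sub-collection is implied by RUM-representability over the full collection, it suffices to show that every MDM system over all subsets of a $3$-element universe is RUM-representable. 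I would establish this by verifying the Block--Marschak/Falmagne non-negativity conditions, which are necessary and sufficient for RUM over the full collection; there are only finitely many such alternating sums for $n=3$, and I expect each to follow from the monotone ordering of the $p_{i,S}$ forced by Theorem~\ref{thm:feascon-mdm} together with regularity and the cross-monotonicity of Lemma~\ref{Regularity}(b).

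For the case $n\ge 4$ I would exhibit two explicit separating instances. To show $\prum(\cS)\not\subseteq\pmdm(\cS)$, take two non-nested assortments sharing two products, e.g. $S=\{1,2,3\}$ and $T=\{1,2,4\}$ with $S\cap T=\{1,2\}$, and choose RUM utilities so that $p_{1,S}<p_{1,T}$ while $p_{2,S}>p_{2,T}$; this violates Lemma~\ref{Regularity}(b) and so the resulting probabilities cannot be MDM-representable. Such a configuration is impossible for $n\le 3$, which is why the separation first appears at $n=4$. To show $\pmdm(\cS)\not\subseteq\prum(\cS)$, I would specify a probability vector $\bp_\cS$ together with a witness $\lambda$ satisfying \eqref{eq:mdm-feascon}, so that $\bp_\cS\in\pmdm(\cS)$ by Theorem~\ref{thm:feascon-mdm}, and then certify that $\bp_\cS$ violates one of the Block--Marschak inequalities, i.e. that the linear program searching for a distribution over rankings is infeasible. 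The main obstacle in part~(a) is this last certification: producing MDM data that is provably outside RUM, which amounts to driving a Block--Marschak term negative while keeping a consistent assortment ranking.

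For part~(b), recall that $\prum(\cS)\subseteq\preg(\cS)$ is known and that $\textnormal{closure}\big(\pmdm(\cS)\big)\subseteq\preg(\cS)$ follows from Lemma~\ref{Regularity}(a) since $\preg(\cS)$ is closed. It therefore suffices to prove the two reverse inclusions $\preg(\cS)\subseteq\textnormal{closure}\big(\pmdm(\cS)\big)$ and $\preg(\cS)\subseteq\prum(\cS)$ for nested and laminar $\cS$, after which the chains $\preg\subseteq\textnormal{closure}(\pmdm)\subseteq\preg$ and $\preg\subseteq\prum\subseteq\preg$ force all three sets to agree. The structural fact I would exploit is that in a laminar family (and a fortiori a nested one), for every product $i$ the sub-collection $\cS_i=\{S\in\cS:i\in S\}$ is totally ordered by inclusion, because any two assortments containing $i$ intersect and hence must be nested. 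Consequently regularity makes the map $S\mapsto p_{i,S}$ monotone along each such chain, and the pairwise constraints in \eqref{eq:mdm-feascon} only ever relate comparable assortments, so they are acyclic.

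Given this, for $\preg\subseteq\textnormal{closure}(\pmdm)$ I would take any regular $\bp_\cS$, perturb it to a nearby strictly-regular system that breaks all ties among equal nonzero probabilities consistently with the containment order, assign $\lambda$ increasing along a linear extension of that order (possible precisely because the constraint graph is acyclic and the equality constraints have been removed by the perturbation), invoke Theorem~\ref{thm:feascon-mdm} to place the perturbed system in $\pmdm(\cS)$, and let the perturbation vanish. For $\preg\subseteq\prum$ I would build a distribution over rankings directly: in the nested case $S_1\subset\cdots\subset S_m$, process the chain from the largest assortment inward and, at each step, redistribute the choice mass freed by the dropped products onto the surviving ones; regularity guarantees the increments $p_{i,S_{j}}-p_{i,S_{j+1}}$ are non-negative, so they can be realized by assigning secondary preferences to rankings. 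I would then extend to laminar families by combining the independent maximal chains along the forest structure of the family, disjoint branches sharing no products and hence imposing no joint constraints. The main obstacle here is the RUM construction: showing that the level-by-level mass redistribution is realized by a single coherent distribution over full rankings, and that the laminar forest decomposition glues these chain-wise constructions without conflict.
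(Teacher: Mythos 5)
Your plan for part (a) is essentially the paper's own: $n=2$ is handled identically; for $n=3$ the paper likewise shows $\pmdm(\cS)\subseteq\preg(\cS)=\prum(\cS)$ by writing each ranking probability as a nonnegative difference of choice probabilities (exactly the Block--Marschak expressions you intend to check); and for $n\ge 4$ the paper's two separating instances are built from precisely the certificates you name --- a pair of non-nested assortments sharing two products whose probabilities violate Lemma~\ref{Regularity}(b) (RUM but not MDM), and an MDM-consistent table on $\cS=\{\{1,2,3,4\},\{1,2,3\},\{1,2,4\},\{1,2\}\}$ that violates the Block--Marschak inequality $p_{1,A}+p_{1,D}\ge p_{1,B}+p_{1,C}$ (MDM but not RUM). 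Two cautions there: the one-sided separation $\prum(\cS)\not\subseteq\pmdm(\cS)$ already occurs at $n=3$ via a cyclic $\lambda$-requirement on the pairs $\{1,2\},\{1,3\},\{2,3\}$, so it is only the reverse direction (and hence mutual non-containment) that first appears at $n=4$; and the explicit numbers for the MDM-not-RUM instance still have to be produced, which is the only nontrivial content of that step. For part (b), your decomposition into the two reverse inclusions into $\preg(\cS)$, and your monotone-$\lambda$ argument for $\preg(\cS)\subseteq\textnormal{closure}\big(\pmdm(\cS)\big)$ resting on the fact that the assortments containing a fixed product form a chain in a laminar family, coincide with the paper (which works with the closed constraint set directly rather than perturbing and passing to the limit; this is cosmetic). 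The one genuine divergence is $\preg(\cS)\subseteq\prum(\cS)$: you propose a greedy probabilistic redistribution of the non-negative increments $p_{i,S_j}-p_{i,S_{j+1}}$ into secondary preferences, whereas the paper encodes these same increments as a unit flow in an acyclic network and invokes the source--sink path polytope characterization of the multiple-choice polytope together with total unimodularity to certify that the flow decomposes into a distribution over rankings. Your version is more elementary and probabilistically transparent, but the coherence issue you yourself flag --- that the level-by-level conditional redistributions assemble into a single distribution over full rankings, and that the chain-wise constructions glue across the branching (not merely disjoint) parts of a laminar forest --- is exactly what the flow-decomposition argument settles for free, so you would need to supply that inductive coupling explicitly to close the proof.
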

{\color{black} The key observation for Theorem \ref{thm:mdm-rum-relation} a) is that MDM does not necessarily satisfy the Block and Marschak conditions \citep{block1959random}, which are necessary for RUM, while RUM does not necessarily satisfy Lemma \ref{Regularity} b). Moreover, MNL is a special case of both MDM and RUM. To understand why Theorem \ref{thm:mdm-rum-relation} b) holds, note that $\preg(\cS) \subseteq \preg(\cS)^\prime$ with
\begin{align*}
\preg(\cS)^\prime : = \Big\{\bx : x_{i,S} \geq 0,\forall (i,S)\in \cI_\cS,\, \sum_{i\in S} x_{i,S} = 1,  \forall S \in \cS,  \, x_{i,S} \leq x_{i,T}, \; \forall S,T \in \cS, \,  \forall i \in T \subset S  \Big\},
\end{align*}
where $\preg(\cS)^\prime$ can be interpreted as the extended set of $\preg(\cS)$ in which regularity is imposed  only over  the  assortments in the collection $\cS.$ 
Then for any $\bp_\cS \in \preg(\cS)^\prime$, when $\cS$ is a nested collection, say $S_1 \subset S_2 \subset \ldots \subset S_m$, there exists a unique ranking $\lambda_{S_1} \leq \lambda_{S_2} \leq \ldots \leq \lambda_{S_m}$ over assortments such that MDM can represent $\bp_\cS$. When $\cS$ forms a laminar collection, considering any two disjoint assortments $S$ and $T$, neither the regular model nor MDM imposes constraints on the probabilities. Thus, it suffices to focus on the case where $S$ and $T$ are nested. In this case, we observe that $\boldsymbol{\lambda}$ satisfies $\lambda_S \leq \lambda_T$ if $S \subset T$, ensuring that $\bp_\cS \in \text{closure}(\pmdm(\cS))$. The equivalence of $\preg(\cS)$ and $\prum(\cS)$ when $\cS$ is a nested or laminar collection is referred to the proof in Section \ref{pf:mdm-rum-relation}, where we demonstrate the existence of a RUM graph representation, as introduced in \cite{fiorini2004short}, for such $\bp_\cS \in \preg(\cS)^\prime$. 
} \textcolor{black}{A complete proof of Theorem \ref{thm:mdm-rum-relation} is provided in Section \ref{pf:mdm-rum-relation}. }

%\textcolor{purple}{Reviewer 1: (Minor) Since the focus of Section 3.2 is on the representation power of the MDM, Proposition 1 can be rephrased to state the relationship between MDM, APU and MNL, analogous to the statement of Theorem 3.} 
Next, we show how MDM provides greater representational power compared to both APU and MNL. %\textcolor{black}{from the perspective of choice axiom}.
In Proposition \ref{prop:general_iia} below, we present the Generalized Ordinal IIA property of MDM, which relaxes the Ordinal IIA of APU from \cite{fudenberg2015stochastic}, as well as the IIA of MNL from \cite{luce1959individual}. \textcolor{black}{The proof of Proposition \ref{prop:general_iia} is provided in Section \ref{pf:general_iia}.}
\begin{proposition}[Relationship between MDM, APU and MNL] %[Generalized Ordinal IIA of MDM] 
\label{prop:general_iia}
Suppose $\bp_\cS$ is representable by MDM and $p_{i,S} >0 ,$ $ \forall (i,S) \in  \cI_\cS$, then $\bp_\cS$ satisfies the following  Generalized Ordinal IIA property: there exist some continuous and strictly decreasing $f_i: [0,1] \to \mathbb{R}_+ \cup \{\infty\}, i=1,...,n$ with $f_i(0) = \infty$ such that
\begin{align}
    \frac{f_i(p_{i,S})}{f_j(p_{j,S})} = \frac{f_i(p_{i,T})}{f_j(p_{j,T})}
\end{align}
for each pair of assortments $S,T\in \cS$ and alternatives $i,j\in S\cap T$. Especially, when the choice probabilities are positive and the collection $\cS$ contains all assortments with sizes 2 and 3, $\bp_\cS$ is represented by MDM if and only if $\bp_\cS$ satisfies the Generalized Ordinal IIA property. This property of MDM implies that MDM includes APU and MNL as special cases:
\begin{itemize} 
    \item[a)] (Theorem 1 in \citealt{fudenberg2015stochastic}) When the functions $f_i, i=1,...,n $ are identical, the Generalized Ordinal IIA reduces to the Ordinal IIA of APU. %Especially, when the choice probabilities are positive and the collection $\cS$ contains all assortments with sizes 2 and 3, $\bp_\cS$ is represented by APU if and only if $\bp_\cS$ satisfies the Ordinal IIA property.
    \item[b)] When the functions satisfy $f_i(p) = p, \ \forall p,$ the Generalized Ordinal IIA reduces to the  IIA of MNL.
\end{itemize}
\end{proposition}
The Generalized Ordinal IIA property states that the choice probabilities of products can be rescaled by product-specific functions, but the ratio of these rescaled probabilities between any two products must remain constant across any pair of assortments. 

% \begin{corollary}[Relationship between MDM, APU and MNL]
%     Suppose all assumptions in Proposition \ref{prop:general_iia} are satisfied. Then the following hold:
%     \begin{itemize} 
%     \item[a)] (Theorem 1 in \citealt{fudenberg2015stochastic}) When the functions $f_i, i=1,...,n $ are identical, that is, $f_i: [0,1] = f, i=1,...,n$, then $\bp_\cS$ is representable by APU. Especially, when the choice probabilities are positive and the collection $\cS$ contains all assortments with sizes 2 and 3, $\bp_\cS$ is represented by APU if and only if $\bp_\cS$ satisfies the Ordinal IIA property.
%     \item[b)] When the functions satisfy $f_i(p_{i,S}) = p_{i,S}, \ \forall (i,S) \in  \cI_\cS,$ $\bp_\cS$ is representable by MNL. 
% \end{itemize}
% \end{corollary}

We provide examples to illustrate the representation power of these three models. Note that $\bp_\cS$ can be represented by APU if and only if there exists $\blam \in \mathbb{R}^{|\cS|}$ and $\bv \in  \mathbb{R}^n$ such that $\lambda_S + \nu_i > \lambda_T + \nu_j $ if and only if $p_{i,S} > p_{i,T}$, for all $(i,S), (j,T) \in \cI_\cS$ \citep{fudenberg2015stochastic}. The $\bp_\cS$ in Table \ref{tab:mdm_not_apu_mnl} can be represented by MDM by simply setting $\lambda_A = 10$ and $\lambda_B = 5$. However, it cannot be represented by the APU model because $p_{1,A} < p_{2,A}$ implies $\nu_1 < \nu_2$, while $p_{1,B} > p_{2,B}$ implies $\nu_1 > \nu_2$, which leads to a contradiction. The $\bp_\cS$ in Table \ref{tab:apu_not_mnl} can be represented by APU, since we can set $\lambda_A = 8$, $\lambda_B = 10$, $\nu_1 = 1$, $\nu_2 = 2$, $\nu_3 = 5$, and $\nu_4 = 0$ such that $p_{3,A} > p_{2,B} > p_{1,B} > p_{2,A} = p_{4,B} >  p_{1,A} $ and $\lambda_A + \nu_3 > \lambda_B + \nu_2 > \lambda_B+ \nu_1 > \lambda_A + \nu_2 = \lambda_B + \nu_4 > \lambda_A + \nu_1 $. Both instances cannot be represented by MNL since IIA property is violated, i.e., $\frac{p_{1,A}}{p_{2,A}} \neq \frac{p_{1,B}}{p_{2,B}}.$

\begin{table}[htb!] 
\centering
\begin{minipage}{0.4\linewidth}
\centering
\caption{$\bp_\cS$ can be represented by MDM but not APU and MNL} \label{tab:mdm_not_apu_mnl}
\begin{tabular}{|c|c|c|}
\hline
Alternative & A = \{1,2,3\} & B= \{1,2,4\} \\ \hline
1           & 0.20           & 0.45         \\ \hline
2           & 0.25          & 0.30          \\ \hline
3           & 0.55          & -            \\ \hline
4           & -             & 0.25         \\ \hline
\end{tabular}
\end{minipage}%
\hspace{0.1\linewidth} % Space between tables
\begin{minipage}{0.4 \linewidth}
\centering
\caption{$\bp_\cS$ can be represented by MDM and APU but not MNL} \label{tab:apu_not_mnl}
\begin{tabular}{|c|c|c|}
\hline
Alternative & A = \{1,2,3\} & B= \{1,2,4\} \\ \hline
1           & 0.20           & 0.30         \\ \hline
2           & 0.25          & 0.45          \\ \hline
3           & 0.55          & -            \\ \hline
4           & -             & 0.25         \\ \hline
\end{tabular}
\end{minipage}
\end{table}

\section{A nonparametric approach towards prediction for new assortments}
\label{sec:prediction-mdm}
As an application of the exact characterization derived in Theorem \ref{thm:feascon-mdm}, we first develop a nonparametric data-driven approach for making revenue or sales predictions for new assortments with no prior sales data.  %We follow this up in Section \ref{sec:lom} with a formulation for quantifying the limitations of approximating given choice data with an MDM choice model. As a product of this study on the limit of MDM, we shall see how one may obtain the best-fitting MDM choice probabilities nonparametrically without having to make  specific parametric assumptions. 
%\subsection{Application 1: Prediction on assortments with no prior sales data}
%The ability to perform  predictions for new assortments unseen in the dataset is crucial for various operational decisions, such as assortment planning, product line design, inventory management, etc. Accurate revenue or sales predictions impact how well we are able to perform such decisions. The available sales data is typically inadequate to fully specify the distributions of the utilities and determine a unique choice model. %Driven by this phenomenon and a lack of nonparametric characterization for MDM, Therefore a typical approach towards  prediction is to first narrow down by  postulating ``suitable'' parametric distributions for utilities, then estimate the parameters using maximum likelihood, and perform predictions using the estimated parametric choice model \citeg{mishra2014theoretical}.  However, fixing an appropriate distribution family beforehand is a tricky exercise, often hard to validate as the preferences are only partially observed,  and is  prone to the risk of model misspecification. Moreover, by committing to a fixed family, one is not leveraging the full modeling power offered by MDM towards extracting as much structural information as possible from the historical sales data. 
%To utilize the entirety of MDM hypothesis for prediction and estimation purposes, we propose to utilize the nonparametric exact characterization of MDM in Theorem \ref{thm:feascon-mdm} as follows.  
The key idea behind the proposed nonparametric approach is as follows. To predict the revenue or sales for a new assortment, we consider the collection of all MDM choice models which are consistent with the observed sales data and offer the worst-case expected revenue over this collection as an estimate for the revenue or sales. Thus, robust optimization serves as the basis in our approach for allowing data to select a suitable model based on the prediction task at hand. %This nonparametric approach is similar in philosophy to that pioneered for RUM by \cite{farias2013nonparametric}. However, in place of the rank-list distribution  characterization  of RUM, the necessary and sufficient conditions in Theorem \ref{thm:feascon-mdm} will be shown to be central for the case of MDM.  

\subsection{A robust optimization formulation for sales and revenue predictions}
As in the previous sections, let $\cN = \{1,\ldots,n\}$ denote the universe of products, $\cS$ denote the collection of assortments for which historical choice data, denoted by $\bp_\cS,$ is available. Suppose that we wish to make sales or revenue predictions for a new assortment $A \notin \cS.$ Utilizing MDM hypothesis to make predictions is most sensible when the given choice data exhibits the MDM demand characteristics identified in Theorem \ref{thm:feascon-mdm}. Therefore, we begin this section with the assumption that the choice data $\bp_\cS$ is MDM-representable. For $i \in \cN,$ let $r_i \in (0,\infty)$  denote the revenue obtained by selling one unit of the product $i.$ The collection of all MDM choice probability vectors $\bx_A = (x_{i,A}: i \in A)$ for the new assortment $A$ which are consistent  with the observed choice data $\bp_\cS$ is given by,
\begin{align*}
   \mathcal{U}_A %(\bp_\cS) 
   := \left\{ \bx_A: (\bp_\cS, \bx_A) \in \pmdm(\cS^\prime) \right\}, 
\end{align*}
where $\cS^\prime = \cS \cup \{A\}.$ Here, as before, $\pmdm(\cS^\prime)$ denotes the collection of all MDM representable choice probabilities over the assortment collection $\cS^\prime.$ The worst-case expected revenue over the consistent collection $\mathcal{U}_A$ is then defined as 
\begin{align}
    \underline{r}(A) := \inf_{\bx_{A} \in \, \mathcal{U}_A}\sum_{i \in A} r_i x_{i,A}. 
    \label{eq:rob-revenues}
\end{align}
Due to the exact characterization in Theorem \ref{thm:feascon-mdm}, we obtain the following reformulation for $\underline{r}(A),$ \textcolor{black}{with the proof provided in Section \ref{pf:wc-rev-reform}.}
\begin{proposition}
    \label{prop:wc-rev-reform}
    Suppose that the given choice data collection $\bp_\cS \in \pmdm(\cS)$ and $A \subseteq \cN$ is an assortment not in $\cS.$ Then the worst-case expected revenue $\underline{r}(A)$ equals, 
\begin{subequations}
\begin{align}
%\begin{aligned}
    & \min_{\bx_{A},\blam \in \mathbb{R}^{|\cS^\prime|}}  \sum_{i \in A}   r_i x_{i,A} \nonumber\\
    & \text{s.t.}\quad 
    x_{i,A}  \  \leq \   p_{i,S} \quad \text{if} \quad  \lambda_A \geq \lambda_S, \qquad  \forall i\in A, \ (i,S) \in \cI_\cS, \label{model:mdm_revenue_prediction-a}\\     
   & \quad\quad\;\; x_{i,A}  \  \geq \   p_{i,S} \quad \text{if} \quad  \lambda_A \leq \lambda_S, \qquad  \forall i\in A, \ (i,S) \in \cI_\cS, \label{model:mdm_revenue_prediction-b}\\      
    & \quad\quad\;\; \sum_{i\in A}x_{i,A}=1,\nonumber\\
    & \quad\quad\;\; x_{i,A} \geq 0,\quad\quad\quad\quad\quad\quad\quad\quad\quad\quad \forall i\in A, \nonumber\\
    & \quad\quad\;\; \lambda_S  \ > \  \lambda_T  \quad \text{if} \quad p_{i,S} < p_{i,T}, \qquad\ \,   \forall\, (i,S),(i,T) \in \cI_\cS, \nonumber\\
    & \quad\quad\;\; \lambda_S \  = \ \lambda_T    \quad \text{if} \quad p_{i,S}  =  p_{i,T} \neq 0, \quad  \forall \,(i,S),(i,T) \in \cI_\cS \nonumber.
    \end{align}
\end{subequations}
\end{proposition}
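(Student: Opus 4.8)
The plan is to invoke the exact characterization in Theorem \ref{thm:feascon-mdm} to replace the abstract membership condition $(\bp_\cS,\bx_A)\in\pmdm(\cS^\prime)$ that defines $\mathcal{U}_A$ with explicit linear constraints on the joint variable $(\bx_A,\blam)$, and then to argue that the resulting feasible region produces the same infimal revenue as $\mathcal{U}_A$. First I would apply Theorem \ref{thm:feascon-mdm} to the enlarged collection $\cS^\prime=\cS\cup\{A\}$: a vector $\bx_A$ lies in $\mathcal{U}_A$ exactly when there is a function $\lambda:\cS^\prime\to\R$ satisfying \eqref{eq:mdm-feascon} for every pair of assortments in $\cS^\prime$ that share a common product. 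I would then split these conditions into two groups, namely the pairs with both assortments in $\cS$ (which reproduce verbatim the last two constraints of the displayed program, and which are satisfiable because $\bp_\cS\in\pmdm(\cS)$), and the cross pairs $(A,S)$ with $S\in\cS$ and common product $i\in A\cap S$, which tie the ordering of $\lambda_A,\lambda_S$ to the ordering of $x_{i,A},p_{i,S}$.

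Next I would translate the cross conditions into the contrapositive form appearing in \eqref{model:mdm_revenue_prediction-a}--\eqref{model:mdm_revenue_prediction-b}. Reading \eqref{eq:mdm-feascon} for the pair $(A,S)$ in both orders shows that $x_{i,A}<p_{i,S}$ forces $\lambda_A>\lambda_S$ while $x_{i,A}>p_{i,S}$ forces $\lambda_A<\lambda_S$; taking contrapositives yields precisely ``$x_{i,A}\le p_{i,S}$ whenever $\lambda_A\ge\lambda_S$'' and ``$x_{i,A}\ge p_{i,S}$ whenever $\lambda_A\le\lambda_S$,'' which are the constraints \eqref{model:mdm_revenue_prediction-a}--\eqref{model:mdm_revenue_prediction-b}. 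Since any $\lambda$ witnessing $\bx_A\in\mathcal{U}_A$ automatically satisfies these inequalities together with the within-$\cS$ constraints, the feasible region of the program contains $\mathcal{U}_A$ after projecting out $\blam$, so its optimal value is at most $\underline r(A)$.

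The reverse inequality is where the real work lies, because the contrapositive constraints drop exactly one requirement of \eqref{eq:mdm-feascon}: the tie condition $\lambda_A=\lambda_S$ in the degenerate case $x_{i,A}=p_{i,S}\neq 0$. The program therefore admits $(\bx_A,\blam)$ with $\lambda_A\neq\lambda_S$ at such a tie, so its feasible region can be strictly larger than $\mathcal{U}_A$, and my plan is not to prove the two regions equal but to show they share the same infimum. I would establish that every feasible $(\bx_A,\blam)$ has $\bx_A\in\mathrm{closure}(\mathcal{U}_A)$: at each product where a tie is violated I would perturb $\bx_A$ into the strict side dictated by $\blam$ (decreasing $x_{i,A}$ when $\lambda_A>\lambda_S$ and increasing it when $\lambda_A<\lambda_S$), using the within-$\cS$ conditions to guarantee that each product's admissible perturbation interval is nonempty, and repositioning $\lambda_A$ within the ranking where needed, all while maintaining $\sum_{i\in A}x_{i,A}=1$. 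Because the objective $\sum_{i\in A}r_i x_{i,A}$ is continuous, the infimum over $\mathrm{closure}(\mathcal{U}_A)$ equals that over $\mathcal{U}_A$, i.e.\ $\underline r(A)$, so the program's value is also at least $\underline r(A)$, giving the claimed equality. I expect this perturbation/closure step to be the main obstacle: the routine parts are invoking Theorem \ref{thm:feascon-mdm}, separating the within-$\cS$ and cross conditions, and the contrapositive rewriting, whereas all of the care is in showing that relaxing the degenerate tie enlarges the feasible region only within its closure and hence leaves the infimal revenue unchanged.
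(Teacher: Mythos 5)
Your proposal matches the paper's proof in both structure and substance: the paper likewise applies Theorem \ref{thm:feascon-mdm} to $\cS^\prime=\cS\cup\{A\}$, observes that the program's constraints are exactly the closed relaxations of the strict/equality conditions in \eqref{eq:mdm-feascon} for the cross pairs $(A,S)$, identifies the resulting feasible region with $\textnormal{closure}(\mathcal{U}_A)$ (via a four-case analysis of the sign of $\lambda_A-\lambda_S$ versus $x_{i,A}-p_{i,S}$, rather than your explicit perturbation), and concludes by continuity of the linear objective so that the infimum over $\mathcal{U}_A$ equals the minimum over its closure. The degenerate tie $x_{i,A}=p_{i,S}\neq 0$ with $\lambda_A\neq\lambda_S$ that you single out as the crux is precisely what the paper's case analysis absorbs, so your route is essentially the paper's.
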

One may also obtain worst-case sales predictions for a product $i,$ when offered within assortment $A$, by letting $r_i = 1$ and $r_j = 0$ for $j \neq i$ in the objective in the reformulation in Proposition \ref{prop:wc-rev-reform}. Similarly, replacing the minimization in this  reformulation with a maximization yields a best-case (optimistic) revenue estimate $\bar{r}(A).$ 

Observe when choice data is available for a richer assortment collection, it leads to a less-conservative estimate for $\underline{r}(A)$ and a narrower interval $[\underline{r}(A),\bar{r}(A)]$ as plausible values for revenue estimates which are consistent with the given data and the MDM hypothesis.  This is because the number of constraints in the constraint collections \eqref{model:mdm_revenue_prediction-a}-\eqref{model:mdm_revenue_prediction-b} is larger when the assortment collection $\cS$ for which choice data is available is made richer. In other words, $\cI_{\cS_1} \subseteq \cI_{\cS_2}$ when $\cS_1 \subseteq \cS_2$ and therefore the resulting $\pmdm(\cS_2 \cup \{A\})$ is nested within $\pmdm(\cS_1 \cup \{A\}).$

%Next, we propose two approaches for solving the reformulation in Proposition \ref{prop:wc-rev-reform}.

%\subsection{Solution approaches for sales and revenue estimates via \eqref{eq:rob-revenues}}

\subsection{A mixed integer linear formulation for the worst case expected revenue}
As a generally applicable approach for evaluating the worst-case revenue $\underline{r}(A),$ one may model the ``if'' conditions in \eqref{model:mdm_revenue_prediction-a} - \eqref{model:mdm_revenue_prediction-b} via additional binary variables $(\delta_{A,S}, \delta_{S,A}: S \in \cS)$ to obtain the mixed-integer linear reformulation with $\mathcal{O}(n|\cS|)$  binary variables, $\mathcal{O}(n+|\cS|)$ continuous variables, and $\mathcal{O}(n|\cS|)$ constraints as follows. \textcolor{black}{The proof of Proposition \ref{prop:wc-rev-milp} below is provided in Section \ref{pf:wc-rev-milp}.}%Let 
%\begin{align*}
%    \epsilon^\prime := \frac{1}{2} \min\{ \vert p_{i,S} - p_{i,T}\vert: p_{i,S} \neq p_{i,T}, (i,S), (i,T) \in \cI_\cS \}.
%\end{align*}
\begin{proposition}
\label{prop:wc-rev-milp}
Suppose that the assumptions in Proposition \ref{prop:wc-rev-reform} are satisfied. Then 
for any $0 < \epsilon < 1 /(2\vert \cS \vert),$
%positive constant $\epsilon$ satisfying $ (\vert S \vert + 1)\epsilon < 1,$ 
%$\in (0, (\vert \cS \vert +1)^{-1}),$
the worst-case expected revenue $\underline{r}(A)$ equals the value of the following mixed integer linear program:   
\begin{subequations}
\begin{align}
\min_{\bx_A,\blambda,\bdelta\!\!\,}\quad&  \quad  \sum_{i \in A}  r_i x_{i,A} \nonumber\\
 \text{s.t.}\quad \ -&\delta_{A,S} \ \leq\    \lambda_A  -  \lambda_S \ \leq \  1 -  (1+\epsilon)\delta_{A,S}, \qquad\quad\  \ \forall \,  i \in A, \, (i,S) \in \cI_\cS, \label{mdm:milp-a}\\
 -&\delta_{S,A}  \ \leq\    \lambda_S  -  \lambda_A \ \leq \  1 -  (1+\epsilon)\delta_{S,A}, \qquad\quad\  \ \forall \,  i \in A, \, (i,S) \in \cI_\cS, \label{mdm:milp-b}\\
&  \delta_{A,S} - 1 \ \leq \  x_{i,A} - p_{i,S} \ \leq \ 1 - \delta_{S,A}, \quad\qquad\quad\,   \forall \,  i \in A, \, (i,S) \in \cI_\cS,\label{mdm:milp-c}\\
 -&(\delta_{A,S}+  \delta_{S,A}) \ \leq \ x_{i,A} - p_{i,S} \ \leq \ \delta_{A,S} + \delta_{S,A},\ \quad     \forall \,  i \in A, \, (i,S) \in \cI_\cS,\label{mdm:milp-d}\\ 
& \lambda_S - \lambda_T \ \geq \ \epsilon, \qquad\qquad\qquad\! \forall \, (i,S), (i,T) \in \cI_\cS \ \textnormal{ s.t. }\  p_{i,S} < p_{i,T} \nonumber,\\
&\lambda_S - \lambda_T \ = \ 0, \qquad\qquad\quad\ \ \, \forall \, (i,S), (i,T) \in \cI_\cS \ \textnormal{ s.t. }\  p_{i,S} = p_{i,T} \neq 0 \nonumber,\\
& \sum_{i \in A} x_{i,A} \ = \ 1,  \nonumber \\
& 0 \leq \lambda_A \leq 1,\quad\  x_{i,A} \ \geq\  0,  \ \forall \,  i \in A \nonumber,\quad\ 0 \, \leq\,  \lambda_S \leq\,  1, \quad  \delta_{A,S}, \delta_{S,A} \, \in \, \{0,1\}, \  \forall \, S \in \cS. \nonumber 
\end{align}
\end{subequations}
Likewise, the optimistic expected revenue $\bar{r}(A) := \sup_{\bx_A \in \mathcal{U}_A} \sum_{i\in A} r_i x_{i,A} $ equals the optimal value obtained by maximizing over the constraints in the above mixed integer linear program.  
\end{proposition}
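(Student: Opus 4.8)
\emph{Overall plan.} The plan is to show that the mixed integer linear program is an exact big-$M$ linearization, with $M=1$, of the reformulation in Proposition \ref{prop:wc-rev-reform}. Since both programs minimize the same linear objective $\sum_{i\in A} r_i x_{i,A}$, which depends only on $\bx_A$, it suffices to prove that the projections onto $\bx_A$ of the two feasible regions coincide. The only nontrivial part is the translation of the conditional (``if'') constraints \eqref{model:mdm_revenue_prediction-a}--\eqref{model:mdm_revenue_prediction-b}, which turn on the sign of $\lambda_A-\lambda_S$; the binary variables $\delta_{A,S},\delta_{S,A}$ are introduced precisely to record this sign. I would first normalize the $\lambda$ values to lie in $[0,1]$, which is harmless because the constraints of Proposition \ref{prop:wc-rev-reform} depend on $\blam$ only through the order-and-equality structure of $\{\lambda_S: S\in\cS\}\cup\{\lambda_A\}$, and the objective does not depend on $\blam$ at all.

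\emph{Decoding the binaries.} Inspecting \eqref{mdm:milp-a}--\eqref{mdm:milp-b} under the normalization $\lambda\in[0,1]$ shows that $\delta_{A,S}=0$ forces $\lambda_A\ge\lambda_S$ while $\delta_{A,S}=1$ forces $\lambda_A\le\lambda_S-\epsilon$, and symmetrically $\delta_{S,A}$ controls the sign of $\lambda_S-\lambda_A$. Hence the only feasible combinations are $(\delta_{A,S},\delta_{S,A})\in\{(0,0),(0,1),(1,0)\}$, encoding $\lambda_A=\lambda_S$, $\lambda_A>\lambda_S$, and $\lambda_A<\lambda_S$ respectively, with $(1,1)$ infeasible. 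A short case check of \eqref{mdm:milp-c}--\eqref{mdm:milp-d} in each of the three cases then yields exactly $x_{i,A}=p_{i,S}$ when $\lambda_A=\lambda_S$ (from \eqref{mdm:milp-d}), $x_{i,A}\le p_{i,S}$ when $\lambda_A>\lambda_S$ (from \eqref{mdm:milp-c}), and $x_{i,A}\ge p_{i,S}$ when $\lambda_A<\lambda_S$ (from \eqref{mdm:milp-c}), the remaining bounds being vacuous. These are precisely the implications of \eqref{model:mdm_revenue_prediction-a}--\eqref{model:mdm_revenue_prediction-b}, and the last two $\lambda$-constraints of the MILP (with gap $\epsilon$ and with equality) imply those of Proposition \ref{prop:wc-rev-reform}; so every MILP-feasible $\bx_A$ is feasible there.

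\emph{The lifting direction and the role of $\epsilon$.} For the converse I would take any $(\bx_A,\blam)$ feasible for Proposition \ref{prop:wc-rev-reform} and produce an equivalent MILP-feasible point. By \eqref{eq:mdm-feascon} the values $(\lambda_S:S\in\cS)$ fall into at most $|\cS|$ distinct levels; I would reposition these levels within $[0,1]$ so that consecutive distinct levels are separated by at least $2\epsilon$, which is possible because $\epsilon<1/(2|\cS|)$ leaves total room $2|\cS|\epsilon<1$ (including end buffers). The factor-of-two spacing is what lets me reinsert $\lambda_A$ at the correct position relative to the levels --- above all of them, below all of them, equal to one, or strictly between two consecutive ones --- while still satisfying the $\pm\epsilon$ separations that \eqref{mdm:milp-a}--\eqref{mdm:milp-b} demand in the strict cases. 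Setting each $\delta_{A,S},\delta_{S,A}$ according to the sign of $\lambda_A-\lambda_S$ then verifies all MILP constraints and reproduces the same $\bx_A$. Combined with the previous paragraph, the two $\bx_A$-feasible sets coincide and the optimal values agree; the optimistic statement follows verbatim upon replacing minimization with maximization, since the feasible region is unchanged.

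\emph{Main obstacle.} I expect the calibration of $\epsilon$ to be the crux: one must confirm that the normalization preserves every order-and-equality relation among $\{\lambda_S\}\cup\{\lambda_A\}$ that a real-valued $\blam$ could realize, and in particular that $\lambda_A$ can be placed strictly between two consecutive levels. This strict-betweenness is exactly why a gap of $2\epsilon$ rather than $\epsilon$ is needed between consecutive levels, and it is this requirement that pins down the stated threshold $\epsilon<1/(2|\cS|)$.
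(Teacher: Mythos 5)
Your proposal is correct and follows essentially the same route as the paper's proof: both decode the binary pair $(\delta_{A,S},\delta_{S,A})$ as encoding the sign of $\lambda_A-\lambda_S$ via the big-$M{=}1$ constraints, run the same three-case check on \eqref{mdm:milp-c}--\eqref{mdm:milp-d} to recover \eqref{model:mdm_revenue_prediction-a}--\eqref{model:mdm_revenue_prediction-b}, and justify the normalization of $\blambda$ to $[0,1]$ with $\epsilon$-separation by noting that $\epsilon<1/(2|\cS|)$ leaves enough room (the paper simply spaces all $|\cS|+1$ values of $\cS'=\cS\cup\{A\}$ by $\epsilon$, whereas you space the $\cS$-levels by $2\epsilon$ and insert $\lambda_A$ at midpoints, but this is only a bookkeeping difference). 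Your write-up is, if anything, slightly more explicit than the paper's about the lifting direction.
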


\subsection{Polynomial-time algorithms for prediction with structured collections}  \label{sec:algorithms-wc-rev}
Besides the generally applicable mixed integer linear program in Proposition \ref{prop:wc-rev-milp}, we develop an alternative solution approach leveraging special structures, such as nested or laminar structures, in the assortment collection in order to evaluate the worst-case revenues $\underline{r}(A)$ in polynomial time. Corollary \ref{cor:prediction_structure}, Proposition  \ref{prop:prediction_common_product} and Corollary  \ref{cor:prediction_nest} below show that computing the worst-case expected revenue can be efficient when the assortment collection $\cS$ is structured, \textcolor{black}{and the proofs of these results can be found in Sections \ref{pf:prediction_structure}, \ref{pf:prediction_common_product} and \ref{pf:prediction_nest} respectively.} %Suppose that $\mathcal{S}$ is a collection of assortments formed over $n$ products and $A$ is an assortment such that $A \notin \cS.$ 

\begin{corollary}\label{cor:prediction_structure}
When $\cS^\prime$ is either nested or laminar, evaluating $\underline{r}(A)$ in \eqref{eq:rob-revenues} is equivalent to solving the following linear program with $O(n)$ continuous variables and $O(n\vert \cS \vert)$ constraints:
\begin{align}
\begin{aligned}
    \min_{\bx_A} \quad& \sum_{i\in A} r_i x_{i,A}\\
   \text{s.t.} \quad &
     x_{i,A} \leq p_{i,S},  \text{ s.t. } S \subset A,\quad \forall i\in A, (i,S)\in \cI_\cS,\\
    & x_{i,A} \geq p_{i,S},  \text{ s.t. } A \subset S,\quad \forall i\in A, (i,S)\in \cI_\cS, \\
    & \sum_{i\in A} x_{i,A} = 1,\quad x_{i,A} \geq 0,\quad\!\! \forall i\in A.\label{model:prediction_structure}
\end{aligned}
\end{align}
\end{corollary}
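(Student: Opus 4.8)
The plan is to start from the general worst-case reformulation in Proposition \ref{prop:wc-rev-reform} and show that the nested or laminar structure of $\cS^\prime$ lets us eliminate the disutility variables $\blam$ entirely, so that the conditional constraints in \eqref{model:mdm_revenue_prediction-a}--\eqref{model:mdm_revenue_prediction-b} collapse into the unconditional set-containment constraints in \eqref{model:prediction_structure}. First I would recall that, since $\bp_\cS \in \pmdm(\cS)$, Theorem \ref{thm:feascon-mdm} guarantees a valid assignment of $\blam$ over $\cS$, and the minimization in Proposition \ref{prop:wc-rev-reform} is taken jointly over $\bx_A$ and $\lambda_A$ (and feasible $\lambda_S$). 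The key observation is that when $\cS^\prime = \cS \cup \{A\}$ is laminar (nested being a special case), any $S \in \cS$ that shares a common product $i$ with $A$ must satisfy either $S \subset A$ or $A \subset S$, because $S \cap A \neq \emptyset$ rules out the disjoint case. This dichotomy is exactly what makes the containment-based constraints meaningful.

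Next I would argue the two directions of equivalence between \eqref{eq:rob-revenues} and the linear program \eqref{model:prediction_structure}. For the forward direction, I would invoke the regularity property established in Lemma \ref{Regularity}(a): any $\bx_A \in \mathcal{U}_A$ corresponds to an MDM-representable collection over $\cS^\prime$, so $S \subset A$ forces $x_{i,A} \leq p_{i,S}$ and $A \subset S$ forces $x_{i,A} \geq p_{i,S}$ for every common product $i$. Hence every feasible point of \eqref{eq:rob-revenues} is feasible for \eqref{model:prediction_structure}, giving one inequality between the optimal values. For the reverse direction, given any $\bx_A$ feasible for \eqref{model:prediction_structure}, I would construct a disutility value $\lambda_A$ that is consistent with the existing $\blam$ over $\cS$ and certifies $(\bp_\cS, \bx_A) \in \pmdm(\cS^\prime)$ via Theorem \ref{thm:feascon-mdm}; the laminar structure ensures the assortments comparable to $A$ are linearly ordered by containment, so one can always insert $\lambda_A$ into the existing total order of $\blam$-values to satisfy \eqref{eq:mdm-feascon}. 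This shows \eqref{model:prediction_structure} is feasible only for points in $\mathcal{U}_A$, yielding the matching inequality and hence equality of optimal values.

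The main obstacle I expect is the reverse direction, specifically the placement of $\lambda_A$ relative to the existing disutilities. One must verify that the containment constraints in \eqref{model:prediction_structure} are not merely necessary but also sufficient: that is, that a feasible $\bx_A$ admits an extension of the ranking to include $A$ without violating any of the strict/equality conditions in \eqref{eq:mdm-feascon} against assortments $S$ that are disjoint from $A$ or share products whose probabilities must be reconciled. The laminar structure is what resolves this, since assortments disjoint from $A$ impose no constraint through common products, and the comparable assortments form a chain under containment; placing $\lambda_A$ strictly between the disutility of the smallest superset and the largest subset of $A$ (with appropriate handling of ties and of products absent from the chain) yields a valid assignment. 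I would make this insertion argument precise by exploiting that along a containment chain the probabilities $p_{i,S}$ are monotone in $\lambda_S$ by \eqref{eq:mdm-feascon}, so the interval of admissible $\lambda_A$ values is nonempty exactly when the containment inequalities in \eqref{model:prediction_structure} hold. The counting of $O(n)$ variables and $O(n\vert\cS\vert)$ constraints then follows directly from $\vert A\vert \leq n$ and the fact that each pair $(i,S)$ with $i \in A \cap S$ contributes at most one constraint.
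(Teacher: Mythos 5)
Your argument is correct in outline, but it re-derives locally what the paper obtains by a one-line citation: the paper's proof simply invokes Theorem \ref{thm:mdm-rum-relation}(b), which gives $\textnormal{closure}\big(\pmdm(\cS^\prime)\big) = \preg(\cS^\prime)$ for nested or laminar $\cS^\prime$, and then observes that the regularity conditions of $\preg(\cS^\prime)$ restricted to the new assortment $A$ (with $\bx_\cS$ fixed at $\bp_\cS$) are exactly the containment constraints in \eqref{model:prediction_structure}. Your forward direction via Lemma \ref{Regularity}(a) and your reverse direction via insertion of $\lambda_A$ into the existing disutility order are precisely the two halves of that equivalence, so the mathematical content is the same; what your version buys is self-containedness, at the cost of repeating the construction already carried out in the proof of Theorem \ref{thm:mdm-rum-relation}(b). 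Two small points to tighten. First, in a laminar family the assortments comparable to $A$ do \emph{not} form a chain: the supersets of $A$ do, but the subsets of $A$ may be pairwise disjoint, so there is no single ``largest subset.'' The insertion still works because for any $S \subset A \subset T$ the sets $S$ and $T$ share a product $i$ with $p_{i,S} > 0$, and regularity plus \eqref{eq:mdm-feascon} then force $\lambda_S \leq \lambda_T$; hence $\max_{S \subset A} \lambda_S \leq \min_{T \supset A} \lambda_T$ and $\lambda_A$ can be placed in between. Second, your insertion only certifies membership in the \emph{closure} of $\pmdm(\cS^\prime)$ (weak inequalities and ties), not in $\pmdm(\cS^\prime)$ itself; this is harmless because $\underline{r}(A)$ is an infimum of a continuous objective over $\mathcal{U}_A$ and therefore equals the minimum over the closure (as established in the proof of Proposition \ref{prop:wc-rev-reform}), but the step should be stated explicitly rather than asserting $(\bp_\cS,\bx_A) \in \pmdm(\cS^\prime)$.
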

The result in Corollary \ref{cor:prediction_structure} follows from the conclusion in Theorem \ref{thm:mdm-rum-relation} that $\textnormal{closure}\big(\pmdm(\cS^\prime)\big) = \preg(\cS^\prime)  = \preg(\cS^\prime)^\prime$ when $\cS^\prime$ is either nested or laminar. 

{\color{black}Next, we investigate a special structure of $\cS$ that allows $\underline{r}(A)$ to be computed in polynomial time. This structure corresponds to the case where a common product is present in all assortments. Without loss of generality, assume $\cS = \{S_1, S_2, \dots, S_m\}$. For simplicity, define $S_0$ and $S_{m+1}$ as imaginary sets and assign $p_{i,S_0} = 1$ and $p_{i,S_{m+1}} = 0$ for any $i \in A$.
\begin{proposition}\label{prop:prediction_common_product}
Suppose that the assumptions in Proposition \ref{prop:wc-rev-reform} are satisfied and there exists a product $i^{*}$ being included in every assortment of the collection $\cS$. Without loss of generality,  assume that the assortments $S_1,\ldots,S_m$ are such that $p_{i^{*},S_1} \geq p_{i^{*},S_2} \geq \cdots \geq p_{i^{*},S_m}.$ Then for any given $A,$ 
\begin{align}
\begin{aligned} \underline{r}(A) = \min_{k=0,1,\cdots,|\cS|}\; &\quad\mathcal{\boldsymbol{R}}_k \\
\text{where}\quad \mathcal{\boldsymbol{R}}_k &= \min_{\bx_A} \sum_{i\in A} r_i x_{i,A}  \\
   & \quad\;\;  \text{s.t.} \quad 
     x_{i,A} \leq p_{i,S_{k}}, \quad\! \ \ \forall i \in A \cap S_{k}, \\
    &\quad\quad\quad\;\;\; x_{i,A} \geq p_{i,S_j}, \quad \ \ \forall  j \geq k+1, S_j \in \cS, i\in S_j\cap A ,\\
    & \quad\quad\quad\;\;\; \sum_{i\in A} x_{i,A} = 1,\quad\;\;\; x_{i,A} \geq 0\quad \forall i\in A.  
\end{aligned}\label{model:prediction_common_product}
\end{align}
\end{proposition}
Observe that \eqref{model:prediction_common_product} involves $|\cS| + 1$ linear programs, each with $\mathcal{O}(n)$ continuous variables and $\mathcal{O}(n|\cS|)$ constraints. Therefore, both \eqref{model:prediction_structure} and \eqref{model:prediction_common_product} are polynomial solvable. In practice, a common approach to model the outside option if available is to treat it as a common product included in every assortment. Proposition \ref{prop:prediction_common_product} guarantees that computing $\underline{r}(A)$ becomes polynomially solvable in the case where the outside option is always available in this case.   

Building on Proposition \ref{prop:prediction_common_product}, it is clear that when the assortment collection $\cS$ is nested, for any given unseen assortment $A,$ $\underline{r}(A)$ can be computed efficiently. Without loss of generality, when $\cS$ is nested, let 
%$\cS = \{S_1,S_2,\cdots, S_m\}$ such that 
$S_1\subset S_2\subset \cdots\subset S_m.$
}
%Let $S_0:=\emptyset$ and $S_{m+1}$ be a set such that $\cN \subset S_{m+1}$.% and $p_{i,S_0} =1$ for all $i\in A$.
%For ease of notation, let $S_0$ and $S_{m+1}$ be imaginary sets and set $p_{i,S_0} = 1$ and $p_{i,S_{m+1}} = 0$ for any $i \in A.$}
\begin{corollary}
\label{cor:prediction_nest}
Suppose that the assortment collection $\cS$ is nested. Then for any given $A,$  %evaluating $\underline{r}(A)$ in \eqref{eq:rob-revenues} can be accomplished by  solving the following set of linear programs:
\begin{align}
\begin{aligned} \underline{r}(A) = \min_{k=0,1,\cdots,|\cS|}\; &\quad\mathcal{\boldsymbol{R}}_k \\
\text{where}\quad \mathcal{\boldsymbol{R}}_k &= \min_{\bx_A} \sum_{i\in A} r_i x_{i,A}  \\
   & \quad\;\;  \text{s.t.} \quad 
     x_{i,A} \leq p_{i,S_{k}}, \quad\! \ \ \forall i \in A \cap S_{k}, \\
    &\quad\quad\quad\;\;\; x_{i,A} \geq p_{i,S}, \quad\quad\; \forall i \in A, (i,S)\in \cI_\cS, S_{k+1} \subseteq S,\\
    & \quad\quad\quad\;\;\; \sum_{i\in A} x_{i,A} = 1,\quad\;\;\; x_{i,A} \geq 0\quad \forall i\in A.  
    \label{model:prediction_nested}
\end{aligned}
\end{align}
\end{corollary}

\section{Limit of MDM and the estimation of best-fitting MDM probabilities}
\label{sec:lom}
%As with any choice model hypothesis, 
Customer preferences captured by choice data however clearly need not always satisfy a specific choice model hypothesis perfectly. Considering choice data instances that are not MDM-representable, we next seek to quantify the limit or the cost of approximating given choice data with MDM and a procedure for identifying  MDM-representable choice probabilities offering the best fit. 

\subsection{A limit of MDM formulation}\label{sec:limit_formulation}
Given choice data $\bp_\cS$ and any $\bx_\cS \in \pmdm(\cS),$ suppose that a loss function $\bx_\cS \mapsto  \text{loss}(\bp_\cS,\bx_\cS)$ measures the degree of inconsistency in approximating choice data $\bp_\cS$ with an MDM-consistent choice probability assignment $\bx_\cS.$ We take the loss function to be non-negative and convex, and  satisfying the property that 
$\text{loss}(\bp_\cS,\bx_\cS )=0$ if and only if $\bx_\cS = \bp_\cS$. Suppose that $(n_S: S \in \cS)$ is a vector of non-negative weights over assortments in $\cS.$ Then a norm-based loss such as $\sum_{S \in \cS} n_S \Vert \bp_S - \bx_S \Vert$ or a Kullback-Liebler divergence based loss such as $-\sum_{S \in \cS} n_S \sum_{i \in S} p_{i,S}\log(x_{i,S}/p_{i,S})$ serve as prominent examples among the losses which satisfy these assumptions. For $S \in \cS,$ the weight $n_S$ may be taken, for example, to be the frequency with which the offer set $S$ has been shown to customers in the choice dataset.

We define the limit of the MDM, denoted by $\mathcal{L}(\bp_\cS),$ as the smallest value of $\text{loss}(\bp_\cS,\bx_\cS )$ attainable by fitting the observed data $\bp_{\cS}$ with an MDM choice model:
\begin{align}
    \mathcal{L}(\bp_\cS) = \inf \left\{ \text{loss}(\bp_\cS,\bx_\cS)\,:\, \bx_{\cS} \in \pmdm(\cS) \right\}.
    \label{eq:limit-of-mdm}
\end{align}
 As is evident from the definition above,  evaluating the limit  $\mathcal{L}(\bp_\cS)$ can be viewed as identifying a choice probability assignment $\bx_\cS^\ast$ which is consistent with the MDM hypothesis and is about as close any MDM can be to the observed choice data $\bp_\cS.$ Thus any $\bx_\cS^\ast$ attaining the minimum in \eqref{eq:limit-of-mdm} can be seen as offering the best fit, within the MDM family, to the observed choice data. In particular, suppose we take  $\text{loss}(\bp_\cS,\bx_\cS ) = - \sum_{S \in \cS} n_S \sum_{i \in S} p_{i,S}\log(x_{i,S}/p_{i,S})$ and the weight $n_S,$ for $S \in \cS,$ to be equal to the number of observations available for an assortment $S$ in the choice dataset. Then, as highlighted in Example 2.1 of  \cite{jagabathula2019limit}, $\bx_{\cS}^\ast$ is a minimizer in the limit formulation \eqref{eq:limit-of-mdm} if and only if it maximizes the likelihood. Thus, in this case, a solution to the limit \eqref{eq:limit-of-mdm} can be viewed as being obtained from  \textit{maximum likelihood estimation} in the MDM family without any parametric restrictions.
 
%{\color{green} In particular, suppose we take   $\text{loss}(\bp_\cS,\bx_\cS ) = \sum_{S \in \cS} n_S \sum_{i \in S} p_{i,S}\log(x_{i,S}/p_{i,S})$ and the weight $n_S,$ for $S \in \cS,$ to be equal to the number of observations available for an assortment $S$ in the choice dataset. Let $n_{i,S}$ denote the number of times the product $i$ has been selected from the assortment $S$ in the given dataset. Then, as highlighted in Example 2.1 of  \cite{jagabathula2019limit}, $\bx_{\cS}^\ast$ is a minimizer in the limit formulation \eqref{eq:limit-of-mdm} if and only if it maximizes the likelihood $\prod_{S \in \cS} \prod_{i \in S} x_{i,S}^{n_{i,S}}.$  Indeed this is because $n_{i,S} = n_S p_{i,S}$ and 
% \begin{align*}
%     \log \prod_{S \in \cS} \prod_{i \in S} x_{i,S}^{n_{i,S}} 
     %= \sum_{S \in \cS} \sum_{i \in S} n_{i,S}\log x_{i,S} 
%     = 
%     \sum_{S \in \cS} n_S \sum_{i \in S} p_{i,S}\log x_{i,S} = \text{loss}(\bp_\cS,\bx_{\cS}) 
%     + \text{ a term not dependent on } \bx_\cS.
%     %+ \sum_{S \in \cS} n_S \sum_{i \in S} p_{i,S}\log p_{i,S},
% \end{align*}
%Thus, in this case, a solution to the limit \eqref{eq:limit-of-mdm} can be viewed as being  obtained from  \textit{maximum likelihood estimation} in the MDM family without any parametric restrictions.(The equivalence between KL divergence and the maximum likelihood estimation seems to be well known. If we would like to save space, I guess it is OK to remove this paragraph or shorten it.)} %for the choice probabilities for the assortments in $\cS$.

Besides this use in estimation, one may also use the limit $\mathcal{L}(\bp_\cS)$ as a diagnostic tool for determining how well MDM is suitable for fitting choice data and comparing it with how effective any parametric subclass is in accomplishing the same. To see this use at a conceptual level, suppose that $\boldsymbol{\bar{x}}_\cS$ denotes the choice probabilities obtained by fitting a parametric subclass of MDM, such as MNL (or) marginal exponential model \citep{mishra2014theoretical}. Then, as put forward by \cite{jagabathula2019limit}, one may view the overall loss captured by $\textnormal{loss}(\bp_{\cS},\boldsymbol{\bar{x}}_\cS)$ as below: 
\begin{align*}
    \textnormal{loss}(\bp_{\cS},\boldsymbol{\bar{x}}_\cS) &= \mathcal{L}(\bp_{\cS}) \ + \ \big\{\textnormal{loss}(\bp_{\cS},\boldsymbol{\bar{x}}_\cS) - \mathcal{L}(\bp_{\cS})\big\},
\end{align*}
where the second component  $\textnormal{loss}(\bp_{\cS},\boldsymbol{\bar{x}}_\cS) - \mathcal{L}(\bp_{\cS})$ is the incremental cost that comes with employing a parametric model within the MDM family in order to approximate the choice data. %Thus the cost of approximating $\bp_\cS$ with a parametric MDM can be decomposed as the c
If data suggests that this incremental parametric cost is higher relative to the limit $\mathcal{L}(\bp_\cS),$ then one should consider a richer parametric model (or) use the general nonparametric MDM over the chosen parametric class. If, on the other hand,  the loss $\mathcal{L}(\bp_\cS)$ due to MDM itself is large, then MDM should possibly not be considered as a suitable model for the given choice data. 

Recall the characterization $\pmdm(\cS)$ as the projection $\{ \bx: (\bx,\lambda) \in \Pi_\cS\},$ where $\Pi_\cS$ is defined in \eqref{eq:lifted-set}. Due to this characterization, we have the following equivalent formulation for the limit $\mathcal{L}(\bp_\cS)$ \textcolor{black}{and provide the corresponding proof in Section \ref{pf:limit-mdm}.}% {\color{blue}All proofs of the results in this section is presented in Section \ref{sec:pf_limit}.}
% \begin{align}
% \begin{aligned}
%     \mathcal{L}(\bp_{\cS}) =   
%     \min_{\bx_{\cS},\blambda }\quad & \sum_{S\in \cS}  \text{loss}(\bp_{S},\bx_{S})\\
%       \text{s.t.}\quad &   \lambda_S > \lambda_T  \text{ if } x_{i,S} < x_{i,T}, \quad  \forall (i,S),(i,T) \in \cI,\\
%     & \lambda_S   = \lambda_T  \text{ if } 0 < x_{i,S} = x_{i,T},\quad  \forall (i,S),(i,T) \in \cI,\\
%     & \sum_{i \in S}x_{i,S} = 1,\quad \forall S \in \cS,\\
%     & x_{i,S} \geq 0, \quad \forall (i,S) \in \cI,
%     \label{model:lom}
% \end{aligned}
% \end{align}

\begin{proposition}\label{prop:limit-mdm}
Under Assumption \ref{asp:general},
 the limit $\mathcal{L}(\bp_\cS)$ equals 
\begin{align}
\begin{aligned}
     &\min_{\bx_{\cS},\blambda }\quad  \sum_{S\in \cS}  \textnormal{loss}(\bp_{S},\bx_{S})\\
     & \text{s.t.}\quad  x_{i,S} \,\geq\, x_{i,T}  \text{ if } \lambda_S \leq \lambda_T  , \ \    \forall\, (i,S),(i,T) \in \cI_{\cS},\\
   & \quad\quad \sum_{i \in S}x_{i,S} = 1,\  \qquad\qquad\quad \forall\, S \in \cS,\\
   &\quad\quad \ x_{i,S} \geq 0, \qquad\qquad\qquad\ \   \forall\, (i,S) \in \cI_{\cS}.  \label{model:lom}
\end{aligned}
\end{align}
\end{proposition}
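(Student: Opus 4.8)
The plan is to recognize that the feasible region of \eqref{model:lom}, projected onto the $\bx_\cS$-coordinates, is precisely $\textnormal{closure}\big(\pmdm(\cS)\big)$, and then to transfer the infimum in \eqref{eq:limit-of-mdm} onto this closed set using continuity of the loss. First I would rewrite the constraint. Imposed over all ordered pairs $(i,S),(i,T)\in\cI_\cS$, the single implication ``$x_{i,S}\ge x_{i,T}$ if $\lambda_S\le\lambda_T$'' is, by taking its contrapositive and applying it to both orderings of each pair, logically equivalent to ``$x_{i,S}<x_{i,T}\Rightarrow\lambda_S>\lambda_T$''. Hence the projection to $\bx_\cS$ of the feasible set of \eqref{model:lom} is
\[ Q:=\Big\{\bx_\cS\in\textstyle\prod_{S\in\cS}\Delta_S:\ \exists\,\blambda\in\R^{|\cS|}\text{ s.t. } x_{i,S}<x_{i,T}\Rightarrow\lambda_S>\lambda_T\ \ \forall\,(i,S),(i,T)\in\cI_\cS\Big\}. \]
This recovers the first condition in \eqref{eq:mdm-feascon} but drops the equality condition ``$\lambda_S=\lambda_T$ if $x_{i,S}=x_{i,T}\neq0$'' present in the definition \eqref{eq:lifted-set} of $\Pi_\cS$, so that $\pmdm(\cS)\subseteq Q$.

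For the easy inequality, by Theorem~\ref{thm:feascon-mdm} every $\bx_\cS\in\pmdm(\cS)$ admits a $\blambda$ satisfying \eqref{eq:mdm-feascon}, and such a $\blambda$ satisfies the constraint of \eqref{model:lom}. Thus $(\bx_\cS,\blambda)$ is feasible for \eqref{model:lom}, so its optimal value $V$ obeys $V\le\textnormal{loss}(\bp_\cS,\bx_\cS)$; infimizing over $\bx_\cS\in\pmdm(\cS)$ gives $V\le\mathcal{L}(\bp_\cS)$.

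For the reverse inequality I would argue $Q=\textnormal{closure}\big(\pmdm(\cS)\big)$. I first check that $Q$ is closed: along any convergent sequence in $Q$ only finitely many strict orders on $\cS$ can arise, so one order recurs along a subsequence and the defining implication passes to the limit; since $\prod_{S}\Delta_S$ is compact, $Q$ is compact, and as $\pmdm(\cS)\subseteq Q$ this yields $\textnormal{closure}\big(\pmdm(\cS)\big)\subseteq Q$. For the crucial reverse inclusion $Q\subseteq\textnormal{closure}\big(\pmdm(\cS)\big)$, given $\bx_\cS\in Q$ with witness $\blambda$ I would form the convex combination $\bx_\cS^{\,\eta}=(1-\eta)\bx_\cS+\eta\,\boldsymbol q$, where $\boldsymbol q\in\pmdm(\cS)$ is a reference profile respecting the same order $\blambda$ and \emph{strictly} separating those across-group ties of $\bx_\cS$ that carry positive mass; one then verifies $\bx_\cS^{\,\eta}\in\pmdm(\cS)$ for all small $\eta>0$ while $\bx_\cS^{\,\eta}\to\bx_\cS$. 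Granting this, continuity of the loss gives, for any feasible $(\bx_\cS,\blambda)$ of \eqref{model:lom}, that $\textnormal{loss}(\bp_\cS,\bx_\cS)=\lim_{\eta\downarrow0}\textnormal{loss}(\bp_\cS,\bx_\cS^{\,\eta})\ge\mathcal{L}(\bp_\cS)$, whence $V\ge\mathcal{L}(\bp_\cS)$. Compactness of $Q$ with continuity of the loss also ensures the minimum in \eqref{model:lom} is attained, consistent with $\mathcal{L}(\bp_\cS)$ being only an infimum (e.g.\ when $\bp_\cS\in Q\setminus\pmdm(\cS)$ the value is $0$ but is approached, not attained, within $\pmdm(\cS)$).

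The main obstacle is exactly this perturbation step: constructing the reference $\boldsymbol q$ and certifying $\bx_\cS^{\,\eta}\in\pmdm(\cS)$ \emph{globally}, since a local shift of mass within one assortment can create new violations of \eqref{eq:mdm-feascon} at another. Three observations keep it tractable and guide the construction: common products lying in a single $\blambda$-group already share equal probabilities throughout $Q$, so no separation is needed within a group; any obstructing tie necessarily has common value in $(0,1)$, because a tie at value $1$ forces every other shared product to $0$ and is therefore already MDM-representable, leaving slack to open the tie; and taking $\blambda$ as coarse as the pinned $0/1$ values allow avoids demanding strict separations that the simplex constraints cannot accommodate.
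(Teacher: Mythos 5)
Your route is the same as the paper's: identify the projection of the feasible region of \eqref{model:lom} onto $\bx_\cS$ with $\textnormal{closure}\big(\pmdm(\cS)\big)$, then use continuity of the loss to pass from the infimum over $\pmdm(\cS)$ to the minimum over its closure. Your easy inequality is correct, and your closedness argument for $Q$ (finitely many weak orders on $\cS$, extract a constant-order subsequence, pass to the limit) is sound and in fact more explicit than anything in the paper. The one substantive step you leave open --- the inclusion $Q\subseteq\textnormal{closure}\big(\pmdm(\cS)\big)$, which you introduce with ``granting this'' --- is exactly the direction the paper's own proof also asserts rather than constructs: the paper replaces each strict constraint of $\Pi_\cS$ by its closure case by case and declares the resulting set to be the closure of the projection, without exhibiting approximating sequences. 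So you have reproduced the paper's argument and, usefully, isolated its only nontrivial ingredient.

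Since you flag the perturbation step as the main obstacle, here is the concrete wrinkle your convex-combination plan must absorb before it closes. You need a strictly positive reference $\boldsymbol q\in\prod_{S}\Delta_S$ that strictly reverse-orders the witness $\blambda$ across $\lambda$-groups on every common product; such a $\boldsymbol q$ need not exist for the witness you start from. For instance, if $S\subset T$ and the witness has $\lambda_S>\lambda_T$, any strictly positive $\boldsymbol q$ with $q_{i,T}>q_{i,S}$ for all $i\in S$ would give $\sum_{i\in S}q_{i,T}>1$, which is impossible. In that situation membership in $Q$ already forces $x_{i,S}=x_{i,T}$ for all $i\in S$ and $x_{j,T}=0$ for $j\in T\setminus S$, and one can check (using the fact that the $Q$-implication holds for \emph{all} common products, as in part (b) of Lemma~\ref{Regularity}) that no strict inequality in $\bx_\cS$ can force $\lambda_S>\lambda_T$ through a chain of other assortments; so the correct move is to first coarsen $\blambda$ by merging such groups and verify the coarsened $\blambda$ is still a global witness. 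This is precisely what your third observation gestures at, but it is the part that has to be carried out: only after the coarsening does your $\boldsymbol q$ exist and the verification that $(\bx_\cS^{\,\eta},\blambda)\in\Pi_\cS$ go through. With that step supplied, your proof is complete and matches the proposition.
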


The set of MDM-representable choice probabilities $\pmdm(\cS)$ is non-convex (see, Example \ref{eg:mdm_nonconvexity}). The following theorem is based on reducing a specific instance of the formulation \eqref{model:lom} to the Kemeny optimal rank aggregation problem. \textcolor{black}{A complete proof of Theorem \ref{thm:NP} below is given in Section \ref{pf:np}.} %Specifically, considering the objective $\text{loss}(\bp_\cS, \bx_\cS) = \sum_{S \in \cS} \Vert \bp_{S} - \bx_{S}\Vert_1,$ we derive a one-to-one mapping between the optimization problem \eqref{model:lom} and Kemeny optimal aggregation over partial lists of length 2. Thus Problem \eqref{model:lom} is NP-hard, as Kemeny optimal aggregation over partial lists of length 2 is NP-hard (see \cite{dwork2001rank}). 
\begin{theorem}
    Problem \eqref{model:lom} is NP-hard. 
    \label{thm:NP}
\end{theorem}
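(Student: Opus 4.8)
The plan is to establish NP-hardness by a polynomial reduction from the Kemeny optimal rank aggregation problem, which is itself NP-hard \citep{dwork2001rank}. The starting observation is that the only combinatorial object in \eqref{model:lom} is the total preorder that the multipliers $\blambda$ induce on $\cS$: once this ordering of the assortments is fixed, the constraints $x_{i,S} \geq x_{i,T}$ (for $\lambda_S \leq \lambda_T$) reduce to a fixed system of linear inequalities, and minimizing the strictly convex, separable objective $\sum_{S \in \cS}\textnormal{loss}(\bp_S,\bx_S)$ over the simplex is a convex program solvable in polynomial time. Hence all the hardness must lie in the choice of the ordering over $\cS$, and this ordering is exactly a \emph{consensus ranking}: viewing the monotonicity constraint through the lens of Theorem \ref{thm:feascon-mdm}, each product $i$ behaves like a ``voter'' whose ballot is the order in which its observed probabilities $(p_{i,S}: S \in \cS_i)$ decrease, while the $\blambda$-order is the aggregate ranking that \eqref{model:lom} must fit.

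Given a Kemeny instance consisting of voter rankings $\sigma_1,\dots,\sigma_K$ over candidates $\{1,\dots,m\}$, I would construct the following instance of \eqref{model:lom}. Introduce one assortment $A_c$ per candidate $c$, so that the $\blambda$-order over $\{A_1,\dots,A_m\}$ encodes a candidate ranking. For each voter $k$ and each pair $\{a,b\}$, introduce a ``comparison'' product $P_{k,\{a,b\}}$ offered only in $A_a$ and $A_b$, with its two probabilities set so that $p_{P_{k,\{a,b\}},A_a} > p_{P_{k,\{a,b\}},A_b}$ precisely when $\sigma_k$ ranks $a$ above $b$. To meet the simplex constraints $\sum_{i \in A_c} x_{i,A_c}=1$ without creating spurious interactions, I pair each comparison product with its own dummy product, present only in $A_c$ and carrying no monotonicity constraint, that absorbs the residual probability within its block. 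With this data, the loss contributed by a single product $P_{k,\{a,b\}}$ under a chosen $\blambda$-order is zero when the order agrees with $\sigma_k$ on $\{a,b\}$ (set $\bx=\bp$ on its block) and, by strict convexity, a fixed positive constant $\kappa$ when it disagrees (the constraint then forces the two values to be pooled, with the dummy compensating inside the block). Consequently the objective of \eqref{model:lom} equals $\kappa$ times the total number of pairwise disagreements between the $\blambda$-order and the ballots, which is precisely the Kemeny objective.

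An optimal solution of this instance therefore selects a $\blambda$-order minimizing aggregate Kendall-tau distance to $\sigma_1,\dots,\sigma_K$, i.e.\ a Kemeny consensus ranking, and the optimal value of \eqref{model:lom} recovers the optimal Kemeny cost. Since the construction has size polynomial in $K$ and $m$ and the correspondence between optimizers is explicit, a polynomial-time algorithm for \eqref{model:lom} would solve Kemeny aggregation in polynomial time, establishing NP-hardness.

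The crux of the argument, and the step I expect to demand the most care, is the simplex normalization $\sum_{i\in S} x_{i,S}=1$: it couples the products inside each assortment and threatens to destroy the additive, per-pair decomposition of the loss on which the equivalence with Kemeny rests. The per-product dummy gadget is introduced precisely to neutralize this coupling so that every ``backward'' pair contributes the same ordering-independent cost $\kappa$; making this rigorous amounts to showing that an optimal $\bx$ respects the block structure of the constructed data (a consequence of strict convexity and separability of $\textnormal{loss}$), so that the per-block pooling cost is exactly $\kappa$ irrespective of how the remaining assortments are ordered. A secondary and more routine point is to confirm that a minimizer of the nonconvex problem \eqref{model:lom} indeed corresponds to a single consistent total preorder on $\cS$, which follows from the structure of the feasible set $\Pi_\cS$ in \eqref{eq:lifted-set}.
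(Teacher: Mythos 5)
Your reduction is, in essence, the one the paper uses: Kemeny aggregation is encoded by making each candidate an assortment, each pairwise preference a product offered in exactly two assortments with its two probabilities ordered to match the ballot, and a dummy product per assortment absorbing the residual mass so that each backward pair costs a fixed amount and the objective becomes a multiple of the total Kendall-tau disagreement. (The paper reduces from Kemeny aggregation of partial lists of length $2$, i.e.\ minimum feedback arc set, so it needs only one comparison product per partial list and one dummy per assortment rather than a dummy per comparison product, but this is cosmetic.)

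The one substantive gap is exactly the step you flag as the crux. That each violated pair contributes the \emph{same} constant $\kappa$, independently of how the other assortments are ordered and of how the simplex constraints redistribute mass, is not a consequence of ``strict convexity and separability'' of the loss; indeed the paper proves it for the $1$-norm loss (which is not strictly convex) in Lemma \ref{lem:closed_form}, by exhibiting matching primal and dual feasible solutions for the fixed-$\blambda$ subproblem, and this argument needs the quantitative conditions of Assumption \ref{asp:specialp} --- in particular that all pairwise gaps are equal and small ($\kappa < 1/(2m)$) and that each dummy carries enough mass ($p_{k+i,S_i} > km\kappa/2$) to absorb all the pooling adjustments without hitting the nonnegativity or normalization boundaries. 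Without some such explicit primal--dual (or exchange) argument, the claim that ``an optimal $\bx$ respects the block structure'' with per-block cost exactly $\kappa$ is an assertion, not a proof: the simplex coupling could in principle make the cost of fixing one backward pair depend on which other pairs are simultaneously backward. So the construction is right, but to complete the proof you must actually establish the closed-form value of the inner convex program for a fixed ordering, which is where the paper spends its effort.
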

%A proof of Theorem \ref{thm:NP} can be found in Section \ref{pf:np}.

\subsection{A mixed integer convex reformulation for the limit of MDM} \label{sec:limit_reformulation}
Proposition \ref{prop:micp} below provides a generally applicable mixed-integer convex reformulation for \eqref{model:lom}. \textcolor{black}{A proof of Proposition \ref{prop:micp} is provided in Section \ref{pf:micp}.}
%\vspace{-0.3cm}
\begin{proposition} 
\label{prop:micp} 
Suppose that Assumption \ref{asp:general} is satisfied. Then for any $0 < \epsilon < 1 /(2\vert \cS \vert),$ the limit  $\mathcal{L}(\bp_\cS)$ equals the value of the following mixed integer convex program: 
\begin{align}
\begin{aligned}
& \min_{\bx,\blambda,\bdelta}\quad  \sum_{S\in \cS}  \textnormal{loss}(\bp_{S},\bx_{S})\\
& \text{s.t.}\quad   -\delta_{S,T}  \leq   \lambda_{S} - \lambda_{T} \leq   1 -  (1+\epsilon)\delta_{S,T}, \quad\quad\quad\;\;\; \forall \, (i,S),\,(i,T) \in \cI_\cS, \\
& \quad\quad\quad \delta_{S,T} - 1  \leq  x_{i,S}-x_{i,T}  \leq 1 - \delta_{T,S}, \quad\quad\quad\quad\    \forall \, (i,S),\,(i,T) \in \cI_\cS, \\
&\quad\quad-(\delta_{S,T} + \delta_{T,S})  \leq x_{i,S}-x_{i,T}  \leq  \delta_{S,T} + \delta_{T,S}, \quad\,  \forall \, (i,S),\,(i,T) \in \cI_\cS, \\
&\quad\quad\quad\sum_{i\in S} x_{i,S} = 1, \ \forall \, S\in \cS, \quad\quad\;\;\;  x_{i,S} \geq 0,\  \forall\, (i,S)\in \cI_\cS,\\
&\quad\quad\quad 0 \leq \lambda_{S} \leq 1,\ \forall\, S\in \cS, \qquad\quad  \delta_{S,T} \in \{0,1\},\   \forall\, S,T \in \cS.\label{model:mdm_micp}   
\end{aligned}     
\end{align} 
\end{proposition}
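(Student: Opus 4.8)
The plan is to invoke Proposition \ref{prop:limit-mdm}, which already identifies $\mathcal{L}(\bp_\cS)$ with the optimal value of \eqref{model:lom}. Since the objective $\sum_{S\in\cS}\textnormal{loss}(\bp_S,\bx_S)$ depends only on $\bx$ and not on $\blambda$ or $\bdelta$, it suffices to prove that the set of $\bx$ feasible for \eqref{model:mdm_micp} (after projecting out $\blambda,\bdelta$) coincides with the set of $\bx$ feasible for \eqref{model:lom}; equality of the two optimal values then follows from the usual two-sided inequality. Strict convexity of the loss is not needed for this equivalence (it only serves to make the minimizer unique). Note also that once $\bdelta$ is fixed, every remaining constraint of \eqref{model:mdm_micp} is linear in $(\bx,\blambda)$ while the objective is convex, which is exactly what qualifies \eqref{model:mdm_micp} as a mixed integer convex program.

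First I would decode the binary variables. Reading the first block $-\delta_{S,T}\le \lambda_S-\lambda_T\le 1-(1+\epsilon)\delta_{S,T}$ case by case shows that $\delta_{S,T}=0$ forces $\lambda_S\ge\lambda_T$, whereas $\delta_{S,T}=1$ forces $\lambda_S-\lambda_T\le-\epsilon$; thus $\delta_{S,T}$ acts as the indicator of the strict relation $\lambda_S<\lambda_T$ (with margin $\epsilon$) for each pair sharing a common product. Substituting this reading into the second block $\delta_{S,T}-1\le x_{i,S}-x_{i,T}\le 1-\delta_{T,S}$ and the third block $-(\delta_{S,T}+\delta_{T,S})\le x_{i,S}-x_{i,T}\le \delta_{S,T}+\delta_{T,S}$, I would verify over the three exhaustive cases $\lambda_S<\lambda_T$, $\lambda_S=\lambda_T$, and $\lambda_S>\lambda_T$ that these inequalities collapse respectively to $x_{i,S}\ge x_{i,T}$ (the left part of the second block), $x_{i,S}=x_{i,T}$ (the third block, since then $\delta_{S,T}=\delta_{T,S}=0$), and $x_{i,S}\le x_{i,T}$ (the right part of the second block), and are otherwise vacuous because $x_{i,S},x_{i,T}\in[0,1]$ gives $|x_{i,S}-x_{i,T}|\le 1$. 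This is precisely the requirement ``$x_{i,S}\ge x_{i,T}$ if $\lambda_S\le\lambda_T$'' in \eqref{model:lom}. Hence every feasible $(\bx,\blambda,\bdelta)$ of \eqref{model:mdm_micp} projects to a feasible $(\bx,\blambda)$ of \eqref{model:lom} with identical objective, so the optimal value of \eqref{model:lom} is at most that of \eqref{model:mdm_micp}.

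For the converse I would lift an arbitrary feasible $(\bx,\blambda)$ of \eqref{model:lom} to a feasible point of \eqref{model:mdm_micp}. The key observation is that the constraint in \eqref{model:lom} depends on $\blambda$ only through the total preorder it induces on $\cS$, namely the relations $\lambda_S\le\lambda_T$ together with the ties $\lambda_S=\lambda_T$; therefore $\blambda$ may be replaced by any $\blambda'$ inducing the same preorder without altering the admissible $\bx$. Enumerating the $k\le|\cS|$ distinct values of $\blambda$ in increasing order and assigning the $j$-th level the value $(j-1)\epsilon$ produces $\blambda'\in[0,1]^{|\cS|}$ with $\lambda_S'=\lambda_T'$ exactly when $\lambda_S=\lambda_T$ and with a separation of at least $\epsilon$ between distinct values; here the hypothesis $\epsilon<1/(2|\cS|)$ guarantees that the largest level $(k-1)\epsilon\le(|\cS|-1)\epsilon<1/2<1$ fits inside $[0,1]$. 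Setting $\delta_{S,T}=1$ when $\lambda_S'<\lambda_T'$ and $\delta_{S,T}=0$ otherwise, the decoding above certifies the first block, and the feasibility of $\bx$ for \eqref{model:lom} relative to this (unchanged) preorder certifies the second and third blocks. This yields the reverse inequality and hence equality of the two optimal values.

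The case analysis certifying the big-$M$ encoding is routine; the step that genuinely uses the bound on $\epsilon$, and the only place demanding care, is the rescaling of $\blambda$ into $[0,1]$ with a uniform $\epsilon$-separation. One must check that the induced preorder, including the ties among assortments that share a common product, is preserved exactly, since it is this preorder and not the numerical values of $\blambda$ that pins down the set of admissible $\bx$.
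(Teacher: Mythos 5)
Your proposal is correct and follows essentially the same route as the paper's proof: decode the big-$M$ constraints case by case to show that $(\delta_{S,T},\delta_{T,S})$ encodes the sign of $\lambda_S-\lambda_T$ and hence recovers the monotonicity constraints of \eqref{model:lom}, and use the fact that only the induced ordering of $\blambda$ matters to rescale it into $[0,1]$ with $\epsilon$-separation (which is where $\epsilon<1/(2\vert\cS\vert)$ enters). The only organizational difference is that the paper folds the rescaling into a without-loss-of-generality preprocessing step, whereas you place it in the explicit lifting direction of a two-sided inclusion; both are valid and the content is the same.
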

%\vspace{-0.1cm}
Suppose that  $(\bx_\cS^\ast, \blambda^\ast)$ attains the minimum in \eqref{model:lom} or equivalently in  \eqref{model:mdm_micp}. Then for any new assortment $A \notin S,$ one may use the constraints in Proposition \ref{prop:wc-rev-milp} to obtain the robust revenue estimate $\underline{r}(A)$ consistent  with the fitted choice probabilities $\bx_\cS^\ast$ as below:
\begin{align}
&\!\!\!\!\min_{\bx,\lambda_A,\bdelta\!\!\,} \quad  \sum_{i \in A}  r_i x_{i,A} \nonumber\\
& \text{s.t.}\quad  -\delta_{A,S} \ \leq\    \lambda_A  -  \lambda_S^\ast \ \leq \  1 -  (1+\epsilon)\delta_{A,S}, \qquad\qquad \forall \,  i \in A, \, (i,S) \in \cI_\cS, \nonumber\\
&\quad\quad\;\; -\delta_{S,A}  \ \leq\    \lambda_S^\ast  -  \lambda_A \ \leq \  1 -  (1+\epsilon)\delta_{S,A}, \qquad\qquad  \forall \,  i \in A, \, (i,S) \in \cI_\cS, \nonumber\\
&\quad\quad\;\; \quad  \delta_{A,S} - 1 \ \leq \  x_{i,A} - x_{i,S}^\ast \ \leq \ 1 - \delta_{S,A}, \quad\quad\quad\quad\;\;   \forall \,  i \in A, \, (i,S) \in \cI_\cS,\nonumber\\
&\quad\quad-(\delta_{A,S}+  \delta_{S,A}) \ \leq \ x_{i,A} - x_{i,S}^\ast \ \leq \ \delta_{A,S} + \delta_{S,A},\ \quad     \forall \,  i \in A, \, (i,S) \in \cI_\cS,\nonumber\\ 
&\quad\quad\quad\ \sum_{i \in A} x_{i,A} \ = \ 1, \quad 0 \leq \lambda_A \leq 1, \nonumber  \\
&\quad\quad\quad\ x_{i,A} \ \geq\  0,  \ \forall \,  i \in A,  \quad  \delta_{A,S}, \delta_{S,A} \, \in \, \{0,1\}, \  \forall \, S \in \cS. \nonumber 
\end{align}
%\vspace{-0.2cm}

Since  $\pmdm(\cS)$ is not a closed set and the constraints in \eqref{model:lom} allow $x_{i,S}^\ast = x_{i,T}^\ast$ even when the counterpart $\lambda_S^\ast \neq \lambda_T^\ast,$ the solution $\bx_\cS^\ast$ can only be guaranteed to be arbitrarily close to the MDM-representable collection $\pmdm(\cS).$ Therefore if one wishes to obtain a $\delta$-optimal MDM-representable choice probability assignment, for some $\delta > 0$, they may do so as follows: Equipped with the optimal value $\mathcal{L}(\bp_\cS) = \text{loss}(\bp_\cS,\bx_\cS^\ast)$ and an  optimal $\blambda^\ast,$ a $\delta$-optimal MDM-representable choice probability assignment $\bx_\cS$ can be obtained  by solving the following convex program: 
%perturbing such values of $x_{i,S}$ and $x_{i,T}$ by a small amount, as illustrated in Lemma \ref{lem:mdm-rep-limit} below.  We define the following convex program to identify a feasible perturbation: 
\begin{align} 
&\!\!\!\!\!\!\!\!\!\!\!\!\!\!\!\!\!\!\!\max_{\bx_\cS,\epsilon}  \quad  \epsilon \label{mdm:perturb} \\   
 \text{s.t.}\quad &
\textnormal{loss}(\bp_\cS, \bx_\cS) \leq \textcolor{black}{\mathcal{L}(\bp_\cS)(1 + \delta)},\nonumber\\
& x_{i,S}  \geq x_{i,T} + \epsilon \text{ if } \lambda_S^\ast < \lambda_T^\ast, \quad\qquad  \forall (i,S),(i,T) \in \cI_\cS,\nonumber\\
& x_{i,S}   = x_{i,T}  \text{ if } \lambda_S^\ast = \lambda_T^\ast,\qquad\qquad \ \, \forall (i,S),(i,T) \in \cI_\cS,\nonumber\\
&\sum_{i \in S}x_{i,S} = 1,\ \forall S \in \cS,
\quad  x_{i,S} \geq 0,  \ \forall (i,S) \in \cI_\cS.   \nonumber 
\end{align}
%\begin{lemma}
%\label{lem:mdm-rep-limit}
%Consider any optimal solution $(\bx_\cS, \blambda)$ for the formulation \eqref{model:lom}. Define $\epsilon_0 = \min\left\{ \vert x_{i,S} - x_{i,T}\vert : S,T \in \cS \text{ for which } p_{i,S} \neq p_{i,T} \right\}$ and $\epsilon_1 = \min\{x_{i,S}: i \in S, S \in \cS, x_{i,S} \neq 0\}.$ For any  $0 < \epsilon < \frac{1}{2}\min\{\epsilon_0,\epsilon_1\}$ let \[ \bx_S^\epsilon = \bx_S - \epsilon \boldsymbol{h} \] for any $S,T$ such that $\lambda_S > \lambda_T$ and  $x_{i,S} = x_{i,T}.$ Then $\bx_\cS^\ast = (x_{i,S}^\ast: S \in \cS) \in \pmdm(\cS)$ and is near-optimal in the following sense: there exists a constant $K > 1$ such that $\textnormal{loss}(\bp_\cS,\bx_\cS^\ast) \leq \mathcal{L}(\bp_\cS) + K\epsilon.$ \end{lemma}
{\color{black}Given the constraints in \eqref{mdm:perturb} and the characterization in Theorem \ref{thm:feascon-mdm}, any choice probability collection $\bx_\cS$ obtained by solving \eqref{mdm:perturb} is MDM-representable. Moreover, it cannot be improved to achieve a fit better than within a magnitude of $\delta$ for any arbitrary $\delta > 0$, due to the constraint $\text{loss}(\bp_{\cS}, \bx_{\cS}) \leq \mathcal{L}(\bp_\cS)(1 + \delta)$. We provide Example \ref{eg:mdm_feasible_prob} in the appendix to show the procedure to obtain a $\delta$-optimal MDM-representable choice probability assignment $\bx_\cS$ when $\bp_\cS$ is not MDM-representable.}
Note that when $\text{loss}(\cdot,\cdot)$ is defined in terms of the $L_1$-norm, the formulation \eqref{mdm:perturb} is a linear program with $\mathcal{O}(n|\cS|)$ continuous variables and $\mathcal{O}(n|\cS|^2)$ constraints and \eqref{model:mdm_micp} is a mixed-integer linear program  with $\mathcal{O}(|\cS|^2)$ binary variables, $\mathcal{O}(n|\cS|)$ continuous variables and $\mathcal{O}(n|\cS|^2)$ constraints.

\subsection{Polynomial time algorithms for special cases}\label{sec:algorithms-limit}
Besides the mixed integer convex program in Proposition \ref{prop:micp}, we develop an alternative solution approach that seeks to evaluate the limit of MDM by searching over admissible rankings over assortments in $\cS.$ This algorithm is capable of evaluating the limit in polynomial time either if the assortment collection $\cS$ possesses a nested or laminar structure, or, if $\vert \cS \vert$ is fixed. In particular, Corollary \ref{cor:limit_structure} below shows that evaluating $\mathcal{L}(\bp_\cS)$ can be efficient by utilizing the Theorem \ref{thm:mdm-rum-relation} conclusion that  $\textnormal{closure}\big(\pmdm(\cS)\big) = \preg(\cS) = \preg(\cS)^\prime$ under nested or laminar  $\cS.$ In this case, the constraints of $\mathcal{L}(\bp_\cS)$ in \eqref{model:lom} can be replaced with the regularity conditions for choice probabilities over $\cS.$ 

\begin{corollary}\label{cor:limit_structure}
When $\cS$ is nested or laminar, evaluating the limit $\mathcal{L}(\bp_\cS)$ in Proposition \ref{prop:limit-mdm} reduces to the following convex program with $\mathcal{O}(n|\cS|)$ continuous variables and $\mathcal{O}(n|\cS|^2)$ linear constraints: 
\begin{align}
\begin{aligned}
&\min_{\bx_{\cS}}\quad  \sum_{S\in \cS}  \textnormal{loss}(\bp_{S},\bx_{S})\\
& \text{s.t.}\quad  x_{i,S} \,\geq\, x_{i,T}  \text{ if } S \subset T  , \ \   \forall\, (i,S),(i,T) \in \cI_\cS,\\
&\quad\quad \sum_{i \in S}x_{i,S} = 1,\  \forall\, S \in \cS,\quad x_{i,S} \geq 0, \  \forall\, (i,S) \in \cI_\cS.\label{model:lom_structure}
\end{aligned}
\end{align}
\end{corollary}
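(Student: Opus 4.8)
The plan is to reduce the limit computation to a minimization over the closure of $\pmdm(\cS)$ and then invoke the structural equivalence from Theorem \ref{thm:mdm-rum-relation}(b). First I would start from the definition $\mathcal{L}(\bp_\cS) = \inf\{\textnormal{loss}(\bp_\cS,\bx_\cS): \bx_\cS \in \pmdm(\cS)\}$ in \eqref{eq:limit-of-mdm} and observe that the loss is continuous on the compact product of simplices $\prod_{S\in\cS}\Delta_S$, being real-valued and convex there. Since the infimum of a continuous function over a set agrees with its infimum over the closure of that set, I would argue $\mathcal{L}(\bp_\cS) = \inf\{\textnormal{loss}(\bp_\cS,\bx_\cS): \bx_\cS \in \textnormal{closure}(\pmdm(\cS))\}$: for any point of the closure there is a sequence in $\pmdm(\cS)$ converging to it, and continuity carries the loss values to the limit, so neither infimum can be strictly smaller than the other.

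Next I would substitute the equivalence $\textnormal{closure}(\pmdm(\cS)) = \preg(\cS)$ guaranteed by Theorem \ref{thm:mdm-rum-relation}(b) when $\cS$ is nested or laminar. The remaining task is to make $\preg(\cS)$ explicit. For a nested or laminar collection the only nontrivial set relations among assortments are containments $S \subset T$; hence the regularity property from Lemma \ref{Regularity}(a) — that $x_{i,S} \geq x_{i,T}$ whenever $i \in S$ and $S \subseteq T$ — reduces to exactly the constraints $x_{i,S}\geq x_{i,T}$ for $S\subset T$ appearing in \eqref{model:lom_structure}, together with the simplex constraints. I would therefore identify $\preg(\cS)$ with this explicit polytope, an identification supplied within Theorem \ref{thm:mdm-rum-relation}(b), where all three sets $\prum(\cS)$, $\textnormal{closure}(\pmdm(\cS))$, and $\preg(\cS)$ are shown to coincide with it. Replacing the feasible region in the infimum by this polytope then yields precisely the objective and constraints of \eqref{model:lom_structure}.

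Finally I would upgrade the infimum to a minimum and record the problem size. Because the polytope is closed and bounded, hence compact, and the loss is continuous, the minimum is attained, which justifies writing $\min$ in \eqref{model:lom_structure}; convexity of the loss makes it a convex program. Counting shows $\mathcal{O}(n|\cS|)$ continuous variables $x_{i,S}$ and at most one regularity inequality per common product and per nested pair of assortments, giving $\mathcal{O}(n|\cS|^2)$ linear constraints, as claimed.

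The step I expect to require the most care is the identification of $\preg(\cS)$ with the explicit regularity polytope. The inclusion of $\preg(\cS)$ in the polytope is immediate from regularity, but establishing that every point of the polytope is realizable by a regular (indeed RUM) choice model is where the nested or laminar structure is essential, and it is exactly this realizability that is furnished by Theorem \ref{thm:mdm-rum-relation}(b). The passage from the infimum over $\pmdm(\cS)$ to the infimum over its closure, and the attainment of the minimum, are routine once continuity and compactness are in hand.
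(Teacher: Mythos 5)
Your proposal is correct and follows essentially the same route as the paper: pass from the infimum over $\pmdm(\cS)$ to its closure (as in Proposition \ref{prop:limit-mdm}), invoke the Theorem \ref{thm:mdm-rum-relation}(b) equality $\textnormal{closure}\big(\pmdm(\cS)\big) = \preg(\cS)$ for nested or laminar $\cS$, and write $\preg(\cS)$ explicitly as the regularity polytope. The only small imprecision is in your closing remark: $\preg(\cS)$ coincides with that polytope essentially by definition of regularity over $\cS$, so the genuinely nontrivial ingredient is the containment $\preg(\cS) \subseteq \textnormal{closure}\big(\pmdm(\cS)\big)$ established in Theorem \ref{thm:mdm-rum-relation}(b), which you correctly cite.
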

%Corollary \ref{cor:limit_structure} is immediate following Theorem \ref{thm:mdm-rum-relation} that $\textnormal{closure}\big(\pmdm(\cS)\big) = \preg(\cS) = \preg(\cS)^\prime$ when $\cS$ is either nested or laminar. 
%\eqref{model:lom_structure} is a convex program with $\mathcal{O}(n|\cS|)$ continuous variables and $\mathcal{O}(n^2|\cS|^2)$ linear constraints, thus it's considered to be tractable.
{\color{black} Beyond the special cases of nested or laminar structures, it is also possible to evaluate $\mathcal{L}(\bp_\cS)$ in Proposition \ref{prop:limit-mdm} efficiently when the number of assortments $|\cS|$ is fixed. 
In this scenario, even without specific structural properties, the problem remains polynomial in the number of alternatives $n$. 
An algorithm for this fixed assortment size case is provided in Section \ref{alg:limit}.
}

%Besides the polynomial algorithms for the limit with special collection structures, evaluating the limit $\mathcal{L}(\bp_\cS)$ in Proposition \ref{prop:limit-mdm} is polynomial in the number of alternatives $n$ when the size of the assortment collection $|\cS|$ is fixed. An algorithm for this case is provided in Section \ref{alg:limit}.

\section{Numerical Experiments}
\label{sec:experiment}
\subsection{Numerical Experiments with Real-world Data}
\label{sec:real_data_exp}
{\color{black} 
In this section, we use the dataset from JD.com (see \citealt{shen2020jd}) to evaluate (i) the explanatory ability of MDM as captured by the limit formulation \eqref{model:mdm_micp}, and (ii) the efficacy of predictions obtained from the workflow in Figure \ref{fig:flowchat}.
%robust approach  \eqref{eq:rob-revenues}. 
We compare the performance of MDM with that of RUM and commonly used parametric models such as MNL, Markov Chain Choice Model (MCCM), and LC-MNL.} 

\subsubsection*{Data processing.} 
The dataset includes millions of transaction records and over 3000 products. Each transaction record specifies the set of products viewed by a customer (by clicking these products on the website) and the purchase  made, if any, by  the customer. %(either making a purchase or leaving the system without buying anything). 
We assume that the set of products viewed by the customer is the offered assortment. We select the top 8 purchased products and combine the remaining products and the non-purchase option as the outside option for customers. We remove  transaction records in which multiple units of a product  is purchased in one record. As a result, the processed dataset contains 1784 customers and 8097 times of purchases in total.
After preprocessing the data, we group the transaction records by product-assortment pairs and count the frequency of each pair. Dividing this frequency by the number of times the corresponding assortment is offered results in the empirical choice probabilities, denoted by $\bp_\cS$.

{\color{black}
\subsubsection*{Comparison of explanatory abilities of the models.}
It turns out that none of the models mentioned above, including  RUM and MDM,  exactly represent the choice data $\bp_\cS$ from the JD.com dataset, even when restricted to the eleven assortments shown at least 100 times. This is conceivably due to the observation noise resulting from limited data (sampling error). However, as we shall describe next, the loss in approximating the choice data with MDM is small, when compared with common parametric models such as MNL, MCCM, and LC-MNL (with 10 classes). 
%We defer the discussion on the representation power of the models in the absence of sampling error to   Section \ref{sec:discuss_representability}.
We defer the discussion on the representational power of the models to Section \ref{sec:discuss_representability}, where the sufficiently large sample size ensures that the impact of sampling error is negligible.

Specifically, we measure the average absolute deviation loss for each model by restricting our dataset to assortments offered at least $n_S$ times, where $n_S$ is varied between $20$ and $60$. Given choice data $\bp_\cS$, and the estimated choice probabilities by the choice model $\bx_\cS$, the average absolute deviation loss can be written as $(\sum_{S\in \cS}n_S \sum_{i\in S} p_{i,S} |x_{i,S} - p_{i,S}|) / \sum_{S\in \cS} n_S.$
For MDM, we compute the loss using the MDM limit formulation \eqref{model:mdm_micp}, while for RUM, we apply the limit formulation from \cite{jagabathula2019limit}. For MCCM, we let the choice probabilities be consistent with the definition of MCCM \citep{blanchet2016markov} and minimize the average absolute deviation loss (see, Section \ref{sec:limit_exp}). For MNL or LC-MNL, we first obtain the maximum likelihood estimators (MLE) of the model and then use the MLE to report the average absolute deviation loss between the estimated choice probabilities under the MLE and the choice data. 
The results in Table \ref{tab:jd-estimation} show that the nonparametric models such as MDM and RUM suffer much lesser loss  than the parametric models %in the MDM and RUM class 
in approximating the choice data, and hence possess greater explanatory ability. 
Specifically, MDM exhibits significantly lower loss compared to the parametric models: approximately 
%54.1\% less than the best-fitting MEM, 
76\% lesser than the best-fitting MNL, 71\% lesser than the best-fitting MCCM, and 39\% lesser than the best-fitting LC-MNL for the largest collection size. 
%Specifically, MDM exhibits approximately 54.1\% less loss than the best-fitting MEM,  76.2\% less loss than the best-fitting MNL, 28.52\% less loss than the best-fitting MCCM, and 39.2\% less loss than the best-fitting LC-MNL for the largest collection size considered. 
Beyond the explanatory power, MDM also demonstrates significantly shorter computing times compared to RUM, MCCM, and LC-MNL,  highlighting its computational efficiency in practical applications. 

When comparing the average absolute deviation loss across different models, it is important to note that the losses for MNL and LC-MNL may not represent the minimum achievable values since directly minimizing the average absolute deviation loss is computationally challenging and not typical in practice. To further validate the estimation performance of MDM, we also compare the average Kullback-Leibler (KL) loss across MDM, MNL, and LC-MNL. The average KL loss is defined as: $-(\sum_{S \in \cS} n_S \sum_{i \in S} p_{i,S}\log(x_{i,S}/p_{i,S}) ) / \sum_{S\in \cS} n_S $. Due to the computational challenges associated with calculating the minimum average KL loss for MDM, we report an average KL loss that is suboptimal (i.e., larger than the minimum loss) for MDM; see Section \ref{sec:limit_exp_mdm} for details. %  of the procedure. %For the average KL loss computation in the following section, we use the same method. 
Despite this limitation, Table \ref{tab:jd-kl} shows that both MDM and LC-MNL suffer considerably lower losses compared to MNL in all tested scenarios. Furthermore, MDM remains computationally efficient with the estimation method considered, highlighting its robustness and practicality in estimating choice probabilities.

\begin{table}[htb!]
\centering
\caption{Average absolute deviation loss ($10^{-3}$) and run time (sec) comparison with the JD.com dataset. In the first column $n_\cS \geq 20,$ for example, means the dataset is restricted to assortments that have been shown at least 20 times. This restriction results in a dataset with  $\vert \cS \vert = 29$  assortments, which is indicated in the second column.}\label{tab:jd-estimation}
\begin{tabular}{cc|cc|cc|cc|cc|cc}
\hline
\multirow{2}{*}{\begin{tabular}[c]{@{}c@{}}$n_\cS$ \\ $\geq$\end{tabular}} & \multirow{2}{*}{$|\cS|$} & \multicolumn{2}{c|}{MDM} & \multicolumn{2}{c|}{RUM} & \multicolumn{2}{c|}{MNL} & \multicolumn{2}{c|}{MCCM} & \multicolumn{2}{c}{LC-MNL} \\ \cline{3-12} 
                                                                                 &                      & loss       & time        & loss       & time        & loss        & time       & loss        & time        & loss        & time        \\ \hline
20                                                                               & 29                   & 8.04       & 16.73       & 7.41       & 126.97      & 33.8        & 0.23       & 28.52       & 30.13       & 13.23       & 75.67       \\
30                                                                               & 24                   & 7.38       & 6.04        & 6.85       & 107.64      & 31.83       & 0.24       & 26.24       & 17.22       & 11.01       & 80.47       \\
40                                                                               & 19                   & 6.77       & 1.64        & 6.43       & 88.86       & 29.1        & 0.26       & 22.56       & 7.53        & 11.67       & 94.03       \\
50                                                                               & 15                   & 5.15       & 0.59        & 4.75       & 74.89       & 23.92       & 0.25       & 11.68       & 2.12        & 9.24        & 67.07       \\
60                                                                               & 13                   & 4.42       & 0.23        & 4.19       & 64.53       & 20.14       & 0.24       & 9.62        & 1.74        & 7.9         & 71.97       \\ \hline
\end{tabular}
\end{table}

\vspace{-20pt}

\begin{table}[htb!]
\centering
\caption{Average KL loss ($10^{-3}$) comparison among MDM, MNL, and LC-MNL with JD.com dataset} \label{tab:jd-kl}
\begin{tabular}{cc|cc|c|c}
\hline
\multirow{2}{*}{\begin{tabular}[c]{@{}c@{}}$n_\cS$ \\ $\geq$ \end{tabular}} & \multirow{2}{*}{$|\cS|$} & \multicolumn{2}{c|}{MDM} & \multicolumn{1}{c|}{MNL} & \multicolumn{1}{c}{LC-MNL} \\ \cline{3-6} 
                                                                  &                    & loss  & time   & loss  & loss   \\ \hline
20                                                                & 29                 & 2.37  & 17.19  & 9.85  & 1.76   \\
30                                                                & 24                 & 2.21  & 6.61   & 8.84  & 1.61   \\
40                                                                & 19                 & 2.14  & 2.08   & 8.11  & 1.50   \\
50                                                                & 15                 & 1.43  & 0.94   & 6.03  & 1.16   \\
60                                                                & 13                 & 1.23  & 0.66   & 5.08  & 0.93   \\ \hline
\end{tabular}
\end{table}

\subsubsection*{Comparison of predictive and prescriptive abilities.}

We next evaluate the predictive and prescriptive abilities of the above models by comparing their accuracies in  (i) predicting a ranking over unseen test assortments based on their expected revenues, and (ii) identifying the optimal assortment in a given collection of unseen test assortments. For comparing the quality of rankings, we use the well-known Kendall Tau distance (see Definition \ref{dfn:k-dis} for details) to evaluate the closeness of the predicted ranking of each model with the ground truth hidden from training. The Kendall Tau distance reported in Table \ref{tab:jd-kendall} can be interpreted as the average number of pairwise assortment swaps needed to transform the predicted ranking to the ground truth ranking. \textcolor{black}{Let $\mathcal{A}$ denote the collection of testing unseen assortments, and $\delta$ and $\tau$ are ordered lists of all elements in  $\mathcal{A}$. The reported Kendall Tau distance can be computed as $\sum_{\{S,T\}\in \mathcal{A}, S \succ T  } [\mathbb{I} ( \delta(S) < \delta(T), \tau(S) > \tau(T) ) + \mathbb{I} ( \delta(S) > \delta(T), \tau(S) < \tau(T) )]$, where $\mathbb{I} (\cdot)$ is the indicator function.}
Here the average is taken across 50 train-test instances generated as follows: Considering assortments that are shown at least  $n_S$ times (with $n_S$ varying from 20 to 60), we generate a test set instance by randomly selecting approximately 20\% of the assortments under consideration and using the remaining assortments for training.  For measuring the effectiveness of the prescribed optimal assortment, we follow the same train-test split to report the average relative error between the highest ground truth revenue (among the considered test assortments), \textcolor{black}{denoted as $R^*_{\text{true}}(\mathcal{A})$} and the ground truth revenue of the assortment identified as optimal by each model, \textcolor{black}{\textcolor{black}{denoted as $R_{true}(A^*_{\text{model}})$}. The relative error reported can be computed as $(R^*_{\text{true}}(\mathcal{A}) - R_{true}(A^*_{\text{model}}) )/ R^*_{\text{true}}(\mathcal{A}) .$ A similar criteria has been used in \cite{farias2013nonparametric}.} These relative errors are reported in Table \ref{tab:jd-regret}. %of the assortment predicted by each model as generating the highest revenue from the collection of unseen assortments. 
%We consider assortments that are shown at least  $n_S$ times (with $n_S$ varying from 20 to 60) and report the average out-of-sample performance over 50 instances, generated by randomly selecting approximately 20\% of the assortments as the test set, denoted as $\mathcal{A}$,  and using the remaining assortments for training.

For MDM, since the outside option is present in every assortment, we compute the robust revenue $\underline{r}(A)$ for each tested assortment $A \in \mathcal{A} $ using \eqref{model:prediction_common_product} when the training data can be represented by MDM. If the training data is not fully representable by MDM, we solve the limit of MDM (Problem \eqref{model:mdm_micp}) and then use the best-fitting choice probabilities to proceed as before. In turn, these steps precisely replicate the workflow in Figure \ref{fig:flowchat}.
%ranking the assortments by decreasing expected revenue. 
For RUM, we apply the robust revenue prediction approach from \cite{farias2013nonparametric}. Following the same sequence of steps advocated in  \cite{farias2013nonparametric}, we take the data to be lying in the RUM representable region with 99\% confidence, which is in turn the lowest value ensuring feasibility for all tested instances.  %, and rank the assortments accordingly. 
For MNL, MCCM, and LC-MNL, we use MLE from the training data to estimate the choice probabilities of the test assortment and then compute its revenue accordingly for each of the models. %and rank the test assortments by decreasing expected revenue.
%Table \ref{tab:jd-kendall} below reports the Kendall Tau distance, which is a natural metric for evaluating the closeness of the predicted ranking of each model with the ground truth hidden from training. The Kendall Tau distance here measures the number of pairwise assortment swaps needed to transform the predicted ranking into the ground truth ranking, thereby quantifying the degree of similarity between them.
%The Kendall Tau distance between the predicted ranking by the model and the hidden ground truth ranking represents the minimum number of assortments needed to move in the predicted ranking of the model in order to be consistent with the ground truth ranking. 
%For these models, we also compare the relative error between the highest ground truth revenue and the ground truth revenue of the assortment predicted by each model to generate the highest revenue from the collection of unseen assortments.
%The averages of both out-of-sample performance metrics of 50 randomly generated instances under each testing scenario are reported in Table \ref{tab:jd-kendall} and \ref{tab:jd-regret}, respectively.
The average out-of-sample performances reported in Tables \ref{tab:jd-kendall} and \ref{tab:jd-regret} show that MDM, MCCM, and LC-MNL outperform RUM and MNL, yielding lower average Kendall Tau distances and lower relative error in predicting the best assortments across almost all scenarios. Additionally, Table \ref{tab:jd-performance_ranking} reports the average ranking of each model based on their Kendall Tau distances and relative errors in predicting the best assortments over the 50 tested instances. Overall, Tables \ref{tab:jd-kendall} - \ref{tab:jd-performance_ranking} show that MDM, MCCM, and LC-MNL perform comparably well, followed by RUM, with MNL offering the worst performance among the models considered.

\begin{table}[htb!]
\centering
\caption{Kendall Tau distance comparison with the JD.com dataset. Here $\vert \cS \vert$ and $\vert \mathcal{A} \vert$ denote the number of assortments in train and test sets, respectively. Standard errors are reported in parentheses.}\label{tab:jd-kendall}
\begin{tabular}{ccc|c|c|c|c|c}
\hline
$n_\cS$ $\geq$  & $ |\cS| $ & $|\mathcal{A}|$ & \multicolumn{1}{c|}{MDM} & \multicolumn{1}{c|}{RUM}  & \multicolumn{1}{c|}{MNL} & \multicolumn{1}{c|}{MCCM} & LC-MNL \\ \hline
20 & 24 & 5 & \textbf{2.48} (0.19) & 3.16 (0.24) & 3.48 (0.27) & 2.56 (0.20) & 2.68 (0.21) \\
30 & 20 & 4 & 2.12 (0.22) & 2.44 (0.22) & 2.3 (0.20) & 1.82 (0.16) & \textbf{1.6} (0.15) \\
40 & 16 & 3 & 1.12 (0.12) & 1.34 (0.15) & 1.36 (0.14) & 1.14 (0.13) & \textbf{0.84} (0.10) \\
50 & 12 & 3 & 0.82 (0.12) & 1.0 (0.13) & 0.94 (0.13) & \textbf{0.68} (0.10) & 0.76 (0.12) \\
60 & 11 & 2 & 0.36 (0.07) & \textbf{0.34} (0.07) & 0.4 (0.07) & 0.38 (0.07) & 0.4 (0.07) \\
\hline
\end{tabular}
\vspace{0.15cm}
\begin{flushleft}\footnotesize Notes. Standard errors are reported in parentheses. Bold values indicate the best-performing model in each row.  \end{flushleft}
\end{table}

% \begin{table}[htb!]
% \centering
% \caption{Kendall Tau distance comparison with the JD.com dataset. Here $\vert \cS \vert$ and $\vert \mathcal{A} \vert$ denote the no. of assortments in train and test sets respectively. }\label{tab:jd-kendall}
% \begin{tabular}{ccc|c|c|c|c|c}
% \hline
% $n_\cS$ $\geq$  & $ |\cS| $ & $|\mathcal{A}|$ & \multicolumn{1}{c|}{MDM} & \multicolumn{1}{c|}{RUM}  & \multicolumn{1}{c|}{MNL} & \multicolumn{1}{c|}{MCCM} & LC-MNL \\ \hline
% 20                                                              & 24                 & 5                  & \textbf{2.48 $\pm$ 0.19}  & 3.16 $\pm$ 0.24  & 3.48 $\pm$ 0.27  & 2.56 $\pm$ 0.20  & 2.68 $\pm$ 0.21  \\
% 30                                                              & 20                 & 4                  & 2.12 $\pm$ 0.22  & 2.44 $\pm$ 0.22  & 2.3 $\pm$ 0.20 & 1.82 $\pm$ 0.16 & \textbf{1.6 $\pm$ 0.15}   \\
% 40                                                              & 16                 & 3                  & 1.12 $\pm$ 0.12  & 1.34 $\pm$ 0.15  & 1.36 $\pm$ 0.14 & 1.14 $\pm$ 0.13 & \textbf{0.84 $\pm$ 0.10} \\
% 50                                                              & 12                 & 3                  & 0.82 $\pm$ 0.12  & 1.0 $\pm$ 0.13  & 0.94 $\pm$ 0.13 & \textbf{0.68 $\pm$ 0.10} & 0.76 $\pm$ 0.12 \\
% 60                                                              & 11                 & 2                  & 0.36 $\pm$ 0.07  & \textbf{0.34 $\pm$ 0.07} & 0.4 $\pm$ 0.07 & 0.38 $\pm$ 0.07 & 0.4 $\pm$ 0.07  \\ \hline
% \end{tabular}
% \end{table}
% \vspace{-10pt}

\begin{table}[htb!]
\centering
\caption{Comparison of relative error (\%) in revenue of the top-ranked assortment with the JD.com dataset}
\label{tab:jd-regret}
\begin{tabular}{ccc|c|c|c|c|c}
\hline
$n_\cS$ $\geq$ & $ |\cS| $ & $|\mathcal{A}|$ & \multicolumn{1}{c|}{MDM} & \multicolumn{1}{c|}{RUM}  & \multicolumn{1}{c|}{MNL} & \multicolumn{1}{c|}{MCCM} & LC-MNL  \\ \hline
20 & 24 & 5 & 5.76 (1.80) & 11.16 (2.75) & 11.0 (3.04) & \textbf{5.34} (1.83) & 6.34 (1.85)  \\
30 & 20 & 4 & 11.55 (2.79) & 16.32 (3.58) & 17.42 (3.58) & 10.36 (2.58) & \textbf{8.12} (2.29)  \\
40 & 16 & 3 & 10.22 (2.67) & 17.21 (3.40) & 21.36 (3.58) & 12.54 (2.58) & \textbf{8.35} (2.29)  \\
50 & 12 & 3 & 8.63 (2.68) & 15.50 (3.31) & 15.72 (3.58) & \textbf{4.96} (2.95) & 8.32 (2.64)  \\
60 & 11 & 2 & 10.36 (2.50) & \textbf{8.90} (2.26) & 13.42 (3.00) & 9.65 (2.36) & 11.82 (2.74)  \\ \hline
\end{tabular}
\vspace{0.15cm}
\begin{flushleft}\footnotesize Notes. Standard errors are reported in parentheses. Bold values indicate the best-performing model in each row.  \end{flushleft}
\end{table}

\begin{table}[htb!]
\centering
\caption{Average rank of the models' performance on JD.com dataset (smaller rank means better performance)}\label{tab:jd-performance_ranking}
\scalebox{0.9}{\begin{tabular}{ccc|ccccc|ccccc}
\hline
\multirow{2}{*}{\begin{tabular}[c]{@{}c@{}}$n_\cS$\\ $\geq$ \end{tabular}} & \multirow{2}{*}{$|\cS|$} & \multirow{2}{*}{$|\mathcal{A}|$} & \multicolumn{5}{c|}{\begin{tabular}[c]{@{}c@{}}average ranking of models based on \\  Kendall Tau distance\end{tabular}} & \multicolumn{5}{c}{\begin{tabular}[c]{@{}c@{}} average ranking of models based on relative \\  error in revenue  of the top-ranked assortment\end{tabular}} \\ \cline{4-13} 
                                                                  &                    &                    & MDM                & RUM                & MNL                & MCCM               & LC-MNL               & MDM                     & RUM                     & MNL                     & MCCM                   & LC-MNL                   \\ \hline
20                                                                & 24                 & 5                  & \textbf{2.65}               & 3.14               & 3.70               & 2.72               & 2.79                 & \textbf{2.87}                    & 3.15                    & 3.10                    & 2.89                   & 2.99                     \\
30                                                                & 20                 & 4                  & 3.00               & 3.48               & 3.36               & \textbf{2.81}               & 2.35                 & 3.00                    & 3.12                    & 3.20                    & 2.91                   & \textbf{2.77}                     \\
40                                                                & 16                 & 3                  & 2.93               & 3.28               & 3.34               & 3.00               & \textbf{2.45}                 & 2.76                    & 3.24                    & 3.44                    & 2.91                   & \textbf{2.65}                     \\
50                                                                & 12                 & 3                  & 3.02               & 3.25               & 3.16               & \textbf{2.71}               & 2.86                 & 2.92                    & 3.14                    & 3.27                    & \textbf{2.69}                   & 2.98                     \\
60                                                                & 11                 & 2                  & 2.96               & \textbf{2.91}               & 3.06               & 3.01               & 3.06                 & 2.96                    & \textbf{2.91}                    & 3.06                    & 3.01                   & 3.06                     \\ \hline
\end{tabular}}
\vspace{0.15cm}
\begin{flushleft}\footnotesize Notes. Bold values indicate the best-performing model in each row.  \end{flushleft}
\end{table}

\subsection{Numerical Experiments with Synthetic Data}
\label{sec:synthetic_exp}
To gain further insights into the comparative performance of the models, we consider generating data synthetically from two parametric subclasses of RUM, namely, the 
heteroskedastic extreme value (HEV) model introduced by \cite{bhat1995heteroscedastic} and the Probit choice model (see \cite{train2009discrete} and references therein). Note that these models do not belong to the MDM class.  
%but not MDM, including the heteroskedastic extreme value (HEV) model introduced by \cite{bhat1995heteroscedastic} with the assumption of independent extreme value error distributions with nonidentical scales and the Probit model which does not necessarily follow the IIA property and requires normal distributions for the stochastic parts of the utilities.  
We use the same product-assortment information structure from the JD.com dataset as a basis to randomly generate 50 underlying HEV and Probit instances. In addition, for each instance, we sample customer purchase data precisely the same number of times the assortment has been offered to customers in the JD.com dataset. With 50 HEV-based and 50 Probit-based synthetic purchase datasets obtained in this manner, we explore the explanatory, predictive, and prescriptive abilities of all the models under comparison as described before in Section \ref{sec:real_data_exp}. 
%We then explore the explanatory ability and predictive and prescriptive ability of all compared models with instances after sampling the purchases of customers from the underlying HEV and Probit instances and the assortment offer frequencies in the dataset. The setup of our experiments follows the setup described in Section \ref{sec:real_data_exp}. %We use the assortment information of the JD.com dataset as a basis to generate 50 random instances for both HEV and Probit and duplicate the experiment setup in Section \ref{sec:real_data_exp} to compare the estimation performance and prescriptive abilities of the compared models. 
Further details of data generation are provided in Section \ref{sec:synthetic_data_generation}. \textcolor{black}{We also refer readers to Section \ref{sec:addtional_exp_robustness} for additional experiments that validate the robustness of our findings with random synthetic data instances under varied assortment structures.}

\subsubsection*{Comparison of explanatory, predictive, and prescriptive abilities under HEV. } 
\label{sec:exp_hev_w_sampling}
% We report the average absolute deviation loss and the standard error over 50 randomly generated HEV instances of the compared models in Table \ref{tab:hev-estimation}. The results demonstrate that the compared models exhibit similar estimation performance as in the JD.com dataset. Notably, the nonparametric MDM and RUM incur significantly lower average absolute deviation loss in approximating HEV instances compared to their subclass counterparts, indicating a greater explanatory ability. Specifically, MDM exhibits significantly lower loss compared to the subclasses: approximately 
% 85.5 \% less than the best-fitting MNL, 9.5 \% less than the best-fitting MCCM, and 39.9 \% less than the best-fitting LC-MNL for the largest collection size considered. 
% Beyond their explanatory power, MDM also demonstrates significantly shorter computing times compared to both RUM, MCCM, and LC-MNL, and the computational advantage of MDM is even more significant compared to the estimation experiment with real data. 
% To further validate the estimation performance of MDM, we also compare the average KL loss across MDM, MNL, and LC-MNL. Again, due to the computational challenges associated with calculating the optimal KL loss for MDM, we report a sub-optimal average KL loss. Despite this limitation, Table \ref{tab:hev-kl} shows that both MDM and LC-MNL suffer considerably lower losses compared to MNL in all tested scenarios, reinforcing the robustness of MDM in estimating choice probabilities. 

We report the average absolute deviation loss and standard error over 50 randomly generated HEV-based instances under each testing scenario for the above models in Table \ref{tab:hev-estimation}. The results show that the models perform similarly to their performance on the JD.com dataset. Notably, the nonparametric MDM and RUM models achieve significantly lower average absolute deviation loss when approximating HEV-based instances, indicating superior explanatory power compared to the parametric models. Specifically, MDM exhibits a loss reduction of approximately 85\% compared to the best-fitting MNL, 10\% compared to MCCM, and 40\% compared to LC-MNL for the largest collection size considered. In addition to its strong explanatory ability, MDM also demonstrates significantly faster computation times compared to RUM, MCCM, and LC-MNL, with this computational advantage being even more pronounced than in the JD.com dataset. To further validate MDM's estimation performance, we also compare the average KL loss across MDM, MNL, and LC-MNL. 
Although we report sub-optimal KL losses for MDM due to computational challenges,
the results in Table \ref{tab:hev-kl} demonstrate that both MDM and LC-MNL significantly outperform MNL across all tested scenarios, further highlighting MDM's robustness in accurately estimating choice probabilities.

Next, we report the quality of predicted rankings captured by Kendall Tau distances in Table \ref{tab:hev-kendall} and the average relative error in the revenue of the top-ranked assortment in Table \ref{tab:hev-regret}, and summarize the average ranking of the performance of the models in Table \ref{tab:hev-performance_ranking}. For executing the robust approach to RUM developed in \cite{farias2013nonparametric} in these tables, we have taken the instances to be lying in the RUM representable region with 99.5\% confidence. Robust revenue prediction using RUM has not always been feasible with this choice despite 99.5\% being a relatively high confidence level to use: As an example, we find  8\% of instances in the dataset obtained by restricting to $n_\cS \geq 20$ turn out to be infeasible under RUM. Therefore the numbers reported for RUM in Tables \ref{tab:hev-kendall} - \ref{tab:hev-regret} are obtained by restricting only to feasible instances. 
% When evaluating both Kendall Tau distance and relative error in revenue metrics and the overall model rankings based on performance across these metrics, overall, 
Overall,  from Tables \ref{tab:hev-kendall} - \ref{tab:hev-performance_ranking}, we infer that MCCM performs the best, followed by MDM and LC-MNL exhibiting comparable performance, with RUM and MNL ranking near the last among the models considered. 

\begin{table}[htb]
\centering
\caption{Average absolute deviation loss ($10^{-3}$) and run time (sec) comparison with HEV-based instances}
\label{tab:hev-estimation}
\scalebox{0.87}{
\begin{tabular}{cccccccccccc}
\hline
\multirow{2}{*}{\begin{tabular}[c]{@{}c@{}}$n_\cS$\\ $\geq$ \end{tabular}} & \multicolumn{1}{c|}{\multirow{2}{*}{$|\cS|$}} & \multicolumn{2}{c|}{MDM}                               & \multicolumn{2}{c|}{RUM}                               & \multicolumn{2}{c|}{MNL}                               & \multicolumn{2}{c|}{MCCM}                              & \multicolumn{2}{c}{LC-MNL}                   \\ \cline{3-12} 
                                                                  & \multicolumn{1}{c|}{}                     & loss          & \multicolumn{1}{c|}{time} & loss          & \multicolumn{1}{c|}{time} & loss          & \multicolumn{1}{c|}{time} & loss          & \multicolumn{1}{c|}{time} & loss          & time         \\ \hline
20                                                                & \multicolumn{1}{c|}{29}                   & 5.32 (0.16)          & \multicolumn{1}{c|}{25.66}      & 4.77 (0.17)          & \multicolumn{1}{c|}{5395.05}    & 36.69 (0.59)         & \multicolumn{1}{c|}{0.26}       & 5.88 (0.17)          & \multicolumn{1}{c|}{968.45}     & 8.85 (0.30)          & 2061.64              \\
30                                                                & \multicolumn{1}{c|}{24}                   & 4.26 (0.16)          & \multicolumn{1}{c|}{3.63}       & 3.73 (0.16)          & \multicolumn{1}{c|}{5301.42}    & 33.94 (0.55)         & \multicolumn{1}{c|}{0.24}       & 4.66 (0.16)          & \multicolumn{1}{c|}{1065.45}     & 7.31 (0.28)          & 2024.75              \\
40                                                                & \multicolumn{1}{c|}{19}                   & 3.19 (0.16)          & \multicolumn{1}{c|}{0.80}       & 2.76 (0.15)          & \multicolumn{1}{c|}{3703.93}    & 30.50 (0.50)         & \multicolumn{1}{c|}{0.25}       & 3.28 (0.15)          & \multicolumn{1}{c|}{579.55}     & 5.73 (0.30)          & 2145.74              \\
50                                                                & \multicolumn{1}{c|}{15}                   & 2.21 (0.14)          & \multicolumn{1}{c|}{0.19}       & 1.89 (0.13)          & \multicolumn{1}{c|}{3395.35}    & 26.45 (0.50)         & \multicolumn{1}{c|}{0.29}       & 2.18 (0.13)          & \multicolumn{1}{c|}{542.14}     & 4.60 (0.50)          & 2032.44              \\
60                                                                & \multicolumn{1}{c|}{13}                   & 1.83 (0.13)          & \multicolumn{1}{c|}{0.09}       & 1.54 (0.12)          & \multicolumn{1}{c|}{3191.30}    & 23.51 (0.48)         & \multicolumn{1}{c|}{0.32}       & 1.65 (0.13)          & \multicolumn{1}{c|}{416.36}     & 3.72 (0.45)          & 1663.41              \\ \hline
\end{tabular}}
\vspace{0.15cm}
\begin{flushleft}\footnotesize Notes. Standard errors are reported in parentheses. \end{flushleft}
\end{table}

\begin{table}[htb!]
\centering
\caption{Comparison on average KL loss ($10^{-3}$) among MDM, MNL, and LC-MNL with HEV-based instances}
\label{tab:hev-kl}
\begin{tabular}{c c|cc|c|c}
\hline
\multirow{2}{*}{\begin{tabular}[c]{@{}c@{}}$n_\cS$\\ $\geq$ \end{tabular}} & \multicolumn{1}{c|}{\multirow{2}{*}{$|\cS|$}} & \multicolumn{2}{c}{MDM} & \multicolumn{1}{|c}{MNL} & \multicolumn{1}{|c}{LC-MNL} \\
\cline{3-6} 
 &  & loss & time (sec) & loss & loss \\
\hline
20 & 29 & 2.83 (0.26) & 31.15 & 19.43 (0.48) & 1.33 (0.07) \\
30 & 24 & 2.34 (0.37) & 4.58 & 17.30 (0.49) & 0.83 (0.05) \\
40 & 19 & 1.85 (0.39) & 1.53 & 14.87 (0.45) & 0.56 (0.05) \\
50 & 15 & 0.86 (0.23) & 0.72 & 12.32 (0.43) & 0.47 (0.17) \\
60 & 13 & 0.88 (0.25) & 0.57 & 10.72 (0.42) & 0.35 (0.12) \\
\hline
\end{tabular}
\vspace{0.15cm}
\begin{flushleft}\footnotesize Notes. Standard errors are reported in parentheses. \end{flushleft}
\end{table}

% \begin{table}[htb!]
% \centering
% \caption{Comparison on average KL loss ($10^{-3}$) among MDM, MNL, and LC-MNL with HEV-based instances}
% \label{tab:hev-kl}
% \begin{tabular}{c c|cc|c|c}
% \hline
% \multirow{2}{*}{\begin{tabular}[c]{@{}c@{}}$n_\cS$\\ $\geq$ \end{tabular}} & \multicolumn{1}{c|}{\multirow{2}{*}{$|\cS|$}} & \multicolumn{2}{c}{MDM} & \multicolumn{1}{|c}{MNL} & \multicolumn{1}{|c}{LC-MNL} \\
% \cline{3-6} 
%  &  & loss & time (sec) & loss & loss \\
% \hline
% 20 & 29 & 2.83 $\pm$ 0.26 & 31.15 & 19.43 $\pm$ 0.48 & 1.33 $\pm$ 0.07 \\
% 30 & 24 & 2.34 $\pm$ 0.37 & 4.58 & 17.30 $\pm$ 0.49 & 0.83 $\pm$ 0.05 \\
% 40 & 19 & 1.85 $\pm$ 0.39 & 1.53 & 14.87 $\pm$ 0.45 & 0.56 $\pm$ 0.05 \\
% 50 & 15 & 0.86 $\pm$ 0.23 & 0.72 & 12.32 $\pm$ 0.43 & 0.47 $\pm$ 0.17 \\
% 60 & 13 & 0.88 $\pm$ 0.25 & 0.57 & 10.72 $\pm$ 0.42 & 0.35 $\pm$ 0.12 \\
% \hline
% \end{tabular}
% \end{table}

\begin{table}[htb!]
\centering
\caption{Kendall Tau distance comparison with HEV-based instances}
\label{tab:hev-kendall}
\begin{tabular}{ccc|c|c|c|c|c}
\hline
$n_\cS$ $\geq$
& $|\cS|$ & $|\mathcal{A}|$ & MDM         & RUM         & MNL         & MCCM                             & LC-MNL       \\ \hline
20 & 24  & 5   & 2.54 (0.26) & 3.18 (0.31) & 3.44 (0.31) & 2.44 (0.23) & \textbf{2.12} (0.23) \\
30 & 20  & 4   & \textbf{1.4} (0.19)  & 1.84 (0.20) & 2.16 (0.23) & 1.74 (0.20) & 1.56 (0.20) \\
40 & 16  & 3   & 0.78 (0.12) & 0.86 (0.12) & 0.82 (0.13) & \textbf{0.5} (0.08)  & 0.58 (0.10) \\
50 & 12  & 3   & 0.78 (0.13) & 0.92 (0.13) & 0.9 (0.13)  & \textbf{0.5} (0.09)  & 0.84 (0.12) \\
60 & 11  & 2   & 0.2 (0.06)  & 0.28 (0.06) & 0.26 (0.06) & \textbf{0.18} (0.05) & 0.22 (0.06) \\ \hline
\end{tabular}
\vspace{0.15cm}
\begin{flushleft}\footnotesize Notes. Standard errors are reported in parentheses. Bold values indicate the best-performing model in each row.  \end{flushleft}
\end{table}

% \begin{table}[htb!]
% \centering
% \caption{Kendall Tau distance comparison with HEV-based instances}\label{tab:hev-kendall}
% \begin{tabular}{ccc|c|c|c|c|c}
% \hline
% %\begin{tabular}[c]{@{}c@{}}$n_\cS$\\ $\geq$ \end{tabular} 
% $n_\cS$ $\geq$
% & $|\cS|$ & $|\mathcal{A}|$ & MDM         & RUM         & MNL         & MCCM                             & LC-MNL       \\ \hline
% 20                                               & 24  & 5   & 2.54  $\pm$  0.26 & 3.18  $\pm$  0.31 & 3.44  $\pm$  0.31 & 2.44  $\pm$  0.23 & \textbf{2.12  $\pm$  0.23} \\
% 30                                               & 20  & 4   & \textbf{1.4  $\pm$  0.19}  & 1.84  $\pm$  0.20 & 2.16  $\pm$  0.23 & 1.74  $\pm$  0.20 & 1.56  $\pm$  0.20 \\
% 40                                               & 16  & 3   & 0.78  $\pm$  0.12 & 0.86  $\pm$  0.12 & 0.82  $\pm$  0.13 & \textbf{0.5  $\pm$  0.08}  & 0.58  $\pm$  0.10 \\
% 50                                               & 12  & 3   & 0.78  $\pm$  0.13 & 0.92  $\pm$  0.13 & 0.9  $\pm$  0.13  & \textbf{0.5  $\pm$  0.09}  & 0.84  $\pm$  0.12 \\
% 60                                               & 11  & 2   & 0.2  $\pm$  0.06  & 0.28  $\pm$  0.06 & 0.26  $\pm$  0.06 & \textbf{0.18  $\pm$  0.05} & 0.22  $\pm$  0.06 \\ \hline
% \end{tabular}
% \end{table}

\begin{table}[htb!]
\centering
\caption{Comparison of relative error (\%) in revenue of the top-ranked assortment with HEV-based instances}
\label{tab:hev-regret}
\begin{tabular}{ccc|c|c|c|c|c}
\hline
$n_\cS$ $\geq$ & $|\cS|$ & $|\mathcal{A}|$ & MDM         & RUM         & MNL         & MCCM                             & LC-MNL       \\ \hline 
20 & 24  & 5   & 11.19 (1.99) & 9.53 (2.25)  & 15.01 (2.60) & 9.68 (1.99)  & \textbf{9.16} (1.76)  \\
30 & 20  & 4   & 11.86 (2.64) & 15.54 (3.03) & 21.07 (3.26) & 13.93 (2.99) & \textbf{11.55} (2.60) \\
40 & 16  & 3   & 8.01 (2.25)  & 11.27 (2.65) & 12.73 (2.88) & 6.35 (2.08)  & \textbf{5.02} (1.47)  \\
50 & 12  & 3   & 10.32 (2.69) & 12.36 (2.97) & 14.65 (3.25) & \textbf{8.07} (2.30)  & 11.99 (2.71) \\
60 & 11  & 2   & 7.72 (2.54)  & 10.07 (2.82) & 8.74 (2.63)  & \textbf{5.06} (1.91)  & 6.67 (2.27)  \\ \hline
\end{tabular}
\vspace{0.15cm}
\begin{flushleft}\footnotesize Notes. Standard errors are reported in parentheses. Bold values indicate the best-performing model in each row.  \end{flushleft}
\end{table}

% \begin{table}[htb!]
% \centering
% \caption{Comparison of relative error in revenue of the top-ranked assortment  with HEV-based instances}\label{tab:hev-regret}
% \begin{tabular}{ccc|c|c|c|c|c}
% \hline
% %\begin{tabular}[c]{@{}c@{}}$n_\cS$\\ $\geq$ \end{tabular} 
% $n_\cS$ $\geq$ & $|\cS|$ & $|\mathcal{A}|$ & MDM         & RUM         & MNL         & MCCM                             & LC-MNL       \\ \hline 
% 20                                               & 24  & 5   & 11.19  $\pm$  1.99 & 9.53  $\pm$  2.25  & 15.01  $\pm$  2.60 & 9.68  $\pm$  1.99  & \textbf{9.16  $\pm$  1.76}  \\
% 30                                               & 20  & 4   & 11.86  $\pm$  2.64 & 15.54  $\pm$  3.03 & 21.07  $\pm$  3.26 & 13.93  $\pm$  2.99 & \textbf{11.55  $\pm$  2.60} \\
% 40                                               & 16  & 3   & 8.01  $\pm$   2.25 & 11.27  $\pm$  2.65 & 12.73  $\pm$  2.88 & 6.35  $\pm$  2.08  & \textbf{5.02  $\pm$  1.47}  \\
% 50                                               & 12  & 3   & 10.32  $\pm$  2.69 & 12.36  $\pm$  2.97 & 14.65  $\pm$  3.25 & \textbf{8.07  $\pm$  2.30}  & 11.99  $\pm$  2.71 \\
% 60                                               & 11  & 2   & 7.72  $\pm$  2.54  & 10.07  $\pm$  2.82 & 8.74  $\pm$  2.63  & \textbf{5.06  $\pm$  1.91}  & 6.67  $\pm$  2.27  \\ \hline
% \end{tabular}
% \end{table}

\begin{table}[htb!]
\centering
\caption{Average rankings of prescriptive abilities among the compared models with HEV-based instances }\label{tab:hev-performance_ranking}
\scalebox{0.95}{\begin{tabular}{ccc|ccccc|ccccc}
\hline
\multirow{2}{*}{\begin{tabular}[c]{@{}c@{}}$n_\cS$\\ $\geq$ \end{tabular}} & \multirow{2}{*}{$|\cS|$} & \multirow{2}{*}{$|\mathcal{A}|$} & \multicolumn{5}{c|}{\begin{tabular}[c]{@{}c@{}}average ranking of models with\\  Kendall Tau distance\end{tabular}} & \multicolumn{5}{c}{\begin{tabular}[c]{@{}c@{}} average ranking of models with relative error\\   in revenue of the top-ranked assortment\end{tabular}} \\ \cline{4-13} 
                                                                  &                    &                    & MDM                & RUM                & MNL                & MCCM               & LC-MNL               & MDM                     & RUM                     & MNL                     & MCCM                   & LC-MNL                   \\ \hline
20                                                                & 24                 & 5                  & 2.88               & 3.40               & 3.62               & \textbf{2.72}               & 2.38                 & 3.16                    & \textbf{2.82}                    & 3.21                    & 2.95                   & 2.87                     \\
30                                                                & 20                 & 4                  & \textbf{2.46}               & 3.12               & 3.50               & 3.11               & 2.81                 & \textbf{2.80}                    & 2.95                    & 3.44                    & 2.91                   & 2.90                     \\
40                                                                & 16                 & 3                  & 3.09               & 3.30               & 3.19               & \textbf{2.66}               & 2.76                 & 2.98                    & 3.12                    & 3.20                    & \textbf{2.78}                   & 2.92                     \\
50                                                                & 12                 & 3                  & 2.97               & 3.16               & 3.20               & \textbf{2.54}               & 3.13                 & 2.92                    & 3.01                    & 3.15                    & \textbf{2.81}                   & 3.11                     \\
60                                                                & 11                 & 2                  & 2.93               & 3.13               & 3.08               & \textbf{2.88}               & 2.98                 & 2.93                    & 3.13                    & 3.08                    & \textbf{2.88}                   & 2.98                     \\ \hline
\end{tabular}}
\vspace{0.15cm}
\begin{flushleft}\footnotesize Notes. Bold values indicate the best-performing model in each row.  \end{flushleft}
\end{table}

\subsubsection*{Comparison of explanatory, predictive, and prescriptive abilities under Probit. } \label{sec:exp_probit_w_sampling}
We report the average absolute deviation loss and the standard error of the compared models over 50 randomly generated Probit-based instances with negatively correlated utilities in Table \ref{tab:Probit-estimation}. The results demonstrate that MDM, RUM, and MCCM incur significantly lower costs in approximating Probit instances compared to MNL and LC-MNL. Specifically, MDM exhibits significantly lower loss compared to the parametric models: approximately  85\% lesser than the best-fitting MNL and 36\% lesser than the best-fitting LC-MNL for the largest collection size considered. Similar to the computation performance above, MDM consistently demonstrates significantly shorter computing time compared to RUM, MCCM, and LC-MNL with Probit-based instances. When comparing the average KL loss across MDM, MNL, and LC-MNL, the results in Table \ref{tab:Probit-kl} demonstrate that both MDM and LC-MNL significantly outperform MNL across all tested scenarios.

We also report the predictive and prescriptive abilities of the compared models with Probit-based instances with (i) the average Kendall Tau distance comparison over assortments in Table \ref{tab:Probit-kendall},  and (ii) the average relative error in revenue of the top-ranked assortment in Table \ref{tab:Probit-regret}. Here as well, we  consider the instances lying in RUM within  99.5\% confidence region around the observed data. The  percentage of the  instances tested infeasible under RUM is  6\% for the dataset formed by restricting to  assortments for which $n_\cS \geq 20$.  The numbers reported for RUM in Tables \ref{tab:Probit-kendall} - \ref{tab:Probit-regret} are obtained by restricting only to feasible instances. Overall, we infer the following when considering both the comparison criteria in Tables \ref{tab:Probit-kendall} - \ref{tab:Probit-regret} and the average ranking of models' performance summarized in Table \ref{tab:Probit-performance_ranking}: MCCM performs the best overall, followed by LC-MNL, with MDM in the third place, followed by RUM, and MNL performing the worst among the considered models.

\begin{table}[htb!]
\centering
\caption{Average absolute deviation loss ($10^{-3}$) and run time (sec) comparison with Probit-based instances}
\label{tab:Probit-estimation}
\scalebox{0.85}{\begin{tabular}{cc|cccccccccc}
\hline
\multirow{2}{*}{\begin{tabular}[c]{@{}c@{}}$n_\cS$\\ $\geq$ \end{tabular}} & \multirow{2}{*}{$|\cS|$} & \multicolumn{2}{c|}{MDM}                               & \multicolumn{2}{c|}{RUM}                               & \multicolumn{2}{c|}{MNL}                               & \multicolumn{2}{c|}{MCCM}                              & \multicolumn{2}{c}{LC-MNL}                             \\ \cline{3-12} 
                                                                  &                       & loss          & \multicolumn{1}{c|}{time} & loss          & \multicolumn{1}{c|}{time} & loss          & \multicolumn{1}{c|}{time} & loss          & \multicolumn{1}{c|}{time} & loss          & time         \\ \hline
20                                                                & 29                    & 4.58 (0.13)          & \multicolumn{1}{c|}{97.73}      & 3.83 (0.13)          & \multicolumn{1}{c|}{5355.87}    & 30.85 (0.50)         & \multicolumn{1}{c|}{0.26}       & 4.80 (0.13)          & \multicolumn{1}{c|}{599.08}     & 7.16 (0.33)          & 2114.44              \\
30                                                                & 24                    & 3.62 (0.13)          & \multicolumn{1}{c|}{5.36}       & 2.95 (0.12)          & \multicolumn{1}{c|}{4331.74}    & 28.44 (0.51)         & \multicolumn{1}{c|}{0.25}       & 3.68 (0.12)          & \multicolumn{1}{c|}{761.11}     & 5.42 (0.19)          & 2241.20              \\
40                                                                & 19                    & 2.51 (0.12)          & \multicolumn{1}{c|}{0.84}       & 1.98 (0.12)          & \multicolumn{1}{c|}{3644.84}    & 25.24 (0.49)         & \multicolumn{1}{c|}{0.24}       & 2.44 (0.13)          & \multicolumn{1}{c|}{514.44}     & 4.31 (0.53)          & 2646.29              \\
50                                                                & 15                    & 1.65 (0.12)          & \multicolumn{1}{c|}{0.20}       & 1.19 (0.11)          & \multicolumn{1}{c|}{3108.81}    & 21.64 (0.48)         & \multicolumn{1}{c|}{0.24}       & 1.34 (0.12)          & \multicolumn{1}{c|}{339.75}     & 3.32 (0.76)          & 2481.70              \\
60                                                                & 13                    & 1.20 (0.12)          & \multicolumn{1}{c|}{0.07}       & 0.86 (0.11)          & \multicolumn{1}{c|}{2863.83}    & 19.14 (0.44)         & \multicolumn{1}{c|}{0.24}       & 0.91 (0.12)          & \multicolumn{1}{c|}{466.14}     & 2.29 (0.50)          & 1892.71              \\ \hline
\end{tabular}}
\vspace{0.15cm}
\begin{flushleft}\footnotesize Notes. Standard errors are reported in parentheses. \end{flushleft}
\end{table}

\begin{table}[htb!]
\centering
\caption{Comparison on average KL loss ($10^{-3}$) among MDM, MNL, and LC-MNL with Probit-based instances}
\label{tab:Probit-kl}
\begin{tabular}{cc|cc|c|c}
\hline
\multirow{2}{*}{\begin{tabular}[c]{@{}c@{}}$n_\cS$\\ $\geq$ \end{tabular}} & \multicolumn{1}{c|}{\multirow{2}{*}{$|\cS|$}} & \multicolumn{2}{c}{MDM} & \multicolumn{1}{|c}{MNL} & \multicolumn{1}{|c}{LC-MNL} \\
\cline{3-6} 
 &  & loss & time & loss & loss \\
\hline
20 & 29 & 2.44 (0.19) & 114.77 & 9.61 (0.27) & 1.20 (0.10) \\
30 & 24 & 1.91 (0.26) & 6.35 & 8.23 (0.25) & 0.66 (0.04) \\
40 & 19 & 1.20 (0.21) & 1.21 & 6.90 (0.25) & 0.51 (0.13) \\
50 & 15 & 0.55 (0.10) & 0.53 & 5.55 (0.22) & 0.45 (0.19) \\
60 & 13 & 0.43 (0.10) & 0.38 & 4.81 (0.21) & 0.26 (0.11) \\
\hline
\end{tabular}
\vspace{0.15cm}
\begin{flushleft}\footnotesize Notes. Standard errors are reported in parentheses. \end{flushleft}
\end{table}

% \begin{table}[htb!]
% \centering
% \caption{Comparison on average KL loss ($10^{-3}$) among MDM, MNL, and LC-MNL with Probit-based instances}
% \label{tab:Probit-kl}
% \begin{tabular}{cc|cc|c|c}
% \hline
% \multirow{2}{*}{\begin{tabular}[c]{@{}c@{}}$n_\cS$\\ $\geq$ \end{tabular}} & \multicolumn{1}{c|}{\multirow{2}{*}{$|\cS|$}} & \multicolumn{2}{c}{MDM} & \multicolumn{1}{|c}{MNL} & \multicolumn{1}{|c}{LC-MNL} \\
% \cline{3-6} 
%  &  & loss & time & loss & loss \\
% \hline
% 20 & 29 & 2.44 $\pm$ 0.19 & 114.77 & 9.61 $\pm$ 0.27 & 1.20 $\pm$ 0.10 \\
% 30 & 24 & 1.91 $\pm$ 0.26 & 6.35 & 8.23 $\pm$ 0.25 & 0.66 $\pm$ 0.04 \\
% 40 & 19 & 1.20 $\pm$ 0.21 & 1.21 & 6.90 $\pm$ 0.25 & 0.51 $\pm$ 0.13 \\
% 50 & 15 & 0.55 $\pm$ 0.10 & 0.53 & 5.55 $\pm$ 0.22 & 0.45 $\pm$ 0.19 \\
% 60 & 13 & 0.43 $\pm$ 0.10 & 0.38 & 4.81 $\pm$ 0.21 & 0.26 $\pm$ 0.11 \\
% \hline
% \end{tabular}
% \end{table}

\begin{table}[htb!]
\centering
\caption{Kendall Tau distance comparison with Probit-based instances}
\label{tab:Probit-kendall}
\begin{tabular}{ccc|c|c|c|c|c}
\hline
$n_\cS$ $\geq$ & $|\cS|$ & $|\mathcal{A}|$ & MDM         & RUM         & MNL         & MCCM                             & LC-MNL       \\ \hline
20 & 24  & 5   & 1.96 (0.21) & 3.07 (0.26) & 2.56 (0.20) & 1.9 (0.21) & \textbf{1.72} (0.22) \\
30 & 20  & 4   & 1.0 (0.14)  & 1.52 (0.21) & 1.22 (0.15) & \textbf{0.84} (0.13) & 1.02 (0.12) \\
40 & 16  & 3   & 0.5 (0.11) & 0.8 (0.12) & 0.64 (0.10) & \textbf{0.46} (0.08)  & \textbf{0.46} (0.08) \\
50 & 12  & 3   & 0.62 (0.12) & 0.86 (0.14) & 0.68 (0.12)  & \textbf{0.32} (0.09)  & 0.32 (0.12) \\
60 & 11  & 2   & 0.16 (0.05)  & 0.18 (0.05) & 0.2 (0.06) & \textbf{0.12} (0.05) & \textbf{0.12} (0.05) \\ \hline
\end{tabular}
\vspace{0.15cm}
\begin{flushleft}\footnotesize Notes. Standard errors are reported in parentheses. Bold values indicate the best-performing model in each row.  \end{flushleft}
\end{table}

% \begin{table}[htb!]
% \centering
% \caption{Kendall Tau distance comparison with Probit-based instances}\label{tab:Probit-kendall}
% \begin{tabular}{ccc|c|c|c|c|c}
% \hline
% %\begin{tabular}[c]{@{}c@{}}$n_\cS$\\ $\geq$ \end{tabular} 
% $n_\cS$ $\geq$ & $|\cS|$ & $|\mathcal{A}|$ & MDM         & RUM         & MNL         & MCCM                             & LC-MNL       \\ \hline
% 20                                               & 24  & 5   & 1.96  $\pm$  0.21 & 3.07  $\pm$  0.26 & 2.56  $\pm$  0.20 & 1.9  $\pm$  0.21 & \textbf{1.72  $\pm$  0.22} \\
% 30                                               & 20  & 4   & 1.0 $\pm$  0.14  & 1.52  $\pm$  0.21 & 1.22  $\pm$  0.15 & \textbf{0.84  $\pm$  0.13} & 1.02  $\pm$  0.12 \\
% 40                                               & 16  & 3   & 0.5  $\pm$  0.11 & 0.8  $\pm$  0.12 & 0.64  $\pm$  0.10 & \textbf{0.46  $\pm$  0.08}  & \textbf{0.46  $\pm$  0.08} \\
% 50                                               & 12  & 3   & 0.62  $\pm$  0.12 & 0.86  $\pm$  0.14 & 0.68  $\pm$  0.12  & \textbf{0.32  $\pm$  0.09}  & 0.32  $\pm$  0.12 \\
% 60                                               & 11  & 2   & 0.16  $\pm$  0.05  & 0.18  $\pm$  0.05 & 0.2  $\pm$  0.06 & \textbf{0.12  $\pm$  0.05} & \textbf{0.12  $\pm$  0.05}\\ \hline
% \end{tabular}
% \end{table}

\begin{table}[htb]
\centering
\caption{Comparison of relative error (\%) in revenue of the top-ranked assortment with Probit-based instances}
\label{tab:Probit-regret}
\begin{tabular}{ccc|c|c|c|c|c}
\hline
$n_\cS$ $\geq$ & $|\cS|$ & $|\mathcal{A}|$ & MDM         & RUM         & MNL         & MCCM                             & LC-MNL       \\ \hline 
20 & 24 & 5 & 9.21 (2.06) & 9.99 (2.01)  & 7.97 (1.71) & 7.31 (1.65) & \textbf{5.47} (1.46) \\
30 & 20 & 4 & \textbf{4.01} (1.44) & 10.56 (2.58) & 8.18 (2.06) & 4.65 (1.41) & 5.47 (1.63) \\
40 & 16 & 3 & 7.95 (2.19) & 9.67 (2.42)  & 9.41 (2.33) & 6.19 (1.88) & \textbf{6.17} (1.79) \\
50 & 12 & 3 & 7.97 (2.29) & 9.85 (2.53)  & 9.45 (2.48) & \textbf{3.53} (1.76) & 7.31 (2.26) \\
60 & 11 & 2 & 5.07 (1.86) & 7.02 (2.25)  & 7.11 (2.25) & \textbf{3.93} (1.82) & 4.86 (1.87) \\ \hline
\end{tabular}
\vspace{0.15cm}
\begin{flushleft}\footnotesize Notes. Standard errors are reported in parentheses. Bold values indicate the best-performing model in each row.  \end{flushleft}
\end{table}

% \begin{table}[htb]
% \centering
% \caption{Comparison of relative error in revenue of the top-ranked assortment  with Probit-based instances}\label{tab:Probit-regret}
% \begin{tabular}{ccc|c|c|c|c|c}
% \hline
% %\begin{tabular}[c]{@{}c@{}}$n_\cS$\\ $\geq$ \end{tabular} 
% $n_\cS$ $\geq$ & $|\cS|$ & $|\mathcal{A}|$ & MDM         & RUM         & MNL         & MCCM                             & LC-MNL       \\ \hline 
% 20 & 24 & 5 & 9.21 $\pm$ 2.06 & 9.99 $\pm$ 2.01  & 7.97 $\pm$ 1.71 & 7.31 $\pm$ 1.65 & \textbf{5.47 $\pm$ 1.46} \\
% 30 & 20 & 4 & \textbf{4.01 $\pm$ 1.44} & 10.56 $\pm$ 2.58 & 8.18 $\pm$ 2.06 & 4.65 $\pm$ 1.41 & 5.47 $\pm$ 1.63 \\
% 40 & 16 & 3 & 7.95 $\pm$ 2.19 & 9.67 $\pm$ 2.42  & 9.41 $\pm$ 2.33 & 6.19 $\pm$ 1.88 & \textbf{6.17 $\pm$ 1.79} \\
% 50 & 12 & 3 & 7.97 $\pm$ 2.29 & 9.85 $\pm$ 2.53  & 9.45 $\pm$ 2.48 & \textbf{3.53 $\pm$ 1.76} & 7.31 $\pm$ 2.26 \\
% 60 & 11 & 2 & 5.07 $\pm$ 1.86 & 7.02 $\pm$ 2.25  & 7.11 $\pm$ 2.25 & \textbf{3.93 $\pm$ 1.82} & 4.86 $\pm$ 1.87 \\ \hline
% \end{tabular}
% \end{table}

\begin{table}[htb!]
\centering
\caption{Average rankings of prescriptive abilities among the compared models with Probit-based instances }\label{tab:Probit-performance_ranking}
\scalebox{0.95}{\begin{tabular}{ccc|ccccc|ccccc}
\hline
\multirow{2}{*}{\begin{tabular}[c]{@{}c@{}}$n_\cS$\\ $\geq$ \end{tabular}} & \multirow{2}{*}{$|\cS|$} & \multirow{2}{*}{$|\mathcal{A}|$} & \multicolumn{5}{c|}{\begin{tabular}[c]{@{}c@{}}average ranking of models with\\  Kendall Tau distance\end{tabular}} & \multicolumn{5}{c}{\begin{tabular}[c]{@{}c@{}} average ranking of models with relative error \\  in revenue of the top-ranked assortment\end{tabular}} \\ \cline{4-13} 
                                                                  &                    &                    & MDM                & RUM                & MNL                & MCCM               & LC-MNL               & MDM                     & RUM                     & MNL                     & MCCM                   & LC-MNL                   \\ \hline
20                                                                & 24                 & 5                  & 2.72               & 3.80               & 3.44               & 2.66               & \textbf{2.38}                 & 3.04                    & 3.12                    & 3.07                    & 3.02                   & \textbf{2.74}                     \\
30                                                                & 20                 & 4                  & 2.91               & 3.48               & 3.19               & \textbf{2.51}               & 2.91                 & \textbf{2.82}                    & 3.17                    & 3.12                    & 2.92                   & 2.97                     \\
40                                                                & 16                 & 3                  & 2.97               & 3.43               & 3.05               & \textbf{2.76}               & 2.79                 & 2.91                    & 3.14                    & 3.12                    & 2.94                   & \textbf{2.89}                     \\
50                                                                & 12                 & 3                  & 3.01               & 3.43               & 3.12               & \textbf{2.49}               & 2.95                 & 3.09                    & 3.16                    & 3.13                    & \textbf{2.63}                   & 2.99                     \\
60                                                                & 11                 & 2                  & 3.00               & 3.05               & 3.10               & \textbf{2.90}               & 2.95                 & 3.00                    & 3.05                    & 3.10                    & \textbf{2.90}                   & 2.95                     \\ \hline
\end{tabular}}
\vspace{0.15cm}
\begin{flushleft}\footnotesize Notes. Bold values indicate the best-performing model in each row.  \end{flushleft}
\end{table}

\subsection{The explanatory abilities of the compared models} \label{sec:discuss_representability}

In this section, we further examine the explanatory power of the compared models including MDM, RUM, MNL, MCCM, LC-MNL (with 2 classes), and LC-MNL (with 10 classes) under randomly generated instances of HEV and Probit models. \textcolor{black}{As the underlying choice probabilities of HEV and Probit models do not admit a closed-form expression, we generate 10,000 customer utility realizations from the respective models. Utilizing these realizations, we  compute the underlying choice probabilities  as the sample means of the realizations of individual purchases.} 
%To mitigate the impact of sampling error, we generate  $10,000$ of customer utility realizations following the HEV and Probit models and compute the choice probabilities of the products in each assortment as the sample means of the realization of individual purchases.}
%without sampling error. %To limit the impact of sampling error, we assume in this section that the true choice probabilities for the given assortments are known.
%We set the variance of the utilities for the outside option  to be similar to other products. 
Detailed information on the data generation process is provided in Section \ref{sec:data_gen_true_hev_probit}. The evaluation is based on the average KL loss for each model, fitted to 50 randomly generated instances of HEV and Probit. As in the previous experiments,  we consider assortments offered at least $n_S$ times, where $n_S$ varies between $20$ and $60$ across different testing scenarios. Since the procedures for checking representability of MCCM and LC-MNL have not been fully explored in the literature, we evaluate the representational power of the models under comparison by reporting the proportion of instances where the average KL loss is below $10^{-6}$. Additionally, due to the nonconvex feasible region in the MLE formulation of MCCM, some rare, randomly generated instances encounter convergence issues when solved using a continuous solver. Therefore, we report the average KL loss only for the instances that successfully converged to an acceptable level. Tables \ref{tab:rep_true_hev_probit} - \ref{tab:kl_true_probit} show that MDM consistently demonstrates strong explanatory power across both HEV and Probit instances. It has a significantly higher fraction of instances with average KL loss below $10^{-6}$ and a substantially lower average KL loss compared to simpler models like MNL and LC-MNL (with 2 classes). As expected, LC-MNL (with 10 classes) achieves the best performance overall, since it can approximate RUM when the number of classes is sufficiently large. However, MDM holds its ground effectively, consistently showing a lower average KL loss than MCCM and performing comparably well as LC-MNL (with 10 classes) in several cases.

\begin{table}[htb!]
\centering
\caption{Fractions of HEV and Probit instances with average KL loss below $10^{-6}$ of the compared models}
\label{tab:rep_true_hev_probit}
\scalebox{0.89}{\begin{tabular}{cc|ccccc|ccccc}
\hline
\multirow{3}{*}{\begin{tabular}[c]{@{}c@{}}$n_\cS$\\ $\geq$ \end{tabular}} & \multirow{3}{*}{$|\cS|$} & \multicolumn{5}{c|}{HEV}                                                                                                                                                                                                                                                                                                   & \multicolumn{5}{c}{Probit}                                                                                                                                                                                                                                                                                                \\ \cline{3-12} 
                     &                                  & \multicolumn{1}{c|}{\multirow{2}{*}{MDM}} & \multicolumn{1}{c|}{\multirow{2}{*}{MNL}} & \multicolumn{1}{c|}{\multirow{2}{*}{MCCM}} & \multicolumn{1}{c|}{\multirow{2}{*}{\begin{tabular}[c]{@{}c@{}}LC-MNL\\  (2 classes)\end{tabular}}} & \multirow{2}{*}{\begin{tabular}[c]{@{}c@{}}LC-MNL \\ (10 classes)\end{tabular}} & \multicolumn{1}{c|}{\multirow{2}{*}{MDM}} & \multicolumn{1}{c|}{\multirow{2}{*}{MNL}} & \multicolumn{1}{c|}{\multirow{2}{*}{MCCM}} & \multicolumn{1}{c|}{\multirow{2}{*}{\begin{tabular}[c]{@{}c@{}}LC-MNL\\  (2 classes)\end{tabular}}} & \multirow{2}{*}{\begin{tabular}[c]{@{}c@{}}LC-MNL\\  (10 classes)\end{tabular}} \\
                     &                                  & \multicolumn{1}{c|}{}                     & \multicolumn{1}{c|}{}                     & \multicolumn{1}{c|}{}                      & \multicolumn{1}{c|}{}                                                                               &                                                                                 & \multicolumn{1}{c|}{}                     & \multicolumn{1}{c|}{}                     & \multicolumn{1}{c|}{}                      & \multicolumn{1}{c|}{}                                                                               &                                                                                 \\ \hline
20                   & 29                               & \multicolumn{1}{c|}{0.26}                 & \multicolumn{1}{c|}{0}                    & \multicolumn{1}{c|}{0.1}                   & \multicolumn{1}{c|}{0}                                                                              & 0.72                                                                            & \multicolumn{1}{c|}{0.34}                 & \multicolumn{1}{c|}{0}                    & \multicolumn{1}{c|}{0.62}                  & \multicolumn{1}{c|}{0}                                                                              & 0.42                                                                            \\ \hline
30                   & 24                               & \multicolumn{1}{c|}{0.44}                 & \multicolumn{1}{c|}{0}                    & \multicolumn{1}{c|}{0.25}                  & \multicolumn{1}{c|}{0}                                                                              & 0.75                                                                            & \multicolumn{1}{c|}{0.57}                 & \multicolumn{1}{c|}{0}                    & \multicolumn{1}{c|}{0.87}                  & \multicolumn{1}{c|}{0}                                                                              & 0.32                                                                            \\ \hline
40                   & 19                               & \multicolumn{1}{c|}{0.56}                 & \multicolumn{1}{c|}{0}                    & \multicolumn{1}{c|}{0.44}                  & \multicolumn{1}{c|}{0.02}                                                                           & 0.98                                                                            & \multicolumn{1}{c|}{0.73}                 & \multicolumn{1}{c|}{0}                    & \multicolumn{1}{c|}{0.85}                  & \multicolumn{1}{c|}{0}                                                                              & 0.31                                                                            \\ \hline
50                   & 15                               & \multicolumn{1}{c|}{0.73}                 & \multicolumn{1}{c|}{0}                    & \multicolumn{1}{c|}{0.71}                  & \multicolumn{1}{c|}{0}                                                                              & 0.96                                                                            & \multicolumn{1}{c|}{0.87}                 & \multicolumn{1}{c|}{0}                    & \multicolumn{1}{c|}{1}                     & \multicolumn{1}{c|}{0}                                                                              & 0.36                                                                            \\ \hline
60                   & 13                               & \multicolumn{1}{c|}{0.84}                 & \multicolumn{1}{c|}{0}                    & \multicolumn{1}{c|}{0.77}                  & \multicolumn{1}{c|}{0.02}                                                                           & 0.98                                                                            & \multicolumn{1}{c|}{0.93}                 & \multicolumn{1}{c|}{0}                    & \multicolumn{1}{c|}{0.95}                  & \multicolumn{1}{c|}{0}                                                                              & 0.5                                                                             \\ \hline
\end{tabular}}
\end{table}
\vspace{-1cm}

% \begin{table}[htb!]
% \centering
% \caption{Fraction of HEV instances with average KL loss below $10^{-6}$ of the compared models}
% \label{tab:rep_true_hev}
% \scalebox{0.9}{\begin{tabular}{cc|c|c|c|c|c} \hline 
% $n_\cS \geq$ & $|\cS|$ & MDM & MNL & MCCM & LC-MNL (2 classes) & LC-MNL (10 classes) \\ \hline 
% 20                                                     & 29              & 0.26 & 0   & 0.10 & 0                  & 0.72                \\ \hline
% 30                                                     & 24              & 0.44 & 0   & 0.25 & 0                  & 0.75                \\ \hline
% 40                                                     & 19              & 0.56 & 0   & 0.44 & 0.02               & 0.98                \\ \hline
% 50                                                     & 15              & 0.73 & 0   & 0.71 & 0                  & 0.96                \\ \hline
% 60                                                     & 13              & 0.84 & 0   & 0.77 & 0.02               & 0.98                \\ \hline
% \end{tabular}}
% \end{table}

\begin{table}[htb!]
\centering
\caption{Comparison of average KL loss ($10^{-5}$) with HEV instances}
\label{tab:kl_true_hev}
\scalebox{0.95}{\begin{tabular}{cc|c|c|c|c|c}
\hline
$n_\cS$ $\geq$ & $|\cS|$ & MDM & MNL & MCCM & LC-MNL (2 classes) & LC-MNL (10 classes) \\
\hline
20 & 29 & 5.58 (1.76) & 480.99 (67.67) & 16.78 (3.17) & 155.58 (22.42) & 0.53 (0.26) \\
30 & 24 & 3.82 (2.58) & 274.36 (66.09) & 9.85 (3.16) & 61.78 (10.51) & 0.08 (0.03) \\
40 & 19 & 1.70 (0.92) & 365.65 (83.53) & 4.88 (1.40) & 74.33 (16.47) & 0.02 (0.02) \\
50 & 15 & 0.87 (0.60) & 247.71 (49.11) & 1.97 (1.26) & 39.44 (7.92) & 0.16 (0.15) \\
60 & 13 & 0.73 (0.50) & 224.72 (59.29) & 1.55 (0.76) & 28.99 (5.24) & 0.16 (0.16) \\
\hline
\end{tabular}}
\vspace{0.15cm}
\begin{flushleft}\footnotesize Notes. Standard errors are reported in parentheses.  \end{flushleft}
\end{table}

\vspace{-25pt}

% \begin{table}[htb!] 
% \centering 
% \caption{Fraction of Probit instances with average KL loss below $10^{-6}$ of the compared models} 
% \label{tab:rep_true_probit}
% \begin{tabular}{cc|c|c|c|c|c} \hline 
% $n_\cS \geq$ & $|\cS|$ & MDM & MNL & MCCM & LC-MNL (2 classes) & LC-MNL (10 classes) \\ \hline 
% 20 & 29 & 0.34 & 0 & 0.62 & 0 & 0.42 \\ \hline 
% 30 & 24 & 0.57 & 0 & 0.87 & 0 & 0.32 \\ \hline 
% 40 & 19 & 0.73 & 0 & 0.85 & 0 & 0.31 \\ \hline 
% 50 & 15 & 0.87 & 0 & 1.00 & 0 & 0.36 \\ \hline 
% 60 & 13 & 0.93 & 0 & 0.95 & 0 & 0.50 \\ \hline \end{tabular} 
% \end{table}

\begin{table}[htb!]
\centering
\caption{Comparison of the average KL loss ($10^{-6}$) with Probit instances} 
\label{tab:kl_true_probit}
\scalebox{0.95}{\begin{tabular}{cc|c|c|c|c|c}
\hline 
$n_\cS$ $\geq$ & $|\cS|$ & MDM & MNL & MCCM & LC-MNL (2 classes) & LC-MNL (10 classes) \\ \hline
20 & 29 & 4.25 (0.84) & 474.27 (67.89) & 7.94 (4.66) & 281.40 (38.87) & 204.68 (43.88) \\
30 & 24 & 2.14 (0.55) & 421.53 (48.96) & 14.05 (13.40) & 278.10 (27.74) & 201.07 (27.91) \\
40 & 19 & 0.95 (0.22) & 397.53 (42.59) & 2.28 (1.30) & 302.71 (38.48) & 234.58 (41.65) \\
50 & 15 & 0.76 (0.40) & 367.89 (37.39) & 0.00 (0.00) & 239.63 (28.61) & 160.68 (27.54) \\
60 & 13 & 0.19 (0.08) & 276.90 (28.83) & 1.52 (1.09) & 187.20 (30.00) & 104.96 (25.99) \\
\hline 
\end{tabular}}
\vspace{0.15cm}
\begin{flushleft}\footnotesize Note. Standard errors are reported in parentheses. \end{flushleft}
\end{table}

\section{Limitations}
\label{sec:exp_limitation}
In this section, we discuss the impact of sampling uncertainty and the possible limitations of modeling choice behavior with MDM.
\subsection{Sampling uncertainty} \label{sec:exp_sampling_uncertainty}
%We first discuss the limitations in prediction, particularly regarding the sampling uncertainty that might be inherent in the data. 
While the workflow in Figure \ref{fig:flowchat} provides bounds on predicted revenues for the unseen assortments, these bounds are subject to statistical sampling uncertainty depending on the frequency of purchases in the historical data. %In our experiments with the JD.com dataset, we find the extent of this uncertainty to be relatively small: %Specifically, the maximum standard error in the choice probability estimates \textcolor{black}{$\max_{(i,S)\in \cI_{\cS}} \sqrt{\frac{p_{i,S}(1-p_{i,S})}{n_S}}$} is $0.033$ and the resulting relative error \textcolor{black}{$\max_{(i,S)\in \cI_{\cS}} \sqrt{\frac{p_{i,S}(1-p_{i,S})}{n_S}} /p_{i,S} (\%) $} is at most $4.35$\%. 
\textcolor{black}{In our experiments with the JD.com dataset, we find the extent of this uncertainty, quantified via standard error and relative error estimates, to be relatively small. Following the approach in  \cite{farias2013nonparametric}, we obtain the maximum standard error    $\max_{(i,S)\in \cI_{\cS}} \sqrt{\frac{p_{i,S}(1-p_{i,S})}{n_S}}$ to be $0.033$, and the maximum relative error  $\max_{(i,S)\in \cI_{\cS}}\frac{1}{p_{i,S}} \sqrt{\frac{p_{i,S}(1-p_{i,S})}{n_S}} \times 100 $ to be $4.35$\%.}
Although our prediction results are subject to sampling uncertainty, the low standard error observed in our experiment results suggests that the impact on the accuracy of our bounds is limited, and the estimates remain stable for this dataset.

% DO NOT delete the following
% Table \ref{tab:jd-uncentainty} demonstrates that in the JD.com dataset, this uncertainty is relatively small. 
% \begin{table}[htb!]
% \centering
% \caption{Summary of data uncertainty for JD.com dataset}\label{tab:jd-uncentainty}
% \begin{tabular}{cc|c|c}
% \hline
% $n_\cS \geq $ & $|\cS|$& \rule{0pt}{15pt} $\max_{(i,S)\in \cI_{\cS}} \sqrt{\frac{p_{i,S}(1-p_{i,S})}{n_S}}(10^{-2}) $ & $\max_{(i,S)\in \cI_{\cS}} \sqrt{\frac{p_{i,S}(1-p_{i,S})}{n_S}} /p_{i,S} (\%) $\\ \hline
% 20                  & 29                        & 3.299                       & 4.348                        \\ \hline
% 30                  & 24                        & 2.441                       & 3.125                        \\ \hline
% 40                  & 19                        & 1.875                       & 2.500                        \\ \hline
% 50                  & 15                        & 1.461                       & 1.961                        \\ \hline
% 60                  & 13                        & 1.025                       & 1.493                        \\ \hline
% \end{tabular}
% \end{table}

To  explore how statistical sampling uncertainty may, in general, impact the lower and upper bounds for revenue predictions for an unseen assortment,  we next compare predictions obtained under two different extremes of offer times: First, the case of small offer times where $n_S = 20,$ and next, the case of large offer times where $n_S = 1000.$ For this experiment, we generate data from MDM with  marginals being nonidentical exponential distribution. 
%Twe compare revenue predictions for an unseen assortment under MDM instances with marginals being nonidentical exponential distribution where two sample sizes of assortments are: a small sample size ($n_S = 20$) and a large sample size ($n_S = 1000$). 
The details of data generation are provided in Section \ref{sec:data_gen_mem}. Constructing 95\% confidence regions around empirical purchase frequencies, we obtain the worst-case and best-case revenue estimates of the unseen assortment for each instance %with the assumption that the choice data lies in the MDM-representable region with 95\% confidence 
(see \eqref{model:mdm_ci}).  Figure \ref{fig:sample_uncertainty} shows that predictions for the smaller sample size ($n_S = 20$) exhibit more variation around the true revenue of 1.08, with the worst-case prediction being around 0.4 and the best-case prediction being around 1.6. As expected,  predictions obtained for the larger sample size ($n_S = 1000$) are more tightly clustered around the true revenue, with a worst-case of around 0.9 and a best-case of around $1.2$. 
This highlights that while sampling uncertainty affects prediction precision, larger sample size reduces its impact. Comprehensively addressing the impact of statistical sampling uncertainty is  an important pursuit for future research.  %We believe in future research it will be useful to develop methods to account for sampling uncertainty in revenue predictions more comprehensively. 
In the results reported for JD.com dataset, the impact of  sampling uncertainty has been less of an issue.

\subsection{Correlation structure in utilities}
\label{sec:exp_correlation}
%\textcolor{red}{To address comments from reviewer 2: "I think it would be helpful to include some simulations where the proposed approach breaks to illustrate what the boundaries of the method are. This could also be an opportunity to explore how the method performs when the utilities are correlated (as in a random coefficients or nested logit model)"}
We now discuss the limitations caused due to the correlation structure implied by the extremal distribution that is chosen by MDM in \eqref{mdm}. \textcolor{black}{Tables \ref{tab:hev_couplas_estimation} and \ref{tab:hev_couplas_prediction} present the MDM estimation and prediction results respectively,} in which data is generated from RUM where the stochastic component of the product utilities are distributed according to Gumbel distributions. For the copula governing the joint distribution among the utilities, we consider two cases:  (i) independent copula, and (ii) comonotonic copula.   Note that the former case of RUMs with independent Gumbel marginals is the HEV choice model considered in Section \ref{sec:exp_hev_w_sampling}. The details on data generation are presented in Section \ref{sec:coupla_data}. \textcolor{black}{MDM is seen to perform significantly better in the independent copula case, with (a) lower estimation errors across all test scenarios in Table \ref{tab:hev_couplas_estimation}, (b) much lower average Kendall Tau distance of the test assortments, and (c) lower relative error in predicting the best assortments, across almost all scenarios in Table \ref{tab:hev_couplas_prediction}, as compared to the setting with comonotonic copula. }This highlights that MDM's performance drops when the utilities are very positively correlated, as observed in the case of the comonotonic copula. These findings are consistent with the theoretical results in \cite{natarajan2009persistency}, where the extremal distribution of the MDM formulation tends to negatively correlate the utilities, making it less effective in estimation and prediction for scenarios with very positively correlated utilities.

\begin{figure}[htb!]
    \centering
    \begin{minipage}{0.45\textwidth}
        \centering        %\includegraphics[width=\textwidth]
        \includegraphics[scale=0.42]{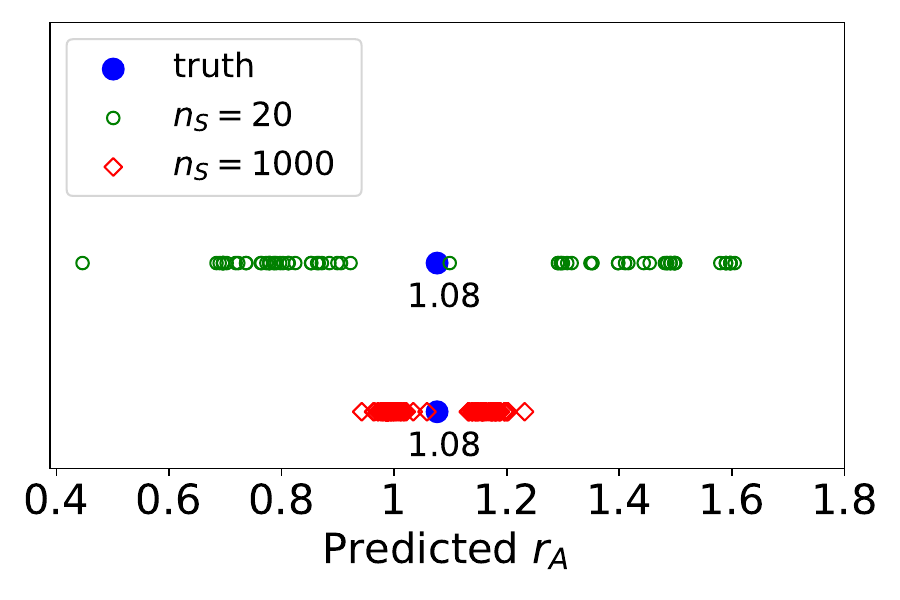}
        \caption{Illustraction of prediction intervals for an unseen assortment under sampling uncertainty }\label{fig:sample_uncertainty}
    \end{minipage}
    \hfill
    \begin{minipage}{0.54\textwidth}
    \vspace{-1cm}
\begin{table}[H]
\centering
\caption{Average absolute deviation loss $(10^{-4})$ of MDM with parametric RUMs instances under different correlations} 
\label{tab:hev_couplas_estimation} 
\begin{tabular}{cc|c|c}
\hline
\begin{tabular}[c]{@{}c@{}}$n_\cS$\\ $\geq$ \end{tabular} & $|\cS|$ & \begin{tabular}[c]{@{}c@{}}independent coupla\\ loss (10$^{-4}$)\end{tabular} & \begin{tabular}[c]{@{}c@{}}comonotonic coupla\\ loss (10$^{-4}$)\end{tabular} \\ \hline
20 & 29  & 0.714 (0.098)  & 5.49 (0.875)   \\ \hline
30 & 24  & 0.558 (0.087)  & 4.36 (0.791)   \\ \hline
40 & 19  & 0.364 (0.079)  & 2.99 (0.617)   \\ \hline
50 & 15  & 0.244 (0.061)  & 1.72 (0.564)   \\ \hline
60 & 13  & 0.198 (0.058)  & 1.23 (0.512)   \\ \hline
\end{tabular}
\vspace{0.15cm}
\begin{flushleft}\footnotesize Notes. Standard errors are reported in parentheses. \end{flushleft}
\end{table}
    \end{minipage}
\end{figure}
}
\vspace{-20pt}
\begin{table}[htb!]
\centering
\caption{Prediction performance of MDM with parametric RUMs instances under different correlations} 
\label{tab:hev_couplas_prediction} 
\scalebox{0.9}{\begin{tabular}{ccc|cc|cc}
\hline
\multirow{2}{*}{\begin{tabular}[c]{@{}c@{}}$n_\cS$\\ $\geq$ \end{tabular}} & \multirow{2}{*}{$|\cS|$} & \multirow{2}{*}{$|\mathcal{A}|$} & \multicolumn{2}{c|}{Kendall Tau distance}                     & \multicolumn{2}{c}{\begin{tabular}[c]{@{}c@{}}relative error in revenue of \\ the top ranked assortment (\%)\end{tabular}} \\ \cline{4-7} 
                                     &                      &                      & \multicolumn{1}{c|}{independent copula} & comonotonic copula & \multicolumn{1}{c|}{independent copula}        & comonotonic copula       \\ \hline
20                                   & 24                   & 5                    & \multicolumn{1}{c|}{0.68 (0.14)}       & 1.02 (0.23)         & \multicolumn{1}{c|}{1.40 (0.69)}                                       & 5.00 (2.15)                                       \\ \hline
30                                   & 20                   & 4                    & \multicolumn{1}{c|}{0.36 (0.07)}        & 0.5 (0.10)          & \multicolumn{1}{c|}{1.32 (0.60)}                                       & 4.34 (1.87)                                       \\ \hline
40                                   & 16                   & 3                    & \multicolumn{1}{c|}{0.26 (0.07)}        & 0.16 (0.05)         & \multicolumn{1}{c|}{0.77 (0.56)}                                       & 2.38 (1.15)                                       \\ \hline
50                                   & 12                   & 3                    & \multicolumn{1}{c|}{0.36 (0.088)}       & 0.46 (0.11)         & \multicolumn{1}{c|}{1.60 (0.76)}                                       & 5.36 (2.29)                                       \\ \hline
60                                   & 11                   & 2                    & \multicolumn{1}{c|}{0.2 (0.06)}         & 0.12 (0.05)         & \multicolumn{1}{c|}{8.45 (2.63)}                                       & 4.07 (2.26)                                       \\ \hline
\end{tabular}}
\vspace{0.15cm}
\begin{flushleft}\footnotesize Notes. Standard errors are reported in parentheses. \end{flushleft}
\end{table}

\section{Conclusions}
\label{sec:conclusion}
We identify the first-known sufficient and necessary conditions to verify whether a set of given choice data can be represented by MDM. Besides being verifiable in polynomial time, these representable conditions lead to (i) a mixed integer linear program to predict the revenue or choice probability of alternatives for an unseen assortment, and (ii) a mixed integer convex program that can give the closest fitting MDM choice probabilities to the given dataset. 
%We extend the representable conditions, prediction, and limit formulations to the novel grouped-MDM, which allows the flexibility of grouping the alternatives based on the distribution of the random utilities. 
{\color{black} Our numerical experiments with both real and synthetic datasets show that MDM offers competitive representational power and prediction accuracy when compared to RUM and its parametric subclasses, while also being computationally more efficient. This highlights MDM’s potential as a practical alternative for modeling choice data, balancing both performance and computational effort.}

\bibliographystyle{abbrvnat}
\bibliography{ref1.bib}

\ECSwitch

%\ECDisclaimer
%%%%%%%%%%%%%%%%%%%%%%%%%%%%%%%%%%%%%%%%%%%%%%%%%%%%%%%%%%

%%% Main head for the e-companion
\ECHead{E-companion}
The E-companion is organized as follows. 
Sections \ref{sec:pf_sec2} to \ref{sec:pf_limit} present the proofs of the results, respectively, in Sections \ref{sec:mdm-rep-power}, \ref{sec:prediction-mdm}, and \ref{sec:lom}. 
Section \ref{sec:eg} collects and presents all the illustrative examples mentioned in the paper. 
Section \ref{alg:limit} provides an algorithm to evaluate the limit of MDM in \eqref{model:lom}. 
Section \ref{sec:implementation} details the implementation of the experiments. 
Section \ref{sec:synthetic_data_generation} presents the details for the synthetic data generations for experiments in Section \ref{sec:synthetic_exp} and Section \ref{sec:exp_limitation}.
\textcolor{black}{Section \ref{sec:addtional_exp_robustness} provides synthetic data experiments that validate the robustness of the numerical findings across diverse assortment structures.}  %in Section \ref{sec:synthetic_exp}.}
Sections  \ref{sec:addtional_exp_results} and  \ref{sec:addtional_exp_results_real} present additional numerical results with synthetic and real data respectively.

%Section \ref{sec:gmdm} extends the representable conditions, prediction, and limit formulations to the novel grouped MDM, which allows the flexibility of grouping the alternatives based on the distribution of the random utilities. Section \ref{sec:gmdm_add_exp} gives numerical results for grouped MDM. Section \ref{pf:gmdm} and \ref{sec:gmdm_examples} present all proofs and illustrative examples for grouped MDM respectively, and Section \ref{sec:gmdm_exp_imp} provides the implementation details of the experiments for the grouped MDM.

\section*{Table of Contents for the E-companion}  % Manual ToC Title
\noindent  % Remove indentation from lists
\begin{description}  % Use description instead of itemize for cleaner formatting
    \item[\textnormal{EC.1.}] \hyperref[sec:pf_sec2]{Proofs of the Results in Section \ref{sec:mdm-rep-power}} \dotfill \pageref{sec:pf_sec2}
    \begin{description}
        \item[\textnormal{EC.1.1}] \hyperref[pf:thm_mdm_measure]{Proof of Theorem \ref{thm:mdm_positive_measure}}\dotfill \pageref{pf:thm_mdm_measure}
        \item[\textnormal{EC.1.2}] \hyperref[pf:Regularity]{Proof of Lemma \ref{Regularity}}\dotfill \pageref{pf:Regularity}
        \item[\textnormal{EC.1.3}] \hyperref[pf:mdm-rum-relation]{Proof of Theorem \ref{thm:mdm-rum-relation}}\dotfill \pageref{pf:mdm-rum-relation}
        \item[\textnormal{EC.1.4}] \hyperref[pf:general_iia]{Proof of Proposition \ref{prop:general_iia}}\dotfill \pageref{pf:general_iia}
    \end{description}
    
    \item[\textnormal{EC.2.}] \hyperref[sec:pf_sec3]{Proofs of the Results in Section \ref{sec:prediction-mdm}}
    \begin{description}
        \item[\textnormal{EC.2.1}] \hyperref[pf:wc-rev-reform]{Proof of Proposition \ref{prop:wc-rev-reform}}\dotfill \pageref{pf:wc-rev-reform}
        \item[\textnormal{EC.2.2}] \hyperref[pf:wc-rev-milp]{Proof of Proposition \ref{prop:wc-rev-milp}}\dotfill \pageref{pf:wc-rev-milp}
        \item[\textnormal{EC.2.3}] \hyperref[pf:prediction_structure]{Proof of Corollary \ref{cor:prediction_structure}}\dotfill \pageref{pf:prediction_structure}
        \item[\textnormal{EC.2.4}] \hyperref[pf:prediction_common_product]{Proof of Proposition  \ref{prop:prediction_common_product}}\dotfill \pageref{pf:prediction_common_product}
        \item[\textnormal{EC.2.5}] \hyperref[pf:prediction_nest]{Proof of Corollary  \ref{cor:prediction_nest}}\dotfill \pageref{pf:prediction_nest}
    \end{description}
    
    \item[\textnormal{EC.3.}] \hyperref[sec:pf_limit]{Proofs of the Results in Section \ref{sec:lom}} \dotfill \pageref{sec:pf_limit}
    \begin{description}
        \item[\textnormal{EC.3.1}] \hyperref[pf:limit-mdm]{Proof of Proposition \ref{prop:limit-mdm}}\dotfill \pageref{pf:limit-mdm}
        \item[\textnormal{EC.3.2}] \hyperref[pf:np]{Proof of Theorem \ref{thm:NP}}\dotfill \pageref{pf:np}
        \item[\textnormal{EC.3.3}] \hyperref[pf:micp]{Proof of Proposition \ref{prop:micp}}\dotfill \pageref{pf:micp}
    \end{description}

    \item[\textnormal{EC.4}] \hyperref[sec:eg]{Illustrative Examples for Section \ref{sec:lom} } \dotfill \pageref{sec:eg}
    \begin{description}
        \item[\textnormal{EC.5.1}] \hyperref[sec:mdm_nonconvexity]{An example to show the non-convexity of MDM feasible region}\dotfill \pageref{sec:mdm_nonconvexity}
        \item[\textnormal{EC.5.2}] \hyperref[sec:mdm_feasible_prob]{An example to show using \eqref{mdm:perturb} to get MDM-representable probabilities}\dotfill \pageref{sec:mdm_feasible_prob}
    \end{description}
    
    \item[\textnormal{EC.5}] \hyperref[alg:limit]{An Algorithm for Evaluating the Limit of MDM when $\vert \cS \vert$ is Small} \dotfill \pageref{alg:limit}

    \item[\textnormal{EC.6}] \hyperref[sec:implementation]{Additional Useful Details on the Experiments in the Paper} \dotfill \pageref{sec:implementation}
    \begin{description}
        \item[\textnormal{EC.6.1}] \hyperref[sec:implementation]{Representation test experiment implementation details}\dotfill \pageref{sec:implementation}
        \begin{description}
            \item[\textnormal{EC.6.1.1}]  \hyperref[sec:representation_mdm]{Checking the representability of MDM}\dotfill \pageref{sec:representation_mdm}
            \item[\textnormal{EC.6.1.2}]  \hyperref[sec:representation_rum]{Checking the representability of RUM}\dotfill \pageref{sec:representation_rum}
            \item[\textnormal{EC.6.1.3}]  \hyperref[sec:representation_mnl]{Checking the representability of MNL}\dotfill \pageref{sec:representation_mnl}
        \end{description}
        
        \item[\textnormal{EC.6.2}] \hyperref[sec:limit_exp]{Limit experiment implementation details}\dotfill \pageref{sec:limit_exp}
        \begin{description}
            \item[\textnormal{EC.6.2.1}]  \hyperref[sec:limit_exp_mdm]{Limit of MDM}\dotfill \pageref{sec:limit_exp_mdm}
            \item[\textnormal{EC.6.2.2}]  \hyperref[sec:limit_exp_rum]{Limit of RUM}\dotfill \pageref{sec:limit_exp_rum}
            \item[\textnormal{EC.6.2.3}]  \hyperref[sec:limit_exp_mnl]{Limit of MNL}\dotfill \pageref{sec:limit_exp_mnl}
            \item[\textnormal{EC.6.2.4}]  \hyperref[sec:limit_exp_mccm]{Limit of MCCM}\dotfill \pageref{sec:limit_exp_mccm}
            \item[\textnormal{EC.6.2.5}]  \hyperref[sec:limit_exp_lcmnl]{Limit of LC-MNL}\dotfill \pageref{sec:limit_exp_lcmnl}
        \end{description}

        \item[\textnormal{EC.6.3}] \hyperref[sec:prediction_exp]{Prediction experiment implementation details}\dotfill \pageref{sec:prediction_exp}
            \begin{description}
                \item[\textnormal{EC.6.3.1}]  \hyperref[sec:prediction_exp_mdm]{Revenue prediction with MDM using estimate-then-predict}\dotfill \pageref{sec:prediction_exp_mdm}
                \item[\textnormal{EC.6.3.2}]  \hyperref[sec:prediction_exp_mdm]{Revenue prediction with MDM under data uncertainty}\dotfill \pageref{sec:prediction_exp_mdm_ci}
                \item[\textnormal{EC.6.3.3}]  \hyperref[sec:prediction_exp_rum]{Revenue prediction with RUM}\dotfill \pageref{sec:prediction_exp_rum}
                \item[\textnormal{EC.6.2.4}]  \hyperref[sec:prediction_exp_mnl_lc_mnl]{Revenue prediction with MNL, LC-MNL}\dotfill \pageref{sec:prediction_exp_mnl_lc_mnl}
                \item[\textnormal{EC.6.3.5}]  \hyperref[sec:limit_exp_mccm]{Revenue prediction with MCCM}\dotfill \pageref{sec:prediction_exp_mccm}
                \item[\textnormal{EC.6.3.6}]  \hyperref[sec:prediction_exp_ranking]{Average Ranking of Models Based on Prediction Performance}\dotfill \pageref{sec:prediction_exp_ranking}
            \end{description}
    \end{description}

    \item[\textnormal{EC.7}] \hyperref[sec:synthetic_data_generation]{Details on Synthetic Data Generation } \dotfill \pageref{sec:synthetic_data_generation}
    \begin{description}
        
        \item[\textnormal{EC.7.1}] \hyperref[sec:synthetic_data_multinational_draws]{A general procedure of generating instances for synthetic data experiments in Section \ref{sec:synthetic_exp} and Section \ref{sec:exp_sampling_uncertainty}}\dotfill \pageref{sec:synthetic_data_multinational_draws}
        \begin{description}
            \item[\textnormal{EC.7.1.1}] \hyperref[sec:data_gen_hev]{HEV instance in Section \ref{sec:synthetic_exp}}\dotfill \pageref{sec:data_gen_hev}
            \item[\textnormal{EC.7.1.2}] \hyperref[sec:data_gen_Probit]{Probit instance with negative correlations in utilities in Section \ref{sec:synthetic_exp}}\dotfill \pageref{sec:data_gen_Probit}
        \end{description}

        \item[\textnormal{EC.7.2}] \hyperref[sec:data_gen_true_hev_probit]{Data Generation for experiments in Section \ref{sec:discuss_representability}}\dotfill \pageref{sec:data_gen_true_hev_probit}

        \item[\textnormal{EC.7.3}] \hyperref[sec:data_gen_mem]{Data generation for experiments in Section \ref{sec:exp_sampling_uncertainty}}\dotfill \pageref{sec:data_gen_mem}

        \item[\textnormal{EC.7.4}] \hyperref[sec:coupla_data]{Data generation for experiments in Section \ref{sec:exp_correlation}}\dotfill \pageref{sec:coupla_data}

    \end{description}

     \item[\textnormal{EC.8}] \hyperref[sec:addtional_exp_robustness]{Robustness of numerical results in Section \ref{sec:synthetic_exp}} \dotfill \pageref{sec:addtional_exp_robustness}
    
    \item[\textnormal{EC.9}] \hyperref[sec:addtional_exp_results]{Additional numerical results with synthetic data} \dotfill \pageref{sec:addtional_exp_results}
    \begin{description}
        
        \item[\textnormal{EC.9.1}] \hyperref[sec:addtional_exp_results_rep]{The representation power and tractability of MDM compared to RUM and MNL}\dotfill \pageref{sec:addtional_exp_results_rep}

        \item[\textnormal{EC.9.2}] \hyperref[sec:addtional_exp_results_prediction]{Revenue and choice probability prediction with nonparametric MDM}\dotfill \pageref{sec:addtional_exp_results_prediction}

        \item[\textnormal{EC.9.3}] \hyperref[sec:addtional_exp_results_estimation]{Estimation performance of MDM compared to RUM and MNL}\dotfill \pageref{sec:addtional_exp_results_estimation}

        \item[\textnormal{EC.9.4}] \hyperref[sec:representability_outside]{Impact on explanatory abilities of models with common alternatives in the assortments}\dotfill \pageref{sec:representability_outside}

    \end{description}
    
    \item[\textnormal{EC.10}] \hyperref[sec:addtional_exp_results_real]{Additional numerical results with real data} \dotfill \pageref{sec:addtional_exp_results_real}

\end{description}
\newpage

\section{Proofs of the Results in Section \ref{sec:mdm-rep-power}}
\label{sec:pf_sec2}

\subsection{Proof of Theorem \ref{thm:mdm_positive_measure}}
\label{pf:thm_mdm_measure}
To prove MDM possesses positive measure in Theorem \ref{thm:mdm_positive_measure}, we first present Lemma \ref{lemma:mdm_distinct} below that deals with choice probabilities of the form obtained by the MNL model.
\begin{lemma}\label{lemma:mdm_distinct}
For any fixed $n$, let $S,T \subseteq \cN$ with $|S|,|T|\geq 2$. Let $i \in S\cap T.$ Then there exists a set of positive integers $x_1,...,x_n$ such that 
\begin{align}
\begin{aligned}
     \frac{x_i}{\sum_{k \in S} x_k } \neq \frac{x_i}{\sum_{k \in T} x_k },\label{eq:mdm_distinct}
\end{aligned}
\end{align}

as long as $S \neq T$.     
\end{lemma}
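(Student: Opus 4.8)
The plan is to reduce the claimed inequality of choice probabilities to an elementary statement about subset sums, and then to settle that statement with an explicit choice of integers via powers of two.

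First I would observe that since every $x_k$ is a \emph{positive} integer, the common numerator $x_i$ is nonzero. Hence the two fractions $x_i/\sum_{k \in S} x_k$ and $x_i/\sum_{k \in T} x_k$ are equal if and only if their denominators agree. Thus \eqref{eq:mdm_distinct} holds precisely when $\sum_{k \in S} x_k \neq \sum_{k \in T} x_k$, and it suffices to construct positive integers realizing this denominator inequality whenever $S \neq T$. Here the hypotheses $|S|,|T| \geq 2$ and $i \in S \cap T$ are used only to guarantee that the choice probability of $i$ is well defined in both assortments, so they play no role beyond making the statement meaningful.

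Next I would make the explicit choice $x_k = 2^{k}$ for each $k \in \cN = \{1,\ldots,n\}$. For any subset $R \subseteq \cN$ the sum $\sum_{k \in R} 2^{k}$ is the integer whose binary representation has ones in exactly the positions indexed by $R$; by uniqueness of binary representations, the map $R \mapsto \sum_{k \in R} 2^{k}$ is injective on subsets of $\cN$. Consequently, whenever $S \neq T$ we obtain $\sum_{k \in S} 2^{k} \neq \sum_{k \in T} 2^{k}$, which is exactly the denominator inequality identified in the first step. Combining the two steps yields \eqref{eq:mdm_distinct}, completing the argument.

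Because the construction is completely explicit, there is no genuine obstacle to overcome; the only point requiring care is the initial reduction, namely noting that $x_i \neq 0$ lets us pass from an inequality of fractions to an inequality of denominators. Any strictly increasing, sufficiently fast-growing (superincreasing) sequence of positive integers would serve equally well in place of the powers of two, since such sequences also have pairwise distinct subset sums.
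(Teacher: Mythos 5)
Your proposal is correct and takes essentially the same approach as the paper: both reduce \eqref{eq:mdm_distinct} to the inequality $\sum_{k\in S} x_k \neq \sum_{k\in T} x_k$ and realize it with $x_k = 2^k$. The paper even remarks that uniqueness of binary representations makes the claim evident, merely supplementing this with an explicit geometric-series estimate for rigor, which your appeal to injectivity of $R \mapsto \sum_{k\in R} 2^k$ replaces without loss.
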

\begin{proof}
We prove Lemma \ref{lemma:mdm_distinct} holds for a set of positive integers such that $x_k = 2^k$ when $k\geq 1.$ When $S \neq T$, to show $\frac{x_i}{\sum_{k \in S} x_k } \neq \frac{x_i}{\sum_{k \in T} x_k },$ it's equivalent to show $\sum_{k \in S} x_k \neq \sum_{k \in T} x_k.$ It's obvious that $\sum_{k \in S} x_k \neq \sum_{k \in T} x_k$ when $S\subset T$ or $T \subset S$ for the selected $x_k$ values. Next, we prove that $\sum_{k \in S} x_k \neq \sum_{k \in T} x_k$ when $S\not\subset T$ or $T \not\subset S.$ Let $k_1 = \argmax_{k\in S\setminus T} x_k $ and $k_2 = \argmax_{k\in T \setminus S} x_k.$ Without loss of generality, let $k_1 > k_2.$ Then, we have $k_1 - k_2 \geq 1.$ We have 
\begin{align*}
\sum_{k \in S} x_k - \sum_{k \in T} x_k \geq x_{k_1} - \sum_{k \in T \setminus S} x_k \geq x_{k_1} - \sum_{i=1}^{k_2} x_i = 2^{k_1} - \frac{2*(1-2^{k_2})}{1-2} = 2^{k_1} - 2^{k_2 + 1}+2>0.
\end{align*}
The first inequality is due to $k_1 \in S\setminus T,$ and the second inequality is due to $T \setminus S \subseteq \{1,\cdots k_2\}.$ The first equality is due to the formula of the sum of geometric series. This completes the proof. 
\end{proof}

Recall that the probability of choosing product $i$ in assortment $S$ under an MNL model is $p_{i,S} = {\frac{e^{\nu_i}}{ \sum_{j \in S} e^{\nu_j}} }.$ Then, there exists $\nu_i = \ln x_i$ for all $i\in \cN$, such that the instance in Lemma \ref{lemma:mdm_distinct} is an MNL instance. 
\begin{proof}{\color{black} Equipped with Lemma \ref{lemma:mdm_distinct}, we prove that MDM possesses positive measure as follows:
Let $\bp_\cS$ be an instance generated as in Lemma \ref{lemma:mdm_distinct}. 
Then, such $\bp_\cS$ is an MNL instance and satisfies $p_{i,S} >0$, $\forall (i,S) \in \cI_\cS$ and $p_{i,S} \neq p_{i,T}$, $\forall (i,S),(i,T)\in \cI_\cS .$
Since MNL is a special case of MDM (\cite{mishra2014theoretical}), $\bp_\cS$ is MDM-representable. 

We next show that any instance $\bp_\cS^\prime$ that lies in the ball centered at $\bp_\cS$ with a specified radius $\epsilon >0$ is an MDM instance. Let $0<\epsilon < \min_{ (k,S),(k,T)\in \cI_\cS} |p_{k,S}-p_{k,T}|$.  We perturb $\bp_\cS$ to be $\bp_\cS^\prime$ as follows: Arbitrarily choosing $(i,S),(j,S)\in \cI_\cS$, let 
\begin{align*}
p_{i,S}^\prime= p_{i,S} + \eps, \mbox{ and }  p_{j,S}^\prime= p_{j,S} - \eps,
\end{align*}
and keep other entries of $\bp_\cS^\prime$ the same as $\bp_\cS$ by setting
\begin{align*}
p_{k,T}^\prime= p_{k,T} , \ \forall (k,T) \in \cI_\cS, \mbox{ with }  (k,T) \neq (i,S) \mbox{ and }  (k,T) \neq (j,S).
\end{align*}
Then, $\bp_\cS^\prime$  lies in the ball centered at $\bp_\cS$ with the radius $\epsilon >0$ and we have $\sum_{i\in S } p^\prime_{i,S} = 1$, for all $S\in \cS$. Next, we show $\bp_\cS^\prime$ is MDM-representable by showing that $\bp_\cS^\prime$ satisfies the MDM-representable conditions \eqref{eq:mdm-feascon} in Theorem \ref{thm:feascon-mdm}.

Since $\bp_\cS$ is MDM-representable, we have 
\begin{align*}
 \lambda_S  > \lambda_T  \mbox{ if } p_{i,S} < p_{i,T} \; \forall (i,S),(i,T)\in \cI_\cS,\mbox{ and } \lambda_T  > \lambda_S  \mbox{ if } p_{j,T} < p_{j,S} \; \forall (j,S),(j,T)\in \cI_\cS.
\end{align*}
Since $ \epsilon < \min_{ (k,S),(k,T)\in \cI_\cS}  |p_{k,S}-p_{k,T}|,$ we have 
\begin{align*}
    \lambda_S  > \lambda_T  \mbox{ if } p_{i,S} + \epsilon < p_{i,T} \; \forall (i,S),(i,T)\in \cI_\cS, \mbox{ and }  \lambda_T  > \lambda_S  \mbox{ if } p_{j,T} < p_{j,S} - \epsilon \; \forall (j,S),(j,T)\in \cI_\cS.
\end{align*}
By the construction of $\bp_\cS^\prime$, equivalently, we have 
\begin{align*}
    \lambda_S  > \lambda_T  \mbox{ if } p_{i,S}^\prime< p_{i,T}^\prime\; \forall (i,S),(i,T)\in \cI_\cS, \mbox{ and }  \lambda_T  > \lambda_S  \mbox{ if } p_{j,T}^\prime< p_{j,S}^\prime\; \forall (j,S),(j,T)\in \cI_\cS.
\end{align*}
Thus, $\bp_\cS^\prime$ is  MDM-representable.

We next prove the choice probabilities represented by the MNL model and nested logit model
%, and latent class MNL 
possess zero Lebesgue measure one by one as follows.
\begin{enumerate}[leftmargin=*]
    \item \textit{Measure zero of MNL.}
    To show the choice probabilities represented by MNL possess zero Lebesgue measure, it suffices to show that, for any $n\geq 3,$ there exists an assortment collection $\cS$ such that $\mu(\pmnl(\cS))=0$. Consider the nested assortment collection, $\cS=\{S_1,S_2,\cdots,S_n\}$ with $S_i = \{1,2,\cdots,i\}$, $\forall i\leq n.$ We have 
    \begin{align*}
        \pmnl(\cS) = \{ (x_{i,S}: i\in S, S\in \cS) | & x_{i,S} \geq 0, \forall i \in S,\forall S\in \cS, \, \sum_{i \in S} x_{i,S} = 1, \forall S\in \cS, \\
        &\frac{x_{i,S}}{x_{j,S}} = \frac{x_{i,T}}{x_{j,T}},\, \forall i,j\in S\cap T, \forall S,T \in \cS \}.
    \end{align*}
    We define the following set $B$ by reducing the constraints in $\pmnl(\cS)$:
    \begin{align*}
        \pmnl(\cS) \subseteq B:= \{ (x_{i,S}: i\in S, S\in \cS) | & x_{i,S} \geq 0, \forall i \in S,\forall S\in \cS, \, \sum_{i \in S} x_{i,S} = 1, \forall S\in \cS, \\
        &\frac{x_{1,S_{n-1}}}{x_{2,S_{n-1}}} = \frac{x_{1,S_n}}{x_{2,S_n}} \}.
    \end{align*}
    Consider $\prod_{S \in \cS}\Delta_S$ where $\cS=\{S_1,S_2,\cdots,S_n\}$ with $S_i = \{1,2,\cdots,i\}$, $\forall i\leq n,$ as a linear system with $\frac{n(n+1)}{2}$ nonnegative variables and $n$ constraints where the constraints defined by the probability simplexes. We can see that the constraints are linearly independent because of the nested structure of the collection. It follows that $\text{dim}(\prod_{S \in \cS}\Delta_S) = \frac{n(n+1)}{2} - n = \frac{n^2-n}{2}.$  $\text{dim}(B) = \frac{n(n+1)}{2} - n = \frac{n^2-n}{2} - 2$, since $\frac{x_{1,S_{n-1}}}{x_{2,S_{n-1}}} = \frac{x_{1,S_n}}{x_{2,S_n}}$ is a surface in 2-dimensional space. Since  $\text{dim}(\pmnl(\cS)) \leq \text{dim}(B) < \text{dim}(\prod_{S \in \cS}\Delta_S),$ $\mu\big(\pmnl(\cS)\big)=0$ on $\prod_{S \in \cS}\Delta_S$.

    \item \textit{Measure zero of nested logit.} 
    Consider the same nested assortment collection as above with $n\geq 3.$
    The assumption on the nested logit model is that $n$ alternatives are partitioned into nests. Within a nest, the IIA property holds.

    \begin{enumerate}[label=(\alph*),leftmargin=*]
        \item When the nests are singletons or when all products are in the same nest, the nested logit model reduces to MNL. $\pnl(\cS) = \pmnl(\cS)$. Thus, $\mu(\pnl(\cS))= \mu(\pmnl(\cS))=0$.
        \item When the number of nests is greater than 2 and less than $n$, there are at least 2 products in the same nest. This implies that there exists $i,j\in \cN$, such that $i,j$ are in the same nest. We have $i,j \in S_i \cap S_j$ because of the nested collection structure. We define the following set C by reducing the constraints in $\pnl(\cS)$:
        \begin{align*}
        \pnl(\cS) \subseteq C:= \{ (x_{i,S}: i\in S, S\in \cS) | & x_{i,S} \geq 0, \forall i \in S,\forall S\in \cS, \, \sum_{i \in S} x_{i,S} = 1, \forall S\in \cS, \\
        &\frac{x_{i,S_i}}{x_{j,S_i}} = \frac{x_{i,S_j}}{{x_{j,S_j}}} \}.
    \end{align*}
       We see that $\text{dim}(C) = \text{dim}(B)$. In this case, $\mu(\pnl(\cS))= 0$ on $\prod_{S \in \cS}\Delta_S$.
    
    \end{enumerate}
    To sum up, for all possible nests, $\mu(\pnl(\cS))= 0.$ There are at most $2^n$ nests. The countable union of zero measure sets has measure zero. Thus, $\mu(\pnl(\cS))= 0$ on $\prod_{S \in \cS}\Delta_S$.
    
    %\item \textit{Measure zero of latent class MNL.} Latent class MNL has measure zero for the following two main reasons. Firstly, We have shown that the choice probabilities represented by MNL possess zero Legesgue measure. Since each latent class MNL takes a convex combination of the choice probabilities of MNL, this results in measure zero of the set of choice probabilities represented by each latent class MNL. Secondly, the number of all possible sets of latent class MNL choice probabilities is countable because the number of latent classes is countable. The countable union of zero measure sets has measure zero. Thus, $\mu(\plcmnl(\cS))= 0$ on $\prod_{S \in \cS}\Delta_S$ for $\cS$ in case 1. 
\end{enumerate}}
\end{proof}

\subsection{Proof of Lemma \ref{Regularity}}
\label{pf:Regularity}
\begin{proof}
We first prove a) in Lemma \ref{Regularity} by contradiction. From Theorem \ref{thm:feascon-mdm}, if there exists some alternative $i$ such that $p_{i,S} > p_{i,S \cap T}$, then we have $\lambda_S<\lambda_{S \cap T}$, which is equivalent to $\lambda_S < \lambda_{S \cap T} $ for all $ j \in S \cap T$. This implies $\lambda_S \leq \lambda_{S \cap T} $ which gives $p_{j,S} \geq p_{j,S \cap T}$ for all $(j,S),(j,S\cap T)\in \cI_\cS$. Since $\sum_{j\in S} p_{j,S} = 1$, we get $\sum_{j\in S \cap T} p_{j,S \cap T} < 1$ contradicting the condition $\sum_{j\in S \cap T} p_{j,S \cap T} = 1$. For b), from Theorem \ref{thm:feascon-mdm}, if $i,j\in S\cap T$, we have
% \begin{align*}
    $p_{j,S} < p_{j,T} \Rightarrow \lambda_S > \lambda_T \Rightarrow \lambda_S \geq  \lambda_T \Rightarrow    p_{i,S} \leq p_{i,T}.$
% \end{align*}
\end{proof}

\subsection{Proof of Theorem \ref{thm:mdm-rum-relation}}
\label{pf:mdm-rum-relation}

\begin{proof}
\textit{We prove $a)$ of Theorem \ref{thm:mdm-rum-relation} first.} We use the following notations for the rank list model since any RUM can be described by a rank list model \citeg{block1959random}. Let $\Sigma_n$ denote the set of all permutations of $n$ alternatives. Each element $\sigma\in \Sigma_n$ denotes a ranking of $n$ alternatives. For instance, $\sigma = \{1\succ 2\succ 3\}$ means alternative 1 is more preferred than alternative 2 which is more preferred than alternative 3. The probability of each ranking is $P(\sigma)$ and $\sum_{\sigma\in \Sigma_n} P(\sigma) = 1$. We prove the result case by case.
%\begin{enumerate}[label=(\alph*),leftmargin=*]
\begin{enumerate}[leftmargin=*]
    \item $n=2$: Here $\pmdm(\cS) = \prum(\cS)$. This is straightforward since all probabilities that satisfy $0 \leq p_{1,\{1,2\}} \leq  p_{1,\{1\}} =1$, and $ 0 \leq p_{2,\{1,2\}} \leq  p_{2,\{2\}} =1$ where $p_{1,\{1,2\}} +p_{2,\{1,2\}} =1,$
    % 0
    are representable by both models.

    \item $n=3$: Lemma \ref{Regularity} implies that $\pmdm(\cS) \subseteq \preg(\cS)$. When $n=3$, all the possible assortments include $\{1,2\} , \{1,3\}, \{2,3\},\{1,2,3\}$. The regular model imposes the following regularity constraints:
    \begin{align*}
    & p_{1, \{1,2\} } \geq p_{1, \{1,2,3\} },\  p_{2, \{1,2\} } \geq p_{2, \{1,2,3\} },\\
    & p_{1, \{1,3\} } \geq p_{1, \{1,2,3\} },\ p_{3, \{1,3\} } \geq p_{3, \{1,2,3\} },\\
    & p_{2, \{2,3\} } \geq p_{2, \{1,2,3\} },\ p_{3, \{2,3\} } \geq p_{3, \{1,2,3\} }.
    \end{align*}
    $\prum(\cS) =  \preg(\cS)$ for any given $\cS$ since
    \begin{align*}
        P(\{1\succ 2\succ 3\}) = p_{2,\{2,3\}} - p_{2,\{1,2,3\}}\geq 0 \text{ and } P(\{1\succ 3\succ 2\}) = p_{3,\{2,3\}} - p_{3,\{1,2,3\}}\geq 0,\\
        P(\{2\succ 1\succ 3\}) = p_{1,\{1,3\}} - p_{1,\{1,2,3\}}\geq 0 \text{ and } P(\{2\succ 3\succ 1\}) = p_{3,\{1,3\}} - p_{3,\{1,2,3\}}\geq 0,\\
        P(\{3\succ 1\succ 2\}) = p_{1,\{1,2\}} - p_{1,\{1,2,3\}}\geq 0 \text{ and } P(\{3\succ 2\succ 1\}) = p_{2,\{1,2\}} - p_{2,\{1,2,3\}}\geq 0,
    \end{align*}
    where $ \sum_{\sigma\in \Sigma_n} P(\sigma) = 3 - 2 = 1$. 
    
    We next show that $\pmdm(\cS) \subset \prum(\cS)$ for $n = 3$ by giving an example of choice probabilities with $\cS = \{\{1,2,3\}, \{1,2\}, \{1,3\}, \{2,3\}\}$ in Table \ref{tab:notmdm-n=3} that can be represented by RUM but not by MDM.
\begin{table}[H]
\centering
\caption{Choice probabilities that cannot be represented by MDM for $n=3$.} \label{tab:notmdm-n=3}
\scalebox{0.8}{\begin{tabular}{|c|c|c|c|c|}
\hline
\begin{tabular}[c]{@{}c@{}}Alternative \end{tabular} & \multicolumn{1}{l|}{A=\{1,2,3\}} & \multicolumn{1}{l|}{B=\{1,2\}} & C=\{1,3\} & D=\{2,3\} \\ \hline
1                                                            & 1/3                              & 5/9                            & 4/9       &  -         \\ \hline
2                                                            & 1/3                              & 4/9                            &    -       & 5/9       \\ \hline
3                                                            & 1/3                              &     -                           & 5/9       & 4/9       \\ \hline
\end{tabular}}
\end{table}

This collection of choice probabilities $\bp_\cS$ cannot be represented by MDM because $p_{1,B} > p_{1,C}$, $p_{2,D} > p_{2,B}$, $p_{3,C}>p_{3,D}$ implies $\lambda_B<\lambda_C$, $\lambda_D < \lambda_B$ and $\lambda_C < \lambda_D$. This gives $\lambda_D < \lambda_B<\lambda_C<\lambda_D$ which is inconsistent. So, $\bp_\cS$ in Table \ref{tab:notmdm-n=3} cannot be represented by MDM. On the other hand, it is straightforward to check that by setting the ranking probabilities for RUM as follows: $P(\{1\succ 2\succ 3\}) = 2/9$, $P(\{1\succ 3\succ 2\}) = 1/9$, $P(\{2\succ 1\succ 3\}) = 1/9$, $P(\{2\succ 3\succ 1\}) = 2/9$, $P(\{3\succ 1\succ 2\}) = 2/9$, $P(\{3\succ 2\succ 1\}) = 1/9$,
% \begin{align*}
%     P(\{1\succ 2\succ 3\}) = 2/9 \text{ and } P(\{1\succ 3\succ 2\}) = 1/9 \text{ and }     P(\{2\succ 1\succ 3\}) = 1/9 \\
%     P(\{2\succ 3\succ 1\}) = 2/9  \text{ and }   P(\{3\succ 1\succ 2\}) = 2/9 \text{ and } P(\{3\succ 2\succ 1\}) = 1/9,
% \end{align*}
we obtain the choice probabilities in Table \ref{tab:notmdm-n=3}. This implies $\bp_\cS$ in table \ref{tab:notmdm-n=3} can be represented by RUM but not MDM.
\item $n \geq 4$: We show $\pmdm(\cS) \not\subset \prum(\cS)$ and $\prum(\cS) \not\subset \pmdm(\cS)$ by providing two examples: (1) $\bp_\cS$ can be represented by RUM but not MDM and (2) $\bp_\cS$ can be represented by MDM but not RUM when $\cS = \{ \{1,2,3,4\}, \{1,2,3\}, \{1,2,4\}, \{1,2\} \}$. The examples are provided for $n=4$. For larger $n$, we can add the alternatives in the assortments and set the choice probabilities for these added alternatives to be zero. 

\textit{To show $\pmdm(\cS) \cap \prum(\cS) \neq \emptyset$:}  Firstly, we observe that the multinomial logit choice probabilities can be obtained from both RUM and MDM. This follows from using independent and identically distributed Gumbel distributions for the joint distribution of the random parts of utilities for RUM \citeg{akiva} and identical exponential distributions for the marginals of the random parts of utilities for MDM \citeg{mishra2014theoretical}. Hence, the intersection between the two sets is nonempty for any $n$. 

\textit{To show $\prum(\cS) \not\subset \pmdm(\cS)$:} Consider the choice probabilities in Table \ref{tab:n=4case2}.
\begin{table}[H]
\centering
\caption{Choice probabilities can be represented by RUM but not by MDM for $n = 4$.}\label{tab:n=4case2}
\scalebox{0.8}{\begin{tabular}{ccccc}
\hline
\multicolumn{1}{|c|}{\begin{tabular}[c]{@{}c@{}}Alternative\end{tabular}} & \multicolumn{1}{c|}{A=\{1,2,3,4\}} & \multicolumn{1}{c|}{B=\{1,2,3\}} & \multicolumn{1}{c|}{C=\{1,2,4\}} & \multicolumn{1}{c|}{D=\{1,2\}} \\ \hline
\multicolumn{1}{|c|}{1}                                                                & \multicolumn{1}{c|}{3/20}           & \multicolumn{1}{c|}{7/20}         & \multicolumn{1}{c|}{1/4}         & \multicolumn{1}{c|}{1/2}      \\ \hline
\multicolumn{1}{|c|}{2}                                                                & \multicolumn{1}{c|}{3/20}           & \multicolumn{1}{c|}{1/4 }         & \multicolumn{1}{c|}{7/20}         & \multicolumn{1}{c|}{1/2}      \\ \hline
\multicolumn{1}{|c|}{3}                                                                & \multicolumn{1}{c|}{7/20}           & \multicolumn{1}{c|}{2/5}        & \multicolumn{1}{c|}{-}            & \multicolumn{1}{c|}{-}          \\ \hline
\multicolumn{1}{|c|}{4}                                                                & \multicolumn{1}{c|}{7/20}           & \multicolumn{1}{c|}{-}            & \multicolumn{1}{c|}{2/5}        & \multicolumn{1}{c|}{-}          \\ \hline
\end{tabular}}
\end{table}

This can be recreated by RUM using the distribution over the ranking as follows:
\vspace{-0.4cm}
\begin{table}[H]
\centering
\begin{tabular}{lll}
$P(\{1\succ 2 \succ 3 \succ4\})=1/40$ & $P(\{1\succ 2\succ 4\succ 3\})=1/40$ & $P(\{1\succ 3\succ 2 \succ4\})=1/40$\\
$P(\{1\succ 3 \succ4 \succ 2\})=1/40$ & $P(\{1 \succ4\succ 2\succ 3\})=1/40$ & $P(\{1 \succ4\succ 3\succ 2\})=1/40$\\
$P(\{2 \succ1\succ 3\succ 4\})=1/40$ & $P(\{2\succ 1\succ 4\succ 3\})=1/40$ & $P(\{2\succ 3\succ 1\succ 4\})=1/40$\\
$P(\{2\succ 3 \succ4 \succ1\})=1/40$ & $P(\{2\succ 4\succ 1 \succ3 \})=1/40$ & $P(\{2\succ 4 \succ 3\succ 1\})=1/40$\\
$P(\{3\succ 1\succ 2\succ 4\})=1/20$ & $P(\{3\succ 1\succ 4\succ 2\})=1/20$ & $P(\{3\succ 2\succ 1\succ 4\})=1/10$ \\
$P(\{3\succ 2 \succ4\succ 1\})=1/10$ & $P(\{3\succ 4\succ 1\succ 2\})=1/40$ &  $P(\{3 \succ4 \succ2 \succ 1\})=1/40$ \\
$P(\{4\succ 1\succ 3\succ 2\})=1/10$ & $P(\{4\succ 1\succ 3\succ 2\})=1/10$ & $P(\{4\succ 2\succ  1\succ 3\})=1/20$ \\
$P(\{4\succ 2\succ 3 \succ1\})=1/20$ & $P(\{4\succ 3\succ 1\succ 2\})=1/40$ & $P(\{4\succ 3\succ 2\succ 1\})=1/40$
\end{tabular}
\end{table}
\vspace{-0.4cm}
Now $p_{1,B} > p_{1,C}$ implies $\lambda_B < \lambda_C$ and $p_{2,B} > p_{2,C}$ implies $\lambda_B > \lambda_C$. Hence $\bp_\cS$ in Table \ref{tab:n=4case2} is not representable by MDM. 

\textit{To show $\pmdm(\cS) \not\subset \prum(\cS)$: } Next consider the choice probabilities in Table \ref{tab:n=4case3}.
\begin{table}[H]
\centering
\caption{Choice probabilities can be represented by MDM but not by RUM for $n = 4$.}\label{tab:n=4case3}
\scalebox{0.8}{\begin{tabular}{ccccc}
\hline
\multicolumn{1}{|c|}{\begin{tabular}[c]{@{}c@{}}Alternative\end{tabular}} & \multicolumn{1}{c|}{A=\{1,2,3,4\}} & \multicolumn{1}{c|}{B=\{1,2,3\}} & \multicolumn{1}{c|}{C=\{1,2,4\}} & \multicolumn{1}{c|}{D=\{1,2\}} \\ \hline
\multicolumn{1}{|c|}{1}                                                                & \multicolumn{1}{c|}{0.1}           & \multicolumn{1}{c|}{0.2}         & \multicolumn{1}{c|}{0.2}         & \multicolumn{1}{c|}{0.25}      \\ \hline
\multicolumn{1}{|c|}{2}                                                                & \multicolumn{1}{c|}{0.2}           & \multicolumn{1}{c|}{0.25}         & \multicolumn{1}{c|}{0.25}         & \multicolumn{1}{c|}{0.75}      \\ \hline
\multicolumn{1}{|c|}{3}                                                                & \multicolumn{1}{c|}{0.2}           & \multicolumn{1}{c|}{0.55}        & \multicolumn{1}{c|}{-}            & \multicolumn{1}{c|}{-}          \\ \hline
\multicolumn{1}{|c|}{4}                                                                & \multicolumn{1}{c|}{0.5}           & \multicolumn{1}{c|}{-}            & \multicolumn{1}{c|}{0.55}        & \multicolumn{1}{c|}{-}          \\ \hline
\end{tabular}}
\end{table}
Here $p_{1,A} < p_{1,B} = p_{1,C} < p_{1,D}$ implies $\lambda_A > \lambda_B = \lambda_C > \lambda_D$, and $p_{2,A} < p_{2,B} = p_{2,C} < p_{2,D}$ implies $\lambda_A > \lambda_B = \lambda_C > \lambda_D$, and $p_{3,A} < p_{3,B} $ implies $\lambda_A > \lambda_B$, and $p_{4,A} < p_{4,C} $ implies $\lambda_A > \lambda_C$. So we have $\lambda_A > \lambda_B = \lambda_C > \lambda_D$ which is easy to enforce and so $\bp_\cS$ can be represented by MDM. A necessary condition for $\bp_\cS$ to be representable by RUM are the Block-Marshak conditions provided in \citet{block1959random} (also see Theorem 1 in \citealt{fiorini2004short}). If the choice probabilities are representable by RUM, one of these conditions is given by $p_{1,A} + p_{1,D} \geq p_{1,B} + p_{1,C}$. Here $p_{1,A} + p_{1,D} = 0.1 + 0.25 = 0.35 < 0.4 = 0.2 + 0.2 = p_{1,B} + p_{1,C}$. So, $\bp_\cS$ is not representable by RUM.
\end{enumerate}

\textit{We prove $b)$ of Theorem \ref{thm:mdm-rum-relation} as follows.} We know that $\pmdm(\cS) = \prum(\cS) = \preg(\cS)$ when $n=2$ and $\prum(\cS) \subseteq \preg(\cS)$ and $\textnormal{closure}\big(\pmdm(\cS)\big) \subseteq \preg(\cS)$ for any $\cS$. To show $b)$, we just need to show $\preg(\cS) \subseteq \prum(\cS)$ and $\preg(\cS) \subseteq \textnormal{closure}\big(\pmdm(\cS)\big)$ when $\cS$ is nested or laminar.  It suffices to show $\preg(\cS)^\prime \subseteq \prum(\cS)$ and $\preg(\cS)^\prime \subseteq \textnormal{closure}\big(\pmdm(\cS)\big)$ when $\cS$ is nested or laminar, since $\preg(\cS) \subseteq \preg(\cS)^\prime$. 

 $\preg(\cS) \subseteq \textnormal{closure}\big(\pmdm(\cS)\big)$ with nested or laminar $\cS$:  Under a nested collection $\cS = \{S_1, S_2, \ldots, S_m\}$ with $S_1 \subset S_2 \subset \ldots \subset S_m$, we have
$$\preg(\cS)^\prime = \Big\{\bx \in \mathbb{R}^{\cI_\cS} : x_{i,S} \geq 0,\forall (i,S)\in \cI_\cS,\, \sum_{i\in S} x_{i,S} = 1,  \forall S \in \cS,  \, x_{i,S_k} \leq x_{i,S_j} \; \forall j,k\in[m], j<k,i\in S_i   \Big\},$$
where $[m]$ denotes $\{1,2,\ldots,m\}.$
Under a laminar collection, we have 
$$\preg(\cS)^\prime = \Big\{\bx \in \mathbb{R}^{\cI_\cS} : \, x_{i,S} \geq 0,\forall (i,S)\in \cI_\cS,\,\sum_{i} x_{i,S} = 1,  \forall S \in \cS,  \, x_{i,S} \leq x_{i,T} \; \forall T \subset S, (i,S),(i,T) \in \cI_\cS  \Big\}.$$

\textit{$\preg(\cS)^\prime \subseteq \textnormal{closure}\big(\pmdm(\cS)\big)$ with a nested or laminar $\cS$:} %Next, we show $\preg(\cS) \subseteq \textnormal{closure}\big(\pmdm(\cS)\big)$ with a nested or laminar $\cS$. 
We show that for any $\bp_\cS \in \preg(\cS)^\prime$, we can construct $\blam_\cS$ such that $(\bp_\cS, 
\blambda_\cS) \in \Pi_\cS^{'},$ where  
\begin{align*}
    \Pi_\cS^{'} := \Big\{ (\bx,\boldsymbol{\lambda}) \in \mathbb{R}^{\cI_\cS} \times  \mathbb{R}^{ \cS}:  \, & x_{i,S} \geq 0,\forall (i,S)\in \cI_\cS,\, \sum_{i\in S} x_{i,S} = 1,  \forall S \in \cS,  \; \\
    &\lambda_S \geq \lambda_T \text{ if } x_{i,S} \leq x_{i,T}, \forall (i,S), (i,T) \in \cI_\cS \Big\}.    
\end{align*}
%This would imply  $\bp_\cS \in \textnormal{closure}\big(\pmdm(\cS)\big)$. 

Suppose that $\cS = \{S_1, S_2,\cdots,S_m\}$ is a nested collection with $S_1\subset S_2\subset \ldots \subset S_m.$
Then we take any $\blambda_\cS$ satisfying $\lambda_{S_1} \leq \lambda_{S_2} \leq \cdots \leq \lambda_{S_m}.$   Since $p_{i,S_j} \geq p_{i,S_k}$ for any $j \leq k$ (due to $\bp_\cS \in \preg(\cS)^\prime$), the resulting 
 $(\bp_\cS, 
  \blambda_\cS) \in \Pi_\cS^{'}.$  As a result,  $\bp_\cS \in \textnormal{closure}\big(\pmdm(\cS)\big)$.

 If $\cS$ is laminar, we take any $\blambda_\cS$ such that $\lambda_{S} \leq \lambda_{T}$ if $S, T\in\cS$ with $S\subset T.$ Since $p_{i,S} \geq p_{i,T}$ due to the regularity $\bp_\cS \in \preg(\cS)^\prime,$ the resulting $(\bp_\cS, 
  \blambda_\cS) \in \Pi_\cS^{'}.$ Hence $\bp_\cS \in \textnormal{closure}\big(\pmdm(\cS)\big)$ and $\preg(\cS) \subseteq \preg(\cS) ^\prime \subseteq \textnormal{closure}\big(\pmdm(\cS)\big)$ with a nested or laminar $\cS.$

\textit{$\preg(\cS)^\prime \subseteq \prum(\cS)$ with a nested or laminar $\cS$:} %Given a nested or laminar collection $\cS$, to show $\preg(\cS) \subseteq \prum(\cS)$, 
We next show that, for any $\bp_\cS \in \preg(\cS)^\prime,$ there exists a probability distribution $(P(\sigma):\sigma \in \Sigma_n)$ such that $\bp_\cS \in \prum(\cS).$
\begin{enumerate}[label=(\arabic*),leftmargin=*]
\item For a nested collection $\cS$, we prove $\preg(\cS)^\prime \subseteq \prum(\cS).$ Without loss of generality, let $\cS = \{S_1,S_2,\ldots,S_m\}$  be $S_k = \{1,\ldots, k\}$ for $k=1,\cdots, m.$ Next, for any $\bp_\cS \in \preg(\cS)^\prime,$ we prove the existence of a probability distribution $(P(\sigma):\sigma \in \Sigma_m)$ such that $\bp_\cS \in \prum(\cS)$ from the point of view of polyhedral combinations. To show the existence of a probability distribution $(P(\sigma):\sigma \in \Sigma_m)$ for $\bp_\cS \in \prum(\cS)$ is equivalent to showing $\bp_\cS$ lies in the multiple choice polytope characterized as,
$$\text{convex hull of }\{ (\mathbb{I}[\sigma,i,S]:i\in S,S\in \cS)\in \{0,1\}^{\sum_{S\in\cS } |S|}:\sigma \in \Sigma_m \},$$
where $\mathbb{I}[\sigma,i,S]=1$ if and only if $i = \argmin_{j\in S} \sigma(j)$ %and $\mathbb{I}[\sigma,i,S]=0,$ otherwise 
(see Section 3 in \citealt{fiorini2004short} and Lemma 2.5 of \citealt{jagabathula2019limit}). 

Now, we show that $\bp_\cS$ lies in the multiple choice polytope via a graph representation of the multiple choice polytope following the steps in Section 3 in \cite{fiorini2004short}. Let $\boldsymbol{D} = (\cN_0,A)$ be a simple, acyclic directed graph, and let $m+1$ be the source node and $0$ be the sink node of $\boldsymbol{D},$ where $\cN_0 = \{1,\cdots,m\} \cup \{m+1, 0\}.$ We encode each $m+1 - 0$ directed path $\Pi$ of its arc set $A$ in $\boldsymbol{D}$ by means of the indicator characteristic vector in the set $\{ (\mathbb{I}[\sigma,i,S]:i\in S,S\in \cS)\in \{0,1\}^{\sum_{S\in\cS |S|}}:\sigma \in \Sigma_m \} \in \mathbb{R}^A$ , which we denote $r^\Pi.$ The convex hull of the vectors $r^\Pi,$ for a $m+1 - 0$ directed path $\Pi$ in $\boldsymbol{D}$, is referred to as the  $m+1 - 0$ directed path polytope of $\boldsymbol{D}.$ For a node $v$ of $\boldsymbol{D}$, let $\delta^-(v) = \{(w,v):w\in \cN_0, (w,v)\in A\}$ represent the nodes incoming to node $v$ and $\delta^+(v) = \{(v,w):w\in \cN_0, (v,w)\in A\}$ represent the nodes outgoing from node $v$. For $B\subseteq A,$ let $r(B) = \sum \{r(v,w):(w,v)\in B\}.$ Let $M$ be the matrix whose rows are indexed by modes of $\boldsymbol{D}$ such that the entry corresponding to node $v$ and arc $a$ equals to $1$ if $a$ enters $v$, and $-1$ if $a$ leaves $0$, and $0$ else. It's well known that $M$ is totally unimodular \citep{schrijver1998theory}. This implies that the polyhedron $\{r\in \mathbb{R}^A: Mr=d, r\geq 0\}$ has all its vertices integer for every integral vector $d\in \mathbb{R}^A.$ Assume that $\delta(m+1)^- = \delta(0)^+ = \emptyset.$ 
\begin{lemma}[Theorem 2 in \citealt{fiorini2004short}]\label{lemma:flow_rum}
A point $r\in \mathbb{R}^A$ belongs to the $m+1 - 0$ directed path polytope $\boldsymbol{D}$ if and only if 
\begin{align}
    & r(\delta^-(v)) - r(\delta^+(v)) = 0, \quad \forall  v\in \cN_0\setminus \{m+1,0\},\label{eq:flow_conversation}\\
    & r(\delta^-(0)) = 1,\label{eq:flow1}\\
    & r(v,w)\geq 0, \quad \forall (w,v)\in A.\label{eq:flow_postive}
\end{align}
In network flows, \eqref{eq:flow_conversation}-\eqref{eq:flow_postive} define a flow of value 1  in the network $\boldsymbol{D} = (\cN_0,A),$ with source node $m+1$ and sink node $0.$
\end{lemma}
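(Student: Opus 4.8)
The plan is to read the claim as an equality of two polytopes and prove it by double inclusion, letting the total unimodularity of the incidence matrix $M$ carry the harder direction. Write $P$ for the $m+1\to 0$ directed path polytope, i.e. the convex hull of the path indicators $r^\Pi$ introduced above, and let $P_{\mathrm{flow}} \subseteq \mathbb{R}^A$ denote the set cut out by the flow constraints \eqref{eq:flow_conversation}--\eqref{eq:flow_postive}. The goal is to show $P = P_{\mathrm{flow}}$, after which membership in the directed path polytope is exactly feasibility for a unit flow.

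First I would treat the inclusion $P \subseteq P_{\mathrm{flow}}$, which is the routine direction. For a fixed $m+1\to 0$ path $\Pi$, each internal node on $\Pi$ has exactly one active incoming and one active outgoing arc, so conservation \eqref{eq:flow_conversation} holds at every node of $\cN_0\setminus\{m+1,0\}$; the single arc of $\Pi$ entering the sink carries one unit, giving \eqref{eq:flow1}; and \eqref{eq:flow_postive} is immediate. Thus each $r^\Pi \in P_{\mathrm{flow}}$, and since $P_{\mathrm{flow}}$ is convex it contains the convex hull of the $r^\Pi$, namely $P$.

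The crux is the reverse inclusion $P_{\mathrm{flow}} \subseteq P$. I would first argue that $P_{\mathrm{flow}}$ is bounded: any recession direction $w$ satisfies $Mw = 0$ and $w \ge 0$, i.e. it is a nonnegative circulation, which must vanish because $\boldsymbol{D}$ is acyclic; hence $P_{\mathrm{flow}}$ is a polytope and equals the convex hull of its vertices. Each vertex is a basic feasible solution of $\{r : Mr = d,\ r \ge 0\}$, where $d$ is the integral right-hand side prescribed by \eqref{eq:flow_conversation}--\eqref{eq:flow1}; since $M$ is totally unimodular, this vertex is integral. An integral feasible flow of value one decomposes, by the flow-decomposition lemma, into directed paths and cycles; acyclicity of $\boldsymbol{D}$ rules out the cycles, and the value-one normalization leaves exactly one path, so the vertex equals $r^\Pi$ for some $m+1\to 0$ path $\Pi$. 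Therefore every vertex of $P_{\mathrm{flow}}$ is a path indicator, whence $P_{\mathrm{flow}} \subseteq P$ and the two polytopes coincide.

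The main obstacle I anticipate lies entirely in this second inclusion. Total unimodularity delivers integrality of the vertices cleanly, but the combinatorial bookkeeping around it requires care: confirming boundedness through the acyclic recession-cone argument so that the vertex representation is legitimate, and then showing that an integral unit flow on a directed acyclic graph is precisely the arc-indicator of a \emph{single} source-to-sink path rather than a superposition of several. It is this flow-decomposition step -- eliminating cycles by acyclicity and collapsing to one path by the value-one constraint -- that pins each vertex down to some $r^\Pi$ and forces $P = P_{\mathrm{flow}}$.
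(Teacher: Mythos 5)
The paper does not prove this lemma at all: it is imported verbatim as Theorem~2 of \citet{fiorini2004short}, with only the surrounding setup (the graph $\boldsymbol{D}$, the incidence matrix $M$, and the remark that $M$ is totally unimodular) supplied in the text. So there is no in-paper proof to compare against; what you have written is a self-contained proof of the cited result.

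Your argument is correct, and it is the standard one for identifying a source--sink path polytope of a DAG with the unit-flow polytope. The easy inclusion (each $r^{\Pi}$ is a unit flow, and the flow polytope is convex) is fine. For the reverse inclusion, the two points that actually need care are exactly the ones you flag and handle: (i) boundedness of the flow polytope, which follows because a nonnegative recession direction is a nonnegative circulation of value zero and hence, on an acyclic graph with $\delta^-(m+1)=\delta^+(0)=\emptyset$, must vanish; and (ii) the identification of each (integral, by total unimodularity of the node--arc incidence matrix) vertex with a single path indicator, via flow decomposition with cycles excluded by acyclicity and multiplicity excluded by the value-one normalization. One could quibble that the equality system defining the flow polytope uses only the rows of $M$ indexed by $\cN_0\setminus\{m+1\}$ rather than all of $M$ (conservation at the source being implied), but a submatrix of a totally unimodular matrix is totally unimodular, so the integrality conclusion is unaffected. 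The proof stands.
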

By Lemma \ref{lemma:flow_rum}, to show $\bp_\cS$ lies in the multiple choice polytope, we need to show $(r(w,v):(w,v)\in A)$ based on $\bp_\cS$ satisfying $\eqref{eq:flow_conversation}-\eqref{eq:flow_postive}$ in Lemma \ref{lemma:flow_rum}. We demonstrate $(r(w,v):(w,v)\in A)$ under $\bp_\cS$ as follows:
\begin{align*}
     r(m+1,j) & = \sum_{\sigma \in \Sigma_{m}:\argmin_{ i \in S_m } \sigma(i ) = j } P(\sigma) =  p_{j,S_m},\quad \;\forall\, j = 1,\cdots,m,\\
    r(i,j) & = \sum_{\sigma \in \Sigma_{m}:\argmin_{ k \in S_{i-1} } \sigma(k) = j } P(\sigma) - \sum_{\sigma \in \Sigma_{m}:\argmin_{ k \in S_i } \sigma(k) = j } P(\sigma) \\
    & =  p_{j,S_{i-1}} - p_{j,S_i}, \quad \forall\, i = 2,\cdots,m, \, 1\leq j \leq i-1,\\
     r(1,0) &=1.
\end{align*}
%To understand $(r(w,v):(w,v)\in A)$, for any fixed $i\in \{1,\cdots,m\},$ when assortment $S_i$ is offered, the choice probability of product $j$ is 
% \begin{align*}
%     p_{j,S_i} = 0 \; \text{ if }\; j \notin \{1,\cdots,i\}, \text{ and } p_{j,S_i} = \sum_{k=i+1}^{m+1} r(k,i) \; \text{ if }\; j\in \{1,\cdots,i\}.
% \end{align*}
%This is the total flow to node $j.$
We provide Figure \ref{fig:flow_construction} to illustrate $(r(w,v):(w,v)\in A)$ in the graph $\boldsymbol{D}$. 
\begin{figure}[H]
    \centering
    \includegraphics[scale=0.23]{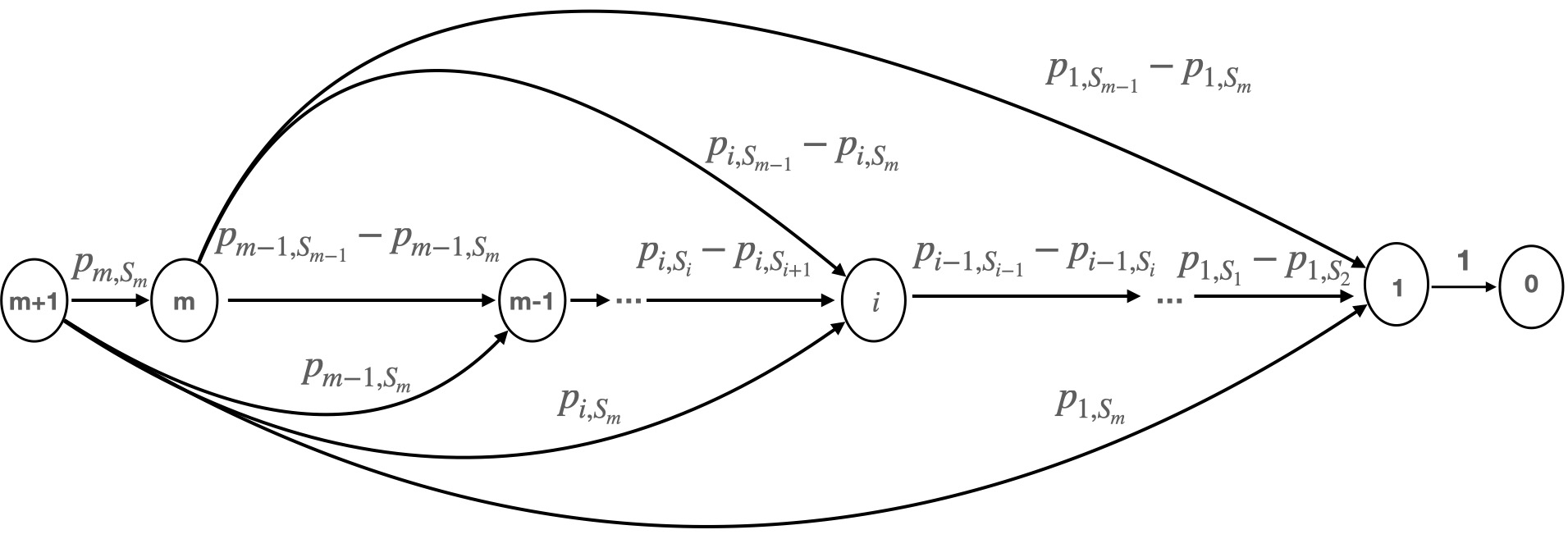}
    \caption{An illustration of $\boldsymbol{D}$ given $\bp_\cS \in \preg(\cS)^\prime$ with a nested collection $\cS$}
\label{fig:flow_construction}
\end{figure}
Next, we verify such $(r(w,v):(w,v)\in A)$ satisfies \eqref{eq:flow_conversation}-\eqref{eq:flow_postive}. For \eqref{eq:flow_conversation}, define $p_{j,S_{m+1}} = 0$ for all $j.$ For a node $j\in \{2,\cdots,m\},$ 
    \begin{align*}
        r(\delta^-(j)) &= \sum_{k=j+1}^{m+1} r(k,j) \\
        &= \sum_{k=j}^m p_{j,S_k} - p_{j,S_{k+1}} = p_{j,S_j} - p_{j,S_{j+1}} +  p_{j,S_{j+1}} - p_{j,S_{j+2}} + \cdots + p_{j,S_m} - p_{j,S_{m+1}}\\
        &=  p_{j,S_j} - p_{j,S_{m+1}} 
        = p_{j,S_j}.\\
        r(\delta^+(j)) &= \sum_{k=1}^{j-1} p_{k,S_j-1} - p_{k,S_j}\\
        & = p_{1,S_{j-1}} - p_{1,S_j} + p_{2,S_{j-1}} - p_{2,S_j} +\cdots + p_{j-1,S_{j-1}} - p_{j-1,S_j}\\
        & = \sum_{k=1}^{j-1} p_{k,S_{j-1}} - \sum_{k=1}^{j-1} p_{k,S_j} = 1 - \sum_{k=1}^{j-1} p_{k,S_j} = p_{j,S_j}.
    \end{align*}
    For the node $1,$
    \begin{align*}
        r(\delta^-(1)) &= \sum_{k=2}^{m+1} r(k,1) \\
        &= p_{1,S_1} - p_{1,S_2} + p_{1,S_2} - p_{1,S_3} + \cdots + p_{1,S_m} - p_{1,S_{m+1}} \\
        & = p_{1,S_1} - p_{1,S_{m+1}} = 1-0=1 = r(\delta^+(1)).
    \end{align*}
Therefore, $r(\delta^-(j)) = r(\delta^+(j)) $ for $j\in \{1,2,\cdots,m\}.$ \eqref{eq:flow_conversation} is satisfied by $(r(w,v):(w,v)\in A).$ For \eqref{eq:flow1}, $r(\delta^-(0)) = r(1,0) =  1.$ For \eqref{eq:flow_postive}, $ r(m+1,j) = p_{j,S_m}\geq 0,\; \forall\, j = 1,\cdots,m$ because of the nonnegativity of choice probabilities.  We have $ r(i,j) = p_{j,S_{i-1}} - p_{j,S_i}\geq 0,\; \forall\, i = 2,\cdots,m, \, 1\leq j \leq i-1$ since $p_S$ satisfies regularity,then, $\bp_\cS \in \preg(\cS)^\prime.$ Further  $r(1,0) =1 >0.$ 

The assignment $(r(w,v):(w,v)\in A)$ satisfies \eqref{eq:flow_conversation}-\eqref{eq:flow_postive} in Lemma \ref{lemma:flow_rum}. Therefore, 
for any $\bp_\cS \in \preg(\cS)^\prime,$ there exists a probability distribution $(P(\sigma):\sigma \in \Sigma_m)$ such that $\bp_\cS \in \prum(\cS).$ This implies $\preg(\cS) \subseteq \preg(\cS)^\prime \subseteq \prum(\cS)$ under the nested collection $\cS.$

\item We prove $\preg(\cS)^\prime \subseteq \prum(\cS)$ under the laminar collection. By the definition of the laminar collection, we know that for $S, T \in \cS$, either $S\subset T$, or $T\subset S$, or $S\cap T = \emptyset$. Then, it suffices to construct a distribution $P(\cdot)$ for $\hat{\cS} \subset \cS$ such that $\hat{\cS}$ is a nested collection. Following the proof in (1), we have $\preg(\cS) \subseteq \preg(\cS)^\prime \subseteq \prum(\cS)$ under a laminar collection.
\end{enumerate}
\end{proof}

{\color{black}\subsection{Proof of Proposition \ref{prop:general_iia}}
\label{pf:general_iia}
\begin{proof}
\textit{To show the "if" direction:} Since $\bp_\cS$ is represented by MDM and  $p_{i,S} >0 $ for all $(i,S) \in  \cI_\cS$, for each pair of assortment $S,T\in \cS$ and alternatives $i\in S\cap T$, there exists marginals $F_i,F_j$, Lagrange multipliers $\lambda_S$ and $\lambda_T$, $\lambda_{i,S}$ and $\lambda_{i,T}$ and  the followings are satisfied:
\begin{align*}
    & \nu_i+F_i^{-1}(1-p_{i,S})-\lambda_S +\lambda_{i,S} = 0, \   \lambda_{i,S} \ p_{i,S}=0,\mbox{~ and ~} \lambda_{i,S}\geq 0, \ \forall i\in S , \mbox{~ and ~}\\
    & \nu_i+F_i^{-1}(1-p_{i,T})-\lambda_S +\lambda_{i,T} = 0, \   \lambda_{i,T} \ p_{i,T}=0,\mbox{~ and ~} \lambda_{i,T}\geq 0, \ \forall i\in T .
\end{align*}
% $$\nu_i+F_i^{-1}(1-p_{i,S}^*)-\lambda_S +\lambda_{i,S} = 0, \   \lambda_{i,S} p_{i,S}=0,\mbox{~ and ~} \lambda_{i,S}\geq 0, \ \forall i\in S , \mbox{~ and ~}$$
% $$\nu_i+F_i^{-1}(1-p_{i,T}^*)-\lambda_S +\lambda_{i,T} = 0, \   \lambda_{i,T} p_{i,T}=0,\mbox{~ and ~} \lambda_{i,T}\geq 0, \ \forall i\in T .$$
Since $p_{i,S}>0$ and $p_{i,T}>0$, we have $\lambda_{i,S} = \lambda_{i,T} = 0.$ We then have 
\begin{align*}
    & p_{i,S} = 1-F_i(\lambda_S - \nu_i) \mbox{~and~}  p_{j,S} = 1-F_j(\lambda_S - \nu_j), \mbox{~ and, ~}\\
    & p_{i,T} = 1-F_i(\lambda_T - \nu_i) \mbox{~and~}  p_{j,T} = 1-F_j(\lambda_T - \nu_j).  
\end{align*}
This leads to the following:
\begin{align*}
    \nu_j - \nu_i = F_i^{-1}(1-p_{i,S}) - F_j^{-1}(1-p_{j,S}) = F_i^{-1}(1-p_{i,T}) - F_j^{-1}(1-p_{j,T}).
\end{align*}
Then,
\begin{align*}
    \text{exp}(v_j-v_i) = \frac{\text{exp}(F_i^{-1}(1-p_{i,S}))}{\text{exp}(F_j^{-1}(1-p_{j,S}))} = \frac{\text{exp}(F_i^{-1}(1-p_{i,T}))}{\text{exp}(F_j^{-1}(1-p_{j,T}))}.
\end{align*}
Setting the strictly decreasing functions $f_i: [0,1)\to \mathbb{R}_+$ by $f_i(0)=\infty$ and $f_i(p) = \text{exp}[F_i^{-1}(1-p)]$, for all $p >0$,$\forall i\in \cN,$ we obtain:
\begin{align*}
    \text{exp}(v_j-v_i) = \frac{f_i(p_{i,S})}{f_j(p_{j,S})} = \frac{f_i(p_{i,T})}{f_j(p_{j,T})}
\end{align*}

\textit{To show the "only if" direction:}. Given $\bp_\cS$ and $\{f_i:i\in \cN\}$ satisfies Generalized Ordinal IIA, we construct the deterministic utilities $\bv = \{\nu_i: i\in \cN\}$ and the marginal distributions $\{F_i: i\in \cN\}$ for MDM such that this construction yields the given $(p_{i,S}:i\in S)$ as the corresponding choice probabilities from the KKT optimality conditions in \eqref{optimality}, for any assortment $S\in \cS$.

Let $F_i (x) = \log^{-1} (f_i(1-x))$, equivalently $F_i^{-1}(1-x) = \log(f_i(x)).$ For any fixed product $i$, set $\nu_i := 0$. For any other products such that $j\neq i$, set $\nu_j := F_i^{-1}(1-p_{i,\{i,j\}}) - F_j^{-1}(1-p_{j,\{i,j\}}).$

Take arbitrary assortment $S \in S$, and pick any two distinct products $j,k \in S.$ There are two exclusive cases.

\textit{Case 1:} $i\in \{j,k\}$. Without loss of generality, set $j=i.$ Then,
\begin{align*}
    \nu_k - \nu_i 
    &= F_i^{-1}(1-p_{i,\{i,k\}}) - F_k^{-1}(1-p_{k,\{i,k\}})\\
    &= \log(\frac{f_i(p_{i,\{i,k\}})}{f_k(p_{k,\{i,k\}})  } )\\
    & = \log(\frac{f_i(p_{i,S})}{f_k(p_{k,S})  } ) \tag{\text{Generalized Ordinal IIA}}\\
    & = F_i^{-1}(1-p_{i,S}) - F_k^{-1}(1-p_{k,S}).
\end{align*}

We next verify
$$\nu_i+F_i^{-1}(1-p_{i,S}^*)-\lambda_S +\lambda_{i,S} = 0, \   \lambda_{i,S} p_{i,S}=0,\mbox{~ and ~} \lambda_{i,S}\geq 0, \ \forall i\in S .$$

Since $p_{i,S}>0$ for all $ i\in S,$  we have $ \lambda_{i,S} =0$ in the KKT conditions. Then, there exists $\lambda_S$ such that  $\nu_k + F_k^{-1}(1-p_{k,S}) -\lambda_S = 0  $ and $\nu_i + F_i^{-1}(1-p_{i,S}) -\lambda_S = 0  $ since we have proved that $\nu_k + F_k^{-1}(1-p_{k,S}) = \nu_i + F_i^{-1}(1-p_{i,S}).$

\textit{Case 2:} $i\notin \{j,k\}$. Then,
\begin{align*}
    \nu_k - \nu_j 
    &= F_i^{-1}(1-p_{i,\{i,k\}}) - F_k^{-1}(1-p_{k,\{i,k\}}) + F_j^{-1}(1-p_{j,\{i,j\}}) -  F_i^{-1}(1-p_{i,\{i,j\}})  \\
    &= \log(\frac{f_i(p_{i,\{i,k\}})}{f_k(p_{k,\{i,k\}})} \frac{f_j(p_{j,\{i,j\}})}{f_i(p_{i,\{i,j\}})} )\\
    &= \log(\frac{f_i(p_{i,\{i,j,k\}})}{f_k(p_{k,\{i,j,k\}})} \frac{f_j(p_{j,\{i,j,k\}})}{f_i(p_{i,\{i,j,k\}})} ) \tag{\text{Generalized Ordinal IIA}} \\
    & = \log(\frac{f_i(p_{i,\{i,j,k\}})}{f_k(p_{k,\{i,j,k\}}) } ) \\
    & = \log(\frac{f_j(p_{j,S})}{f_k(p_{k,S}) } ) \tag{\text{Generalized Ordinal IIA}} \\
    & = F_j^{-1}(1-p_{j,S}) - F_k^{-1}(1-p_{k,S}).
\end{align*}

We next verify
$$\nu_i+F_i^{-1}(1-p_{i,S})-\lambda_S +\lambda_{i,S} = 0, \   \lambda_{i,S} p_{i,S}=0,\mbox{~ and ~} \lambda_{i,S}\geq 0, \ \forall i\in S .$$

Since $p_{i,S}>0$ for all $ i\in S,$  we have $ \lambda_{i,S} =0$ in the KKT conditions. Then, there exists $\lambda_S$ such that  $\nu_k + F_k^{-1}(1-p_{k,S}) -\lambda_S = 0  $ and $\nu_j + F_j^{-1}(1-p_{j,S}) -\lambda_S = 0  $ since we have proved that $\nu_k + F_k^{-1}(1-p_{k,S}) = \nu_j + F_j^{-1}(1-p_{j,S}).$

Therefore, the equalities in the above two cases imply the KKT optimality conditions at $S$ are satisfied. Since this holds for any assortment $S\in \cS$, this proves that $\bp_\cS$ is represented by MDM.

\end{proof}

}

\section{Proofs of the Results in Section \ref{sec:prediction-mdm}}
\label{sec:pf_sec3}
\subsection{Proof of Proposition \ref{prop:wc-rev-reform}}
\label{pf:wc-rev-reform}
\begin{proof}
  Due to Theorem \ref{thm:feascon-mdm}, we have  $\pmdm(\cS^\prime) = \text{Proj}_{\bx}(\Pi_{\cS^\prime}),$ where the lifted set $\Pi _{\cS^\prime}$ equals 
\begin{align*}
  \Big\{ (\bx,\boldsymbol{\lambda}) \in \mathbb{R}^{\mathcal{I}_{\cS^\prime}} \times  \mathbb{R}^{ \cS^\prime} :&  \, x_{i,S} \geq 0, \ \sum_{i} x_{i,S} = 1,  \forall S \in \cS^\prime,   \nonumber \\ 
    &\lambda_S > \lambda_T \text{ if } x_{i,S} < x_{i,T}, \  \lambda_S = \lambda_T \text{ if } x_{i,S} = x_{i,T} \neq 0, \ \forall (i,S), (i,T) \in \cI_{\cS^\prime} 
    \Big\}, 
%    \label{eq:lifted-set}
\end{align*}
following the definition in  \eqref{eq:lifted-set}. Since   $\mathcal{U}_A  := \{ \bx_A: (\bx_\cS, \bx_A, \boldsymbol{\lambda}) \in \Pi_{\cS^\prime}, \ \bx_\cS = \bp_\cS\},$ the non-numbered constraints in the formulation in Proposition \ref{prop:wc-rev-reform} are obtained by replacing $\bx_\cS = \bp_\cS$ in the above description of the lifted set $\Pi_{\cS^\prime}.$ For deducing the  remaining constraints  \eqref{model:mdm_revenue_prediction-a} 
- \eqref{model:mdm_revenue_prediction-b}, 
we proceed as follows: Consider any $(i,S) \in \cI_\cS.$  From the description of $\Pi_{\cS^\prime},$ %and the assignment $\bx_S = \bp_S,$ 
observe that an assignment for $x_{i,A}, \lambda_A,  \lambda_S$ in any $(\bx,\blambda) \in \Pi_\cS^\prime$  satisfying $\bx_S = \bp_S$ necessarily satisfies one of the following four cases: 

In \textit{Case 1,} we have $\lambda_A > \lambda_S$ and  $x_{i,A} < p_{i,S}$: If $\lambda_A,\lambda_S$ is such that  $\lambda_A > \lambda_S,$ this informs the restriction $\{x_{i,A}: x_{i,A} < p_{i,S}\}$ on the values $x_{i,A}$ can take. The closure of this  restricted collection $\{x_{i,A}: x_{i,A} < p_{i,S}\}$ equals $\{x_{i,A}: x_{i,A} \leq p_{i,S}\}.$ 

In \textit{Case 2,} we have $\lambda_A < \lambda_S$ and  $x_{i,A} > p_{i,S}$: 
 If $\lambda_A,\lambda_S$ is such that  $\lambda_A < \lambda_S,$ the closure of the corresponding restriction $\{x_{i,A}: x_{i,A} > p_{i,S}\}$ equals $\{x_{i,A}: x_{i,A} \geq p_{i,S}\}.$ %In either possibilities, the respective closures over the restrictions on $x_{i,A}$ satisfy $(x_{i,A} - p_{i,S})(\lambda_S - \lambda_A) \geq 0.$

In \textit{Case 3,} we have $\lambda_A =  \lambda_S$ and  $x_{i,A} = p_{i,S} \neq 0$: When $\lambda_A,\lambda_S$ is such that  $\lambda_A = \lambda_S$ and $p_{i,S} \neq 0,$ the corresponding restriction on the values of $x_{i,A}$ is given by the closed set  $\{x_{i,A}: x_{i,A} = p_{i,S}\}.$ %Here the closure over the restrictions on $x_{i,A}$ satisfies  $x_{i,A} - p_{i,S} = 0.$ 

Finally, in \textit{Case 4,} we have $\lambda_A,  \lambda_S$ unconstrained and  $x_{i,A} = p_{i,S} = 0$: Like in Case 3, the  restriction on the values of $x_{i,A}$ corresponding to this case equals  $\{x_{i,A}: x_{i,A} = 0\}.$ The relationship between $x_{i,A}, p_{i,S}, \lambda_A,\lambda_S$ in this case is any one of the following sub-cases: Case (4a) $\lambda_A > \lambda_S$ and $0 = x_{i,A} \leq p_{i,S} = 0,$ or Case (4b) $\lambda_A < \lambda_S$ and $0 = x_{i,A} \geq p_{i,S} = 0,$ or Case (4c) $\lambda_A = \lambda_S$ and $0 = x_{i,A} = p_{i,S} = 0.$ 

Combining the observations in the cases (1) \& (4a), (2) \& (4b), and (3) \& (4c), we obtain that the closure of $\mathcal{U}_A$ equals the collection of probability vectors $\bx_A = (x_{i,A}: i \in A)$ for which there exists a function $\lambda: \cS^\prime \rightarrow \R$ such that 
\begin{align*}
   &x_{i,A} \ \leq \  p_{i,S} \quad\text{ if } \lambda_A >  \lambda_S, \quad \forall i \in A, \ (i,S) \in \cI_\cS,\\
    &x_{i,A} \ \geq \  p_{i,S} \quad \text{ if } \lambda_A < \lambda_S, \quad \forall i \in A, \ (i,S) \in \cI_\cS,  \text{ and }  \\ 
    &x_{i,A} \ = \  p_{i,S} \quad\text{ if } \lambda_A = \lambda_S, \quad \forall i \in A, \ (i,S) \in \cI_\cS,
\end{align*}
in addition to satisfying $\lambda_S > \lambda_T$ if $p_{i,S} < p_{i,T}$ and $\lambda_S = \lambda_T$ if $p_{i,S} = p_{i,T} \neq 0,$ for all $(i,S),(i,T) \in \cI_\cS.$ The constraints in the formulation in Proposition \ref{prop:wc-rev-reform} exactly specify these conditions describing the closure of $\mathcal{U}_A.$ Observe that the objective in $\inf\{\sum_{i \in A} r_i x_{i,A}: \bx_A \in \mathcal{U}_A\}$ is continuous as a function of $\bx_A.$ Therefore, $\inf\{\sum_{i \in A} r_i x_{i,A}: \bx_A \in \mathcal{U}_A\} = \min\{\sum_{i \in A} r_i x_{i,A}: \bx_A \in \text{closure}(\mathcal{U}_A)\}.$
\end{proof}

\subsection{Proof of Proposition \ref{prop:wc-rev-milp}}
\label{pf:wc-rev-milp}
\begin{proof}
Recall the notation $\cS^\prime = \cS \cup \{A\}.$ 
Observe that the variables $(\lambda_S: S \in \cS^\prime)$ influence the value of the formulation in Proposition \ref{prop:wc-rev-reform} only via the sign of $\lambda_S - \lambda_T,$ for any pair of variables $\lambda_S,\lambda_T$ from the collection  $(\lambda_S: S \in \cS^\prime).$ Therefore the optimal value of this optimization formulation is not affected by the presence of the following additional constraints: 
 $0 \leq \lambda_S \leq 1$ for all $S \in \cS^\prime,$ and 
\begin{align*}
    \lambda_S - \lambda_T  \geq \epsilon\quad \text{ if } \quad \lambda_S > \lambda_T, \qquad \forall\, (i,S),(i,T) \in \cI_{\cS^\prime},
\end{align*} 
for some suitably small value of $\epsilon > 0.$ Indeed, this is because the signs of the differences $\{ \lambda_S - \lambda_T: S, T \in \cS^\prime\}$ are not affected by these additional constraints. Taking  $\epsilon$ to be smaller than $1/(2\vert \cS \vert),$ for example, ensures that there is a feasible assignment for  $(\lambda_S: S \in \cS^\prime)$ within the interval $[0,1]$ even if all these variables take distinct values. 

Let $F$ denote the feasible values for the variables  $(\lambda_S: S \in \cS^\prime), (x_{i,A}: i \in A)$ satisfying the constraints introduced in the above paragraph besides  those in the formulation in Proposition \ref{prop:wc-rev-reform}. 
 %Therefore, if we let $F$ denote the feasible values for the variables  $(\lambda_S: S \in \cS^\prime), (x_{i,A}: i \in A)$ satisfying these constraints in addition to those in the formulation in Proposition \ref{prop:wc-rev-reform}, the value of the resulting minimization is unaffected.  
Equipped with this feasible region $F$, we have the following deductions from \eqref{mdm:milp-a} - \eqref{mdm:milp-c} for $(\lambda_S: S \in \cS^\prime), (x_{i,A}: i \in A)$ in $F$: For every $i \in A$ and any $S \in \cS$ containing $i,$   
    \begin{itemize}[leftmargin=*]
        \item[(i)] we have $\lambda_A < \lambda_S$ if and only if  $\delta_{A,S} = 1$ and $\delta_{S,A} = 0,$ due to the constraints \eqref{mdm:milp-a} and \eqref{mdm:milp-b}; in this case, we have from \eqref{mdm:milp-c} that $p_{i,S} \leq x_{i,A} \leq 1;$
         \item[(ii)] likewise, we have $\lambda_A > \lambda_S$ if and only if  $\delta_{A,S} = 0$ and $\delta_{S,A} = 1,$ due to the constraints \eqref{mdm:milp-a} and \eqref{mdm:milp-b}; in this case, we have from \eqref{mdm:milp-c} that $0 \leq x_{i,A} \leq p_{i,S}.$
         \item[(iii)] finally, $\lambda_A = \lambda_S$ if and only if  $\delta_{A,S} = 0$ and $\delta_{S,A} = 1;$ here we have from \eqref{mdm:milp-d} that $x_{i,A} = p_{i,S}.$
    \end{itemize} 
    Thus the binary variables $\{\delta_{A,S},\delta_{S,A}: S \in \cS\}$ suitably model the  constraints collection \eqref{model:mdm_revenue_prediction-a} - \eqref{model:mdm_revenue_prediction-b} and provide an equivalent reformulation in terms of the constraints \eqref{mdm:milp-a} - \eqref{mdm:milp-d}. Therefore the optimal value of the formulations in Propositions \ref{prop:wc-rev-reform} and \ref{prop:wc-rev-milp} are identical.  
\end{proof}

\subsection{Proof of Corollary \ref{cor:prediction_structure}}
\label{pf:prediction_structure}
\begin{proof}
When $\cS^\prime$ is either nested or laminar, from the proof in Theorem \ref{thm:mdm-rum-relation}, we know that $\pmdm(\cS^\prime) = \preg(\cS^\prime)^\prime.$ So, we can solve the worst-case expected revenue in \eqref{eq:rob-revenues} with the representable conditions of the regular model which are $x_{i,A} \leq p_{i,S}$ if $S \subset A$ and $x_{i,A} \geq p_{i,S}$ if $A \subset S$ for all $i\in A$ and $(i,S)\in \cI_\cS.$ This is a linear program with $\mathcal{O}(n)$ continuous variables and $\mathcal{O}(n|\cS|)$ constraints.
% \begin{enumerate}[leftmargin=*, nolistsep]
%     \item [a)] 
%     \item[b)] When $\cS \cup\{A\}$ is either nested or laminar, by the MDM-representable conditions \eqref{eq:mdm-feascon} in theorem \ref{thm:feascon-mdm}, $0 \vee \max_{S:A \subset S} p_{i,S} \leq x_{i,A} \leq \min_{S: S \subset A} p_{i,S} \vee 1$ hold for all $i \in A$.  The computation of $\underline{r}(A)$ is just the objective of the formulation in Proposition \ref{prop:wc-rev-reform} and the above constraints for $\bx_A$. This is a linear program with $\mathcal{O}(n+|\cS|)$ continuous variables and $\mathcal{O}(n|\cS|)$ linear constraints.
%     \item [c)] If $\cS$ is nested, for any given $A$, the total number of possible positions of $\lambda_A$ such that there exists an admissible ranking of $\blam$ over $\cS^\prime = \cS\cup \{A\}$ is $|\cS^\prime|$. For each fixed position, $\cS\cup \{A\}$ is nested. Following the result of b), there exists an algorithm that enumerates the possible positions of $\lambda_A$ and compute $\underline{r}(A)$ with each possible admissible sequence.
% \end{enumerate}
\end{proof}
{\color{black}
\subsection{Proof of Proposition  \ref{prop:prediction_common_product}}
\label{pf:prediction_common_product}
\begin{proof}
Suppose that there exists a product $i^{*}$ being included in every assortment of the collection $\cS$ and we have $p_{i^{*},S_1} \geq p_{i^{*},S_2} \geq \cdots \geq p_{i^{*},S_m}.$
Recall that  $\textnormal{closure}\big(\pmdm(\cS)\big) = \text{Proj}_{\bx}(\Pi_\cS^{'})$ where $\Pi_\cS^{'}$ is
defined as 
\begin{align}
\begin{aligned}
    \Pi_\cS^{'} = \Big\{ (\bx,\boldsymbol{\lambda}) \in \mathbb{R}^{\cI_\cS} \times  \mathbb{R}^{ \cS}:  \, & x_{i,S} \geq 0,\forall (i,S)\in \cI_\cS,\, \sum_{i} x_{i,S} = 1,  \forall S \in \cS,  \; \\
    &\lambda_S \geq \lambda_T \text{ if } x_{i,S} \leq x_{i,T}, \forall (i,S), (i,T) \in \cI_\cS \Big\}. 
\end{aligned}
\end{align}
Thus, we have  $\lambda_{S_m} \geq \cdots \lambda_{S_k}\geq \lambda_{S_{k-1}} \geq \cdots \geq \lambda_{S_1}$ for any $\blambda$ such that $(\bx,\blambda) \in \Pi_{\cS}^\prime.$ For ease of notation, let $\lambda_{S_0}=-\infty$ and $\lambda_{S_{m+1}} = +\infty.$ Then for any given $A$, the corresponding $\lambda_A$ must satisfy $\lambda_{S_{k+1}} \geq \lambda_A \geq \lambda_{S_{k}}$ for some $k \in \{0,1,\ldots,m\}.$ 

From the viewpoint of $\lambda_{A} \geq \lambda_{S_k} \geq \ldots \geq \lambda_{S_0},$ we deduce the following constraints on $x_{i,A}$: For any $i\in A \cap S_k,$ we have the respective MDM 
feasibility constraints $x_{i,A} \leq p_{i,S_k} \leq \ldots \leq p_{i,S_{j}}$  for all $j \leq k$ such that $i \in S_j.$ These constraints can be equivalently summarized by $x_{i,A} \leq p_{i,S_{k}},$ and this comprises the first set of constraints for evaluating $\boldsymbol{R_k}$ in \eqref{model:prediction_common_product}. 

From the viewpoint of $\lambda_{S_m+1} \geq \cdots \lambda_{S_{k+1}}\geq \lambda_A,$ we deduce the following constraints on $x_{i,A}:$ For any $ i \in A, j \geq k+1, S_j \in \cS, i\in S_j\cap A ,$ we have the respective MDM feasibility constraint  $x_{i,A} \geq p_{i,S},$ which comprise the second set of constraints for evaluating $\boldsymbol{R_k}$ in \eqref{model:prediction_common_product}.
\end{proof} }

\subsection{Proof of Corollary \ref{cor:prediction_nest}}
\label{pf:prediction_nest}
\begin{proof}
Suppose $\cS = \{S_1,S_2,\ldots,S_m\}$ is nested as in $S_1\subset S_2\subset \cdots \subset S_m$. We have, from the regularity of MDM in Lemma \ref{Regularity}, that $p_{i,S_1} \geq p_{i,S_2} \geq \cdots p_{i,S_m},$ for all $i \in S_1.$ Recall that  $\textnormal{closure}\big(\pmdm(\cS)\big) = \text{Proj}_{\bx}(\Pi_\cS^{'})$ where $\Pi_\cS^{'}$ is
defined as 
\begin{align}
\begin{aligned}
    \Pi_\cS^{'} = \Big\{ (\bx,\boldsymbol{\lambda}) \in \mathbb{R}^{\cI_\cS} \times  \mathbb{R}^{ \cS}:  \, & x_{i,S} \geq 0,\forall (i,S)\in \cI_\cS,\, \sum_{i \in S} x_{i,S} = 1,  \forall S \in \cS,  \; \\
    &\lambda_S \geq \lambda_T \text{ if } x_{i,S} \leq x_{i,T}, \forall (i,S), (i,T) \in \cI_\cS \Big\}. \label{mdm_cl}    
\end{aligned}
\end{align}
%$\Pi_\cS^{'} = \Big\{ (\bx,\boldsymbol{\lambda}) \in \mathbb{R}^{\cI_\cS} \times  \mathbb{R}^{ \cS}:  \, x_{i,S} \geq 0,\forall (i,S)\in \cI,\, \sum_{i} x_{i,S} = 1,  \forall S \in \cS,  \; \lambda_S \geq \lambda_T \text{ if } x_{i,S} \leq x_{i,T}, \forall (i,S), (i,T) \in \cI_\cS \Big\}.$
Thus, for the given nested $\cS$, we have  $\lambda_{S_m} \geq \cdots \lambda_{S_k}\geq \lambda_{S_{k-1}} \geq \cdots \geq \lambda_{S_1}$ for any $\blambda$ such that $(\bx,\blambda) \in \Pi_{\cS}^\prime.$ For ease of notation, let $\lambda_{S_0}=-\infty$ and $\lambda_{S_{m+1}} = +\infty.$ Then for any given $A$, the corresponding $\lambda_A$ must satisfy $\lambda_{S_{k+1}} \geq \lambda_A \geq \lambda_{S_{k}}$ for some $k \in \{0,1,\ldots,m\}.$ 

From the viewpoint of $\lambda_{A} \geq \lambda_{S_k} \geq \ldots \geq \lambda_{S_0},$ we deduce the following constraints on $x_{i,A}$: For any $i\in A \cap S_k,$ we have the respective MDM 
feasibility constraints $x_{i,A} \leq p_{i,S_k} \leq \ldots \leq p_{i,S_{j}}$  for all $j \leq k$ such that $i \in S_j.$ These constraints can be equivalently summarized by $x_{i,A} \leq p_{i,S_{k}},$ and this comprises the first set of constraints for evaluating $\boldsymbol{R_k}$ in \eqref{model:prediction_nested}. 

From the viewpoint of $\lambda_{S_m+1} \geq \cdots \lambda_{S_{k+1}}\geq \lambda_A,$ we deduce the following constraints on $x_{i,A}:$ For any $i \in A, (i,S)\in \cI_\cS, S_{k+1} \subseteq S,$ we have the respective MDM feasibility constraint  $x_{i,A} \geq p_{i,S},$ which comprise the second set of constraints for evaluating $\boldsymbol{R_k}$ in \eqref{model:prediction_nested}.
\end{proof}

\section{Proofs of the Results in Section \ref{sec:lom}}
\label{sec:pf_limit}
\subsection{Proof of Proposition \ref{prop:limit-mdm}}
\label{pf:limit-mdm}
\begin{proof}
Due to Theorem \ref{thm:feascon-mdm}, we have  $\pmdm(\cS) = \text{Proj}_{\bx}(\Pi_\cS),$ following the definition in  \eqref{eq:lifted-set}. One can argue the closure of $\pmdm(\cS)$ by the similar arguments of the proof of Proposition \ref{prop:wc-rev-reform}. Consider any $(i,S) \in \cI_\cS.$  From the description of $\Pi_{\cS},$ observe that an assignment for $x_{i,S}, x_{i,T}, \lambda_S,  \lambda_T$ in any $(\bx,\blambda) \in \Pi_\cS$ necessarily satisfies one of the following four cases: 

In \textit{Case 1,} we have $\lambda_S < \lambda_T$ and  $x_{i,S} > x_{i,T}$: If $\lambda_S,\lambda_T$ is such that  $\lambda_S < \lambda_T,$ the closure of the corresponding restriction $\{x_{i,S}, x_{i,T}: x_{i,S} > x_{i,T}\}$ equals $\{x_{i,S}, x_{i,T}: x_{i,S} \geq x_{i,T}\}.$
 
In \textit{Case 2,}  we have $\lambda_S > \lambda_T$ and  $x_{i,S} < x_{i,T}$: If $\lambda_S,\lambda_T$ is such that  $\lambda_S > \lambda_T,$ the closure of the corresponding restriction $\{x_{i,S}, x_{i,T}: x_{i,S} < x_{i,T}\}$ equals $\{x_{i,S}, x_{i,T}: x_{i,S} \leq x_{i,T}\}.$
 
In \textit{Case 3,} we have $\lambda_S =  \lambda_T$ and  $x_{i,S} = x_{i,T} > 0$: When $\lambda_S,\lambda_T$ is such that  $\lambda_S = \lambda_T$ and $x_{i,S} = x_{i,T} > 0,$ the corresponding restriction on the values of $x_{i,S}, x_{i,T}$ is given by the closed set $\{x_{i,S}, x_{i,T}: x_{i,S} = x_{i,T}>0\}.$

Finally, in \textit{Case 4,} we have $\lambda_S, \lambda_T$ unconstrained and $x_{i,S} = x_{i,T} = 0$: Like in Case 3, the  restriction on the values of $x_{i,S}, x_{i,T}$ corresponding to this case equals $x_{i,S} = x_{i,T} = 0.$ The relationship between $x_{i,S}, x_{i,T}, \lambda_S,\lambda_T$ in this case is any one of the following sub-cases: Case (4a) $\lambda_S > \lambda_T$ and $0 = x_{i,S} \leq x_{i,S} = 0,$ or Case (4b) $\lambda_S < \lambda_T$ and $0 = x_{i,S} \geq x_{i,T} = 0,$ or Case (4c) $\lambda_S = \lambda_T$ and $0 = x_{i,S} = x_{i,T} = 0.$ 

Combining the observations in the cases (1) \& (4a), (2) \& (4b), and (3) \& (4c), we obtain that the closure of $\pmdm(\cS)$ equals the collection of probability vectors $\bx$ for which there exists a function $\lambda: \cS \rightarrow \R$ such that 
\begin{align*}
   &x_{i,S} \ \leq \  x_{i,T} \quad\text{ if } \quad \lambda_S >  \lambda_T, \quad \forall (i,S), \ (i,T) \in \cI_\cS,\\
    &x_{i,S} \ \geq \  x_{i,T} \quad \text{ if } \quad \lambda_S < \lambda_T, \quad \forall (i,S), \ (i,T) \in \cI_\cS, \\ 
    &x_{i,S} \ = \  x_{i,T} \quad \text{ if } \quad \lambda_S = \lambda_T, \quad \forall (i,S), \ (i,T) \in \cI_\cS.
\end{align*}
The constraints in the formulation in Proposition \ref{prop:limit-mdm} exactly specify these conditions describing the closure of $\pmdm(\cS).$ Observe that the objective in $\inf \{ \text{loss}(\bp_\cS,\bx_\cS)\,:\, \bx_{\cS} \in \pmdm(\cS) \}$ is continuous as a function of $\bx.$ Therefore, $\inf \{ \text{loss}(\bp_\cS,\bx_\cS)\,:\, \bx_{\cS} \in \pmdm(\cS) \} = \min \{ \text{loss}(\bp_\cS,\bx_\cS)\,:\, \bx_{\cS} \in \text{closure}(\pmdm(\cS))\}.$
\end{proof}

\subsection{Proof of Theorem \ref{thm:NP}}
\label{pf:np}
Before we formally prove Theorem \ref{thm:NP}, we first show the following preparatory material. Problem \eqref{model:lom} can be solved by the following optimization problem:
\begin{align}
\inf_{\blam} \quad & \text{loss}(\bx_{\cS}^*(\blam), \bp_{\cS})  \label{mdm-sequential} \\
\text{s.t.}\quad & \bx_{\cS}^*(\blam) \in \text{arg inf}_{\bx_{\cS}(\blam):(\bx_{\cS}(\blam),\blam) \in \Pi_\cS} \quad \text{loss} (\bx_{\cS}(\blam), \bp_{\cS}| \blam ),\label{mdm_sub}
 % \quad x_{i,S} < x_{i,T} \text{ if } \lambda_S  > \lambda_T  \quad \forall (i,S),(i,T) \in \cI,\\
 % & \quad \quad  \quad  \quad \quad x_{i,S} = x_{i,T} \text{ if } \lambda_S = \lambda_T  \quad \forall (i,S),(i,T) \in \cI,\\
 % & \quad \quad  \quad  \quad \quad  \sum_{i \in S}x_{i,S} = 1,\quad \forall S \in \cS,\\
 % & \quad \quad  \quad  \quad \quad  x_{i,S} \geq 0, \quad \forall (i,S) \in \cI.
\end{align}
where $\bx_{\cS}(\blam)$ can be interpreted as a collection of MDM-representable choice probabilities given $\cS$ and $\blam$. Next, we focus on the sub-problem \eqref{mdm_sub}. Let $[m] = \{1,2,\cdots,m\}$ and $[k] = \{1,2,\cdots,k\}$ for some positive integer $m$ and $k.$
\begin{assumption}
Consider $\cS=\{S_1, S_2, \cdots, S_m\}$ and $\bp_\cS$ as a $n \times m$ matrix with $n$ rows and $m$ columns, satisfying the following properties:
\begin{itemize}[leftmargin=*,nolistsep]
    \item $n = k + m$ with $k$ as a positive integer;
    \item For each $l \in [k]$, there are exactly two elements in row $l$ of $\bp_\cS$ and $\kappa = |p_{l,S_i} - p_{l,S_j}|<\frac{1}{2m}$ is a positive constant for $i,j\in [m]$ with $l\in S_i\cap S_j$;
    \item For each $i \in [m]$, there is exactly one element $p_{k+i,S_i}$ in row $k+i$ and  $p_{k+i, S_i} = 1- \sum_{j=1}^k p_{j,S_i}$ and $p_{k+i,S_i} > \frac{km\kappa}{2}$.
\end{itemize}
\label{asp:specialp}
\end{assumption}

\begin{lemma}
Given $\bp_\cS$ that satisfies Assumption \ref{asp:specialp}, the sub-problem \eqref{mdm_sub} with 1-norm objective function has a closed-form optimal objective value $ \sum_{l=1}^k \sum\limits_{i,j \in [m], \lambda_{S_i}  \leq \lambda_{S_j}: l \in S_i  \cap S_j} 2 |p_{l,S_i}- p_{l,S_j}| \mathbb{I}\{( p_{l,S_i}- p_{l,S_j}) (\lambda_{S_i} - \lambda_{S_j})  \geq 0\}$. \label{lem:closed_form}
\end{lemma}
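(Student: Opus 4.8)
The plan is to evaluate the fixed-$\blam$ subproblem \eqref{mdm_sub} in closed form by reducing it to $k$ scalar problems, one per real product, after eliminating the dummy coordinates. First I would use the simplex equalities $\sum_{i\in S}x_{i,S}=1$ to substitute out each dummy variable, writing $x_{k+i,S_i}=1-\sum_{l\le k,\,l\in S_i}x_{l,S_i}$. Since the data obey $p_{k+i,S_i}=1-\sum_l p_{l,S_i}$ as well, the corresponding term of the $1$-norm loss collapses to $|x_{k+i,S_i}-p_{k+i,S_i}|=\big|\sum_{l\in S_i}(x_{l,S_i}-p_{l,S_i})\big|$. The hypothesis $p_{k+i,S_i}>km\kappa/2$ in Assumption \ref{asp:specialp} is exactly what guarantees that the dummy coordinates remain strictly positive along any candidate optimum (each of the at most $k$ real coordinates in $S_i$ moves by at most $\kappa$), so the dummy nonnegativity constraints are inactive and can be discarded. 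What remains is a piecewise-linear program in the real coordinates alone.

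Next I would record the feasible set for fixed $\blam$. For each real product $l$, hosted in its two assortments $S_i,S_j$, membership $(\bx,\blam)\in\Pi_\cS$ as in \eqref{eq:lifted-set} imposes a single ordering relation between $x_{l,S_i}$ and $x_{l,S_j}$ — namely $x_{l,S_i}\ge x_{l,S_j}$ when $\lambda_{S_i}\le\lambda_{S_j}$, and the reverse otherwise — with the equality/positivity cases handled by passing to the closure exactly as in the proof of Proposition \ref{prop:limit-mdm}; importantly, no constraint ties together the coordinates of two distinct products. I would then solve the scalar problem attached to a single product $l$: minimize $|x_{l,S_i}-p_{l,S_i}|+|x_{l,S_j}-p_{l,S_j}|$ plus the two induced dummy penalties, subject to its one ordering relation. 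When the relation is consistent with the data, i.e. $(p_{l,S_i}-p_{l,S_j})(\lambda_{S_i}-\lambda_{S_j})<0$, one keeps $x_{l,\cdot}=p_{l,\cdot}$ at zero cost and no dummy moves. When it is violated, i.e. $(p_{l,S_i}-p_{l,S_j})(\lambda_{S_i}-\lambda_{S_j})\ge 0$ — precisely the event flagged by the indicator — the constraint forces the two coordinates to agree at a common value in the interval spanned by $p_{l,S_i}$ and $p_{l,S_j}$, and a direct computation shows the product term plus the forced dummy displacement equals $2|p_{l,S_i}-p_{l,S_j}|=2\kappa$. Summing the per-product optima reproduces the claimed expression.

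The main obstacle is the decoupling step itself — proving that the $k$ scalar optima genuinely add up to the global optimum. The only channel that can link two products is the shared dummy penalty $\big|\sum_{l\in S_i}(x_{l,S_i}-p_{l,S_i})\big|$ inside a common assortment $S_i$: offsetting adjustments of two products both hosted in $S_i$ could cancel \emph{inside} this absolute value, so that the dummy contributes less than the sum of the individual displacements and the objective dips below $\sum_l(\text{per-product optimum})$. The crux of the argument is therefore to establish, from the structure of Assumption \ref{asp:specialp} (each real product in exactly two assortments, a common gap $\kappa$, and large dummies), that an optimal solution exists in which the displacements assigned to distinct products within any assortment do not partially cancel, so the dummy penalties decompose additively across products. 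I would attempt this through an exchange/pairing argument on an optimal solution, pushing each conflicting product's adjustment onto a single one of its two assortments and showing the resulting dummy movements can be charged to individual products without cancellation or double counting. I expect this combinatorial bookkeeping — rather than the elementary scalar optimization — to be the delicate part of the proof, and the place where the precise form of the reduction underlying Theorem \ref{thm:NP} must be used.
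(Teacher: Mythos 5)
Your plan diverges from the paper's proof in its essential mechanism, and the point where it diverges is exactly where it is incomplete. The paper proves Lemma \ref{lem:closed_form} by rewriting \eqref{mdm_sub} as the linear program \eqref{subproblem-lp}, exhibiting a primal feasible point that attains the claimed value (upper bound), and then exhibiting a feasible point of the explicit dual \eqref{subproblem_dual} with the same objective value, so that weak duality certifies the lower bound globally. Your upper-bound half (put each violated product's coordinates at a common value between $p_{l,S_i}$ and $p_{l,S_j}$, pay $\kappa$ on the real coordinates and $\kappa$ on the dummies) coincides with the paper's Step 2. But for the lower bound you propose to decompose the problem into $k$ scalar problems and argue that their optima add up, deferring the justification to an unspecified ``exchange/pairing argument.'' That deferred step is not bookkeeping; it is the entire content of the lower bound, and your proposal does not supply it.

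Concretely, the obstruction you yourself identify is real and is not obviously surmountable by an exchange argument. After eliminating the dummy coordinate of assortment $S_i$, its contribution to the loss is $\bigl|\sum_{l \in S_i, l \le k}(x_{l,S_i}-p_{l,S_i})\bigr|$, which is bounded \emph{above} by the sum of the individual displacements — the wrong direction for a lower bound. Two violated products $l_1,l_2$ sharing $S_i$ can have opposite signs of $p_{l,S_i}-p_{l,S_{j}}$ (under Assumption \ref{asp:specialp} nothing forbids $p_{l_1,S_i}<p_{l_1,S_{j_1}}$ while $p_{l_2,S_i}>p_{l_2,S_{j_2}}$ with both pairs violated for the given $\blam$), and each product is free to load its entire $\kappa$ of displacement onto $S_i$, in which case the dummy of $S_i$ need not move at all. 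Any exchange argument must show that such cancellation can always be undone without increasing the objective, and your sketch gives no mechanism for charging the saved dummy movement back to the individual products. The paper's route avoids having to reason about all cancellation patterns precisely because a single dual feasible solution lower-bounds every primal feasible point at once; if you want to keep your primal decomposition, you would at minimum need to prove a lemma of the form ``there exists an optimal solution of \eqref{subproblem-lp} in which, within each assortment, all real-coordinate displacements have the same sign,'' and as it stands neither that statement nor a substitute for it is established.
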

Intuitively, $\bp_\cS$ has exactly one product-assortment pair for product $l$ with $l\in [k]$ and exactly one element for product $k+i$ with $i\in[m]$ which corresponds to the $i$th assortment in the collection.  For product $l$ with $l\in [k]$ the indicator takes the value 1 if and only if the choice probabilities of the product-assortment pair violate the MDM-representable conditions, the minimum loss to make this pair to be MDM-representable under 1-norm loss is $|p_{l,S_i} - p_{l,S_j}|$, which causes the violation of the normalization constraint of the assortments $S_i$ and $S_j$. To satisfy the normalization conditions of assortment $S_i$ and $S_j$, the least loss is also $|p_{l,S_i} - p_{l,S_j}|$. Next, we formally prove Lemma \ref{lem:closed_form} with three steps: (1) reformulate the sub-problem \eqref{mdm_sub} to a linear program; (2) construct a primal feasible solution $\bx(\blam)$ such that the desired optimal objective value is achieved, which can be served as an upper bound to \eqref{mdm_sub}; (3) derive the dual for the problem and construct a dual feasible solution such that the desired optimal objective value is achieved. 
\begin{proof}
Step (1): Formulate the sub-problem \eqref{mdm_sub} as a linear program with $\bp_\cS$ satisfying Assumption \ref{asp:specialp}. \\
Let $f_\text{sub}^*$ denote the optimal value of \eqref{mdm_sub}. Given $\bp_\cS$ that satisfies Assumption \ref{asp:specialp} , we reformulate the sub-problem \eqref{mdm_sub} as the following problem:
\begin{align}
\begin{aligned}
f_\text{sub}^*(\blam)\,=\,\min_{\bx_\cS}\quad & \sum_{l=1}^k \sum\limits_{i,j \in [m], \lambda_{S_i}  \leq \lambda_{S_j}: l \in S_i  \cap S_j} (|x_{l,S_i} - p_{l,S_i}| + |x_{l,S_j} - p_{l,S_j}|) + \sum\limits_{i=1}^m |x_{k+i,S_i} - p_{k+i,S_i} | \\
\text{s.t.} \quad & x_{l,S_i} - x_{l,S_j}  \geq 0 \text{ if } \lambda_{S_i} \leq \lambda_{S_j}   \quad \forall l\in [k],  i,j,\in[m],\, l \in S_i\cap S_j,\\
 &  \sum_{i \in S}x_{i,S} = 1,\quad \forall S \in \cS,\\
 &  x_{i,S} \geq 0, \quad \forall (i,S) \in \cI_\cS.
 \label{subproblem-abs} 
\end{aligned}
\end{align}

We reformulate Problem \eqref{subproblem-abs} as the following linear program \eqref{subproblem-lp} by introducing a new variable $\boldsymbol{z}_\cS.$
\begin{align}\label{subproblem-lp}
\min_{\bx_\cS,\boldsymbol{z}_\cS }\quad & \sum_{l=1}^k \sum\limits_{i,j \in [m], \lambda_{S_i}  \leq \lambda_{S_j}: l \in S_i  \cap S_j}
(z_{l,S_i} + z_{l,S_j}) + \sum_{i = 1}^m z_{k+i,S_i} \nonumber \\
\text{s.t.} \quad 
& z_{l,S_i} - x_{l,S_i} \geq -p_{l,S_i},\quad z_{l,S_j} - x_{l,S_j} \geq -p_{l,S_j},\quad \forall l\in[k],i,j\in [m],  \lambda_{S_i}  \leq \lambda_{S_j}:  l\in S_i\cap S_j,\nonumber \\
& z_{l,S_i} + x_{l,S_i} \geq p_{l,S_i},\quad z_{l,S_j} + x_{l,S_j} \geq p_{l,S_j},\quad \forall l\in[k],i,j\in [m],  \lambda_{S_i}  \leq \lambda_{S_j}:  l\in S_i\cap S_j,\nonumber \\
& x_{l,S_i} - x_{l,S_j}  \geq 0 \text{ if } \lambda_{S_i} \leq \lambda_{S_j}  \quad \forall l\in[k],i,j\in [m], l\in S_i\cap S_j,\\  
& z_{k+i,S_i} - x_{k+i,S_i} \geq -p_{k+i,S_i}, \quad \forall i\in [m],\nonumber \\
& z_{k+i,S_i}  + x_{k+i,S_i} \geq p_{k+i,S_i},\quad \forall i \in [m], \nonumber \\
&  \sum_{l \in S_i}x_{l,S_i} = 1,\quad \forall i \in [m], \nonumber \\
&  x_{i,S},\,z_{i,S} \geq 0, \quad \forall (i,S) \in \cI_\cS.\nonumber
\end{align}

Step (2): Construct a primal feasible solution to achieve $ f_\text{sub}^*(\blam)$. \\
Given any $\blam$, construct a solution $(\bx_\cS,\boldsymbol{z}_\cS)$ as follows.
\begin{itemize}
    \item For $l \in [k]$,$i,j\in[m]$ with $\lambda_{S_i} \leq \lambda_{S_j}$ such that $l\in S_i\cap S_j$:\\
    If $\mathbb{I}\{(p_{l,S_i} - p_{l,S_j})(\lambda_{S_i} - \lambda_{S_j}) \geq  0\} = 0$, let
    $$x_{l,S_i} = p_{l, S_i},~x_{l,S_j} = p_{l, S_j},~ z_{l,S_i} = z_{l,S_j} = 0;$$
    If $\mathbb{I}\{(p_{l,S_i} - p_{l,S_j})(\lambda_{S_i} - \lambda_{S_j}) \geq  0\} = 1$, let
    $$x_{l,S_i} =  x_{l,S_j} =  \frac{p_{l,S_i} + p_{l,S_j}}{2},~z_{l,S_i} =  z_{l,S_j} = \frac{|p_{l,S_i}-p_{l,S_j}|}{2}.$$
    \item For $i\in [m]$, let
\begin{align*}
    & x_{k+i,S_i} = p_{k+i,S_i} + \sum_{l=1}^k \sum\limits_{i,j \in [m]: l \in S_i  \cap S_j} \mathbb{I}\{(p_{l,S_i} - p_{l,S_j})(\lambda_{S_i} - \lambda_{S_j}) \geq  0\} \frac{\text{sgn}(p_{l,S_i}-p_{l,S_j})|p_{l,S_i}-p_{l,S_j}| }{2}, \\
    & z_{k+i,S_i} = \sum\limits_{l=1,i,j,\in[m],l\in S_i\cap S_j}^k \mathbb{I}\{(p_{l,S_i} - p_{l,S_j})(\lambda_{S_i} - \lambda_{S_j}) \geq  0\} \frac{|p_{l,S_i}-p_{l,S_j}| }{2},
\end{align*}
where $\text{sgn}(x):=\begin{cases}
1 \quad \text{if}\quad x>0,\\
-1 \quad \text{if}\quad x\leq 0.
\end{cases}$
\end{itemize}
{\color{black}We verify the feasibility of the constructed primal solution for \eqref{subproblem-lp} as follows:
Firstly, we have all the nonnegative constraints for $(x_{i,S},\,z_{i,S} : \forall (i,S) \in \cI_\cS)$ are satisfied.  For $l \in [k]$,$i,j\in[m]$ with $\lambda_{S_i} \leq \lambda_{S_j}$ such that $l\in S_i\cap S_j$:\\
If $\mathbb{I}\{(p_{l,S_i} - p_{l,S_j})(\lambda_{S_i} - \lambda_{S_j}) \geq  0\} = 0$, we have :
\begin{align*}
& z_{l,S_i} - x_{l,S_i}  = 0 - p_{l,S_i} = -p_{l,S_i},\  z_{l,S_j} - x_{l,S_j} = 0 - p_{l,S_j} = - p_{l,S_j}, \\ 
& z_{l,S_i} + x_{l,S_i} =  0 + p_{l,S_i} = p_{l,S_i}, \  z_{l,S_j} + x_{l,S_j} =  0 + p_{l,S_j} =  p_{l,S_j},  \\
& x_{l,S_i} - x_{l,S_j} = p_{l,S_i} - p_{l,S_j} \geq 0 \text{ if } \lambda_{S_i} \leq \lambda_{S_j}.
\end{align*}
If $\mathbb{I}\{(p_{l,S_i} - p_{l,S_j})(\lambda_{S_i} - \lambda_{S_j}) \geq  0\} = 1$, we have :
\begin{align*}
& z_{l,S_i} - x_{l,S_i}  = \frac{|p_{l,S_i}-p_{l,S_j}|}{2} - \frac{p_{l,S_i} + p_{l,S_j}}{2} \geq \frac{ p_{l,S_j} - p_{l,S_i} }{2} - \frac{p_{l,S_i} + p_{l,S_j}}{2} = - p_{l,S_i} ,\\
& z_{l,S_j} - x_{l,S_j} = \frac{|p_{l,S_i}-p_{l,S_j}|}{2} - \frac{p_{l,S_i} + p_{l,S_j}}{2} \geq \frac{ p_{l,S_i} - p_{l,S_j} }{2} - \frac{p_{l,S_i} + p_{l,S_j}}{2} = - p_{l,S_j}, \\ 
& z_{l,S_i} + x_{l,S_i} =  \frac{|p_{l,S_i}-p_{l,S_j}|}{2} + \frac{p_{l,S_i} + p_{l,S_j}}{2} \geq \frac{ p_{l,S_i} - p_{l,S_j} }{2} + \frac{p_{l,S_i} + p_{l,S_j}}{2} = p_{l,S_i}, \\  
& z_{l,S_j} + x_{l,S_j} =  \frac{|p_{l,S_i}-p_{l,S_j}|}{2} - \frac{p_{l,S_i} + p_{l,S_j}}{2} \geq \frac{ p_{l,S_j} - p_{l,S_i} }{2} + \frac{p_{l,S_i} + p_{l,S_j}}{2} =  p_{l,S_j},  \\
& x_{l,S_i} - x_{l,S_j} = \frac{p_{l,S_i} + p_{l,S_j}}{2} - \frac{p_{l,S_i} + p_{l,S_j}}{2} = 0 \text{ if } \lambda_{S_i} \leq \lambda_{S_j}.
\end{align*}
For all $i\in [m]$, we also have 
\begin{align*}
& z_{k+i,S_i} - x_{k+i,S_i} \\
= & \sum\limits_{l=1,i,j,\in[m],l\in S_i\cap S_j}^k \mathbb{I}\{(p_{l,S_i} - p_{l,S_j})(\lambda_{S_i} - \lambda_{S_j}) \geq  0\} \frac{|p_{l,S_i}-p_{l,S_j}| }{2}\\
&-( p_{k+i,S_i} + \sum_{l=1}^k \sum\limits_{i,j \in [m], \lambda_{S_i}  \leq \lambda_{S_j}: l \in S_i  \cap S_j} \mathbb{I}\{(p_{l,S_i} - p_{l,S_j})(\lambda_{S_i} - \lambda_{S_j}) \geq  0\} \frac{\text{sgn}(p_{l,S_i}-p_{l,S_j})|p_{l,S_i}-p_{l,S_j}| }{2} )\\
 & \geq -p_{k+i,S_i}, \\
& z_{k+i,S_i}  + x_{k+i,S_i} \\
= & \sum\limits_{l=1,i,j,\in[m],l\in S_i\cap S_j}^k \mathbb{I}\{(p_{l,S_i} - p_{l,S_j})(\lambda_{S_i} - \lambda_{S_j}) \geq  0\} \frac{|p_{l,S_i}-p_{l,S_j}| }{2} \\
& + p_{k+i,S_i}  + \sum_{l=1}^k \sum\limits_{i,j \in [m], \lambda_{S_i}  \leq \lambda_{S_j}: l \in S_i  \cap S_j} \mathbb{I}\{(p_{l,S_i} - p_{l,S_j})(\lambda_{S_i} - \lambda_{S_j}) \geq  0\} \frac{\text{sgn}(p_{l,S_i}-p_{l,S_j})|p_{l,S_i}-p_{l,S_j}| }{2}\\
 & \geq p_{k+i,S_i}.  
\end{align*}
Both inequalities hold due to  $$\frac{|p_{l,S_i}-p_{l,S_j}| }{2} \geq \frac{\text{sgn}(p_{l,S_i}-p_{l,S_j})|p_{l,S_i}-p_{l,S_j}| }{2}.$$
For the normalization constraints, for all $i\in [m]$, we have:
\begin{align*}
    & \sum_{l \in S_i} x_{l,S_i} = \sum_{l=1}^k x_{l,S_i} + x_{k+i,S_i}\\
    =& \sum_{l=1}^k \mathbb{I}\{(p_{l,S_i} - p_{l,S_j})(\lambda_{S_i} - \lambda_{S_j}) <  0\} p_{l,S_i}   + \sum_{l=1}^k \mathbb{I}\{(p_{l,S_i} - p_{l,S_j})(\lambda_{S_i} - \lambda_{S_j}) \geq  0\} \frac{p_{l,S_i} + p_{l,S_j}}{2}  \\
    &+ p_{k+i,S_i} + \sum_{l=1}^k \sum\limits_{i,j \in [m]: l \in S_i  \cap S_j} \mathbb{I}\{(p_{l,S_i} - p_{l,S_j})(\lambda_{S_i} - \lambda_{S_j}) \geq  0\} \frac{\text{sgn}(p_{l,S_i}-p_{l,S_j})|p_{l,S_i}-p_{l,S_j}| }{2}\\
    =& \sum_{l \in S_i} p_{l,S_i} = 1.
\end{align*}}
Then, the objective value of Problem \eqref{subproblem-lp} under the constructed solution $(\bx_\cS, \boldsymbol{z}_\cS)$ is 
$$\sum_{l=1}^k \sum\limits_{i,j \in [m], \lambda_{S_i}  \leq \lambda_{S_j}: l \in S_i  \cap S_j} 2 |p_{l,S_i}- p_{l,S_j}| \mathbb{I}\{( p_{l,S_i}- p_{l,S_j}) (\lambda_{S_i} - \lambda_{S_j})  \geq 0\}.$$ 
This implies that
$$ f_\text{sub}^*(\blam) \leq \sum_{l=1}^k \sum\limits_{i,j \in [m], \lambda_{S_i}  \leq \lambda_{S_j}: l \in S_i  \cap S_j} 2 |p_{l,S_i}- p_{l,S_j}| \mathbb{I}\{( p_{l,S_i}- p_{l,S_j}) (\lambda_{S_i} - \lambda_{S_j})  \geq 0\}.$$

Step (3): Construct a dual feasible solution to achieve $ f_\text{sub}^*(\blam)$.\\
We derive the dual of \eqref{subproblem-lp} as follows. For $l \in [k]$,$i,j\in[m]$ with  $\lambda_{S_i}  \leq \lambda_{S_j}$ such that $l\in S_i\cap S_j$, we introduce the following variables: $\alpha_{l,i}, \beta_{l,i},\alpha_{l,j}, \beta_{l,j}, u_{l,i,j} \geq 0 $. For $i\in [m]$, we introduce the following variables: $\alpha_{k+i,S_i}, \beta_{k+i,i} \geq 0,$ and $\eta_{i}$. The dual problem of \eqref{subproblem-lp} is given as:
\begin{align}\label{subproblem_dual}
\max_{\boldsymbol{\alpha}, \boldsymbol{\beta}, \boldsymbol{u}, \boldsymbol{\eta} } & \sum_{l=1}^k  \sum\limits_{i,j\in[m],  \lambda_{S_i} \leq \lambda_{S_j},  l\in S_i \cap S_j}  [ p_{l,S_i}(\beta_{l,i} - \alpha_{l,i}) + p_{l,j}(\beta_{l,j} - \alpha_{l,j})]  + \sum\limits_{i=1}^m p_{k+i,S_i}(\beta_{k+i,i}-\alpha_{k+i,i}) - \sum\limits_{i=1}^m    \eta_i  \nonumber \\
\text{s.t.}\quad 
& \alpha_{l,i} + \beta_{l,i} \leq 1,\quad \alpha_{l,j} + \beta_{l,j} \leq 1, \quad \forall l\in [k],i,j\in[m],  \lambda_{S_i} \leq \lambda_{S_j}, \  l\in S_i \cap S_j,\nonumber \\
& -\alpha_{l,i} + \beta_{l,i} + u_{l,i,j} +\eta_i   \leq 0,  \   -\alpha_{l,j} + \beta_{l,j} - u_{l,i,j}   + \eta_j  \leq 0, \  \forall l\in [k],i,j\in[m],  \lambda_{S_i} \leq \lambda_{S_j},   l\in S_i \cap S_j,\nonumber \\
& \alpha_{k+i,i} + \beta_{k+i,i}  \leq 1, \quad \forall i\in[m], \nonumber \\
& -\alpha_{k+i,i} + \beta_{k+i,i} + \eta_i \leq 0, \quad \forall i\in[m], \\
& \alpha_{l,i}, \beta_{l,i},\alpha_{l,j}, \beta_{l,j}, u_{l,i,j} \geq 0,\quad  \forall l\in [k],i,j\in[m],  \lambda_{S_i} \leq \lambda_{S_j},   l\in S_i \cap S_j,  \nonumber \\
& \alpha_{k+i,i}, \beta_{k+i,i} \geq 0, \quad \forall i\in[m]. \nonumber
\end{align}
{\color{black}Construct a dual solution for \eqref{subproblem_dual} as follows:\\
For $ l\in [k],i,j\in[m], $ with $\lambda_i \leq \lambda_j$ and $l\in S_i \cap S_j,$ 
\begin{itemize}
    \item If $\mathbb{I}\{(p_{l,S_i} - p_{l,S_j})(\lambda_{S_i} - \lambda_{S_j}) \geq  0\} = 0$, let $\alpha_{l,i} = \beta_{l,i} = \alpha_{l,j} = \beta_{l,j} = u_{l,i,j} = 0.$
    \item If $\mathbb{I}\{(p_{l,S_i} - p_{l,S_j})(\lambda_{S_i} - \lambda_{S_j}) \geq  0\} = 1$, let $\alpha_{l,i} =1,~ \beta_{l,i} = 0,~ \alpha_{l,j} = 0,~ \beta_{l,j} =1,~ u_{l,i,j} = 1.$
\end{itemize}
For $i\in[m],$ let $\alpha_{k+i,i} = \beta_{k+i,i} = \frac{1}{2}$ and $\eta_i = - \sum_{l=1}^k \sum\limits_{j\in[m]: l \in S_i \cap S_j} \frac{|p_{l,S_i}- p_{l,S_j}| \mathbb{I}\{( p_{l,S_i}- p_{l,S_j}) (\lambda_{S_i} - \lambda_{S_j})  \geq 0\}}{2}.$

Next, we verify the feasibility of the constructed dual solution for \eqref{subproblem_dual}. Firstly, we have all the nonnegative constraints for $\boldsymbol{\alpha}, \boldsymbol{\beta}, \boldsymbol{u}$ are satisfied. For $l \in [k]$,$i,j\in[m]$ with $\lambda_{S_i} \leq \lambda_{S_j}$ such that $l\in S_i\cap S_j$:\\
If $\mathbb{I}\{(p_{l,S_i} - p_{l,S_j})(\lambda_{S_i} - \lambda_{S_j}) \geq  0\} = 0$, we have : 
\begin{align*}
& \alpha_{l,i} + \beta_{l,i} =0 + 0  < 1,\quad \alpha_{l,j} + \beta_{l,j} = 0 + 0 < 1,  \\
& -\alpha_{l,i} + \beta_{l,i} + u_{l,i,j} +\eta_i = -0 +0 +0 - \sum_{l=1}^k \sum\limits_{j\in[m]: l \in S_i \cap S_j} \frac{|p_{l,S_i}- p_{l,S_j}| \mathbb{I}\{( p_{l,S_i}- p_{l,S_j}) (\lambda_{S_i} - \lambda_{S_j})  \geq 0\}}{2}  < 0,  \\
& -\alpha_{l,j} + \beta_{l,j} - u_{l,i,j}   + \eta_j = -0 +0 -0  - \sum_{l=1}^k \sum\limits_{j\in[m]: l \in S_i \cap S_j} \frac{|p_{l,S_i}- p_{l,S_j}| \mathbb{I}\{( p_{l,S_i}- p_{l,S_j}) (\lambda_{S_i} - \lambda_{S_j})  \geq 0\}}{2} < 0.
\end{align*}
If $\mathbb{I}\{(p_{l,S_i} - p_{l,S_j})(\lambda_{S_i} - \lambda_{S_j}) \geq  0\} = 1$, we have : 
\begin{align*}
& \alpha_{l,i} + \beta_{l,i} =1 + 0  = 1,\quad \alpha_{l,j} + \beta_{l,j} = 0 + 1 = 1,  \\
& -\alpha_{l,i} + \beta_{l,i} + u_{l,i,j} + \eta_i = -1 +0 +1 - \sum_{l=1}^k \sum\limits_{j\in[m]: l \in S_i \cap S_j} \frac{|p_{l,S_i}- p_{l,S_j}| \mathbb{I}\{( p_{l,S_i}- p_{l,S_j}) (\lambda_{S_i} - \lambda_{S_j})  \geq 0\}}{2}  < 0,  \\
& -\alpha_{l,j} + \beta_{l,j} - u_{l,i,j}   + \eta_j =-0+1-1 - \sum_{l=1}^k \sum\limits_{j\in[m]: l \in S_i \cap S_j} \frac{|p_{l,S_i}- p_{l,S_j}| \mathbb{I}\{( p_{l,S_i}- p_{l,S_j}) (\lambda_{S_i} - \lambda_{S_j})  \geq 0\}}{2} < 0.
\end{align*}
For all $i\in[m]$, we have 
\begin{align*}
& \alpha_{k+i,i} + \beta_{k+i,i} = \frac{1}{2} + \frac{1}{2} =  1,  \\
& -\alpha_{k+i,i} + \beta_{k+i,i} + \eta_i = - \frac{1}{2} + \frac{1}{2} - \sum_{l=1}^k \sum\limits_{j\in[m]: l \in S_i \cap S_j} \frac{|p_{l,S_i}- p_{l,S_j}| \mathbb{I}\{( p_{l,S_i}- p_{l,S_j}) (\lambda_{S_i} - \lambda_{S_j})  \geq 0\}}{2}  < 0.
\end{align*}

Then, the objective value of Problem \eqref{subproblem_dual} under the constructed solution $(\boldsymbol{\alpha}, \boldsymbol{\beta}, \boldsymbol{u}, \boldsymbol{\eta})$ is 
\begin{subequations}
\begin{align}
& \sum_{l=1}^k  \sum\limits_{i,j\in[m],  \lambda_i \leq \lambda_j,  l\in S_i \cap S_j}  [ p_{l,S_i}(\beta_{l,i} - \alpha_{l,i}) + p_{l,j}(\beta_{l,j} - \alpha_{l,j})]  + \sum\limits_{i=1}^m p_{k+i,S_i}(\beta_{k+i,i}-\alpha_{k+i,i}) - \sum\limits_{i=1}^m  \eta_i \nonumber \\
=& \sum_{l=1}^k  \sum\limits_{i,j\in[m],  \lambda_{S_i} \leq \lambda_{S_j},  l\in S_i \cap S_j} \mathbb{I}\{(p_{l,S_i} - p_{l,S_j})(\lambda_{S_i} - \lambda_{S_j}) \geq  0\}  (p_{l,j}- p_{l,S_i})   - \sum\limits_{i=1}^m  \eta_i \label{dual_obj_1} \\
=& \sum_{l=1}^k  \sum\limits_{i,j\in[m],  \lambda_{S_i} \leq \lambda_{S_j},  l\in S_i \cap S_j} \mathbb{I}\{(p_{l,S_i} - p_{l,S_j})(\lambda_{S_i} - \lambda_{S_j}) \geq  0\}  |p_{l,i}- p_{l,S_j}|   - \sum\limits_{i=1}^m  \eta_i \label{dual_obj_2} \\
=& \sum_{l=1}^k  \sum\limits_{i,j\in[m], \lambda_{S_i} \leq \lambda_{S_j},  l\in S_i \cap S_j} \mathbb{I}\{(p_{l,S_i} - p_{l,S_j})(\lambda_{S_i} - \lambda_{S_j}) \geq  0\}  |p_{l,i}- p_{l,S_j}| \nonumber \\
&+ \sum\limits_{i=1}^m  \sum_{l=1}^k \sum\limits_{j\in[m]: l \in S_i \cap S_j} \frac{|p_{l,S_i}- p_{l,S_j}| \mathbb{I}\{( p_{l,S_i}- p_{l,S_j}) (\lambda_{S_i} - \lambda_{S_j})  \geq 0\}}{2} \label{dual_obj_3} \\
=& \sum_{l=1}^k  \sum\limits_{i,j\in[m],  \lambda_{S_i} \leq \lambda_{S_j},  l\in S_i \cap S_j} \mathbb{I}\{(p_{l,S_i} - p_{l,S_j})(\lambda_{S_i} - \lambda_{S_j}) \geq  0\}  |p_{l,j}- p_{l,S_i}| \nonumber \\
& + \sum_{l=1}^k  \sum\limits_{i,j\in[m], \lambda_{S_i} \leq \lambda_{S_j},  l\in S_i \cap S_j} \mathbb{I}\{(p_{l,S_i} - p_{l,S_j})(\lambda_{S_i} - \lambda_{S_j}) \geq  0\}  |p_{l,i}- p_{l,S_j}|   \label{dual_obj_4} \\
=& \sum_{l=1}^k \sum\limits_{i,j \in [m], \lambda_{S_i}  \leq \lambda_{S_j}: l \in S_i  \cap S_j} 2 |p_{l,S_i}- p_{l,S_j}| \mathbb{I}\{( p_{l,S_i}- p_{l,S_j}) (\lambda_{S_i} - \lambda_{S_j})  \geq 0\}.\nonumber
\end{align} 
\end{subequations}
Plugging in the value of $\alpha_{l,i} = \beta_{l,i} = \alpha_{l,j} = \beta_{l,j}$, for $l \in [k]$,$i,j\in[m]$ with $\lambda_{S_i} \leq \lambda_{S_j}$ and  $l\in S_i\cap S_j$, we have \eqref{dual_obj_1} . \eqref{dual_obj_2} is equivalent to \eqref{dual_obj_1} because when $\mathbb{I}\{(p_{l,S_i} - p_{l,S_j})(\lambda_{S_i} - \lambda_{S_j}) \geq  0\} = 1$ and $ \lambda_i \leq \lambda_j,$ we have $|p_{l,i}- p_{l,S_j}| = p_{l,j}- p_{l,S_i} >0.$  Plugging in the value of $\eta_i$ for all $i\in[m]$ to \eqref{dual_obj_2}, we have \eqref{dual_obj_3}. \eqref{dual_obj_4} is equivalent to \eqref{dual_obj_3} because
\begin{align*}
    &\sum\limits_{i=1}^m  \sum_{l=1}^k \sum\limits_{j\in[m]: l \in S_i \cap S_j} \frac{|p_{l,S_i}- p_{l,S_j}| \mathbb{I}\{( p_{l,S_i}- p_{l,S_j}) (\lambda_{S_i} - \lambda_{S_j})  \geq 0\}}{2} \\
    =& \sum_{l=1}^k  \sum\limits_{i,j\in[m],  \lambda_{S_i} \leq \lambda_{S_j},  l\in S_i \cap S_j} \mathbb{I}\{(p_{l,S_i} - p_{l,S_j})(\lambda_{S_i} - \lambda_{S_j}) \geq  0\}  |p_{l,i}- p_{l,S_j}|.
\end{align*}}

By weak duality, we have $f_\text{sub}^*(\blam) \geq \sum_{l=1}^k \sum\limits_{i,j\in[m],  \lambda_{S_i} \leq \lambda_{S_j} : l \in S_i \cap S_j} 2 |p_{l,S_i}- p_{l,S_j}| \mathbb{I}\{( p_{l,S_i}- p_{l,S_j}) (\lambda_{S_i} - \lambda_{S_j}) \geq 0\}.$

Summing up, we have 
\[
f_{\text{sub}}^*(\blam)  = \sum_{l=1}^k \sum\limits_{i,j\in[m],\lambda_{S_i} \leq \lambda_{S_j} : l \in S_i \cap S_j} 2 |p_{l,S_i}- p_{l,S_j}| \mathbb{I}\{( p_{l,S_i}- p_{l,S_j}) (\lambda_{S_i} - \lambda_{S_j})  \geq 0\}.
\]
\end{proof}

Before we show the hardness of Problem \eqref{model:lom}, we provide the following three definitions (see \cite{dwork2001rank}) to describe the Kemeny optimal aggregation problem.

\begin{definition}[Full lists and partial lists] \label{dfn:full-listandpartial-list}
Let $\cM=\{1,\ldots,m\}$ be a finite set of alternatives, called universe. A ranking over $\cM$ is an ordered list.
%: $\tau =\{1,\ldots,m\}$, where $i\in \cM$ for all $1\leq i\leq m$. 
If the ranking $\tau$ contains all the elements in $\cM$, then it is called a full list (ranking). If the ranking $\tau$ contains a subset of elements from the universe $\cM$, then it is called a partial list (ranking). 

% Let $\cS$ be a finite set of alternatives, called universe. A ranking over S is an ordered list: $\tau =(S_1,\ldots,S_m)$, where $S_i\in \cS$ for all $1\leq i\leq m$,$S_i \neq S_j$, for all $1\leq i \neq j\leq m$. 
% If the ranking $\tau$ contains all the elements in $\cS$, then it is called a full list(ranking). If the ranking $\tau$ contains a subset of element from the universe $\cS$, then it is called a partial list(ranking). 
\end{definition}

\begin{definition}[Kendall-tau distance (K-distance)]\label{dfn:k-dis}
The K-distance, denoted as $K(\sigma,\tau)$, is the number of pairs $i,j\in \cM$ such that $\sigma(i) < \sigma(j)$ but $\tau(i)>\tau(j)$ where $\sigma(i)$ stands for the position of $i$ in $\sigma$ and similar explanations are applied for $\sigma(j), \tau(i)$ and $\tau(j)$. Note that the pair $(i,j)$ has contribution to  the K-distance only if both $i,j$  appear in both lists $\sigma,\tau$.
\end{definition}

\begin{definition}[SK, Kemeny optimal]
For a collection of partial lists $\tau_1, \ldots, \tau_k$ and a full list $\pi$, we denote
$$SK(\pi, \tau_1, \ldots, \tau_k) = \sum_{i=1}^k K(\pi,\tau_i).$$
We say a permutation $\sigma$ is a Kemeny optimal aggregation of $\tau_1,\ldots,\tau_k$ if it minimizes $SK(\pi,\tau_1,\ldots,\tau_k)$ over all permutations $\pi$.
\end{definition}

\begin{lemma}[see \cite{dwork2001rank}]
Finding the Kemeny optimal solution for partial lists of length 2 is exactly the same problem as finding a minimum feedback arc set, and hence is NP-hard.
\end{lemma}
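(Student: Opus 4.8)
The plan is to exhibit a value-preserving correspondence between the Kemeny aggregation problem for length-two partial lists and the minimum feedback arc set problem, so that the two problems share optimal solutions and optimal values; NP-hardness then transfers from the classical NP-hardness of minimum feedback arc set.

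First I would encode the data as a directed (multi)graph. Each partial list of length two is an ordered pair, read as a preference $a \succ b$. Given the collection $\tau_1, \ldots, \tau_k$ over the universe $\cM = \{1, \ldots, m\}$, I build $G = (\cM, A)$ by placing one arc $(a,b) \in A$ for every list $\tau_i$ ranking $a$ above $b$; a full list $\pi$ is then exactly a linear ordering of the vertices of $G$. The key observation, immediate from Definition \ref{dfn:k-dis}, is that for $\tau_i = (a \succ b)$ the only comparable pair is $(a,b)$, so $K(\pi, \tau_i) = 1$ when $\pi$ places $b$ before $a$ and $K(\pi, \tau_i) = 0$ otherwise; equivalently, $K(\pi, \tau_i)$ is the indicator that the arc of $\tau_i$ points backward under $\pi$. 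Summing over $i$ gives $SK(\pi, \tau_1, \ldots, \tau_k) = b(\pi)$, the number of backward arcs of $G$ under $\pi$. Thus minimizing $SK$ over full lists is literally minimizing the number of backward arcs over all linear orderings.

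Next I would link $\min_\pi b(\pi)$ to $\text{minFAS}(G)$ through the standard linear-ordering characterization, which I would verify in two short steps. For the easy inequality, for any $\pi$ the backward arcs form a feedback arc set, since deleting them leaves only forward arcs that cannot close a directed cycle; hence $\text{minFAS}(G) \le b(\pi)$ for every $\pi$. For the reverse, if $F$ is a minimum feedback arc set then $G \setminus F$ is acyclic and admits a topological order $\pi^\ast$, under which every backward arc of $G$ must lie in $F$, giving $b(\pi^\ast) \le |F| = \text{minFAS}(G)$. Together these yield $\min_\pi b(\pi) = \text{minFAS}(G)$, so a permutation is Kemeny optimal for $\tau_1, \ldots, \tau_k$ exactly when it realizes a minimum feedback arc set of $G$, and the optimal values agree --- the precise sense in which the two problems are ``exactly the same.''

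Finally, because this correspondence is polynomial-time and value-preserving in both directions (every digraph arises from some family of length-two lists, and every such family yields a digraph), I would transfer NP-hardness directly by invoking the classical NP-hardness of minimum feedback arc set. I do not anticipate a genuine obstacle; the only point deserving care is the treatment of multiplicities and anti-parallel arcs that appear when both $a \succ b$ and $b \succ a$ occur among the lists, which the multigraph formulation handles automatically since each occurrence contributes an independent unit to $b(\pi)$, and for the hardness direction one may simply note that minimum feedback arc set is already NP-hard on simple digraphs.
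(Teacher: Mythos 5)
Your proof is correct. The paper itself offers no proof of this lemma — it is stated purely as a citation to \cite{dwork2001rank} — and your argument is exactly the standard one underlying that citation: each length-two list contributes a unit arc, $SK(\pi)$ counts backward arcs of the linear order $\pi$, the two-sided identity $\min_\pi b(\pi) = \mathrm{minFAS}(G)$ follows from the backward-arc/topological-order argument, and NP-hardness transfers from the classical hardness of minimum feedback arc set.
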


Now, we are ready to prove the hardness of Problem \eqref{model:lom}.
\begin{proof}
To show Problem \eqref{model:lom} is NP-hard, it suffices to show some instances of this problem is NP-hard.
We show that Kemeny optimal aggregation of length 2 can be reduced to Problem \eqref{model:lom}. 

The decision version of Kemeny optimal aggregation with a collection of partial lists of all length 2 is stated as follows:

\noindent 
INSTANCE: A finite set $\cM$ with $|\cM|=m$, a collection of partial lists $\tau_1,\ldots,\tau_k$ with $|\tau_i|=2$ for $i=1,\ldots,k$, an upper bound on the loss $L$.

\noindent 
QUESTION: Is there a full list $\pi$, such that $\sum_{i=1}^k K(\pi,\tau_i) \leq L$?

\noindent 
The decision version of the limit problem of MDM in Problem \eqref{model:lom} is stated as follows:\\
INSTANCE: A finite set $\cN$ with $|\cN|=n$, a collection of assortments $\cS$ with $|\cS|=m$ and $S\subseteq \cN$ for all $S\in \cS$, the observed choice probabilities $\bp_\cS = (p_{i,S}:i\in S, S\in \cS)$ with $\sum_{i\in S} p_{i,S} =1$ for all $S\in \cS$, an upper bound on the loss $L^{'}$. 

\noindent 
QUESTION: Is there a solution $(\bx_\cS, \blam)$ to Problem \eqref{model:lom} such that $\text{loss}(\bx_\cS, \bp_\cS) \leq L^{'}$?

We then will reduce the Kemeny optimal aggregation problem to Problem \eqref{model:lom}. 
Given any instance of Kemeny optimal aggregation problem with partial lists all of length 2, we can construct an instance of Problem \eqref{model:lom}  as follows. 
\begin{enumerate}[label=(\alph*),leftmargin=*,nolistsep]
    \item Let the collection of assortments $\cS=\{S_1, S_2, \ldots, S_m\}$ with $|\cS|=m$ and the set of alternatives (products),  $\cN=\{1,\ldots,k,k+1,\ldots,k+m\}$ with $|\cN| = n = k+m$. Given the observed choice data $\bp_\cS$, consider $\bp_\cS$ as a $n \times m$ matrix with $n$ rows and $m$ columns. 
    \item The values of the entries in $\bp_\cS$ are set in the following manner. For each $l \in \{1,...,k\}$, suppose $\tau_l = \{i,j\}$ with $\tau_l(i) < \tau_l(j)$, then we set $p_{l,S_i} = \frac{1}{3 \times k}$ and $p_{l,S_j} = \frac{2}{3\times k}$. 
    %$\tau_l \in \{\tau_1,\ldots,\tau_k\}$, we the values of the entries in $\bp_\cS$ are set with the following manner: w.l.o.g., we have 
    It's easy to see that for each $S_i$ with $1 \leq i \leq m$, $0< \sum_{j=1}^k p_{j,S_i} <1$.
    \item For each $S_i$ with $1 \leq i \leq m$, let $p_{k+i, S_i} = 1- \sum_{j=1}^k p_{j,S_i}$.
    \item Set other entries of  $\bp_\cS$ as zero. 
    \item Set the loss function in Problem \eqref{model:lom} to be 1-norm loss.
\end{enumerate}
We give Example \ref{eg:feasible} and Example \ref{eg:infeasible} to illustrate the above instance construction.
\begin{example}
\label{eg:feasible}
Given an instance of Kemeny optimal aggregation with $\cM=\{1, 2, 3\}$ and $\tau_1=( 1 \succ 2 ), \tau_2= ( 1 \succ 3 ), \tau_3 = ( 2 \succ 3 )$, we construct an instance for Problem \eqref{model:lom} with $\bp_\cS$ as shown in Table \ref{tab:feasible-instance-eg}.
\begin{table}[H]
\centering
\caption{An example of a representable instance construction}\label{tab:feasible-instance-eg}
\scalebox{0.8}{\begin{tabular}{|c|c|c|c|}
\hline
\begin{tabular}[c]{@{}c@{}}alternative\end{tabular} & $S_1=\{1,2,4\}$ & $S_2=\{1,3,5\}$ & $S_3=\{2,3,6\}$ \\ \hline
1                                                                & 1/9         & 2/9         & -           \\ \hline
2                                                                & 1/9         & -           & 2/9         \\ \hline
3                                                                &     -         & 1/9         & 2/9         \\ \hline
4                                                                & 7/9         &     -         &    -          \\ \hline
5                                                                &       -       & 6/9         &   -           \\ \hline
6                                                                &        -      &    -          & 5/9         \\ \hline
\end{tabular}}
\end{table}
\end{example}

\begin{example}
\label{eg:infeasible}
Given an instance of Kemeny optimal aggregation with $\cM=\{1, 2, 3\}$ and $\tau_1=( 1 \succ 2 ), \tau_2= ( 3 \succ 1 ), \tau_3 = ( 2 \succ 3 )$, we construct an instance for Problem \eqref{model:lom} with $\bp_\cS$ as shown in Table \ref{tab:infeasible-instance-eg}.
\begin{table}[H]
\centering
\caption{An example of an infeasible instance construction}\label{tab:infeasible-instance-eg}
\scalebox{0.8}{\begin{tabular}{|c|c|c|c|}
\hline
\begin{tabular}[c]{@{}c@{}}alternative\end{tabular} & $S_1=\{1,2,4\}$ & $S_2=\{1,3,5\}$ & $S_3=\{2,3,6\}$ \\ \hline
1                                                                & 1/9         & 2/9         & -           \\ \hline
2                                                                & 2/9         & -           & 1/9         \\ \hline
3                                                                &     -         & 1/9         & 2/9         \\ \hline
4                                                                & 6/9         &     -         &    -          \\ \hline
5                                                                &       -       & 6/9         &   -           \\ \hline
6                                                                &        -      &    -          & 6/9         \\ \hline
\end{tabular}}
\end{table}
\end{example}

In Example \ref{eg:feasible}, both the Kemeny optimal aggregation and the limit of MDM instances are feasible to their problems respectively. The optimal solution to the Kemeny optimal aggregation is $\pi = (1 \succ 2 \succ 3)$ and one of the optimal solutions to the limit of MDM is $\bx_\cS = \bp_\cS$ and $\lambda_{S_1} = 3$, $\lambda_{S_2} = 2$ and $\lambda_{S_3} = 1$. Both instances obtain 0 loss. \\
In Example \ref{eg:infeasible}, both the Kemeny optimal aggregation and the limit of MDM instance are infeasible to their problems respectively. It's trivial to see that the optimal solution to the Kemeny optimal aggregation is one of $\{(1 \succ 2 \succ 3), (2 \succ 3 \succ 1), (3 \succ 1 \succ 2)\}$. Each of such solutions obtains $SK =1$. Given Lemma \ref{lem:closed_form}, one may make a guess for one of the optimal solutions to the limit of MDM in Example \ref{eg:infeasible} to be $x^*_{2,S_1} = \frac{1}{6}$, $x^*_{2,S_3} = \frac{1}{6}$, $x^*_{4,S_1} = \frac{13}{18}$, $x^*_{6,S_3} = \frac{11}{18}$ and the optimal loss to be $\frac{2}{9}$. We will show that this guess is true.

Recall that Problem \eqref{model:lom} is equivalent to Problem \eqref{mdm-sequential}. We next show that for any fixed $\blam$ in Problem \eqref{mdm-sequential}, then the sub-problem \eqref{mdm_sub} with optimal $\bx_\cS^*(\blam)$ and $f_\text{sub}^*(\blam)$ under $\blam$, there exists $\pi$ such that $\lambda_{S_{\pi^{-1}(1)}}^* \geq \cdots \geq \lambda_{S_{\pi^{-1}(m)}}^*$ for the Kemeny optimal aggregation and $SK(\pi) = \frac{3\times k}{2} f_\text{sub}^*(\blam)$. 

By Lemma \ref{lem:closed_form}, we have 
\begin{subequations}
\begin{align}
    &f^*_\text{sub}(\blam) = \text{loss}(\bx_{\cS}^*(\blam), \bp_{\cS}) \nonumber \\
    =&  \sum_{l=1}^k \sum\limits_{i,j\in[m], \lambda_{S_i} \leq \lambda_{S_j} : l \in S_i \cap S_j} (|x^*_{l,S_i} - p_{l,S_i}| + |x^*_{l,S_j} - p_{l,S_j}|) + \sum_{i=1}^m |x^*_{k+i,S_i} - p_{k+i,S_i} | \label{1}\\
    =& \sum_{l=1}^k \sum\limits_{i,j\in[m], \lambda_{S_i} \leq \lambda_{S_j} : l \in S_i \cap S_j} 2 |p_{l,S_i}- p_{l,S_j}| \mathbb{I}\{( p_{l,S_i}- p_{l,S_j}) (\lambda_{S_i} - \lambda_{S_j})  \geq 0\}  \label{2} \\ 
    =& \frac{2}{3\times k}  \sum_{l=1}^k \sum\limits_{i,j\in[m], \lambda_{S_i} \leq \lambda_{S_j} : l \in S_i \cap S_j}
    \mathbb{I}\{(\pi(i) - \pi(j))(\tau_l(i) - \tau_l(j)) < 0\}  \label{3} \\
    =& \frac{2}{3\times k}  \sum_{l=1}^k K(\pi,\tau_l) \nonumber \\
    =& \frac{2}{3\times k} SK(\pi).  \nonumber
\end{align}
\end{subequations}
% \begin{align*}
%     &f^*_\text{sub}(\blam) = \text{loss}(\bx_{\cS}^*(\blam), \bp_{\cS})  \\
%     =&  \sum_{l=1}^k \sum\limits_{i,j\in[m], \lambda_{S_i} \leq \lambda_{S_j} : l \in S_i \cap S_j} (|x^*_{l,S_i} - p_{l,S_i}| + |x^*_{l,S_j} - p_{l,S_j}|) + \sum_{i=1}^m |x^*_{k+i,S_i} - p_{k+i,S_i} | \tag{1} \label{1}\\
%     =& \sum_{l=1}^k \sum\limits_{i,j\in[m], \lambda_{S_i} \leq \lambda_{S_j} : l \in S_i \cap S_j} 2 |p_{l,S_i}- p_{l,S_j}| \mathbb{I}\{( p_{l,S_i}- p_{l,S_j}) (\lambda_{S_i} - \lambda_{S_j})  \geq 0\} \tag{2} \label{2} \\ 
%     =& \frac{2}{3\times k}  \sum_{l=1}^k \sum\limits_{i,j\in[m], \lambda_{S_i} \leq \lambda_{S_j} : l \in S_i \cap S_j}
%     \mathbb{I}\{(\pi(i) - \pi(j))(\tau_l(i) - \tau_l(j)) < 0\} \tag{3} \label{3} \\
%     =& \frac{2}{3\times k}  \sum_{l=1}^k K(\pi,\tau_l)  \\
%     =& \frac{2}{3\times k} SK(\pi).
% \end{align*}
Equation \eqref{1} is due to the construction of $\bp_\cS$. Equation \eqref{2} holds because of $|p_{l,S_i} - p_{l,S_j}| = \frac{1}{3\times k}$ and the closed form objective value in  Lemma \ref{lem:closed_form}. The argument for Equation \eqref{3} is as follows: For $l\in [k]$, $i,j\in[m]: l \in S_i \cap S_j$, by instance construction, we have
$$  p_{l,S_i} < p_{l,S_j}~\text{if}~ \tau_l(i) < \tau_l(j). $$
From the relation between $\pi$ and $\blam$, we have $\lambda_{\pi^{-1}(1)} > \ldots > \lambda_{\pi^{-1}(m)}$. Then, we have 
$$\pi(i) < \pi(j)~\text{if}~ \lambda_{S_i} \geq  \lambda_{S_j}.$$
The above two inequities imply that
$$\mathbb{I}\{(\lambda_{S_i} \geq \lambda_{S_j})(p_{l,S_i} - p_{l,S_j}) >  0\} = \mathbb{I}\{(\pi(i) - \pi(j))(\tau_l(i) - \tau_l(j)) < 0\} .$$

Setting $L = \frac{3k}{2}L^{'}$. The decision problem of the limit of MDM asks is there $(\bx_\cS, \blam)$ such that $f_\text{limit}^*\leq L^{'}$ is equivalent to the decision problem of Kemeny optimal aggregation is there a full ranking $\pi$ such that  $f_\text{kemeny}^* \leq L$. 
\end{proof}

\subsection{Proof of Proposition \ref{prop:micp}}
\label{pf:micp}
\begin{proof}
Observe that the variables $(\lambda_S: S \in \cS)$ influence the value of the formulation in Proposition \ref{prop:micp} only via the sign of $\lambda_S - \lambda_T,$ for any pair of variables $\lambda_S,\lambda_T$ from the collection  $(\lambda_S: S \in \cS).$ Therefore the optimal value of this optimization formulation is not affected by the presence of the following additional constraints: $0 \leq \lambda_S \leq 1$ for all $S \in \cS.$ Indeed, this is because the signs of the differences $\{ \lambda_S - \lambda_T: S, T \in \cS \}$ are not affected by these additional constraints. Taking $\epsilon$ to be smaller than $1/(2\vert \cS \vert),$ for example, ensures that there is a feasible assignment for  $(\lambda_S: S \in \cS)$ within the interval $[0,1]$ even if all these variables take distinct values. 

Let $F$ denote the feasible values for the variables  $(\lambda_S: S \in \cS), (x_{i,S}: (i,S) \in \cI_\cS)$ satisfying the constraints introduced in the above paragraph besides those in the formulation in Proposition \ref{prop:limit-mdm}.  
Equipped with this feasible region $F$, we have the following deductions for $(\lambda_S: S \in \cS), (x_{i,S}: (i,S) \in \cI_\cS)$ in $F$: For any $S,T \in \cS$ containing $i,$   
    \begin{itemize}[leftmargin=*]
        \item[(i)] we have $\lambda_S < \lambda_T$ if and only if  $\delta_{S,T} = 1$ and $\delta_{T,S} = 0,$ due to the first set of constraints of \eqref{model:mdm_micp}; in this case, we have from the second and fourth set of constraints of \eqref{model:mdm_micp} that $0 \leq x_{i,T} \leq x_{i,S} \leq 1;$
         \item[(ii)] likewise, we have $\lambda_S > \lambda_T$ if and only if  $\delta_{S,T} = 0$ and $\delta_{T,S} = 1,$ due to the first set of constraints; in this case, we have from the second and fourth set of constraints of \eqref{model:mdm_micp} that $0 \leq x_{i,S} \leq x_{i,T} \leq 1.$
         \item[(iii)] finally, $\lambda_S = \lambda_T$ if and only if  $\delta_{S,T} = 0$ and $\delta_{T,S} = 0;$ here we have from the third set of constriants of \eqref{model:mdm_micp} that $x_{i,S} = x_{i,T}.$
    \end{itemize} 
    Thus the binary variables $\{\delta_{S,T}: S,T \in \cS\}$ suitably model the first set of constraints of \eqref{model:lom} and provide an equivalent reformulation in terms of the constraints. Therefore the optimal value of the formulations in Propositions \ref{prop:limit-mdm} and \ref{prop:micp} are identical.  
\end{proof}

\section{Illustrative Examples for Section \ref{sec:lom} }
\label{sec:eg}
\subsection{An example to show the non-convexity of MDM feasible region }
\label{sec:mdm_nonconvexity}
\begin{example}\label{eg:mdm_nonconvexity}
$\bx_\cS$ is MDM-representable because $x_{1,A}<x_{1,B}$,$x_{2,A} < x_{2,C}$ and $x_{3,B} < x_{3,C}$ implies $\lambda_A > \lambda_B$, $\lambda_A > \lambda_C$ and $\lambda_B>\lambda_C$. The values $\lambda_A=12,\lambda_B = 10,\lambda_C=8$ satify this. $\by_\cS$ is MDM-representable because $y_{1,A} > y_{1,B}$, $y_{2,A} > y_{2,C}$ and $y_{3,B} > y_{3,C}$ implies $\lambda_A < \lambda_B$, $\lambda_A < \lambda_C$ and $\lambda_B < \lambda_C$. The values $\lambda_A=8,\lambda_B = 10,\lambda_C=12$ satify this.  $\boldsymbol{w}=0.4\bx_\cS+0.6\by_\cS$ is a convex combination of $\bx_\cS$ and $\by_\cS$ but it can not be represented by MDM because $w_{1,A} > w_{1,B}$, $w_{1,A} < w_{2,C}$ and $w_{3,B} > w_{3,C}$ which implies $\lambda_A < \lambda_B$, $\lambda_A > \lambda_C$ and $\lambda_B < \lambda_C$, i.e., $\lambda_B < \lambda_C < \lambda_A < \lambda_B$. This means  $\boldsymbol{w}$ can not be represented by MDM.
\end{example}

\begin{table}[H]
\centering
\begin{minipage}[t]{0.32\textwidth} 
\scalebox{0.64}{\begin{tabular}{cccc}
\multicolumn{4}{c}{$\bx_\cS$}                                                                                                             \\ \hline
\multicolumn{1}{|c|}{Alternative} & \multicolumn{1}{c|}{A=\{1,2\}} & \multicolumn{1}{c|}{B=\{1,3\}} & \multicolumn{1}{c|}{C=\{2,3\}} \\ \hline
\multicolumn{1}{|c|}{1}           & \multicolumn{1}{c|}{0.3}       & \multicolumn{1}{c|}{0.9}       & \multicolumn{1}{c|}{}          \\ \hline
\multicolumn{1}{|c|}{2}           & \multicolumn{1}{c|}{0.7}       & \multicolumn{1}{c|}{}          & \multicolumn{1}{c|}{0.8}       \\ \hline
\multicolumn{1}{|c|}{3}           & \multicolumn{1}{c|}{}          & \multicolumn{1}{c|}{0.1}       & \multicolumn{1}{c|}{0.2}       \\ \hline
\end{tabular}}
\end{minipage}
\begin{minipage}[t]{0.32\textwidth} 
\scalebox{0.64}{\begin{tabular}{cccc}
\multicolumn{4}{c}{$\by_\cS$}                                                                                                             \\ \hline
\multicolumn{1}{|c|}{Alternative} & \multicolumn{1}{c|}{A=\{1,2\}} & \multicolumn{1}{c|}{B=\{1,3\}} & \multicolumn{1}{c|}{C=\{2,3\}} \\ \hline
\multicolumn{1}{|c|}{1}           & \multicolumn{1}{c|}{0.75}      & \multicolumn{1}{c|}{0.1}       & \multicolumn{1}{c|}{}          \\ \hline
\multicolumn{1}{|c|}{2}           & \multicolumn{1}{c|}{0.25}      & \multicolumn{1}{c|}{}          & \multicolumn{1}{c|}{0.2}       \\ \hline
\multicolumn{1}{|c|}{3}           & \multicolumn{1}{c|}{}          & \multicolumn{1}{c|}{0.9}       & \multicolumn{1}{c|}{0.8}       \\ \hline
\end{tabular}}   
\end{minipage}
\begin{minipage}[t]{0.32\textwidth} 
\scalebox{0.64}{\begin{tabular}{cccc}
\multicolumn{4}{c}{$\boldsymbol{w} = 0.4\bx_\cS+0.6\by_\cS$}                                                                         \\ \hline
\multicolumn{1}{|c|}{Alternative} & \multicolumn{1}{c|}{A=\{1,2\}} & \multicolumn{1}{c|}{B=\{1,3\}} & \multicolumn{1}{c|}{C=\{2,3\}} \\ \hline
\multicolumn{1}{|c|}{1}           & \multicolumn{1}{c|}{0.57}      & \multicolumn{1}{c|}{0.42}      & \multicolumn{1}{c|}{}          \\ \hline
\multicolumn{1}{|c|}{2}           & \multicolumn{1}{c|}{0.43}      & \multicolumn{1}{c|}{}          & \multicolumn{1}{c|}{0.44}      \\ \hline
\multicolumn{1}{|c|}{3}           & \multicolumn{1}{c|}{}          & \multicolumn{1}{c|}{0.58}      & \multicolumn{1}{c|}{0.56}      \\ \hline
\end{tabular}} 
\end{minipage}
\end{table}

{\color{black}\subsection{An example to show using \eqref{mdm:perturb} to get MDM-representable probabilities  }
\label{sec:mdm_feasible_prob}
\begin{example}
    \label{eg:mdm_feasible_prob}
    Consider the observed choice probabilities $\bp_\cS$ is given in Table (a) below with $n=3$ and $\cS=\{A = \{1,2\},B = \{1,3\},C = \{2,3\}\}.$ $\bp_\cS$ is not representable by MDM since $p_{1,A} > p_{1,B}$ implies $\lambda_{A} < \lambda_{B} $, $p_{2,A} < p_{2,C}$ implies $\lambda_{A} > \lambda_{C}$, and $p_{3,B} > p_{3,C}$ implies $\lambda_{B} < \lambda_{C},$ that is $\lambda_{A} < \lambda_{B} < \lambda_{C} < \lambda_{A}.$ 
    
    Suppose the assortments $A, B, C$ are shown once in the historical data. We choose the limit loss function to be 1-norm $\sum_{(i,S)\in \cI_\cS} |x_{i,S} - p_{i,S}|$, and set $\epsilon = 0.001$ in \eqref{model:mdm_micp}. We then obtain MDM limit probabilities $\bx^{*}_\cS$ in Table (b) below and $\lambda_A =0$, $\lambda_B =0.001$, and $\lambda_B =0.002$, that is $\lambda_A < \lambda_B < \lambda_C $. $\bx^{*}_\cS$ satisfies all the constraints in \eqref{model:lom}, meaning that $\bx^{*}_\cS \in \text{closure}(\pmdm(\cS))$. However, $\bx^{*}_\cS \not\in \pmdm(\cS)$ since $x_{2,A}^{*} = x_{2,C}^{*}$ when $\lambda_A < \lambda_C.$

    To obtain feasible MDM probabilities, we apply \eqref{mdm:perturb} by setting $\delta = 10^{-5}.$ We then obtain MDM representable choice probabilities in Table (c) below.
    
\begin{table}[H]
\centering
\begin{minipage}[t]{0.32\textwidth} 
\scalebox{0.64}{\begin{tabular}{cccc}
\multicolumn{4}{c}{(a) $\bp_\cS$ that is not representable by MDM}                       \\ \hline
\multicolumn{1}{|c|}{\begin{tabular}[c]{@{}c@{}}Alternative\end{tabular}} & \multicolumn{1}{c|}{A=\{1,2\}} & \multicolumn{1}{c|}{B=\{1,3\}} & \multicolumn{1}{c|}{C=\{2,3\}} \\ \hline
\multicolumn{1}{|c|}{1}                                                                & \multicolumn{1}{c|}{0.57}         & \multicolumn{1}{c|}{0.42}       & \multicolumn{1}{c|}{-}       \\ \hline
\multicolumn{1}{|c|}{2}                                                                & \multicolumn{1}{c|}{0.43}        & \multicolumn{1}{c|}{-}       & \multicolumn{1}{c|}{0.44}          \\ \hline
\multicolumn{1}{|c|}{3}                                                                & \multicolumn{1}{c|}{-}        & \multicolumn{1}{c|}{0.58}          & \multicolumn{1}{c|}{0.56}        \\ \hline
\end{tabular}}
\end{minipage}
\begin{minipage}[t]{0.32\textwidth} 
\scalebox{0.64}{\begin{tabular}{cccc}
\multicolumn{4}{c}{(b) $\bx^{*}_\cS$ obtained by \eqref{model:mdm_micp}}                       \\ \hline
\multicolumn{1}{|c|}{\begin{tabular}[c]{@{}c@{}}Alternative\end{tabular}} & \multicolumn{1}{c|}{A=\{1,2\}} & \multicolumn{1}{c|}{B=\{1,3\}} & \multicolumn{1}{c|}{C=\{2,3\}} \\ \hline
\multicolumn{1}{|c|}{1}                                                                & \multicolumn{1}{c|}{0.56}         & \multicolumn{1}{c|}{0.42}       & \multicolumn{1}{c|}{-}       \\ \hline
\multicolumn{1}{|c|}{2}                                                                & \multicolumn{1}{c|}{0.44}        & \multicolumn{1}{c|}{-}       & \multicolumn{1}{c|}{0.44}          \\ \hline
\multicolumn{1}{|c|}{3}                                                                & \multicolumn{1}{c|}{-}        & \multicolumn{1}{c|}{0.58}          & \multicolumn{1}{c|}{0.56}        \\ \hline
\end{tabular}}
\end{minipage}
\begin{minipage}[t]{0.32\textwidth} 
\scalebox{0.64}{\begin{tabular}{cccc}
\multicolumn{4}{c}{(c) feasible MDM probabilities obtained by \eqref{mdm:perturb}}                       \\ \hline
\multicolumn{1}{|c|}{\begin{tabular}[c]{@{}c@{}}Alternative\end{tabular}} & \multicolumn{1}{c|}{A=\{1,2\}} & \multicolumn{1}{c|}{B=\{1,3\}} & \multicolumn{1}{c|}{C=\{2,3\}} \\ \hline
\multicolumn{1}{|c|}{1}                                                                & \multicolumn{1}{c|}{0.564823}         & \multicolumn{1}{c|}{0.42}       & \multicolumn{1}{c|}{-}       \\ \hline
\multicolumn{1}{|c|}{2}                                                                & \multicolumn{1}{c|}{0.435177}        & \multicolumn{1}{c|}{-}       & \multicolumn{1}{c|}{0.435176}          \\ \hline
\multicolumn{1}{|c|}{3}                                                                & \multicolumn{1}{c|}{-}        & \multicolumn{1}{c|}{0.58}          & \multicolumn{1}{c|}{0.564824}        \\ \hline
\end{tabular}}
\end{minipage}
\end{table}
\end{example}}

\section{An Algorithm for Evaluating the Limit of MDM when $\vert \cS \vert$ is Small}
\label{alg:limit}
% \begin{theorem}\label{thm:alg_p_in_n}
% Evaluating the limit $\mathcal{L}(\bp_\cS)$ in Proposition \ref{prop:limit-mdm} is polynomially solvable when the size of the assortment collection $|\cS|$ is fixed.
% \end{theorem}
\SetKwComment{Comment}{/* }{ */}
\begin{algorithm}[htb]
\caption{An algorithm solves the limit of MDM polynomial in $n$}\label{alg:poly_n}
\KwIn{Observed choice probabilities $\bp_\cS$, collection $\cS$, product universe $\cN$.}
\KwOut{MDM choice probabilities $\bx_\cS^*$, optimal loss $f^*$, optimal ranking of assortments $\tau^*$.}
$\boldsymbol{T} \gets $\{all rankings of $(S: S\in \cS$)\}\;
$f^* \gets +\infty$ keeps tracking of the optimal value of Problem \eqref{model:lom}\;
$\bx_\cS^* \gets \boldsymbol{0}$ keeps tracking of the optimal solution\;
\For{$\tau \in \boldsymbol{T} $ }{
    Solve 
    \vspace{-30pt}
        \begin{align*}
             \min_{\bx_\cS} \quad &\text{loss}(\bx_\cS, \bp_\cS)\\
            \text{s.t.}\quad & x_{i,S} - x_{i,T} \geq 0,\quad \text{ if } \tau(T)<\tau(S) \quad \forall (i,S),(i,T) \in \cI_\cS,\\
            & \sum_{i\in S}x_{i,S} = 1,\quad \forall S\in \cS, \\
            & x_{i,S} \geq 0, \quad \forall(i,S)\in \cI_\cS. \tag{Limit-LP}\label{P2}
        \end{align*}
    $f \gets$ the output optimal objective value of \eqref{P2}\;
    $\bx_\cS \gets$ the output optimal solution of \eqref{P2}\;
  \If{$f<f^*$ }{
    $\bx_\cS^* \gets \bx_\cS $\;
    $f^*  \gets  f$ \;
    $\tau^* \gets \tau$\;
  }
}
\end{algorithm}
In Algorithm \ref{alg:poly_n}, for a fixed $\blam$, we just need to solve a convex optimization problem with $\mathcal{O}(n \vert \cS \vert)$ continuous variables and $\mathcal{O}(n \vert \cS \vert^2)$ linear constraints to compute the limit loss. There are $m!$ possible $\blam$. Thus, Algorithm \ref{alg:poly_n} is polynomial in the alternative size $n$.

\section{Additional Useful Details on the Experiments in the Paper}
\label{sec:implementation}
In this section, we give details on the implementation of the experiments. We used a MacBook Pro Laptop with a 2 GHz 4 core Intel Core i5 processor for all experiments.

\subsection{Representation test experiment implementation details}
\label{sec:implementation_representation}
\subsubsection{Checking the representability of MDM} \label{sec:representation_mdm}
For a collection of observed choice probabilities $\bp_\cS$, by Theorem \ref{thm:feascon-mdm},  we check the representability of MDM with the following linear program:
\begin{align}
\begin{aligned}
    \max_{\epsilon} \quad &\epsilon\\
    \text{s.t.}\quad 
    &\lambda_S- \lambda_T -  \epsilon \geq 0,\text{ if }p_{i,S}<p_{i,T}\quad\forall (i,S),(i,T)\in \cI_\cS,\\
    & \lambda_S- \lambda_T = 0,\text{ if }p_{i,S} = p_{i,T} >0\quad\forall (i,S),(i,T)\in \cI_\cS. \label{lp:mdm_representability}
\end{aligned}
\end{align}
If the optimal value of \eqref{lp:mdm_representability} is strictly positive, then $\bp_\cS$ can be represented by MDM. Otherwise, $\bp_\cS$ cannot be represented by MDM.

\subsubsection{Checking the representability of RUM} \label{sec:representation_rum}
For a collection of observed choice probabilities $\bp_\cS$, we check the representability of RUM with the following linear program:
\begin{align}
\begin{aligned}
    \max_{\blam} \quad\!\! &  0\\
    \text{s.t.}\quad & \sum_{\sigma \in \Sigma_n}  \lambda(\sigma) \mathbb{I}[\sigma, i, S] - p_{i,S} =0, \quad \forall (i,S) \in \cI_\cS,\\
    & \sum_{\sigma \in \Sigma_n} \lambda(\sigma)=1,\quad \lambda(\sigma) \geq 0, \quad \forall \sigma \in \Sigma_n. \label{lp:rum_representability}
\end{aligned}
\end{align}
If \eqref{lp:rum_representability} is feasible, $\bp_\cS$ can be represented by RUM. Otherwise, $\bp_\cS$ cannot be represented by RUM.

\subsubsection{Checking the representability of MNL}
\label{sec:representation_mnl}
For a collection of observed choice probabilities $\bp_\cS$, we check the representability of MNL with the following linear program:
\begin{align}
\begin{aligned}
    \max_{\boldsymbol{\nu}} \quad\!\! & \sum_{i\in \cN} \nu_i\\
    \text{s.t.} \quad & p_{i,S}\, \sum_{j\in \nu_j} \nu_j - \nu_i = 0,\quad \forall (i,S)\in \cI_\cS,\\
    & \nu_i \geq 0, \quad \forall i\in \cN.   \label{lp:mnl_representability}
\end{aligned}
\end{align}
If the optimal value of \eqref{lp:mnl_representability} is strictly positive, $\bp_\cS$ can be represented by MNL. Otherwise, $\bp_\cS$ cannot be represented by MNL.

% \subsubsection*{Checking the representability of the regular model}We check the representability of the regular model by checking if $\bp_\cS$ satisfies the following inequities:
% \begin{align*}
%     p_{i,S} \leq p_{i,T} \text{ if }  T \subseteq S, \quad \forall (i,S),(i,T)\in \cI_\cS.
% \end{align*}

{\color{black}\subsection{Limit experiment implementation details}
\label{sec:limit_exp}
\subsubsection{Limit of MDM} \label{sec:limit_exp_mdm}
Given an instance $\bp_\cS$ with collection $\cS$, we solve the limit of MDM with \eqref{model:mdm_micp}. When the loss function is chosen to be the average absolute deviation loss, we set $\sum_{S\in \cS}  \textnormal{loss}(\bp_{S},\bx_{S}) = \sum_{S\in \cS} n_S \sum_{i\in S} p_{i,S} |x_{i,S} - p_{i,S}|/\sum_{S\in \cS} n_S $  in \eqref{model:mdm_micp}. 

When the loss function is chosen to be the average KL loss function, the limit of MDM can not be easily solved by existing solvers including Gurobi, Mosek, and  CVXPY. We take the following 2-stage approach to solve the limit of MDM with the average KL loss function: (1) We first solve \eqref{model:mdm_micp} with the average absolute deviation loss and get $\blam^*$. (2) We then solve \eqref{model:mdm_micp} with the average KL loss function and $\blam^*$ as follows:
\begin{align*} 
&\!\!\!\!\!\!\!\!\!\!\!\!\!\!\!\!\!\!\!\min_{\bx} \,  - \sum_{S\in \cS} n_S \sum_{i\in S} p_{i,S} \text{log} (x_{i,S}/p_{i,S})/\sum_{S\in \cS} n_S \\
 \text{s.t.}\quad 
& x_{i,S}  \geq x_{i,T}  \text{ if } \lambda_S^\ast \leq \lambda_T^\ast, \quad\qquad  \forall (i,S),(i,T) \in \cI_\cS,\\
&\sum_{i \in S}x_{i,S} = 1,\ \forall S \in \cS,
\quad  x_{i,S} \geq 0,  \ \forall (i,S) \in \cI_\cS.   
\end{align*}

\subsubsection{Limit of RUM}  \label{sec:limit_exp_rum}
Given an instance $\bp_\cS$ with collection $\cS$, we solve the limit of RUM with the following convex program:
\begin{align}
\begin{aligned}
    \min_{\bx,\blambda}\quad & \sum_{S\in \cS}  \textnormal{loss}(\bp_{S},\bx_{S})\\
    \text{s.t.}\quad & 
    x_{i,S} -  \sum_{\sigma \in \Sigma_n}  \lambda(\sigma) \mathbb{I}[\sigma, i, S]  =0, \quad \forall (i,S) \in \cI_\cS,\\
    & \sum_{\sigma \in \Sigma_n} \lambda(\sigma)=1,\quad \lambda(\sigma) \geq 0, \quad \forall \sigma \in \Sigma_n. \label{model:rum_limit}
\end{aligned}
\end{align}
We set the loss function of \eqref{model:rum_limit} based on the chosen loss function. \textcolor{black}{In our experiments, the limit of RUM is computed when the loss function is chosen as the average absolute deviation loss $\sum_{S\in \cS}  \textnormal{loss}(\bp_{S},\bx_{S}) = \sum_{S\in \cS} n_S \sum_{i\in S} p_{i,S} |x_{i,S} - p_{i,S}|/\sum_{S\in \cS} n_S $, which can be reformulated as a linear program, and we can solve it in the Gurobi solver. The limit losses of RUM of all instances reported in the paper are solved to optimality.}

\subsubsection{Limit of MNL} \label{sec:limit_exp_mnl}
For the parametric model MNL, given an instance $\bp_\cS$ with collection $\cS$, we first estimate the parameters of MNL by solving the maximum loglikelihood with the given data and then compute the best-fitting choice probabilities of MNL to $\bp_\cS$ with the estimated parameters. The parameters of MNL are estimated as follows: 
\begin{align}
\begin{aligned}
     \bv^{*} =\text{argmax}_{\bv} \, ll(\bv | \bp_\cS) &= \sum_{S\in \cS} n_S \sum_{i\in S} p_{i,S}\, \log (\frac{\text{exp}\,v_i}{\sum_{j\in S} \text{exp}\, v_j})\\
    & = \sum_{S\in \cS} n_S (\sum_{i\in S} p_{i,S}\,v_i -  \log \sum_{j\in S} \text{exp}\, v_j) .\label{model:mnl_mle} 
\end{aligned}
\end{align}
When the limit loss function is chosen to be the average absolute deviation loss, we compute the loss between the choice probability collection and the probability collection with the estimated MLE of MNL as $\sum_{S\in \cS}n_S \sum_{i\in S} p_{i,S} |p_{i,S} - \frac{\text{exp}\,v^{*}_i}{\sum_{j\in S} \text{exp}\, v^{*}_j} |/\sum_{S\in \cS} n_S $. When the limit loss function is chosen to be the average KL loss, we first solve \eqref{model:mnl_mle} and record the optimal values as $ll^*$, and then compute the average KL loss of MNL as $(-ll^* + \sum_{S\in \cS} n_S \sum_{i\in S} p_{i,S} \log p_{i,S})/\sum_{S \in \cS} n_S$.

\subsubsection{Limit of MCCM} \label{sec:limit_exp_mccm}
By the definition of MCCM, the choice probabilities satisfy the constraints in \eqref{eq:limit_mccm_outsideopt} below, so the limit of MCCM can be solved by the following continuous optimization problem:
\begin{align}  \label{eq:limit_mccm_outsideopt}
\min_{\boldsymbol{\lambda},\boldsymbol{\rho},\boldsymbol{x}_\cS,\boldsymbol{y}_\cS }& \quad  \textnormal{loss}(\bp_{S},\bx_{S})\nonumber\\
\text{s.t.}\quad 
    & x_{i,S} = \lambda_i + \sum_{j\in \cN_0 \setminus S} y_{j,S} \rho_{j,i}, \quad \forall i\in S\cup \{0\}, S\in \cS,\nonumber\\ 
    & y_{i,S} = \lambda_i + \sum_{j\in \cN_0 \setminus S} y_{j,S} \rho_{j,i}, \quad \forall i\in \cN \setminus S, S\in \cS,\nonumber\\
    & x_{i,S} = 0,\quad \forall  i\in \cN \setminus S, S\in \cS, \\
    & y_{i,S} =0 \quad \forall i\in S \cup \{0\}, S\in \cS, \nonumber \\
    & \sum_{i=0}^n \lambda_i = 1, \quad \lambda_i \geq 0 , \ \forall i\in \cN_0, \nonumber \\
    & \sum_{j=0}^n \rho_{ij} =1,\  \forall i \in \cN_0, \quad \rho_{0i}=0,\, \rho_{ii} =0, \  \forall i \in \cN, \quad \rho_{ij} \geq 0, \ \forall i,j \in \cN_0, \nonumber \\
    & \sum_{i\in S\cup \{0\}} x_{i,S} =1,\  S\in \cS,\quad  x_{i,S} \geq 0,\  \forall i\in S\cup \{0\}, S\in \cS,\nonumber \\
    & y_{i,S} \geq 0,\quad \forall i\in \cN \setminus S, S\in \cS,\nonumber
\end{align}
where $0$ denotes for the outside option and  $\cN_0 = \cN \cup \{0\}.$
We solve \eqref{eq:limit_mccm_outsideopt} with ‘ipopt’ in the continuous optimization solver pyomo. We set the loss function of \eqref{eq:limit_mccm_outsideopt} based on the chosen loss function.  

When the outside option is not included in the assortments, the limit of MCCM is computed as follows:
\begin{align}  \label{eq:limit_mccm_no_outsideopt}
\min_{\boldsymbol{\lambda},\boldsymbol{\rho},\boldsymbol{x}_\cS,\boldsymbol{y}_\cS }& \quad  \textnormal{loss}(\bp_{S},\bx_{S})\nonumber\\
\text{s.t.}\quad 
    & x_{i,S} = \lambda_i + \sum_{j\in \cN \setminus S} y_{j,S} \rho_{j,i}, \quad \forall i\in S, S\in \cS,\nonumber\\ 
    & y_{i,S} = \lambda_i + \sum_{j\in \cN \setminus S} y_{j,S} \rho_{j,i}, \quad \forall i\in \cN \setminus S, S\in \cS,\nonumber\\
    & x_{i,S} = 0,\quad \forall  i\in \cN \setminus S, S\in \cS, \\
    & y_{i,S} =0 \quad \forall i\in S , S\in \cS, \nonumber \\
    & \sum_{i\in \cN} \lambda_i = 1, \quad \lambda_i \geq 0 , \ \forall i\in \cN, \nonumber \\
    & \sum_{j\in \cN} \rho_{ij} =1,\ \rho_{ii} = 0, \  \forall i \in \cN,  \quad \rho_{ij} \geq 0, \ \forall i,j \in \cN, \nonumber \\
    & \sum_{i\in S } x_{i,S} =1,\  S\in \cS,\quad  x_{i,S} \geq 0,\  \forall i\in S , S\in \cS,\nonumber \\
    & y_{i,S} \geq 0,\quad \forall i\in \cN \setminus S, S\in \cS.\nonumber
\end{align}

% Suppose there are $T$ records of individual purchases. Each record $t$ includes an assortment $S_t \cup \{0\}$ and the purchased product $i_t$. The EM algorithm to solve the estimation problem of MCCM is stated as follows:
% \begin{enumerate}[label={}, leftmargin=0pt]
%     \item \textit{Initialization.} Choose the initial estimates $(\blam^0, \boldsymbol{\rho}^0)$ of the parameters of the MCCM arbitrarily, as long as they satisfy $\sum_{i\in \cN \cup \{0\}} \lambda_i^0 = 1,$ and $\sum_{j\in \cN \cup \{0\}} \rho_{i,j}^0 =1,$ for all $i \in \cN \cup \{0\},$ and $\rho_{0,0}^0 =1$, $\rho_{i,i}^0 =0,$ for all $i \in \cN.$ Initialize the iteration counter by setting $l=1.$
%     \item \textit{Expectation.} Assuming that the customers choose according to the MCCM with parameters $(\blam_l, \boldsymbol{\rho}_l).$ Set $\hat{F}_{i}^{t,l} = \mathbb{E}\{ F_i |  \} $
%     \item \textit{Maximization.}
% \end{enumerate}

\subsubsection{Limit of LC-MNL}  \label{sec:limit_exp_lcmnl}
The limit computation of the limit of LC-MNL is similar to the limit of MNL. Given an instance $\bp_\cS$ with collection $\cS$, we first estimate the parameters of LC-MNL and then compute the best-fitting choice probabilities of LC-MNL to $\bp_\cS$ with the estimated parameters. Lastly, we compute the limit of LC-MNL with the best-fitting choice probabilities and the chosen loss function. The parameters are estimated by the following loglikelihood of LC-MNL:
\begin{align}
\begin{aligned}
\boldsymbol{w}^{*},\boldsymbol{v}^{*} = \argmax_{\boldsymbol{w},\boldsymbol{v}} & LL(\boldsymbol{w},\boldsymbol{v}| \bp_\cS)= \sum_{S\in \cS} \sum_{i\in S} n_S p_{i,S} \log \sum_{l=1}^L w_l \frac{\exp{v_{li}}}{\sum_{j\in S} \exp{v_{lj}}}\\
        \text{s.t.}\quad 
        & \sum_{l=1}^L w_l =1,\\
        & w_l \geq 0,\quad \forall l\in [L], \label{eq:ll_lcmnl}
\end{aligned}
\end{align}
where we assume that the population is described by a mixture of MNL models consisting of $L$ classes, with $w_l$ denoting the fraction of customers in class $l$, and  $\boldsymbol{v}_l = (v_{l0},v_{l1},\cdots, v_{ln} )$ denoting the parameters of the corresponding MNL model of customers in class $l$. \eqref{eq:ll_lcmnl} is non-convex in the model parameters $\boldsymbol{w}$ and $\boldsymbol{v}$. We obtain an approximate solution by using the expectation-maximization (EM) algorithm, see \cite{train2008algorithms} for details. The number of iterations has been set to 50 for the estimation problem of LC-MNL with 10 classes and  40 times for the estimation problem of LC-MNL with 2 classes. For some rare randomly generated instances, we allow the iteration up to 100 to ensure convergence.

\subsection{Prediction experiment implementation details}  \label{sec:prediction_exp}
\subsubsection{Revenue prediction with MDM using estimate-then-predict} \label{sec:prediction_exp_mdm}
For an unseen assortment $A$, we follow the revenue prediction procedure in Figure \ref{fig:flowchat} to obtain the revenue estimate of the assortment $A$.

\subsubsection{Revenue prediction with MDM under data uncertainty} \label{sec:prediction_exp_mdm_ci}
Considering the data uncertainty, for any given $\bp_\cS$, the worst case revenue prediction for assortment $A$ is computed as follows:
\begin{align}
\begin{aligned}
\min_{\bx_A,\blambda,\bdelta\!\!,\,}\quad&  \quad  \sum_{i \in A}  r_i x_{i,A}\\
 \text{s.t.}\quad \ -&\delta_{A,S} \ \leq\    \lambda_A  -  \lambda_S \ \leq \  1 -  (1+\epsilon)\delta_{A,S}, \qquad\quad\  \ \forall \,  i \in A, \, (i,S) \in \cI_\cS, \\
 -&\delta_{S,A}  \ \leq\    \lambda_S  -  \lambda_A \ \leq \  1 -  (1+\epsilon)\delta_{S,A}, \qquad\quad\  \ \forall \,  i \in A, \, (i,S) \in \cI_\cS, \\
&  \delta_{A,S} - 1 \ \leq \  x_{i,A} - p_{i,S} \ \leq \ 1 - \delta_{S,A}, \quad\qquad\quad\,   \forall \,  i \in A, \, (i,S) \in \cI_\cS,\\
 -&(\delta_{A,S}+  \delta_{S,A}) \ \leq \ x_{i,A} - p_{i,S} \ \leq \ \delta_{A,S} + \delta_{S,A},\ \quad     \forall \,  i \in A, \, (i,S) \in \cI_\cS,\\ 
& \lambda_S - \lambda_T \ \geq \ \epsilon, \qquad\qquad\qquad\! \forall \, (i,S), (i,T) \in \cI_\cS \ \textnormal{ s.t. }\  p_{i,S} < p_{i,T},\\
&\lambda_S - \lambda_T \ = \ 0, \qquad\qquad\quad\ \ \, \forall \, (i,S), (i,T) \in \cI_\cS \ \textnormal{ s.t. }\  p_{i,S} = p_{i,T} \neq 0 ,\\
& \sum_{i \in A} x_{i,A} \ = \ 1,   \\
& 0 \leq \lambda_A \leq 1,\quad\  x_{i,A} \ \geq\  0,  \ \forall \,  i \in A ,\quad\ 0 \, \leq\,  \lambda_S \leq\,  1, \quad  \delta_{A,S}, \delta_{S,A} \, \in \, \{0,1\}, \  \forall \, S \in \cS,\\
& p_{i,S} (1-z \alpha_{i,S}) \  \leq \ x_{i,S} \ \leq \ p_{i,S} (1+z \alpha_{i,S}),  \quad\qquad\quad\,   \forall \,  (i,S) \in \cI_\cS,
\end{aligned} \label{model:mdm_ci}
\end{align}
where $\alpha_{i,S} \sqrt{\frac{1-p_{i,S}}{n_{i,S}}}$ with $n_{i,S} = n_S * p_{i,S}$ representing the number of customers chose product $i$ when assortment $S$ is offered. Here, \( p_{i,S} \alpha_{i,S} \) is the standard error, and \( z \) is a constant multiplier that determines the width of the confidence interval. By replacing the "min" operator with the "max" operator, we obtain the best-case revenue estimate for assortment $A.$

\subsubsection{Revenue prediction with RUM}  \label{sec:prediction_exp_rum}
Given the training data $\bp_\cS$, for an unseen assortment $A$, we estimate the revenue of $A$ under the RUM by following the robust prediction method in \citep{farias2013nonparametric}. To obtain the revenue estimate of $A$ by solving the following linear optimization problem:
\begin{align}\label{model:rum_prediction}
    \min_{\bx_A,\bx_\cS,\blam}&\quad \boldsymbol{r}^T \bx_A\nonumber\\
    \text{s.t.}\quad 
    & \sum_{\sigma \in \Sigma_n}  \lambda(\sigma) \mathbb{I}[\sigma, i, S] - x_{i,S} =0, \quad \forall (i,S) \in \cI_\cS,\nonumber \\
    & p_{i,S} (1-z\epsilon_{i,S}) \leq x_{i,S} \leq p_{i,S} (1+z\epsilon_{i,S}),\quad \forall (i,S) \in \cI_\cS, \nonumber\\
    & \sum_{\sigma \in \Sigma_n}  \lambda(\sigma) \mathbb{I}[\sigma, i, S] - x_{i,A} =0, \quad \forall i\in A, \\
    & \sum_{\sigma \in \Sigma_n} \lambda(\sigma)=1,\quad \lambda(\sigma) \geq 0, \quad \forall \sigma \in \Sigma_n, \nonumber
\end{align}
% Solve the prediction problem by adding $CI_{LB} \leq x_{i,S} \leq CI_{UB}$\citep{farias2013nonparametric}.
% Solve the following prediction problem with real data.
% \begin{align}
% \begin{aligned}
%     \min_{\bx_A,\bx_\cS}&\quad \boldsymbol{r}^T \bx_A\\
%     \text{s.t.}\quad & (\bx_A,\bx_\cS)\in \prum,\\
%     & p_{i,S} (1-z\epsilon_{i,S}) \leq x_{i,S} \leq p_{i,S} (1+z\epsilon_{i,S}),
% \end{aligned}
% \end{align}
where $\epsilon_{i,S} = \sqrt{\frac{1-p_{i,S}}{n_{i,S}}}$ with $n_{i,S} = n_S * p_{i,S}$ representing the number of customers chose product $i$ when assortment $S$ is offered and $z$ is a constant multiplier that determines the width of the confidence interval. For our real data experiments, we have set $z$ to be 2.5758, which corresponded to the smallest value of $z$ of which \eqref{model:rum_prediction} was feasible over all tested instances; incidentally, this value of $z$ corresponds to approximately 99\% confidence interval for $ x_{i,S}.$ For the synthetic data experiment, we have set z to be 2.807, which corresponds to approximately 99.5\% confidence interval for $ x_{i,S},$ which corresponds $z$ to be 2.807. We report the infeasible proportion of \eqref{model:rum_prediction} in the tested instances. 

\subsubsection{Revenue prediction with MNL, LC-MNL} \label{sec:prediction_exp_mnl_lc_mnl}
For parametric models MNL and LC-MNL, we first estimate the parameters of the models with maximum likelihood estimation. With the estimators of the model parameters, we then compute the choice probabilities of the products in the given unseen assortment $A$ and evaluate the revenue of assortment  $A$.

\subsubsection{Revenue prediction with MCCM} \label{sec:prediction_exp_mccm}
For MCCM, we first estimate the parameters of MCCM, the arrival rates $\boldsymbol{\lambda}$ and transition matrix $\boldsymbol{\rho}$ with maximum likelihood estimation. We then obtain the estimated revenue for an assortment $A$ by solving the following linear optimization problem \citep{feldman2017revenue}:
\begin{align}\label{model:mccm_revenue}
    \max_{\bx, \boldsymbol{y} } \quad & \sum_{i\in \cN_0} \ r_i x_{i}\nonumber\\
    \text{s.t.} \quad & x_i = \lambda_i + \sum_{j\in \cN_0 \setminus A} y_j \rho_{j,i},\quad \forall i\in A,\\
    & y_i = \lambda_i + \sum_{j\in \cN_0 \setminus A} y_j \rho_{j,i},\quad \forall i\in \cN_0 \setminus A,\nonumber \\
    & x_i = 0,\quad \forall i \in \cN_0 \setminus A,\nonumber \\
    & y_i = 0,\quad \forall i\in A,\nonumber \\
    & \sum_{i\in A} x_i = 1,\nonumber \\
    & x_i, y_i \geq 0,\ \forall i \in \cN_0\nonumber.
\end{align}

\subsubsection{Average Ranking of Models Based on Prediction Performance} \label{sec:prediction_exp_ranking}

For the prediction experiments, we evaluate and rank five different models, including MDM, RUM, MNL, MCCM, and LC-MNL, based on their prediction performance across 50 testing instances. Specifically, we assess each model using a designated performance metric (Kendall Tau distance or relative revenue regret of the top-ranked assortment) and rank them from 1 to 5 for each testing instance. Rank 1 indicates the best performance, while rank 5 indicates the worst. For each of the 50 testing instances, the models are ranked individually. In cases where two or more models achieve identical performance scores within a testing instance, they are considered tied. Tied models are assigned an average rank that reflects their shared position. For example, if two models are tied for the second-best performance, they both receive a rank of 2.5 (the average of ranks 2 and 3). The remaining models are ranked accordingly, with subsequent positions adjusted to account for any ties. After ranking the models across all 50 testing instances, we calculate the average ranking for each model. This average rank provides an overall performance summary, indicating which models consistently perform better or worse across the multiple testing instances.

%This approach allows for a comprehensive comparison of the models, highlighting which models consistently outperform others or exhibit similar levels of performance across the different instances.

\section{Details on Synthetic Data Generation}
\label{sec:synthetic_data_generation}

\subsection{A general procedure of generating instances for synthetic data experiments in Section \ref{sec:synthetic_exp} and Section \ref{sec:exp_sampling_uncertainty}}
\label{sec:synthetic_data_multinational_draws}

Generate 50 instances with underlying models, either HEV or Probit with negative correlations in utilities. For each instance $\bp_\cS$, we generate with the following Step 1 - 3. 
\begin{description}
    \item \textbf{Step 1:} Generate true underlying choice probabilities of the chosen model (details to be provided in the next two subsections).
    \item \textbf{Step 2:} Generate multinomial samples for purchased records based on step 1.
    \begin{description}
        \item \textbf{Input:} The collection of choice probabilities from Step 1, the historical assortments, and their offer times.
        \item \textbf{Procedure:} For each instance: generate multinomial samples based on the choice probabilities, the assortments, and the offer times.
        \item  \textbf{Output:} A collection of multinomial samples representing product choices within each assortment.
    \end{description}
    \item \textbf{Step 3:} Obtain choice probabilities $\bp_\cS$ by computing average purchase frequencies.
    \begin{description}
        \item \textbf{Input:} The multinomial samples from Step 2, the collection of historical assortments. 
        \item \textbf{Procedure:} For each instance, compute the average purchase frequency of each product in each assortment based on the samples generated in Step 2. These frequencies form the observed choice data $\bp_\cS$ for the experiments.
        \item \textbf{Output:} A collection of average purchase frequencies for each product across the assortments.
    \end{description}
\end{description}

\subsubsection{HEV instance in Section \ref{sec:synthetic_exp}}
\label{sec:data_gen_hev}
Recall that the HEV model is a parametric subclass of the random utility model and assumes independent extreme value error distributions with nonidentical scales. There is no closed-form expression for computing choice probabilities. For each underlying HEV instance, we generate with following procedure: 

\begin{description}

    \item \textbf{Step 1:} Generate deterministic utilities $\{\nu_i: i\in \cN \cup \{0\} \} $ and scales parameters $\{ \beta_i : i\in \cN \cup \{0\} \}$
    \begin{description}
        \item \textbf{Input:} A collection of products and the corresponding price of each product $\{r_i: i\in \cN  \} $. 
        \item \textbf{Procedure for generating deterministic utilities  $\{\nu_i: i\in \cN \cup \{0\}  \} $ } 
        \begin{itemize}[leftmargin=0pt,labelindent=0pt]
            \item Compute the mean of prices $\bar{r} = \sum_{i\in \cN} r_i.$
            \item Compute the standard deviation of $\{r_i: i\in \cN  \} $ and denote it as $\delta.$
            \item Let $z_i = (r_i - \bar{r}) / \delta,$ for all $i \in \cN.$
            \item Generate $\nu_i = \rho \, z_i + \sqrt{1-\rho^2} \, w_i,$ for all $i \in \cN,$ where $\rho = -0.5$ and $w_i$ is a random variable that follows the standard normal distribution, that is $w_i \sim N(0,1).$
            \item Let the deterministic utility of the outside option be greater than the deterministic utility of each product by setting $\nu_0 = \max_{i\in \cN} \nu_i +1. $
        \end{itemize}
        \item \textbf{Procedure for generating scales parameters $\{ \beta_i : i\in \cN \cup \{0\}\}$: } Let $\beta_0 = 10, $ and $\beta_i$ be generated from a uniform distribution with range $[0.04,1],$ for all $i \in \cN.$ This indicates that the utility of the outside option has the largest variance. 
        \item \textbf{Output:} Deterministic utilities $(\nu_i: i\in \cN \cup \{0\} )$ and scales parameters $( \beta_i: i\in \cN \cup \{0\}).$ 
    \end{description}
    
    \item \textbf{Step 2: } Generate 10000 utility samples.
    \begin{description}
        \item \textbf{Input:} 10000 repetitions of Step 1 to generate deterministic utilities $(\nu_i^k: i\in \cN \cup \{0\} )$ and scales parameters $( \beta_i^k: i\in \cN \cup \{0\}),$ for $k = 1,\ldots, 10000.$ 
        \item \textbf{Procedure:}  Generate 10000 utility samples. For each utility sample $\{\tilde{U}_i^k: i\in \cN \cup \{0\} \}$,  it is draw from a Gumbel distribution with mean $\{\nu_i^k: i\in \cN \cup \{0\}  \} $ and scales $\{ \beta_i^k : i\in \cN \cup \{0\}\}$.
        \item \textbf{Output:} Utility samples $\{\tilde{U}_i^k: i\in \cN \cup \{0\} \}$ for $k = 1,\ldots, 10000.$ 
    \end{description}
    \item \textbf{Step 3:} Compute the number of times each alternative in each assortment is purchased. 
    \begin{description}
        \item \textbf{Input: } Utility samples $\{\tilde{U}_i^k: i\in \cN \cup \{0\} \}$ for $k = 1,\ldots, 10000,$ and the collection of historical assortments. 
        \item \textbf{Procedure:}
        \begin{itemize}[leftmargin=0pt,labelindent=0pt]
            \item For each utility sample $\{\tilde{U}_i^k: i\in \cN \cup \{0\} \}$, determine the product purchased by the customer based on the utility maximization principle. For every assortment $S$ in the collection, the product $i\in S$ is purchased if $i=\argmax_{j\in S \cup \{0\}} \tilde{U}_j^k.$
            \item Obtain the number of purchasing times of each product in each assortment based on the 10000 utility samples.
        \end{itemize} 
        \item \textbf{Output:} The number of purchasing times of each product in each assortment given 10000 purchasing records. 
    \end{description}
    \item \textbf{Step 3:} Obtain choice probabilities $\bp_\cS$ by dividing the number of purchasing times of each product in each assortment by 10000.
\end{description}

\subsubsection{Probit instance with negative correlations in utilities in Section \ref{sec:synthetic_exp}} \label{sec:data_gen_Probit}
Recall that the Probit model is a parametric subclass of the random utility model and assumes that the stochastic components of utilities follow a Gaussian distribution. There is no closed-form expression for computing choice probabilities.  For each underlying Probit instance with negative correlations in utilities, the instance generation procedure is similar to the above for the HEV instances. The key difference lies in Step 1 to generate model-dependent parameters and a slight difference in Step 2 to generate utility samples while Steps 3 and 4 are the same. Hence, we only give details for Steps 1 and 2 in the following. 
\begin{description}
    \item \textbf{Step 1:} Generate deterministic utilities $\{\nu_i: i\in \cN \cup \{0\} \} $ and covariance matrix $\{ \Sigma_{ij} : i,j \in \cN \cup \{0\} \}$.
    \begin{description}
        \item \textbf{Input:} A collection of products. 
        \item \textbf{Procedure for generating deterministic utilities  $\{\nu_i: i\in \cN \cup \{0\}  \} $} 
        \begin{itemize}[leftmargin=0pt,labelindent=0pt]
            \item Generate $\nu_i$ for all $i \in \cN,$ such that $\nu_i$ is a random draw from the uniform distribution with range $[-2,2].$
            \item Let the deterministic utility of the outside option be greater than the deterministic utility of each product, where  $\nu_0$ is a random draw from the uniform distribution with range $[6,10].$
        \end{itemize}
        \item \textbf{Procedure for generating the covariance matrix $\{ \Sigma_{ij} : i,j \in \cN \cup \{0\} \}$}
        \begin{itemize}[leftmargin=0pt,labelindent=0pt]
            \item Generate  $\{ \Sigma_{ij} : i,j \in \cN \cup \{0\} \}$ as a positive semidefinite matrix with negative correlation. 
            \item For the diagonal entries which represent variances of the utilities of products, let $\Sigma_{00}$ be a random draw from the uniform distribution with range $[30,40],$ and for each $i\in \cN$, let $\Sigma_{ii}$ be a random draw from the uniform distribution with range $[0.5,2].$ 
        \end{itemize}
        \item \textbf{Output:} The deterministic utilities $\{\nu_i: i\in \cN \cup \{0\} \} $ and the covariance matrix $\{ \Sigma_{ij} : i,j \in \cN \cup \{0\} \}$.
    \end{description}
    
    \item \textbf{Step 2: } Generate 10000 utility samples.
    \begin{description} [leftmargin=0pt]
        \item \textbf{Input:} 10000 repetitions of Step 1 to generate deterministic utilities $(\nu_i^k: i\in \cN \cup \{0\} )$ and  
 the covariance matrix $\{ \Sigma_{ij}^k : i,j \in \cN \cup \{0\} \}$, for $k = 1,\ldots, 10000.$ 
        \item \textbf{Procedure:}  Generate 10000 utility samples. For each utility sample $\{\tilde{U}_i^k: i\in \cN \cup \{0\} \}$,  it is draw from a multivariate Gaussian distribution with mean $\{\nu_i^k: i\in \cN \cup \{0\}  \} $ and covariance matrix $\{ \Sigma_{ij}^k : i,j \in \cN \cup \{0\} \}$ .
        \item \textbf{Output:} Utility samples $\{\tilde{U}_i^k: i\in \cN \cup \{0\} \}$ for $k = 1,\ldots, 10000.$ 
    \end{description} 
\end{description}

\subsection{Data Generation for experiments in Section \ref{sec:discuss_representability}} \label{sec:data_gen_true_hev_probit}

\subsubsection*{HEV instances}
We generate HEV instances where the deterministic utilities of products are randomly drawn from a uniform distribution on the interval $[-2,2]$. The scale parameters of the Gumbel distributions are randomly drawn from a uniform distribution on the interval $[0.04,2]$.

\subsubsection*{Probit instances}
We generate Probit instances where the deterministic utilities of products are randomly drawn from a uniform distribution on the interval $[-2,2]$. The off diagonals of the covariance matrix, which capture the correlation of utilities, are randomly drawn from a uniform distribution on the interval $[-0.9,-0.1].$ The diagonals of the covariance matrix, which capture the variances of utilities, are randomly drawn from a uniform distribution on the interval $[0.5,2].$ 

\subsection{Data generation for experiments in Section \ref{sec:exp_sampling_uncertainty}} \label{sec:data_gen_mem}
We use the collection of historical assortments $\cS$ from the JD.com dataset with $n_\cS \geq 20$, where we have 29 assortments and the price of the dataset. To illustrate the effect of sampling uncertainty where the prediction performance will be affected, we generate underlying parametric MDM with marginals being non-identical exponential distributions, which is known as the marginal exponential model (MEM). 

The data used in the prediction experiment of Section \ref{sec:exp_sampling_uncertainty} is generated with the following procedure:

\textbf{Step 1:} For the collection $\cS$, we generate an underlying MEM instance with the deterministic utilities of products being randomly generated from uniform [-5,5] and the scale parameters being randomly generated from [1,10]. 

\textbf{Step 2:} We randomly select an assortment from the historical assortments and call it $A.$ We compute the true revenue of $A$ as $r_A^{*}.$

\textbf{Step 3:} We keep $\cS^\prime = \cS \setminus \{A\}$ as the training collection of assortments. 

\textbf{Step 4:} We generate the training instances under two testing scenarios 50 times. For each time, 
we generate two groups of choice data, with the first group of choice data generated with the assumption that each assortment has been offered 20 times, while the other group of choice data generated with the assumption that each assortment has been offered 1000 times. With this setup, we generate the training instances by following the procedure in Section \ref{sec:synthetic_data_multinational_draws} and get the training choice data $\bp_\cS^\prime$ with assortment offered 20 times and $\bp_\cS^\prime$ with assortment offered 1000 times.

\textbf{Step 5:} For any given $\bp_\cS^\prime$, the worst-case revenue prediction and the best-case revenue prediction under data uncertainty for assortment $A$ are computed with \eqref{model:mdm_ci}. Here, we choose a 95\% confidence interval.

\textbf{Step 6:} For the 50 instances of each group, we do the worst-case and best-case revenue estimate for $A,$ and report the corresponding revenue estimates and the proportion of being feasible for both groups. Note that the prediction problem may not be always feasible.

\subsection{Data generation for experiments in Section \ref{sec:exp_correlation}} \label{sec:coupla_data}
 
We generate parametric RUM instances where the deterministic utilities of products are randomly drawn from a uniform distribution on the interval $[-2,2]$ and the scale parameters of the Gumbel distributions are randomly drawn from a uniform distribution on the interval [0.04,2].

\textbf{Instances with Independent Copula:} 50 instances are generated using independent copulas, following the procedure outlined in Section \ref{sec:data_gen_hev}. The deterministic utilities and scale parameters are set according to the aforementioned uniform distributions.

\textbf{Instances with Comonotonic Copula:} 50 instances are generated with Comonotonic copulas, following a modified version of the procedure in Section\ref{sec:data_gen_hev}. The key difference lies in Step 2: instead of generating utility samples independently, a single uniform random variable $U \in (0,1)$ is drawn for each utility sample. The utilities for all products are then generated by applying the inverse transform of the marginals for each product.}

{\color{black}\section{Robustness of numerical results in Section \ref{sec:synthetic_exp}} \label{sec:addtional_exp_robustness}
In Section \ref{sec:synthetic_exp}, we use the product-assortment structure from the JD.com dataset to generate random instances for our experiments. A natural question is whether the performance of the compared models remains robust under different assortment structures. To investigate this, we generate random HEV-based instances with varied assortment structures and evaluate the models’ estimation and prediction performance. The results reported under this randomly generated assortment setting are consistent with the findings  in Section \ref{sec:synthetic_exp}.

\subsubsection*{Data generation.} 
% Introduce the info of products
Similar to the JD.com dataset, we consider $n = 8$ products, with the outside option always included in each assortment. The price of each product is set to the same as in the JD.com dataset. 
% introduce how different assortment structures of the collections are generated
We construct eight distinct test scenarios that primarily differ in the assortment structures of the collection $\cS,$ specifically in how likely assortments of different sizes are to be included in $\cS.$ We consider two generation strategies for the assortment structures of the collections: one that favors the inclusion of smaller sized assortments and another that favors larger sized assortments.
Let $k \in \{1, 2, 3, 4\}$ denote the number of products in an assortment. We define a probability distribution mass function $p(k)$ parameterized by $\alpha \in \{0.6, 0.7, 0.8, 0.9\}$. Under the strategy favoring the inclusion of assortments with fewer products in $\cS$, we set $p(k) = \alpha^k / \sum_{j=1}^4 \alpha^j.$  Under the strategy favoring the inclusion of assortments with more products in $\cS$, we set $p(k) = {\alpha^{5-k}}/{\sum_{j=1}^4 \alpha^j}.$ 
% introduce how a collection is generated under the specification of the assortment structure of the collection
Specifically, given $p(k)$ for $ k \in \{1, 2, 3, 4\}$, for each $\alpha \in \{0.6, 0.7, 0.8, 0.9\}$ and each fixed collection size $\vert \cS \vert \in \{30,25,20,15,10\},$ we generate a random assortment collection $\cS$ as follows. For $i = 1,\ldots, \vert \cS \vert,$ 
\begin{itemize}
    \item generate an independent sample $X_i$ with probability mass function $P(X_i = k) = p(k),$ for $k = 1,2,3,4.$
    \item given $X_i$ generated as above, take assortment $S_i$ to be the subset selected uniformly at random from all subsets of $\{1,2,\ldots,8\}$ whose
    size is $X_i.$
    \item add the outside option to $S_i$ and include the resulting $S_i$ in the collection $\cS.$
\end{itemize}
% introduce how instances are generated given a collection 
For every collection $\cS$ generated above, we generate 20 random HEV-based instances under $\cS$. Every such instance $\bp_\cS$ is generated by following the procedure described in Section~\ref{sec:synthetic_data_multinational_draws} with $\mu_0 = \max_{i \in \cN} \mu_i + 10$ to specify the deterministic part of the utility of the outside option. The number of offer times $n_S$ of each assortment $S \in \cS$ is sampled based on its size $|S|$ as follows: if $|S| = 2$, then $n_S \sim \text{Uniform}(20, 2000)$; if $|S| = 3$, then $n_S \sim \text{Uniform}(20, 250)$; if $|S| = 4$, then $n_S \sim \text{Uniform}(20, 60)$; if $|S| = 5$, then $n_S \sim \text{Uniform}(20, 30)$. Overall, we evaluate the performance of models with a total of 800 HEV-based instances, consisting of 20 random instances for each combination of collection size in $\{30,25,20,15,10\}$ and each of the 8 test scenarios with different assortment structures of collections.

\subsubsection*{Comparison of explanatory, predictive, and prescriptive abilities.} 
The performance of the compared models in estimation and prediction is almost consistent across different test scenarios, so we present results from a representative scenario. Specifically, we select the case with $\alpha = 0.8$ and $p(k) = \alpha^k / \sum_{j=1}^4 \alpha^j,$ for $k \in \{1,2,3,4\} $, where assortments with smaller sizes are more likely to be included in the assortment collection.

We report the average absolute deviation loss and standard error over the 20 randomly generated HEV-based instances under each considered collection size for the MDM, RUM, MNL, MCCM, and LC-MNL in Table \ref{tab:hev-estimation-alpha}. The results show that the models perform similarly to their performance on the JD.com dataset. Notably, the nonparametric MDM and RUM models achieve significantly lower average absolute deviation loss when approximating HEV-based instances, indicating superior explanatory power compared to the parametric models. Specifically, MDM exhibits a loss reduction of approximately 85\% compared to the best-fitting MNL, 47\% compared to MCCM, and 37\% compared to LC-MNL for the largest collection size considered.  Additionally, similar to experiments in Section \ref{sec:discuss_representability}, due to the nonconvex feasible region of the limit of MCCM, some rare randomly generated instances encounter convergence issues when solved using a continuous solver (IPOPT). The absolute deviation loss that we report for MCCM in Table \ref{tab:hev-estimation-alpha} is only for the instances that successfully converged to an acceptable level.  
To further validate MDM's estimation performance, we also compare the average KL loss across MDM, MNL, and LC-MNL.
Although we report sub-optimal KL losses for MDM due to computational challenges, the results in Table \ref{tab:hev-kl-alpha} demonstrate that both MDM and LC-MNL significantly outperform MNL across all tested scenarios, further highlighting MDM's robustness in accurately estimating choice probabilities.
We do not report the computational time of the models, as the estimations were performed on different machines, making direct comparisons unreliable. However, we note that solving the limit of MDM can be computationally intensive, particularly when $|\cS| = 30$ or when collections are with larger size. We leave it to future research to explore more efficient formulations and algorithms that can improve the scalability of solving the limit of MDM.
%infeasible rum proportion for the collection size is [0,2,0,0.1,0.05,0.05] out of 20 instances

Next, we report the quality of predicted rankings of the tested collection of assortments captured by Kendall Tau distances in Table \ref{tab:hev-kendall-alpha} and the average relative error in the revenue of the top-ranked assortment in Table \ref{tab:hev-regret-alpha}. For executing the robust approach to RUM developed in \cite{farias2013nonparametric} in these tables, we have taken the instances to be lying in the RUM representable region with 99.99\% confidence. Robust revenue prediction using RUM has not always been feasible with this choice despite 99.99\% being a very high confidence level to use: we find  20\% of instances in the dataset obtained by restricting to $|\cS|= 30$ turn out to be infeasible under RUM; 10\% of instances are not feasible when $|\cS|= 20$; and 5\% instances are infeasible for  $|\cS|= 15$ and  $|\cS|= 10$. Additionally, similar to experiments in Section \ref{sec:discuss_representability}, due to the nonconvex feasible region of the MLE formulation of MCCM, some rare randomly generated instances encounter convergence issues when solved using a continuous solver (IPOPT). Therefore, in Tables \ref{tab:hev-kendall-alpha} - \ref{tab:hev-regret-alpha}, the numbers reported for RUM are obtained by restricting only to feasible instances, and the numbers reported for MCCM are obtained by restricting only to instances that successfully converged to an acceptable level. 
Overall, from Tables \ref{tab:hev-kendall-alpha} and \ref{tab:hev-regret-alpha}, we infer that MCCM and LC-MNL perform comparably well, followed by MDM, then MNL, with RUM ranking near the last among the models considered.

% \begin{table}[htb]
% \centering
% \caption{Average absolute deviation loss ($10^{-3}$) comparison with HEV-based instances when $\alpha=0.8$}
% \label{tab:hev-estimation-alpha}
% \scalebox{0.9}{\begin{tabular}{c|c|c|c|c|c}
% \hline
% $|\cS|$ & MDM & RUM & MNL & MCCM & LC-MNL \\
% \hline
% 30 & 3.42 (0.44) & 2.60 (0.39) & 33.61 (1.52) & 9.20 (3.52) & 7.77 (0.71) \\
% 25 & 3.42 (0.45) & 2.60 (0.43) & 32.61 (1.76) & 10.59 (3.74) & 7.39 (0.58) \\
% 20 & 3.42 (0.73) & 2.60 (0.71) & 28.33 (1.39) & 5.73 (1.05) & 7.12 (0.85) \\
% 15 & 3.42 (0.77) & 2.60 (0.64) & 26.31 (2.34) & 4.78 (0.78) & 7.81 (1.04) \\
% 10 & 3.42 (0.55) & 2.60 (0.48) & 18.43 (1.80) & 3.45 (0.68) & 5.93 (1.05) \\
% \hline
% \end{tabular}}
% \vspace{0.15cm}
% \begin{flushleft}\footnotesize Notes. Standard errors are reported in parentheses. \end{flushleft}
% \end{table}
\vspace{-0.3cm}
\begin{table}[htb]
\centering
\caption{Average absolute deviation loss ($10^{-3}$) comparison with HEV-based instances when $\alpha=0.8$}
\label{tab:hev-estimation-alpha}
\scalebox{0.85}{\begin{tabular}{c|c|c|c|c|c}
\hline
$|\cS|$ & MDM & RUM & MNL & MCCM & LC-MNL \\
\hline
30 & 4.90 (0.44) & 4.76 (0.39) & 33.61 (1.52) & 9.20 (3.52) & 7.77 (0.71) \\
25 & 4.97 (0.45) & 4.59 (0.43) & 32.61 (1.76) & 10.59 (3.74) & 7.39 (0.58) \\
20 & 4.74 (0.73) & 4.19 (0.71) & 28.33 (1.39) & 5.73 (1.05) & 7.12 (0.85) \\
15 & 5.09 (0.77) & 4.17 (0.64) & 26.31 (2.34) & 4.78 (0.78) & 7.81 (1.04) \\
10 & 3.42 (0.55) & 2.60 (0.48) & 18.43 (1.80) & 3.45 (0.68) & 5.93 (1.05) \\
\hline
\end{tabular}}
\vspace{0.15cm}
\begin{flushleft}\footnotesize Notes. Standard errors are reported in parentheses. \end{flushleft}
\end{table}

\vspace{-0.5cm}
\begin{table}[htb]
\centering
\caption{Comparison on average KL loss ($10^{-3}$) among MDM, MNL, and LC-MNL with HEV-based instances when $\alpha=0.8$}
\label{tab:hev-kl-alpha}
\scalebox{0.85}{\begin{tabular}{c|c|c|c}
\hline
$|\cS|$ &MDM & MNL & LC-MNL \\
\hline
30 & 4.39 (0.43)  & 513.30 (4.12) & 2.54 (0.33) \\
25 & 7.52 (1.19)  & 514.76 (5.08) & 2.59 (0.35) \\
20 & 9.59 (4.82)  & 515.60 (6.30) & 2.22 (0.31) \\
15 & 6.44 (1.07)  & 511.15 (8.17) & 2.44 (0.48) \\
10 & 4.48 (1.19)  & 520.00 (8.53) & 1.34 (0.25) \\
\hline
\end{tabular}}
\vspace{0.15cm}
\begin{flushleft}\footnotesize Notes. Standard errors are reported in parentheses. \end{flushleft}
\end{table}

\vspace{-0.5cm}
\begin{table}[htb!]
\centering
\caption{Kendall Tau distance comparison with HEV-based instances when $\alpha = 0.8$}
\label{tab:hev-kendall-alpha}
\scalebox{0.85}{\begin{tabular}{cc|c|c|c|c|c}
\hline
$|\cS|$ & $|\mathcal{A}|$ & MDM & RUM & MNL & MCCM & LC-MNL \\
\hline
24 & 6 & 4.85 (0.49) & 6.06 (0.68) & 5.70 (0.49) & 4.06 (0.41) & \textbf{4.05} (0.41) \\
20 & 5 & 4.05 (0.46) & 5.30 (0.49) & 4.60 (0.45) & \textbf{3.21} (0.44) & 3.40 (0.29) \\
16 & 4 & 1.65 (0.30) & 2.33 (0.36) & 1.65 (0.27) & 1.41 (0.35) & \textbf{1.05} (0.22) \\
12 & 3 & 1.50 (0.18) & 1.63 (0.20) & 1.50 (0.21) & \textbf{1.05} (0.17) & \textbf{1.05} (0.15) \\
8  & 2 & \textbf{0.15} (0.08) & 0.42 (0.11) & 0.30 (0.10) & 0.29 (0.11) & 0.30 (0.10) \\
\hline
\end{tabular}}
\vspace{0.15cm}
\begin{flushleft}\footnotesize Notes. Standard errors are reported in parentheses. Bold values indicate the best-performing model in each row. \end{flushleft}
\end{table}

\vspace{-0.5cm}
\begin{table}[htb!]
\centering
\caption{Comparison of relative error (\%) in revenue of the top-ranked assortment with HEV-based instances when $\alpha = 0.8$}
\label{tab:hev-regret-alpha}
\scalebox{0.85}{\begin{tabular}{cc|c|c|c|c|c}
\hline
$|\cS|$ & $|\mathcal{A}|$ & MDM & RUM & MNL & MCCM & LC-MNL \\
\hline 
24 & 6 & 25.59 (6.40) & 30.64 (5.84) & 28.47 (4.99) & 12.89 (3.86) & \textbf{10.17} (3.41) \\
20 & 5 & 23.79 (5.04) & 30.90 (6.32) & \textbf{22.30} (5.11) & 16.76 (4.93) & 23.53 (5.17) \\
16 & 4 & 14.56 (4.84) & 27.89 (6.46) & 21.04 (5.21) & 14.48 (5.40) & \textbf{6.06} (2.81) \\
12 & 3 & 25.99 (5.61) & 26.43 (4.97) & 26.91 (5.88) & \textbf{13.37} (4.83) & 17.01 (5.67) \\
8  & 2 & \textbf{2.52} (1.61) & 14.78 (5.11) & 6.57 (2.92) & 8.84 (4.15) & 5.18 (2.67) \\
\hline
\end{tabular}}
\vspace{0.15cm}
\begin{flushleft}\footnotesize Notes. Standard errors are reported in parentheses. Bold values indicate the best-performing model in each row. \end{flushleft}
\end{table}
\vspace{-0.4cm}

\subsubsection*{Comparison of overall predictive and prescriptive abilities.}

To assess the overall predictive and prescriptive performance of the models across all instances of all testing scenarios, we summarize the distributions of their rankings based on the average Kendall Tau distance of testing assortments and the average relative error in revenue of the top-ranked assortment in Tables~\ref{tab:overall_kendall_alpha} and~\ref{tab:overall_rev_reg_alpha}, respectively. In total, 800 instances were tested. After excluding instances where nonparametric RUM prediction was not feasible or where the MLE estimation of MCCM failed to converge, 634 instances remained for comparison.
When ties occur in model rankings, we assign the same rank to the tied models. %Tables~\ref{tab:overall_kendall_alpha} and~\ref{tab:overall_rev_reg_alpha} report the performance rankings based on Kendall Tau distances and relative errors in predicting the best assortments across the 634 considered instances, respectively.
Based on the results, all models demonstrate a reasonable ability to predict both the Kendall Tau distances of testing assortments and the best assortments in the test sets, meaning that the models are often tied at the Rank 1 position.  However, their effectiveness varies in terms of how frequently they achieve top rankings, with some models consistently outperforming others in securing the first rank.
Overall, both tables indicate that MCCM and LC-MNL perform comparably well, followed by MDM, while RUM and MNL exhibit the weakest performance among the models considered.%\textcolor{red}{tied cases happen in rank 1 ?}

\begin{figure}[htb!]
    \centering
    \begin{minipage}{0.48\textwidth}
        \centering 
\vspace{-11pt}
\begin{table}[H]
\caption{Distributions (\%) of Models' Ranking based on Kendall Tau Distance}
\begin{tabular}{|c|c|c|c|c|c|}
\hline
Rank   & 1     & 2     & 3     & 4     & 5     \\ \hline
MDM    & 45.74 & 18.45 & 17.67 & 11.99 & 6.15  \\ \hline
RUM    & 38.33 & 10.57 & 12.30 & 14.35 & 24.45 \\ \hline
MNL    & 39.27 & 13.72 & 15.14 & 20.98 & 10.88 \\ \hline
MCCM   & 58.04 & 18.77 & 10.88 & 9.15  & 3.15  \\ \hline
LC-MNL & 57.89 & 22.87 & 10.41 & 6.62  & 2.21  \\ \hline
\end{tabular}\label{tab:overall_kendall_alpha}
\vspace{0.15cm}
\begin{flushleft}\footnotesize Notes. Each row represents the distribution of rankings of each model. The sum of values along each column does not equal 100\% due to ties between models. These ties occur most frequently at the top rank.  \end{flushleft}
\end{table}
    \end{minipage}
    \hfill
    \begin{minipage}{0.48\textwidth}
    \vspace{-1cm}
\begin{table}[H]
\caption{Distributions (\%) of Models' Ranking based on Relative Error in Revenue of the Top-ranked Assortment}
\begin{tabular}{|c|c|c|c|c|c|}
\hline
Rank   & 1     & 2     & 3     & 4     & 5     \\ \hline
MDM    & 52.84 & 8.68  & 9.15  & 19.24 & 10.09 \\ \hline
RUM    & 46.37 & 5.36  & 8.83  & 18.30 & 21.14 \\ \hline
MNL    & 50.79 & 10.88 & 12.46 & 11.67 & 14.20 \\ \hline
MCCM   & 65.30 & 11.99 & 11.04 & 7.57  & 4.10  \\ \hline
LC-MNL & 66.72 & 12.62 & 11.36 & 7.10  & 2.21  \\ \hline
\end{tabular}\label{tab:overall_rev_reg_alpha}
\vspace{0.15cm}
\begin{flushleft}\footnotesize Notes. Each row represents the distribution of rankings of each model. The sum of values along each column does not equal 100\% due to ties between models. These ties occur most frequently at the top rank.  \end{flushleft}
\end{table}
\end{minipage}
\end{figure}}

% \begin{figure}[htb!]
%     \centering
%     \includegraphics[scale=0.66]{}
%     \vspace{-20pt}
%     \caption{Overall Model Ranking based on Kendall Tau Distance}
%     \label{fig:overall_kendall_alpha}
% \end{figure}
% \begin{figure}[htb!]
%     \centering
%     \includegraphics[scale=0.66]{}
%     \vspace{-20pt}
%     \caption{Overall Model Ranking based on Relative Error in Revenue of the Top-ranked Assortment}
%     \label{fig:overall_rev_reg_alpha}
% \end{figure}

\section{Additional numerical results with synthetic data}
\label{sec:addtional_exp_results}
In Experiments 1 - 2 below, we compare the representational ability of MDM with RUM and MNL model. Experiment 3 compares the prediction performance offered by the nonparametric approach proposed in this paper with that offered by models involving parametric assumptions. Experiment 4 compares the limit of approximating choice probabilities with  MDM, RUM, and MNL models. Additional useful details on the precise setup of all the experiments are furnished in  \ref{sec:implementation}.

\subsection{The representation power and tractability of MDM compared to RUM and MNL.}\label{sec:addtional_exp_results_rep}
In Experiment 1, we investigate the representational power of MDM for a large number of alternatives ($n = 1000$) by randomly perturbing choice probabilities obtained from an underlying MNL model. We test for the fraction of instances that can be represented by MDM where the parameter $\alpha$ controls the fraction of choice probabilities that are perturbed from the MNL model (a larger value indicates more entries are modified from the underlying MNL model). While checking the representability of these models can be done by solving linear programs, RUM quickly becomes intractable as $n$ increases. In Figure \ref{fig:large_multiply_noise}, we see that even with small perturbations to the choice probabilities of the underlying MNL model, none of the MNL models can represent the perturbed choice data. However, MDM which subsumes the MNL model can capture many of these instances. This shows that MDM is a much more robust model than MNL model. The runtimes for these large instances were less than 1 second. The computational requirements for RUM make it impossible to run at this scale. 

% \begin{figure}[htb!]
% \qquad\qquad\qquad\qquad
% \begin{minipage}[t]{0.5\textwidth}
% \centering
% \includegraphics[scale=0.45]{}
% \end{minipage}
% \begin{minipage}[t]{0.3\textwidth} %\centerline{}
% %\centering
% \includegraphics[scale=0.17]{}
% \end{minipage}
% \caption{The representational power of MDM }
% \label{fig:large_multiply_noise}
% \end{figure}
In Experiment 2, we compare the representational power and computational time for MDM and RUM for a small number of alternatives. We find that both MDM and RUM show good representational power: In particular, with the collection size $\vert \cS \vert = 20$, round 80 percent of the instances can still be represented by MDM when 25 percent of the choice probability entries are perturbed; this drops to 60 percent when 100 percent of entries are perturbed,  RUM has better representation power in these examples (see Figure \ref{fig:small_multiply}). However, this comes at a significant run time cost even at this scale as seen in Figure \ref{fig:small_multiply} as compared to MDM. 
\begin{figure}[htb!]
    \centering   
    \includegraphics[scale=0.5]{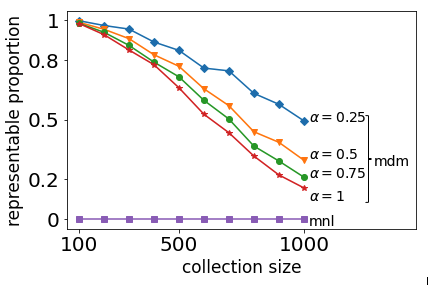}
    \caption{The representational power of MDM }
\label{fig:large_multiply_noise}
\end{figure}

\begin{figure}[htb!]
\centering
\begin{minipage}[t]{0.45\textwidth} 
\centerline{ Representation Power}
\centering
\includegraphics[scale=0.45]{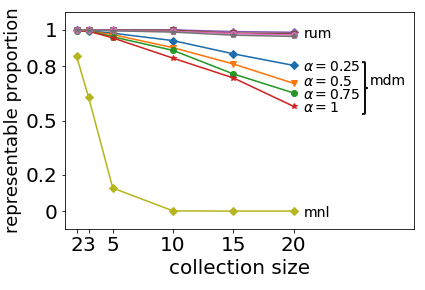}
\end{minipage}
\;
\begin{minipage}[t]{0.45\textwidth} 
\centerline{  Run Time }
\centering
\includegraphics[scale=0.45]{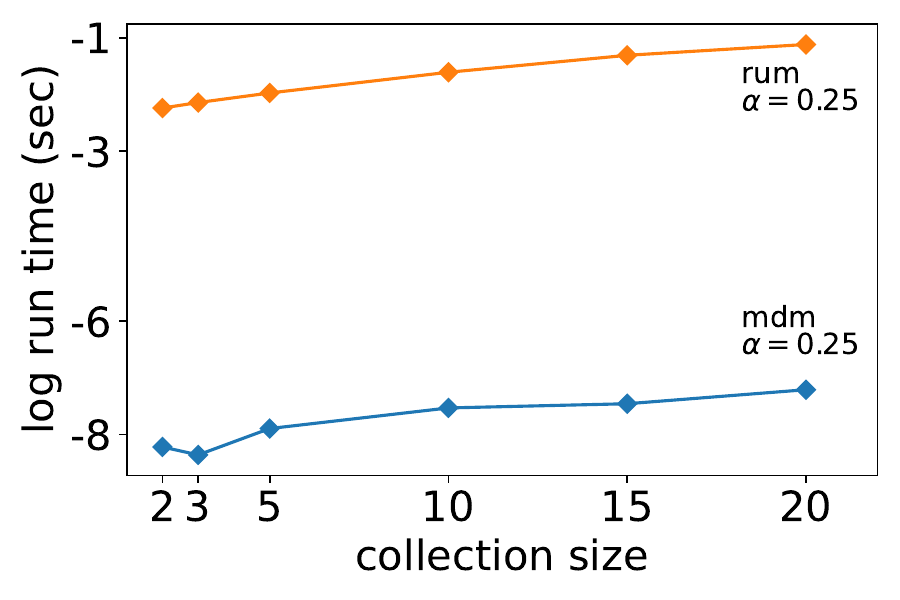}
\end{minipage}
% \begin{minipage}[t]{0.13\textwidth} 
% \centerline{}
% \centering
% \includegraphics[scale=0.16]{}
% \end{minipage}
% \begin{flushleft}
% \footnotesize Notes. m stands for MDM, r stands for RUM, and the numbers stand for the perturbation parameters.
% \end{flushleft}
\caption{Comparison of the performance of MDM and RUM } \label{fig:small_multiply}
\end{figure}
\subsection{Revenue and choice probability prediction with nonparametric MDM.}\label{sec:addtional_exp_results_prediction}
%Besides revealing the benefits of the  proposed nonparametric data-driven approach for prediction based on MDM, Experiment 3 brings out the risks in stipulating parametric distributional assumptions on the model. 
In Experiment 3, we generate 20 random instances with a product size of 7 and a collection size $\vert \cS \vert$ ranging among \{20, 40, 80\}, using nonidentical exponential distributions for the marginal distributions to generate the underlying choice probabilities. In Figure \ref{fig:prediction_withMEM},  we compare the predictions offered by the following two methods: (1) computing the nonparametric MDM lower and upper bounds of revenue and choice probabilities for each instance by solving $\underline{r}(A)$ and $\bar{r}(A)$; and (2) restricting the marginal distributions for MDM to be identical exponential distributions (which leads to the underlying choice model being MNL), and estimating the preference parameters using maximum likelihood estimation (MLE); we then using the estimated MNL model to predict revenue for unseen assortments in each instance. While Figure \ref{fig:prediction_withMEM} reveals the  proposed nonparametric approach to be correctly predicting the true revenue or choice probabilities, the MLE of the parametric approach with mis-specified marginals is often found to lead to inaccurate predictions which are far from the truth and also far out of the nonparametric MDM prediction intervals. When more assortments are offered, the prediction under nonparametric MDM becomes more accurate while the prediction results made under the incorrect parametric model become worse. Thus, besides revealing the benefits of the  proposed nonparametric data-driven approach for prediction based on MDM, Experiment 3 brings out the risks in stipulating apriori distributional assumptions on the model.

\begin{figure}[htb]
\centering
\begin{minipage}[t]{0.3\textwidth}
\centerline{collection size = 20}
\centerline{\includegraphics[scale=0.345]{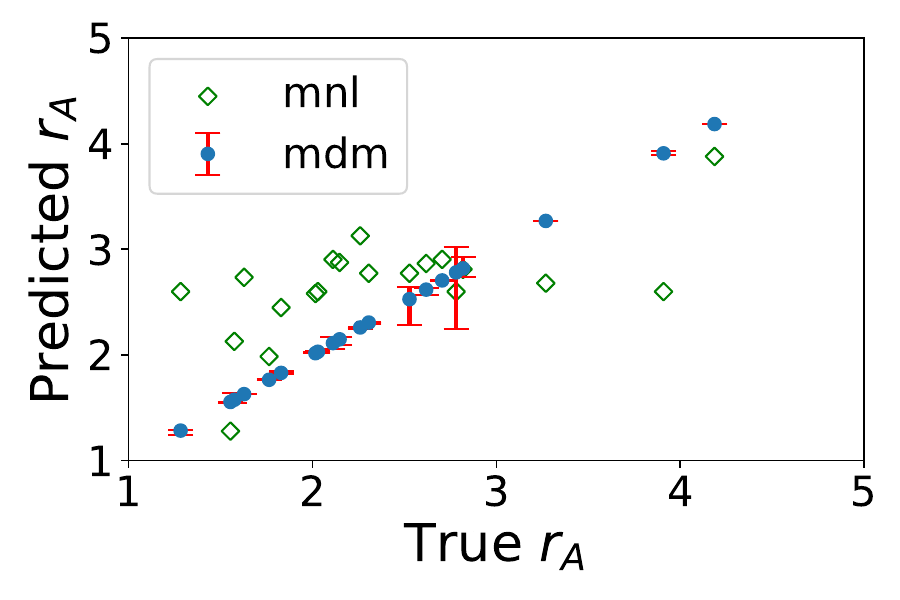}}
\end{minipage}
\;
\begin{minipage}[t]{0.3\textwidth}
\centerline{collection size = 40}
\centerline{\includegraphics[scale=0.345]{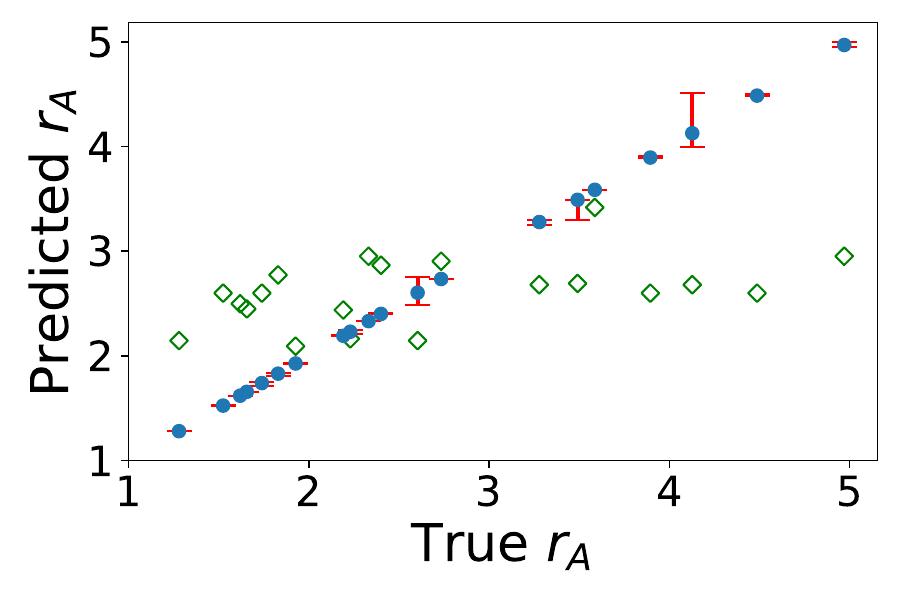}}
\end{minipage}
\;
\begin{minipage}[t]{0.3\textwidth}
\centerline{collection size = 80}
\centerline{\includegraphics[scale=0.345]{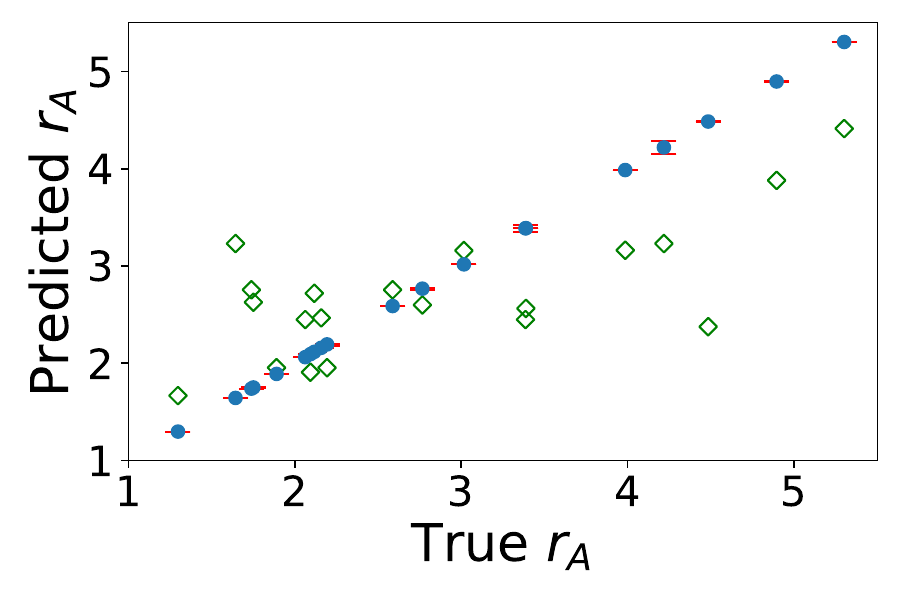}}
\end{minipage}
\;
\begin{minipage}[t]{0.3\textwidth}
%\centerline{collection size = 40}
\centerline{\includegraphics[scale=0.355]{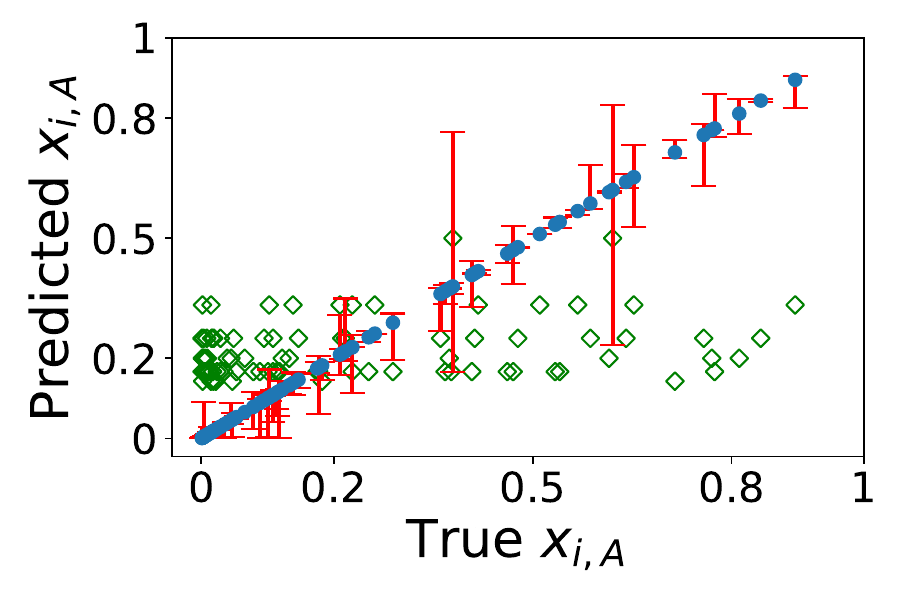}}
\end{minipage}
\;
\begin{minipage}[t]{0.3\textwidth} %\centerline{collection size = 40}
\centerline{\includegraphics[scale=0.345]{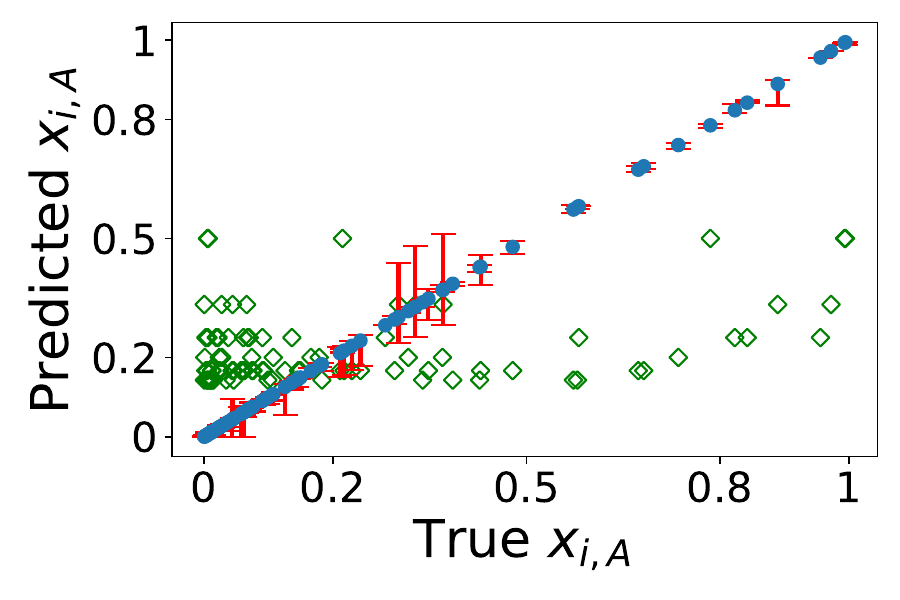}}
\end{minipage}
\;
\begin{minipage}[t]{0.3\textwidth} %\centerline{collection size =80}
\centerline{\includegraphics[scale=0.345]{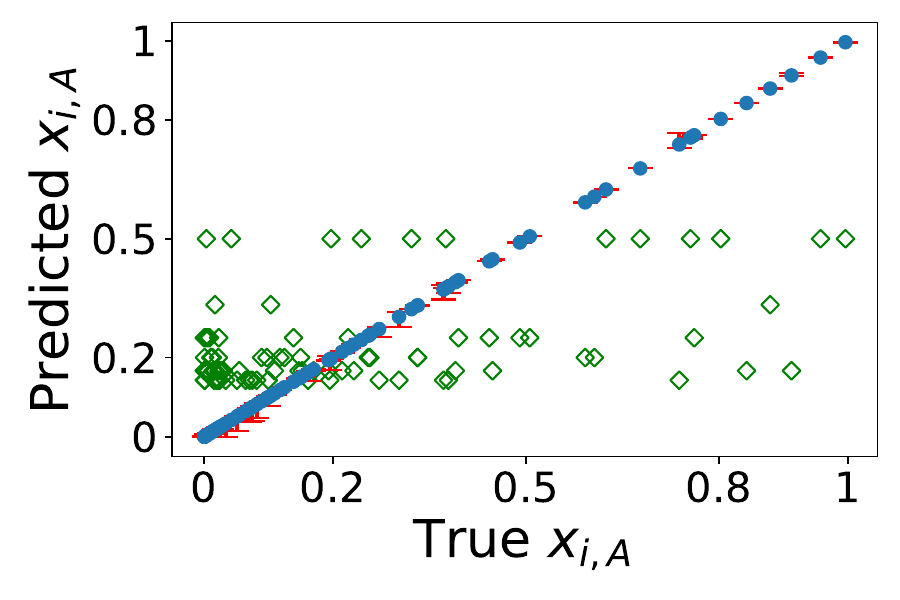}}
\end{minipage}
% \;
% \begin{minipage}[t]{0.3\textwidth} 
% %\centerline{collection size = 20}
% \centerline{\includegraphics[scale=0.33]{Attachments/MEM_PRE/prob/prob_case1_20.pdf}}
% \end{minipage}
\begin{flushleft}\footnotesize Notes. In each figure, the blue dots represent the true revenues or choice probabilities, while the red ranges represent the predicted revenue intervals or choice probability intervals with the nonparametric MDM, and the green squares represent the predicted revenues or choice probabilities using the MLE of the MNL model. \end{flushleft}
\caption{Comparison of prediction accuracy between MDM and MNL with randomly generated instances}
\label{fig:prediction_withMEM}
\end{figure}
\subsection{Estimation performance of MDM compared to RUM and MNL.} \label{sec:addtional_exp_results_estimation}

In Experiment 4, we compare the explanatory ability of MDM, RUM and MNL models by examining the cumulative absolute deviation loss suffered in fitting them to uniformly generated choice data instances. Figure \ref{fig:limit} reveals that nonparametric MDM and RUM models are competitive and have much higher explanatory ability than MNL with increasing collection sizes. In particular,  MDM incurs about 44\% lesser loss, on average, than the best-fitting MNL model for the largest collection size considered.

% \begin{figure}[htb!]
% \centering
% \begin{minipage}[t]{0.48\textwidth} 
% %\caption{The limit loss comparison}
% \centering
% \includegraphics[scale=0.33]{}
% \label{fig:small_multiply_representation}
% \end{minipage}
% \begin{minipage}[t]{0.48\textwidth} 
% %\caption{The grouping effect in G-MDM}
% \centering
% \includegraphics[scale=0.33]{}
% \label{fig:small_multiply_runtime}
% \end{minipage}
% \caption{Comparison of the performance of MDM and RUM in Experiment 2, m stands for MDM, r stands for RUM, numbers stand for the perturbation parameters}
% \end{figure}

\begin{figure}[htb!]
\centering
\includegraphics[scale=0.4]{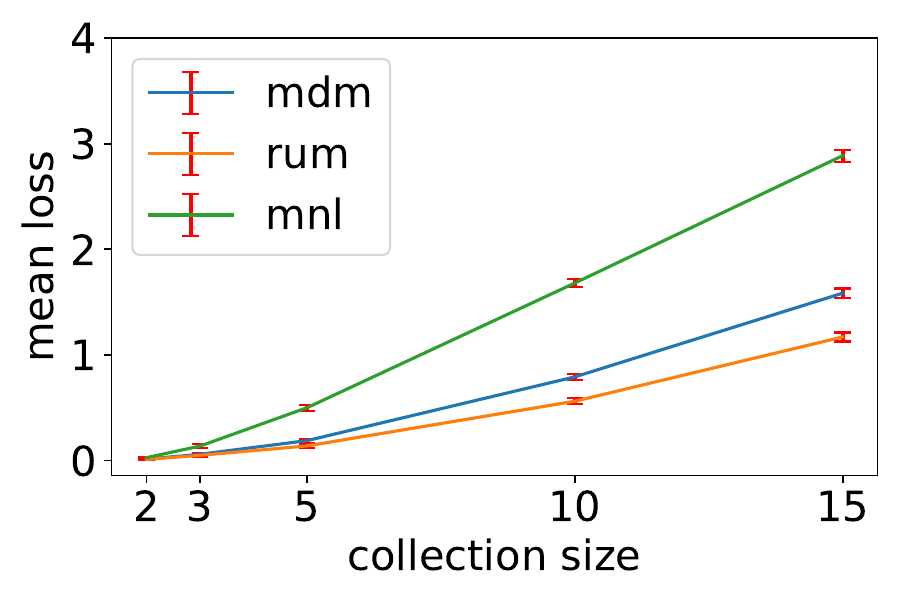}
\caption{The limit loss comparison among MDM, RUM, and MNL}
\label{fig:limit}
\end{figure}

{\color{black}\subsection{Impact on explanatory abilities of models with common alternatives in the assortments.} \label{sec:representability_outside}
We examine the explanatory power of the compared models by excluding the outside option in the assortments. We utilize the assortment data from the JD.com dataset, excluding both the outside option and any assortments containing only one product. The performance of models is evaluated by focusing on assortments offered at least $n_\cS$ times, with $n_\cS = 15$ resulting in  $|\cS| = 30$ and $n_\cS = 20$ resulting in $|\cS| = 21$. For each scenario, we randomly generate 50 underlying HEV and Probit instances. Since the exact characterization of MCCM and LC-MNL is not fully explored in the literature, we compare the representational power of each model by reporting the fraction of instances with an average KL loss below $10^{-6}$. Since the MLE formulation of MCCM has a nonconvex feasible region, there are convergence issues for some rare randomly generated instances when the formulation is directly solved with a continuous solver. Thus, we report the average KL loss for all models only for those instances that converged to an acceptable level.

Table \ref{tab:rep_kl_no_outsideopt_m=30_new} and Table \ref{tab:rep_kl_no_outsideopt_m=21_new} report the fraction of instances with average KL loss being less than $10^{-6}$ for each model, approximating the representable proportions of the testing instances for each model, and the average KL losses. The results show that MDM maintains good representability and results in a low average KL loss across both HEV and Probit, while LC-MNL with 10 classes achieves slightly better representability and a lower average KL loss in the HEV setting and MCCM performs better in the Probit setting. In contrast, MNL and LC-MNL with 2 classes exhibit much higher average KL losses than the other models. }

\begin{table}[htb!]
\centering
\caption{Representation and average KL loss comparison with $|\cS| = 30$} 
\label{tab:rep_kl_no_outsideopt_m=30_new}
\scalebox{0.9}{\begin{tabular}{c|cc|cc}
\hline
\multirow{2}{*}{Model}                                        & \multicolumn{2}{c|}{HEV}                                                                                                      & \multicolumn{2}{c}{Probit}                                                                                                   \\ \cline{2-5} 
                                                              & \multicolumn{1}{c|}{\begin{tabular}[c]{@{}c@{}}fraction of instances \\ with average KL \\ loss < $10^{-6}$\end{tabular}} & \multicolumn{1}{c|}{average KL loss}                                               & \multicolumn{1}{c|}{\begin{tabular}[c]{@{}c@{}}fraction of instances\\  with average KL \\ loss < $10^{-6}$\end{tabular}} & \multicolumn{1}{c}{average KL loss}                                            \\ \hline
MDM                                                           & \multicolumn{1}{c|}{0.66}                                                                                                     & \multicolumn{1}{c|}{$1.05 \times 10^{-5}$ ($3.3 \times 10^{-6}$)}                   & \multicolumn{1}{c|}{0.71}                                                                                                     & \multicolumn{1}{c}{$1.47 \times 10^{-6}$ ($3.6 \times 10^{-7}$)}                   \\ \hline
MNL                                                           & \multicolumn{1}{c|}{0}                                                                                                        & \multicolumn{1}{c|}{$1.07 \times 10^{-2}$ ($1.0 \times 10^{-3}$)}                   & \multicolumn{1}{c|}{0}                                                                                                        & \multicolumn{1}{c}{$6.84 \times 10^{-4}$ ($5.8 \times 10^{-5}$)}                   \\ \hline
MCCM                                                          & \multicolumn{1}{c|}{0.14}                                                                                                     & \multicolumn{1}{c|}{$3.60 \times 10^{-4}$ ($1.62 \times 10^{-4}$)}                   & \multicolumn{1}{c|}{0.90}                                                                                                     & \multicolumn{1}{c}{$3.65 \times 10^{-7}$ ($2.36 \times 10^{-7}$)}                   \\ \hline
\begin{tabular}[c]{@{}c@{}}LC-MNL\\ (2 classes)\end{tabular}  & \multicolumn{1}{c|}{0}                                                                                                        & \multicolumn{1}{c|}{$2.16 \times 10^{-3}$ ($2.6 \times 10^{-4}$)}                   & \multicolumn{1}{c|}{0}                                                                                                        & \multicolumn{1}{c}{$2.82 \times 10^{-4}$ ($2.0 \times 10^{-5}$)}                   \\ \hline
\begin{tabular}[c]{@{}c@{}}LC-MNL\\ (10 classes)\end{tabular} & \multicolumn{1}{c|}{0.86}                                                                                                     & \multicolumn{1}{c|}{$2.64 \times 10^{-6}$ ($1.92 \times 10^{-6}$)}                   & \multicolumn{1}{c|}{0.73}                                                                                                     & \multicolumn{1}{c}{$1.05 \times 10^{-4}$ ($2.9 \times 10^{-5}$)}                   \\ \hline
\end{tabular}}
\vspace{0.15cm}
\begin{flushleft}\footnotesize Notes. Standard errors are reported in parentheses. \end{flushleft}
\end{table}

\begin{table}[htb!]
\centering
\caption{Representation and average KL loss comparison with $|\cS| = 21$} 
\label{tab:rep_kl_no_outsideopt_m=21_new}
\scalebox{0.9}{\begin{tabular}{c|cc|cc}
\hline
\multirow{2}{*}{Model}                                        & \multicolumn{2}{c|}{HEV}                                                                                                      & \multicolumn{2}{c}{Probit}                                                                                                   \\ \cline{2-5} 
                                                              & \multicolumn{1}{c|}{\begin{tabular}[c]{@{}c@{}}fraction of instances \\ with average KL \\ loss < $10^{-6}$\end{tabular}} & \multicolumn{1}{c|}{average KL loss}                                               & \multicolumn{1}{c|}{\begin{tabular}[c]{@{}c@{}}fraction of instances\\  with average KL \\ loss < $10^{-6}$\end{tabular}} & \multicolumn{1}{c}{average KL loss}                                            \\ \hline
MDM                                                           & \multicolumn{1}{c|}{0.98}                                                                                                     & \multicolumn{1}{c|}{$1.62 \times 10^{-7}$ ($1.61 \times 10^{-7}$)}                   & \multicolumn{1}{c|}{0.98}                                                                                                     & \multicolumn{1}{c}{$2.88 \times 10^{-7}$ ($2.82 \times 10^{-7}$)}                   \\ \hline
MNL                                                           & \multicolumn{1}{c|}{0}                                                                                                     & \multicolumn{1}{c|}{$7.18 \times 10^{-3}$ ($8.3 \times 10^{-4}$)}                   & \multicolumn{1}{c|}{0}                                                                                                     & \multicolumn{1}{c}{$5.03 \times 10^{-5}$ ($2.24 \times 10^{-5}$)}                   \\ \hline
MCCM                                                          & \multicolumn{1}{c|}{0.56}                                                                                                     & \multicolumn{1}{c|}{$1.05 \times 10^{-4}$ ($3.6 \times 10^{-5}$)}                   & \multicolumn{1}{c|}{0.88}                                                                                                     & \multicolumn{1}{c}{$3.24 \times 10^{-7}$ ($3.20 \times 10^{-7}$)}                   \\ \hline
\begin{tabular}[c]{@{}c@{}}LC-MNL\\ (2 classes)\end{tabular}  & \multicolumn{1}{c|}{0}                                                                                                     & \multicolumn{1}{c|}{$1.22 \times 10^{-3}$ ($1.9 \times 10^{-4}$)}                   & \multicolumn{1}{c|}{0}                                                                                                     & \multicolumn{1}{c}{$1.55 \times 10^{-4}$ ($1.41 \times 10^{-4}$)}                   \\ \hline
\begin{tabular}[c]{@{}c@{}}LC-MNL\\ (10 classes)\end{tabular} & \multicolumn{1}{c|}{0.98}                                                                                                     & \multicolumn{1}{c|}{$5.01 \times 10^{-6}$ ($4.95 \times 10^{-6}$)}                   & \multicolumn{1}{c|}{0.42}                                                                                                     & \multicolumn{1}{c}{$1.08 \times 10^{-4}$ ($1.79 \times 10^{-4}$)}                   \\ \hline
\end{tabular}}
\vspace{0.15cm}
\begin{flushleft}\footnotesize Notes. Standard errors are reported in parentheses. \end{flushleft}
\end{table}

\section{Additional Experiment Results with Real-World Data}
\label{sec:addtional_exp_results_real}
In Experiments 5-7 below, we provide additional experiment results by using the dataset from JD.com (introduced in \citealt{shen2020jd}) to evaluate the feasibility of representing it with an MDM, the efficacy of predictions obtained by the proposed nonparametric approach, and the explanatory ability captured by the limit formulations. In Experiments 5-7, we ignore the constraints of the models on the outside option. %We follow the same procedure of data processing in Section \ref{sec:experiment} to the empirical choice probabilities $\bp_\cS.$

\subsubsection*{Representational power comparison among several models.}
Experiment 5 compares the representation power of MDM with the MNL model and the class of regular choice models. The tested instances feature assortments which are offered at least $n_S$ times, with $n_S$ values ranging from 60 to 100. If we include data on the outside option, none of the models considered are found to exactly represent the data even when $n_S=100$. By focusing on the sales data of the products offered by the firm, Table \ref{tab:jd_representability} shows that the nonparametric MDM and regular choice models are able to represent the choice data obtained from $n_S = 75$ and $100,$ whereas MNL models fail to represent any of the instances. We could not report the results for RUM here because of its intractability. 

\begin{table}[htb!]
\centering
\caption{The representability of MNL, MDM, and the class of regular choice models}
\label{tab:jd_representability}
\begin{tabular}{ccccc}
\hline
 $n_S \geq $ &  $|\cS|$  & MNL & MDM  & Regular Model \\ \hline
60   & 13   & 0   & 0     & 0          \\
75   & 12   & 0   & 1     & 1          \\
100  & 11   & 0   & 1      & 1          \\ \hline
\end{tabular}
\vspace{0.1cm}
\begin{flushleft}\footnotesize Notes. $1$ denotes an instance that can be represented by the tested model, while $0$ denotes the opposite.  \end{flushleft}
\end{table}
\vspace{-0.2cm}
\subsubsection*{Estimation performance comparison between MDM and MNL.} 
In Experiment 6, we compare the explanatory ability of the MDM and MNL model by computing the limit loss over choice data obtained by considering assortments that are offered at least $n_S$ times, where $n_S$ is set to vary from $1$ to $100$.  Using 1-norm as the loss function, the results in Table \ref{tab:jd-estimation_add} show that nonparametric MDM suffers much lesser cost in approximating the choice data, and hence greater explanatory ability. We also observe that the run time of solving the limit of MDM grows when the size of the assortment collection becomes larger.
% Please add the following required packages to your document preamble:
% \usepackage{multirow}
\begin{table}[htb!]
\centering
\caption{Comparison of the estimation performance of MDM and MNL model}\label{tab:jd-estimation_add}
\begin{tabular}{cc|cc|cc}
\hline
\multirow{2}{*}{\begin{tabular}[c]{@{}c@{}} $n_S$\\ $\geq$  \end{tabular}} & \multirow{2}{*}{$\vert \cS \vert$ }  & \multicolumn{2}{c|}{MDM} & \multicolumn{2}{c}{MNL} \\ \cline{3-6} 
                      &                       &  loss    & time (sec)      &  loss     & time (sec)     \\ \hline
1                     & 134                   &   0.223          & 3600      & 0.19         & 0.839    \\
10                    & 42                    & 0.027      & 3600      & 0.16         & 0.253    \\
20                    & 29                    & 0.019      & 13.654    & 0.15         & 0.193    \\
30                    & 24                    & 0.017      & 6.298     & 0.15         & 0.318    \\
40                    & 19                    & 0.016      & 2.379     & 0.14         & 0.149    \\
50                    & 15                    & 0.012      & 0.237     & 0.14         & 0.146    \\
60                    & 13                    & 0.011      & 0.094     & 0.13         & 0.114    \\
75                    & 12                    & 0.0097     & 0.060     & 0.13         & 0.120    \\
100                   & 11                    & 0.0098     & 0.046     & 0.13         & 0.112    \\ \hline
\end{tabular}
\end{table}
We further assess the accuracy of the nonparametric MDM and MNL models by comparing the observed (true) and estimated choice probabilities via scatter plots. %The data used in the analysis included assortments with varying number of least offer times, ranging from 1 to 100. 
Figure \ref{fig:scatters} shows these scatter plots, where each point represents an observed and estimated choice probability pair. The horizontal axis shows the observed probability and the vertical axis shows the estimated probability. The closer the points are to the 45-degree line segment (green segment in Figure \ref{fig:scatters}), the better the estimation accuracy. The scatter plots reveal the following findings:
\begin{enumerate}[leftmargin=*]
\item[(i)] When $n_S = \{50, 60, 75, 100\}$, MDM is seen to correctly estimate most data points  due to its proximity to the $45^o$ line while most points from MNL estimation are still away from the $45^o$ line. 
\item[(ii)] When $n_S = \{10, 20, 30, 40\}$, although both MDM and MNL models are limited in their abilities to exactly represent the choice data, MDM shows much better estimation accuracy than MNL model with most points by MDM being much closer to the $45^o$ line than the MNL model. 
\item[(iii)] In the noisy environment where many assortments are just shown once (corresponding to $n_S = 1$), both MDM and MNL  fail understandably with most points falling away from the $45^o$ line.
\end{enumerate}

% We evaluate the accuracy of the nonparametric MDM and MNL models by comparing the observed (true) and estimated choice probabilities in scatter plots. The data used in the analysis included assortments with varying number of least offer times, ranging from 1 to 100. Figure \ref{fig:scatters} shows these scatter plots, where each point represents an observed and estimated choice probability pair. The horizontal axis shows the observed probability and the vertical axis shows the estimated probability. The closer the points are to the 45-degree line segment, the better the estimation accuracy. The scatter plots reveal the following findings:
% %\setlength\itemsep{0em}
% \begin{enumerate}[leftmargin=*]
%     \item When $n_S = \{50, 60, 75, 100\}$, MDM can correctly estimate most data points which are very close to the $45^o$ line segment while most points by MNL model are still away from the $45^o$ line segment.
%     \item When $n_S = \{10, 20, 30, 40\}$, although both MDM and MNL model cannot represent the choice data, MDM shows much better estimation accuracy than MNL model with most points by MDM being much closer to the $45^o$ line than MNL model.
%     \item In the noisy environment $(n_S = 1)$, both MDM and MNL model do not provide good estimation performance with most points by the models far away from the $45^o$ line.
% \end{enumerate}

\begin{figure}[htb]
\centering
\begin{minipage}[t]{0.3\textwidth} 
\centerline{$n_S = 100$}
\centerline{\includegraphics[scale=0.33]{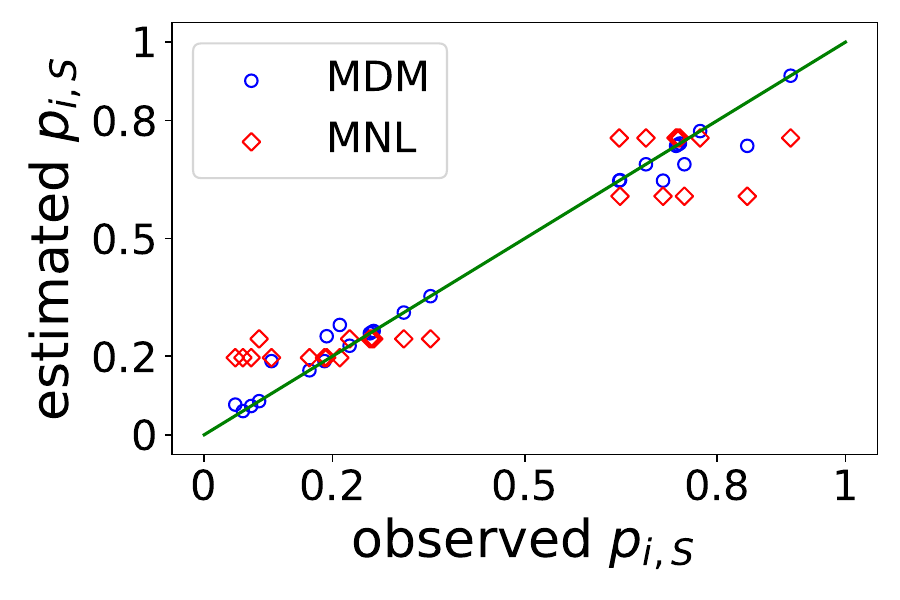}}
\end{minipage}
\;
\begin{minipage}[t]{0.3\textwidth} 
\centerline{$n_S = 75$}
\centerline{\includegraphics[scale=0.33]{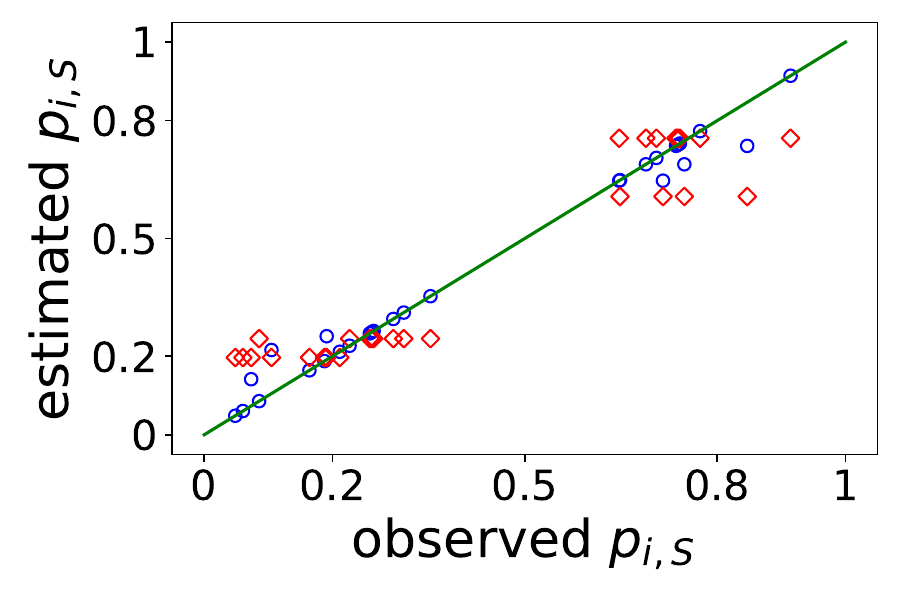}}
\end{minipage}
\;
\begin{minipage}[t]{0.3\textwidth} 
\centerline{$n_S = 60$}
\centerline{\includegraphics[scale=0.33]{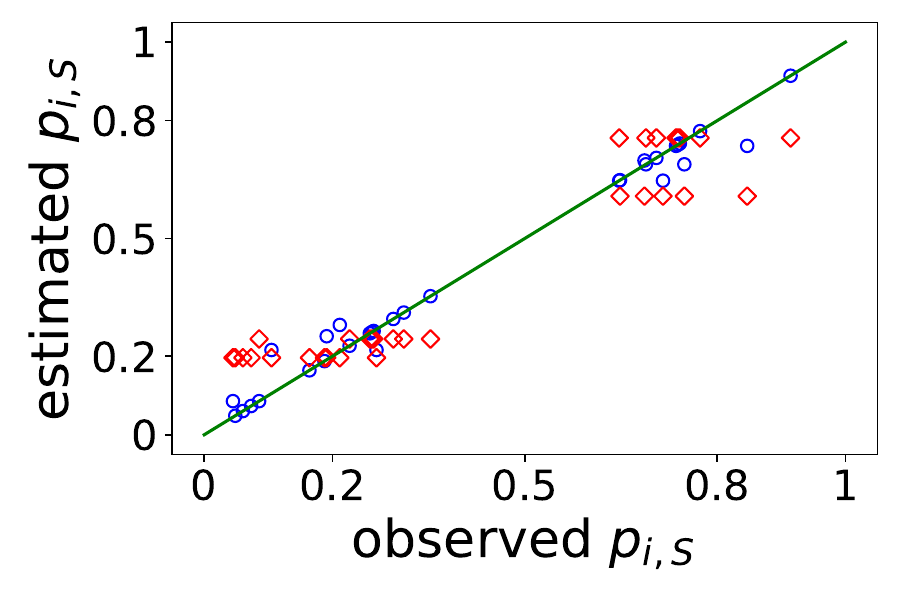}}
\end{minipage}
\;
\begin{minipage}[t]{0.3\textwidth} 
\centerline{$n_S = 50$}
\centerline{\includegraphics[scale=0.33]{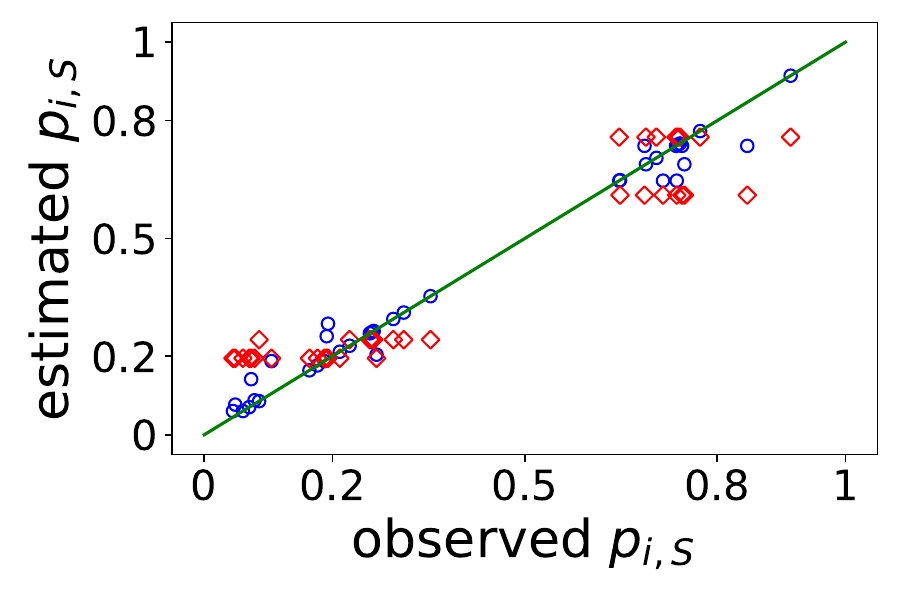}}
\end{minipage}
\;
\begin{minipage}[t]{0.3\textwidth} 
\centerline{$n_S = 40$}
\centerline{\includegraphics[scale=0.33]{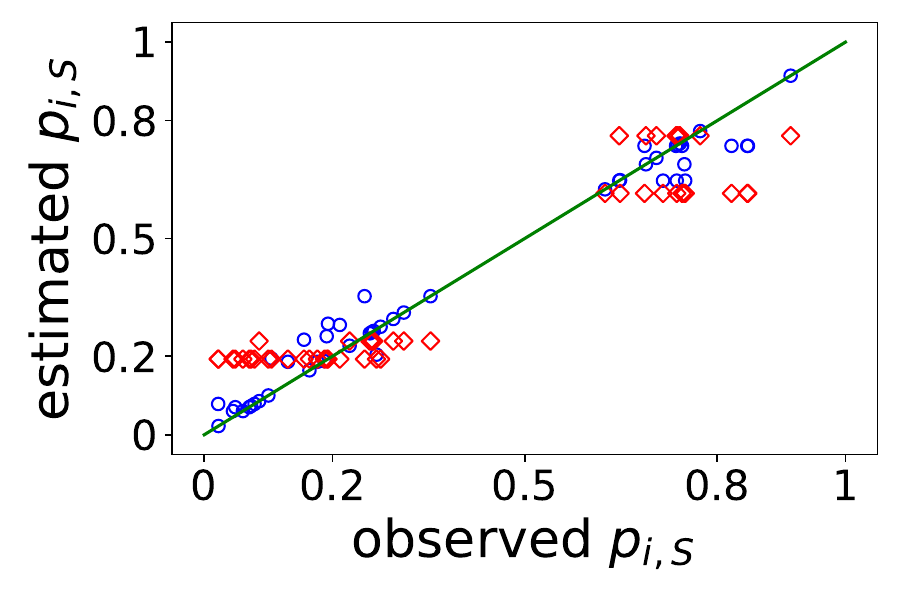}}
\end{minipage}
\;
\begin{minipage}[t]{0.3\textwidth} 
\centerline{$n_S = 30$}
\centerline{\includegraphics[scale=0.33]{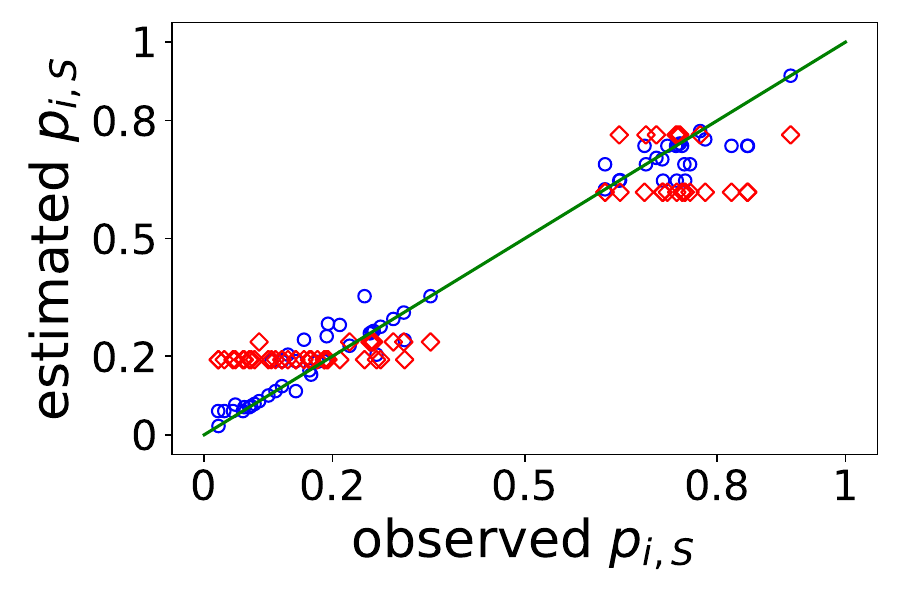}}
\end{minipage}
\;
\begin{minipage}[t]{0.3\textwidth} 
\centerline{$n_S = 20$}
\centerline{\includegraphics[scale=0.335]{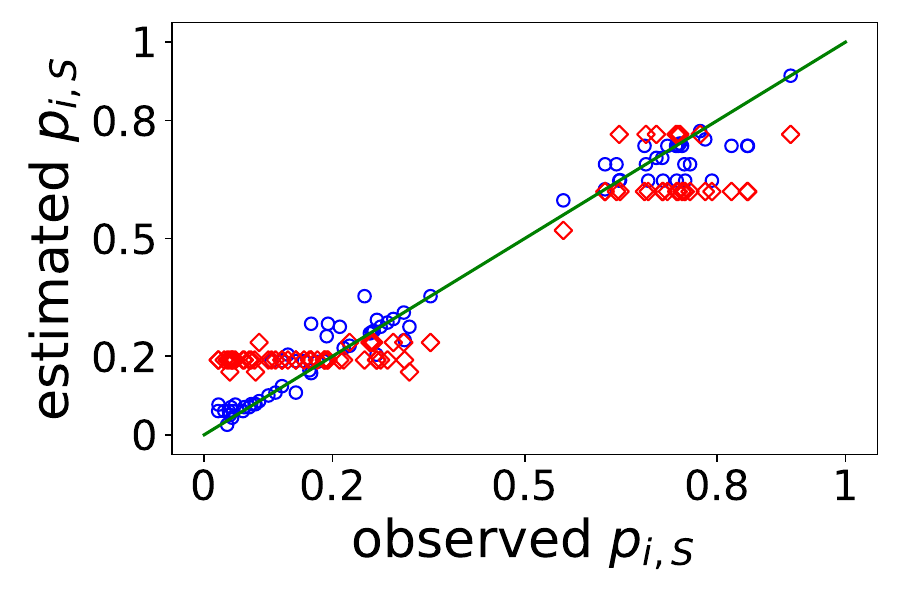}}
\end{minipage}
\;
\begin{minipage}[t]{0.3\textwidth} 
\centerline{$n_S = 10$}
\centerline{\includegraphics[scale=0.335]{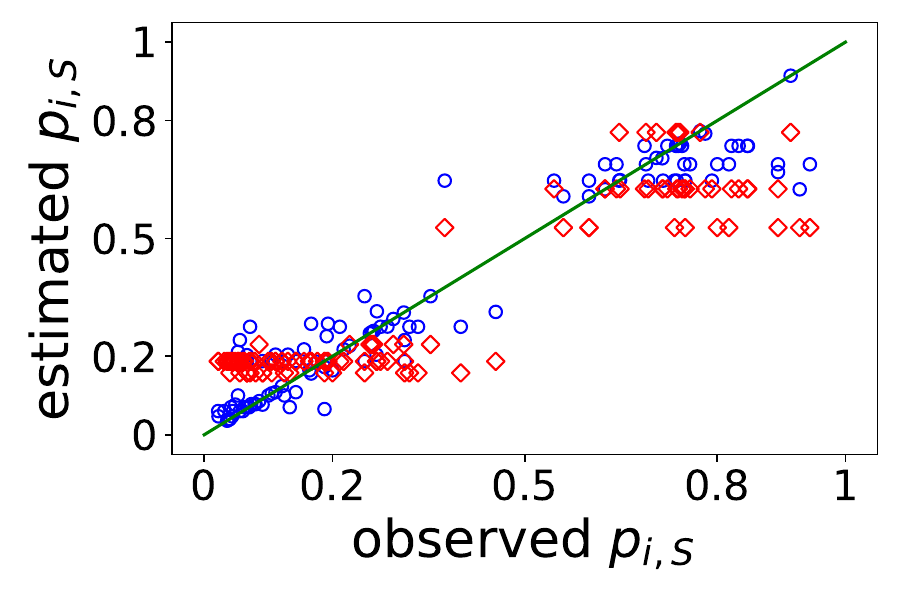}}
\end{minipage}
\;
\begin{minipage}[t]{0.3\textwidth} 
\centerline{$n_S =1$}
%\label{fig:limit}
\centerline{\includegraphics[scale=0.335]{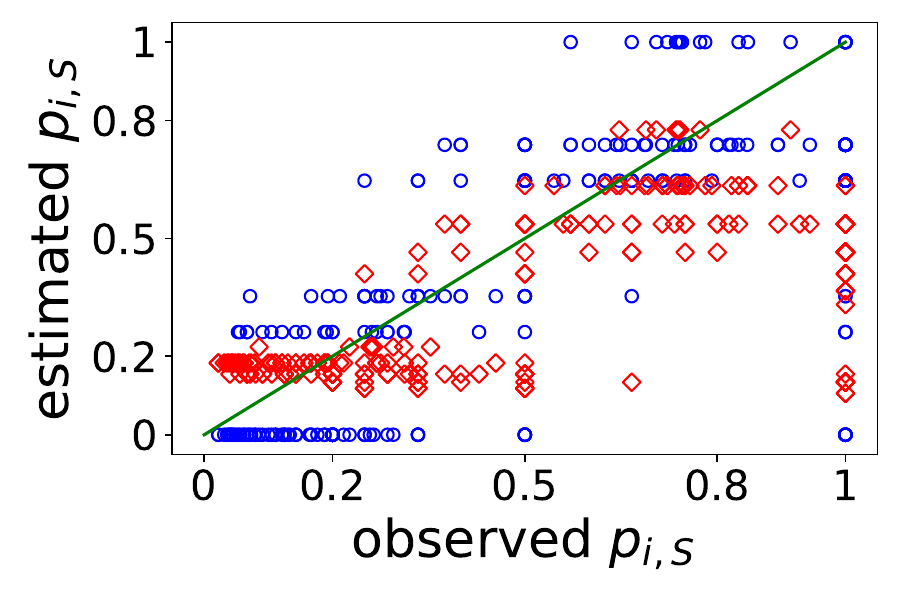}}
\end{minipage}
\vspace{0.1cm}
\begin{flushleft}\footnotesize Notes. In each plot, each point corresponds to the coordinate (true choice probability, estimated choice probability) of each observation of the processed data with $n_S$. The green line is the $45^o$ line. The blue dots represent the estimation results with the nonparametric MDM and the red squares represent the estimation results with the MNL model. \end{flushleft}
\caption{Scatter plots to compare the estimation accuracy of MDM and MNL}
\label{fig:scatters}
\end{figure}

\subsubsection*{Prediction performance comparison between MDM and MNL.}
Experiment 7 evaluates the predictive-cum-prescriptive abilities of the nonparametric MDM and the MNL model by comparing their accuracies in identifying (i) a ranking over unseen test assortments based on their expected revenues, and (ii) the average revenue of the assortment identified to offer the largest revenue in the test set. Considering assortments that are shown at least $n_S$ times (with $n_S$ taken to vary from 20 to 50), we report the average out-of-sample performance over instances generated by randomly picking $\max\{ 2, \lfloor 0.2|\cS| \rfloor \}$ fraction of the assortments to be the test set and the remaining to be the training set. For MNL, we use the Maximum Likelihood Estimator (MLE) obtained from training data to estimate choice probabilities for the test assortments and use them subsequently to rank the test assortments in a decreasing order of expected revenues. For MDM, we compute the robust revenue $\underline{r}(A)$ and the optimistic revenue $\bar{r}(A)$ and record the corresponding choice probabilities for each tested assortment $A$ in the test set if the training data can be represented by MDM. If we find the training data to be not exactly representable by MDM, we solve the limit of MDM (Problem \eqref{model:mdm_micp}) with the training data and use the choice probabilities yielded by solving  \eqref{mdm:perturb} to proceed as before with ranking the assortments in a decreasing order of expected revenues.

For comparing the quality of rankings offered by the MDM and the MNL model, we take the well-known Kendall Tau distance (see Definition \ref{dfn:k-dis}) as a natural metric for evaluating the closeness of the predicted ranking with the ground truth hidden from training. For both models, we also compare the true revenues of the assortments which are predicted to rank at the top. The average of these out-of-sample performance metrics across randomly generated train-test splits are reported in Table \ref{tab:out-of-sample-prediction}.
The results in Table \ref{tab:out-of-sample-prediction} show that the optimistic prediction results of nonparametric MDM outperform the MNL model, yielding uniformly lower average Kendall tau distances and higher average revenues for the predicted best assortments across all scenarios. Similarly, the robust prediction results of nonparametric MDM outperform the MNL models in most scenarios, except for instances with $n_S=30$ in terms of average Kendall tau distances and instances with $n_S=50$ in terms of average revenue predictions. Figure \ref{fig:predition_performance_realdata} illustrates that the nonparametric MDM approach predicts the true revenue more accurately than MNL, as the predicted intervals include the true revenue or are closer to it. %However, the results do not indicate which model is better at predicting the choice probabilities for all tested unseen assortments.}
% Please add the following required packages to your document preamble:
% \usepackage{multirow}
\begin{table}[H]

\centering
\caption{The prediction performance of MDM and MNL}
\label{tab:out-of-sample-prediction}
\begin{tabular}{ccccccccc}
\hline
\multirow{2}{*}{\begin{tabular}[c]{@{}c@{}} $n_S$\\ $\geq$  \end{tabular}} & \multirow{2}{*}{$\vert \cS \vert$ }  & \multirow{2}{*}{\begin{tabular}[c]{@{}c@{}}\#test \\ assortments\end{tabular}} & \multicolumn{3}{c}{Average Kendall Tau Distance} & \multicolumn{3}{c}{\begin{tabular}[c]{@{}c@{}}Average Revenue of the\\  Predicted Best Assortments\end{tabular}} \\ \cline{4-9} 
                      &                       &                                                                                  & MNL           & MDM\_LB         & MDM\_UB        & MNL                                & MDM\_LB                              & MDM\_UB                              \\ \hline
20                    & 23                    & 6                                                                               & 4.9         & 3.1           & 3.9            & 0.385                              & 0.416                                & 0.422                                \\ \hline
30                    & 20                    & 4                                                                               & 3.0           & 3.1             & 2.5            & 0.389                              & 0.420                                & 0.403                                \\ \hline
40                    & 15                    & 4                                                                               & 1.8           & 1.2             & 1.1            & 0.225                              & 0.271                                & 0.275                                \\ \hline
50                    & 12                    & 3                                                                               & 1.2           & 0.9             & 0.6            & 0.257                              & 0.249                                & 0.271                                \\ \hline
\end{tabular}
\vspace{5pt}
\begin{flushleft}
\footnotesize Notes. MDM\_LB represents the results by solving $\underline{r}(A)$ while MDM\_UB represents the results by solving $\bar{r}(A)$.
\end{flushleft}
\end{table}

\begin{figure}[H]
\centering
\begin{minipage}[t]{0.32\textwidth}
\centering
\centerline{$n_S=20$}
\includegraphics[scale=0.35]{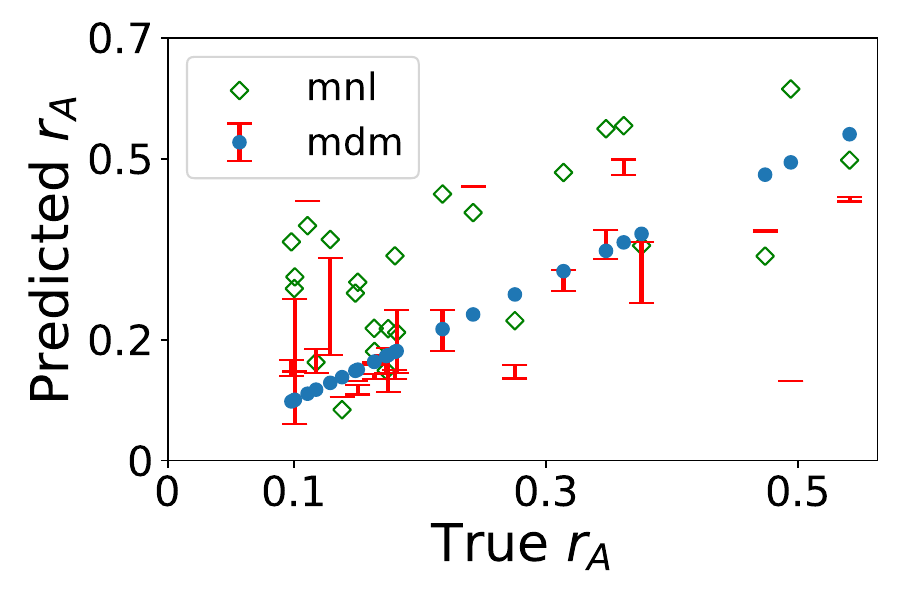}
\end{minipage}
%\begin{minipage}[t]{0.32\textwidth} 
%\centering
%\centerline{$n_S=30$}
%\includegraphics[scale=0.35]{}
%\end{minipage}
\begin{minipage}[t]{0.32\textwidth} 
\centering
\centerline{$n_S=40$}
\includegraphics[scale=0.35]{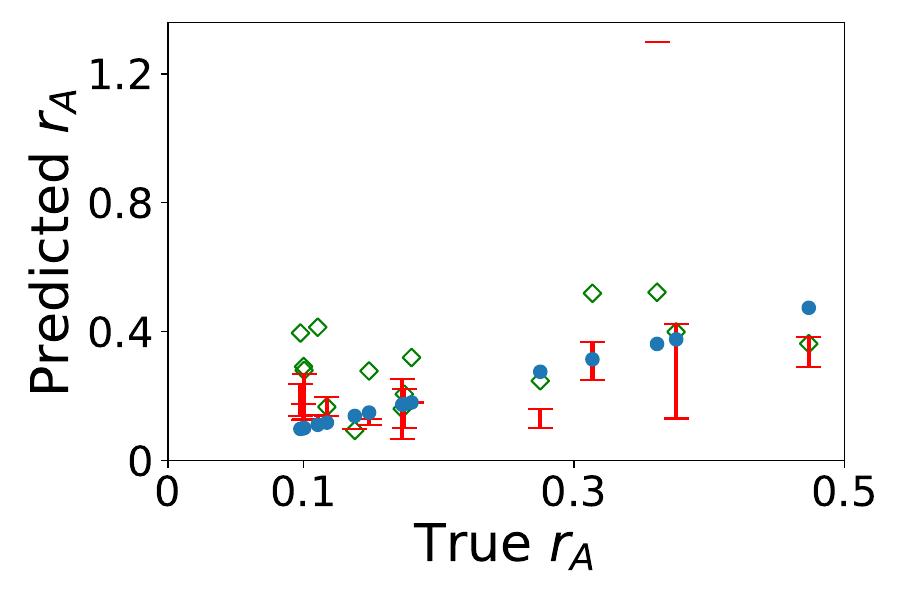}
\end{minipage}
\begin{minipage}[t]{0.32\textwidth} 
\centering
\centerline{$n_S=50$}
\includegraphics[scale=0.35]{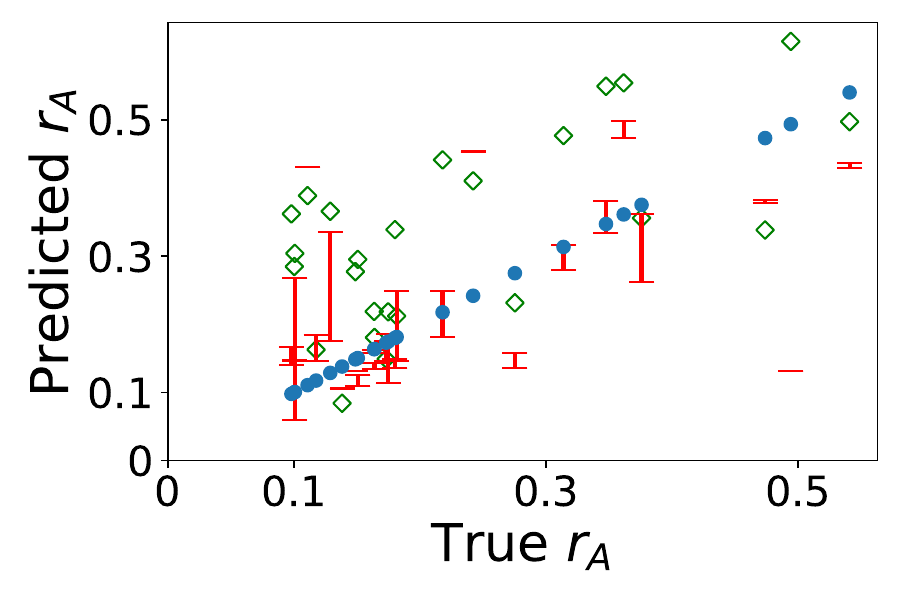}
\end{minipage}
%\;
%\begin{minipage}[t]{0.24\textwidth} 
%\centering
%\includegraphics[scale=0.25]{}
%\end{minipage}
%\begin{minipage}[t]{0.24\textwidth} 
%\centering
%\includegraphics[scale=0.25]{}
%\end{minipage}
%\begin{minipage}[t]{0.24\textwidth} 
%\centering
%\includegraphics[scale=0.25]{}
%\end{minipage}
%\begin{minipage}[t]{0.24\textwidth} 
%\centering
%\includegraphics[scale=0.25]{}
%\end{minipage}
\begin{flushleft}\footnotesize Notes. In each figure, the blue dots represent the true revenues, %or choice probabilities, 
while the red ranges represent the predicted revenue intervals %or choice probability intervals 
with the nonparametric MDM, \& the green squares represent the predicted revenues 
%or choice probabilities 
using the MNL model. %For each column, the top figure shows the results of revenue prediction while the bottom figure shows the results of choice probability prediction under the processed data with $n_S$. 
\end{flushleft}
\caption{Revenue %and choice probability 
predictions vs. true revenue and %choice probability 
for the nonparametric MDM and the MNL model}
\label{fig:predition_performance_realdata}
\end{figure}

\end{document}